\definecolor{yale}{RGB}{14,77,146}
\definecolor{Gray}{gray}{0.95}
\definecolor{LightCyan}{rgb}{0.88,1,1}
\DeclareMathOperator{\tr}{tr}
\newcommand{\E}{\mathbb{E}}
\DeclareMathOperator*{\argmin}{arg\,min}
\newcommand{\Alg}{\textnormal{CEC}({\cal T})}
\newtheorem{theorem}{Theorem}
\newtheorem{definition}{Definition}
\newtheorem{lemma}{Lemma}
\newtheorem{remark}{Remark}
\newtheorem{proposition}{Proposition}
\newtheorem{corollary}{Corollary}
\renewcommand{\Pr}{\mathbb{P}}
\newcommand{\poly}{\mathrm{poly}}
\title{Minimal Expected Regret\\
in Linear Quadratic Control}
\author{%
  Yassir Jedra \\
  KTH Royal Institute of Technology\\
  Stockholm, Sweden \\
  \texttt{jedra@kth.se}
  \and
  Alexandre Proutiere\\
  KTH Royal Institute of Technology\\
  Stockholm, Sweden \\
  \texttt{alepro@kth.se}
}
\date{}
\begin{document}

\maketitle


\begin{abstract}
We consider the problem of online learning in Linear Quadratic Control systems whose state transition and state-action transition matrices $A$ and $B$ may be initially unknown. We devise an online learning algorithm and provide guarantees on its expected regret. This regret at time $T$ is upper bounded (i) by $\widetilde{O}((d_u+d_x)\sqrt{d_xT})$ when $A$ and $B$ are unknown, (ii) by $\widetilde{O}(d_x^2\log(T))$ if only $A$ is unknown, and (iii) by $\widetilde{O}(d_x(d_u+d_x)\log(T))$ if only $B$ is unknown and under some mild non-degeneracy condition ($d_x$ and $d_u$ denote the dimensions of the state and of the control input, respectively). These regret scalings are minimal in $T$, $d_x$ and $d_u$ as they match existing lower bounds in scenario (i) when $d_x\le d_u$ \cite{simchowitz2020naive}, and in scenario (ii) \cite{lai1986}. We conjecture that our upper bounds are also optimal in scenario (iii) (there is no known lower bound in this setting).

Existing online algorithms proceed in epochs of (typically exponentially) growing durations. The control policy is fixed within each epoch, which considerably simplifies the analysis of the estimation error on $A$ and $B$ and hence of the regret. Our algorithm departs from this design choice: it is a simple variant of certainty-equivalence regulators, where the estimates of $A$ and $B$ and the resulting control policy can be updated as frequently as we wish, possibly at every step. Quantifying the impact of such a constantly-varying control policy on the performance of these estimates and on the regret constitutes one of the technical challenges tackled in this paper.



\end{abstract}



\section{Introduction}\label{sec:intro}

The Linear Quadratic Regulator (LQR) problem arguably constitutes the most iconic, studied, and applied problem in control theory. In this problem, the system dynamics are approximated by those of a linear system, which, in discrete time, are $x_{t+1}=Ax_t+Bu_t+\eta_t$. $x_t$ and $u_t$ represent the state and action vectors at time $t$, respectively, and typically, the noise sequence $(\eta_t)_{t\ge 0}$ is i.i.d. Gaussian. The decision maker experiences an instantaneous cost quadratic in both the state and the control input $x_t^\top Qx_t+u_t^\top Ru_t$, where $Q$ and $R$ are positive semidefinite. Her objective is to devise a control policy minimizing her accumulated long term expected cost. In the perfect information setting (as investigated in this paper), the state is observed without noise, and is plugged in as an input in the control policy. When the state transition and state-action transition matrices $A$ and $B$ are known, the optimal control policy is a simple feedback control $u_t = K_\star x_t$ where $K_\star = - \left(R + B^\top P_\star B\right)^{-1}B^\top P_\star A$ and $P_\star$ solves the discrete algebraic Riccati equation.

This paper considers the online learning version of the LQR problem, where the matrices $A$ and $B$ may be initially unknown. In such a scenario, the decision maker must control the system while learning these matrices. Early efforts in the control community \cite{astrom1973, kumar1985} were devoted to establish the convergence and asymptotic properties of adaptive control algorithms. The regret analysis of these algorithms, initiated by Lai \cite{lai1986, lai1987} (in very specific cases) and by Abbasi-Yadkori and Szepesv\'ari \cite{abbasi2011regret}, has attracted a lot of attention recently, see e.g., \cite{mania2019certainty,cohen2019learning,faradonbeh2020input,simchowitz2020naive, abeille2020efficient, lale2020explore, cassel2020logarithmic}. Refer to \textsection \ref{sec:related} for details, and to Appendix \ref{app:related} for an even longer discussion. Despite these efforts, the picture remains blurry. We are unable from the aforementioned literature (see \cite{ziemann2021uninformative}) to determine the conditions on $(A,B,R,Q)$ under which one can achieve a regret scaling as $O(\log(T))$ or $O(\sqrt{T})$.

The objective of this work is to devise online algorithms for the LQR problem whose expected regret exhibits the best scaling in both the time horizon $T$ and the dimensions $d_x$ and $d_u$ of the state and control input, in the three envisioned scenarios: (i) when $(A,B)$ are unknown, (ii) when $A$ only is unknown, (iii) when $B$ only is unknown. In addition to guaranteeing minimal regret, we wish our algorithms to enjoy a simple and natural design and require as little inputs as possible. Existing algorithms proceed in {\it epochs} of exponentially growing durations. The control policy is fixed within each epoch. This doubling trick considerably simplifies the analysis of the estimation error on $A$ and $B$ and hence of the regret, but seems rather impractical. It is also worth noting that most algorithms take as input a level of confidence $\delta$, the time horizon $T$, a stabilizer $K_\circ$, but also sometimes known upper bounds on the norms of $A$ and $B$.

{\bf Contributions.} We propose CEC(${\cal T}$) a family of algorithms based on the certainty equivalence principle and with the following desirable properties: the control policy is not fixed within epochs, and may be updated continuously as the estimates of $A$ and $B$ and hence of the optimal control improve; it is {\it anytime}, and does not leverage any a priori information about the system (except for the knowledge of a stabilizing controller). We provide upper bounds of the expected regret of our algorithms in the three envisioned scenarios:
\begin{itemize}
\item[(i)] When $A$ and $B$ are unknown, the expected regret of CEC(${\cal T}$) is\footnote{The notation $\widetilde{O}$ hides logarithmic factors.} $\widetilde{O}((d_u+d_x)\sqrt{d_xT})$, which matches existing the lower bound derived in \cite{simchowitz2020naive} when $d_x\le d_u$;
\item[(ii)] when $A$ only is unknown, the expected regret of CEC(${\cal T}$) is $\widetilde{O}(d_x(d_u+d_x)\log(T))$, matching the lower bound proved in \cite{lai1986};
\item[(iii)] when $B$ only is unknown, the expected regret of CEC(${\cal T}$) is $\widetilde{O}(d_x(d_u+d_x)\log(T))$ under the assumption that $K_\star$ is full-rank; there is no lower bound in this setting, but we conjecture that our upper bound exhibits the optimal scalings in $T$, $d_x$, and $d_u$.
\end{itemize}
It is worth comparing the design and performance guarantees of $\Alg$ to those of existing algorithms. For scenario (i), \cite{simchowitz2020naive} presents an algorithm achieving the same regret upper bound as ours but with probability $1-\delta$ where $\delta$ is an input of the algorithm (see the next section for a detailed discussion on the difference between regret in expectation and with high probability), and using a doubling trick. For scenarios (ii) and (iii), the best regret guarantees were $O(\textrm{poly}(d_x,d_u) \log^2(T))$ \cite{cassel2020logarithmic} with unspecified polynomial dependence in $(d_x,d_u)$ (with degree at least 8 as far as we can infer from the analysis of \cite{cassel2020logarithmic}). These guarantees were achieved by an algorithm with several inputs, including the time horizon, a stabilizer, upper bounds on $\| A\|$, $\| B\|$, on the minimal ergodic cost, and that achieved under the stabilizer. Refer to Appendix \ref{app:related} for a detailed discussion.

Further note that our algorithm, $\Alg$, is anytime and does not apply any kind of doubling trick: it is a simple variant of certainty-equivalence regulators, where the estimates of $A$ and $B$ and the resulting control policy can be updated as frequently as we wish, possibly at every step. Quantifying the impact of such a constantly-varying control policy on the performance of these estimates and on the regret has eluded researchers and constitutes one of the technical challenges tackled in this paper. We address this challenge by developing a novel decomposition of the cumulative covariates matrix (sometimes referred to as Gram matrix), and by deriving concentration results on its spectrum.


\section{Related Work}\label{sec:related}

The online LQR problem is a theoretically intriguing and practically important problem that has received a lot of attention in the control and learning communities. The research efforts towards the development of algorithms with optimal regret guarantees have recently intensified. We may categorize these algorithms into two classes.

In the first class, we find algorithms based on the so-called self-tuning regulators \cite{astrom1973}, as those developed in second half of the 20th century in the control community, see \cite{kumar1985,matni2019}. Self-tuning regulators work as follows. At any given step, they estimate the unknown matrices $A$ and $B$, and apply a control policy corresponding to the optimal control obtained replacing $A$ and $B$ by their estimators. To ensure an appropriate level of excitation of the system, and the ability to learn $A$ and $B$, the control inputs are typically perturbed using white noise. The algorithms developed in \cite{lai1986,lai1987,rantzer2018, faradonbeh2020input,mania2019certainty,simchowitz2020naive,cassel2020logarithmic} obey these principles. In the second class, we find algorithms applying the Optimism in Front of Uncertainty (OFU) principle, extensively used to devise regret optimal algorithms in stochastic bandit problems \cite{lai1985,lattimore2020bandit}. These algorithms maintain confidence ellipsoids where the system parameters lie with high probaility, and select optimistically a system in this ellipsoid to compute the control policy, see \cite{abbasi2011regret,faradonbeh2017,cohen2019learning,abeille2020efficient, lale2020explore}. A description of these algorithms and of their regret guarantees can be found in Appendix \ref{app:related}.

All these algorithms use a doubling trick which considerably simplifies their analysis, and most of them are not anytime, as they use the time-horizon as input. For Scenario I where $A$ and $B$ are unknown, all are designed in the fixed confidence setting, and one can establish regret guarantees with a fixed confidence level $\delta$. The best regret upper bound so far is $\widetilde{O}(d_u\sqrt{d_xT\log(1/\delta)})$ with probability $1-\delta$ \cite{simchowitz2020naive}. For Scenario II ($B$ known) and ($A$ known), \cite{cassel2020logarithmic} presents an algorithm with an expected regret scaling as $O(\textrm{poly}(d_x,d_u) \log^2(T))$. $\Alg$ offers much better guarantees with a simplified design, and is anytime. A very detailed account of the related work is presented in Appendix \ref{app:related}.

We conclude this section with a brief discussion on the differences between regret guarantees in expectation or with high probability. Most existing online algorithms for the LQR problem have regret guarantees holding with high probability, i.e., with a level of confidence $1-\delta$ where $\delta$ is an input of the algorithms. Their regret analysis consists in identifying a "good" event under which the algorithm behaves well and holding with probability at least $1-\delta$. Devising algorithms with expected regret upper bounds is more involved since one needs to also analyze the behavior of the algorithm under the complementary event (the "bad" event). In turn, analyzing the expected regret requires a deeper understanding of the problem. There is however a method to transform an algorithm with regret guarantees with probability $1-\delta$ to an algorithm with expected regret guarantees: it consists in tuning $\delta$ as a function of the time horizon $T$, and in controlling the regret under the bad event. For example, consider Scenario I where $A$ and $B$ are unknown, and consider the algorithm of \cite{simchowitz2020naive} with regret upper bounded by $C\sqrt{T}\log(1/\delta)$ with probability $1-\delta$. Now choosing $\delta = 1/T^2$ and by applying the stabilizing controller when the state norm exceeds some threshold, it can be shown that the expected regret of the modified algorithm scales at most as $C\sqrt{T}\log(T)$. Note that this method induces a multiplicative regret cost proportional to $\log(T)$ and leads to an algorithm that requires the time horizon as input. We believe that, because of the additional $\log(T)$ multiplicative cost, this method would lead to sub-optimal expected regret guarantees in Scenarios II and III (anyway, there is no algorithms in these settings with high probability regret upper bounds).


\section{Preliminaries and Assumptions}\label{sec:prelim}

{\bf The LQR problem.} We consider a linear system $x_{t+1}=Ax_t+Bu_t+\eta_t$ as described in the introduction, and initial state $x_0=0$. $(\eta_t)_{t\ge 0}$ is a sequence of i.i.d. zero-mean, isotropic\footnote{We say that a random vector $\eta$ is isotropic if $\E[\eta \eta^\top] = I_d$. If $\eta$ is zero-mean, isotropic and $\sigma^2$-subgaussian, then we also have $1 \le 4\sigma^2$. The isotropy assumption is without loss of generality because if $\E[\eta \eta^\top] = \Sigma \succ 0$ then we can rescale the dynamics by $\Sigma^{-1/2}$.}, $\sigma^2$-sub-gaussian random vectors. The objective of the decision maker is to identify a control policy $(u_t)_{t\ge 0}$ minimize the following ergodic cost $\limsup_{T \to \infty}\frac{1}{T}\E[\sum_{t=1}^T x_t^\top  Q x_t + u_t^\top R u_t]$, where $Q$ and $R$ are positive semidefinite matrices. Under the assumption that $(A,B)$ is stabilizable, the Discrete Algebraic Riccati Equation (DARE)
$P = A^\top P A  - A^\top P B (R + B^\top P^\top B)^{-1} B^\top P A + Q$
admits a unique positive definite solution $P_\star$ \cite{kuvcera1972discrete}, and that the optimal control $(u_t)_{t\ge 0}$ that minimizes the above objective is defined as
\begin{equation}
\forall t \ge 0,\quad u_t = K_\star x_t, \ \ \hbox{ with } \ \ K_\star = - \left(R + B^\top P_\star B\right)^{-1}B^\top P_\star A.
\end{equation}
The minimum ergodic cost achieved under the feedback controller $K_\star$ is denoted by $\mathcal{J}^\star_{(A,B)}=\limsup_{T \to \infty}\frac{1}{T}\E[\sum_{t=1}^T x_t^\top  (Q+ K_\star^\top R K_\star)x_t]$.

{\bf Regret in the online LQR problem.} We investigate scenarios where $A$ and / or $B$ are initially unknown. We call Scenario I, the case when $A$ and $B$ are unknown; Scenario II -- $A$ known, the case when $A$ is known but $B$ is unknown; and Scenario II -- $B$ known, the case when $B$ is known but $A$ is unknown. An adaptive control algorithm $\pi$ is defined as a sequence of measurable functions $u_t$ from the past observations to a control input: for any $t\ge 0$, $u_t$ is ${\cal F}_t$-measurable where $\mathcal{F}_t = \sigma(x_0, u_0, \dots, x_{t-1}, u_{t-1}, x_t)$. The performance of the algorithm $\pi$ is assessed through its regret defined as:
$$
R_T(\pi) = \sum_{t=1}^T (x_t^\top Q x_t + u_t^\top R u_t) - T \mathcal{J}^\star_{(A,B)},
$$
where $(x_t,u_t)$ are the state and control input at time $t$ under $\pi$. The above regret definition is used in most related papers, and somehow assumes that an Oracle algorithm (aware of $A$ and $B$ initially) would pay a cost of $\mathcal{J}^\star_{(A,B)}$ in each round. We discuss an alternative definition in Appendix \ref{app:regretproof}, and justify this definition when the expected regret is the quantity of interest.

{\bf Assumptions.} Throughout the paper, we make the following assumptions. (i) $(\eta_t)_{t\ge 0}$ is a sequence of i.i.d. zero-mean, isotropic, $\sigma^2$-sub-gaussian random vectors. (ii) We assume w.l.o.g.\footnote{This is achieved by a change of basis in the state and input spaces, and by rescaling the dynamics. See \cite{simchowitz2020naive}.} that $Q \succ I_{d_x}$ and $R = I_{d_u}$. (iii) We assume as in most existing papers that the system $(A, B)$ is stabilizable, and that the learner has access to a stabilizing controller $K_\circ$ (i.e, $\rho(A + B  K_\circ)<1$).



\section{The $\Alg$ Algorithm}\label{sec:algo}

The pseudo-code of our algorithm, $\Alg$, is presented in Algorithm \ref{algo:cce}. It essentially based on the Certainty Equivalence principle: the control policy applied at time $t$ is the optimal control policy obtained by replacing $(A,B)$ by their Least Squares Estimators (LSEs). However $\Alg$ includes three additional components described in more details below. First, the control inputs are perturbed to ensure a sufficient excitation of the system (so that the LSE is consistent). Then, $\Alg$ exploits the stabilizing controller $K_\circ$ to avoid pathological cases where the system state could become unstable. The use of $K_\circ$ is driven by an hysteresis switching mechanism. Finally, the LSE of $(A,B)$ and the corresponding optimal policy can be updated at will, as frequently as we wish. Hence, $\Alg$ allows for lazy updates, which can be interesting in case of low computational budget.

{\bf Certainty Equivalence and lazy updates.} $\Alg$ takes as input an infinite set ${\cal T}\subset \mathbb{N}$ corresponding to the times when the control policy is updated. At such times, we compute the LSE $(A_t,B_t)$ of $(A,B)$ (see Appendix \ref{app:lse} for a pseudo-code). For example in Scenario I, we have: for $t\ge 2$,
\begin{equation}\label{eq:lse}
[A_t \ \  B_t ] = \left(\sum_{s=0}^{t-2}   x_{s+1} \begin{bmatrix} x_{s} \\ u_s\end{bmatrix}^{\top}\right) \left(   \sum_{s=0}^{t-2} \begin{bmatrix} x_{s} \\ u_s\end{bmatrix} \begin{bmatrix} x_{s} \\ u_s\end{bmatrix}^{\top} \right)^{-1}.
\end{equation}
Refer to Appendix \ref{app:lse} for the expressions of the LSE in the other scenarios. You will note that, when $B$ is unknown (Scenario II -- $A$ known), then at time $t$, we only use the sample path $(x_0,u_0, \dots, x_{t-2}, u_{t-2}, x_{t-1})$ to compute the LSE of $B$. This ensures that $\eta_t$ and ${K_{t+1}}$ are independent, which will turn to be crucial in our analysis. From the LSE $(A_t,B_t)$, we compute the updated control policy by solving Ricatti equations:
\begin{align}
P_t & = A_t^\top P_t A_t - A_t^\top P_t B_t(R + B_t^\top P_t B_t)^{-1} B_t^\top P_t A_t  + Q,\\
K_t & = -(R + B_t^\top P_t B_t)^{-1} B_t^\top P_t A_t.
\end{align}
We conclude by describing the conditions imposed on the set ${\cal T}$ of update times. We just assume that: for some constant $C>1$,
\begin{equation}\label{eq:lazy}
{\cal T}=(t_k)_{k\ge 1} \ \ \ \hbox{with} \ \ \ \forall k\ge 1, \ \ t_k< t_{k+1}\le Ct_k.
\end{equation}
These conditions are very general and are compatible with ${\cal T}=\mathbb{N}$ (update every round) and ${\cal T}=\{2^k, k\in \mathbb{N}\}$ (doubling trick).

{\bf Hysteresis switching and stability.} In $\Alg$, the calls of the stabilizer $K_\circ$ is driven by an hysteresis switching mechanism defined by two sequences of stopping times $(\tau_k, \upsilon_k)_{k\ge 1}$ where
\begin{equation*}
  \tau_k = \inf \bigg\{ t > \upsilon_k: \sum_{s=0}^t \Vert x_s \Vert^2 > \sigma^2 d_x \;  g(t) \bigg\} \ \ \text{and} \ \ \upsilon_k = \inf \bigg\{ t > \tau_{k -1}: \sum_{s=0}^t \Vert x_s\Vert^2 < \sigma^2 d_x \; f(t) \bigg\}
\end{equation*}
with $\tau_ 0 = 0$ and $g(t) \ge f(t)$ for all $t\ge 1$. By construction, the sequences are interlaced: for all $k\ge 1$, $\tau_{k-1}<\upsilon_k< \tau_k$. The use of $K_\circ$ is done as follows. For $\tau_k < t < \upsilon_{k+1}$, we use $K_\circ$ until the growth rate of $\sum_{s=0}^t \Vert x_s \Vert^2$ decreases from that of $g(t)$ to that of $f(t)$. For $\upsilon_{k+1} \le t < \tau_{k+1}$, we use the adaptive controller $K_t$. We choose $f(t) = t^{1+\gamma/2}$, $g(t)= t^{1+\gamma}$ and $h(t)= t^{\gamma}$, where $\gamma > 0$ (for the analysis, we need to have $g(t) > f(t) > t$). With these choices, we will establish that the expected number of times when $K_\circ$ is used is finite. This means that after some time, $\Alg$ only uses the certainty equivalence controller.

\noindent
{\bf Input perturbations.} A sufficient excitation of the system is achieved by sometimes adding noise to the control inputs -- mainly in Scenario I. In $\Alg$, $(\nu_{t})_{t\ge 0}$ and $(\zeta_{t})_{t\ge 0}$ are sequences of independent random vectors where for all $t \ge 1$, $\nu_{t} \sim \mathcal{N}(0, \sigma_t^2 I_{d_u})$ and $\zeta_{t} \sim \mathcal{N}(0,I_{d_u})$. We choose $\sigma_t^2 = \sqrt{d_x}   \sigma^2 / \sqrt{t}$.

\begin{algorithm}[H]\label{algo:cce}
\SetAlgoLined
\LinesNumbered
\SetKwInOut{Input}{input}\SetKwInOut{Output}{output}
 \Input{\ \ Cost matrices $Q$ and $R$, a stabilizing controller $K_\circ$, variance proxy $\sigma$ of the noise, set of rounds ${\cal T}$ for the controller updates.}
 \BlankLine
 $\ell_{-1} \gets 0$, $K_{-1}\gets 0$\;
 \For{$t\ge 0$}{
  $\ell_t \gets \begin{cases}
    0  & \text{if} \quad \sum_{s=0}^t \Vert x_s \Vert^2 > \sigma^2 d_x g(t), \\
    1  & \text{if} \quad \sum_{s=0}^t \Vert x_s \Vert^2 < \sigma^2 d_x f(t), \\
    \ell_{t-1} & \text{otherwise.}
  \end{cases}$ \\
  {\bf if}$\ (t\in {\cal T})$ \ \ compute $(A_t, B_t)$ (applying the LSE algorithm, see Appendix)\;
  \qquad\qquad \ \ \ \  compute $(P_t,K_t)$ (solving Riccati equations using $(A_t, B_t)$)\;
  {\bf else} \ \ $K_t\leftarrow K_{t-1}$\;
  \emph{Scenario II - $B$ known:}\\
  $u_t \gets \begin{cases}
     K_t  x_t \phantom{+ \nu_{1,t} }  & \text{if } \ell_t = 1 \text{ and } \Vert K_t \Vert^2 \le h(t), \\
     K_\circ  x_t  & \text{otherwise.}
  \end{cases}$ \\
  \emph{Scenario II - $A$ known:}\\
  $u_t \gets \begin{cases}
     K_t x_t  & \text{if } \ell_t = 1 \text{ and } \Vert K_t \Vert^2 \le h(t), \text{ and } \lambda_{\min} \left(\sum_{s=0}^{t-1} u_s u_s^{\top} \right) \ge \sqrt{t} \\
     K_\circ x_t + \zeta_{t}  & \text{otherwise.}
  \end{cases}$ \\
  \emph{Scenario I:}\\
  $u_t \gets \begin{cases}
     K_t  x_t + \nu_{t} & \text{if } \ell_t = 1 \text{ and } \Vert K_t \Vert^2 \le h(t), \text{ and } \lambda_{\min} \left(\sum_{s=0}^{t-1} \begin{bmatrix}
       x_s \\ u_s
     \end{bmatrix} \begin{bmatrix}
       x_s \\ u_s
     \end{bmatrix}^\top \right) \ge t^{1/4} \\
     K_\circ  x_t + \nu_{t} & \text{otherwise.}
  \end{cases}$ \\
 }
 \caption{Certainty Equivalence Control (${\cal T}$) ($\Alg$)}
\end{algorithm}


\section{Regret Guarantees}\label{sec:analysis}

In this section, we present our main results. We provide finite-time expected regret upper bounds for the $\Alg$ algorithm in the three envisioned scenarios, and give a sketch of the way they are derived. In the statement of the results, for simplicity, we use the following notations: the relationship $\lesssim$ corresponds to $\le$ up to a universal multiplicative constant. For any matrix $M$ with $\rho(M)<1$, we define $\mathcal{G}_M = \sum_{s=0}^\infty \Vert M^s\Vert$, and for any matrix $K$ such that $\rho(A+BK) < 1$, we denote $P_\star(K) = \sum_{t=0}^\infty ((A+BK)^t)^\top (Q + K^\top RK)(A+BK)^t$ . We further introduce $\mathcal{G}_\circ = \mathcal{G}_{A + B K_\circ}$, $C_B = \max(\Vert B\Vert, 1)$, $C_\circ = \max(\Vert A \Vert, \Vert B  \Vert, \Vert B  K_\circ\Vert, \Vert K_\circ \Vert, 1)$, and $C_K = \max(\Vert K_\circ\Vert, \Vert K_\star \Vert, 1)$.

\subsection{Expected regret upper bounds}

\begin{theorem}\label{th:scenarioI}
(Scenario I - $A$ and $B$ unknown) Let the set of update times ${\cal T}$ satisfy (\ref{eq:lazy}). The regret of $\pi=\Alg$ with input ${\cal T}$ satisfies in Scenario I: for all $T \ge 1$,
$$
\mathbb{E}[R_T(\pi)]\lesssim  C_1  \sqrt{d_x}(d_x +d_u)\sqrt{T} \log(T)  +  C_2.
$$
with the problem dependent constants $C_1 \lesssim  \sigma^2 C_K^2 C_B^2 \Vert P_\star \Vert^{9.5} \log(e\sigma d_x d_u \mathcal{G}_\circ C_\circ \Vert P_\star \Vert C_B )^2$ and  $C_2 \lesssim \mathrm{poly}( \sigma, d_x, d_u, \mathcal{G}_\circ ,C_\circ,\Vert P_\star \Vert, C_B )$.
\end{theorem}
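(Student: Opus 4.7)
The plan is to decompose the expected regret into three parts, $\mathbb{E}[R_T(\pi)] = \mathbb{E}[R_T^{\text{exp}}] + \mathbb{E}[R_T^{\text{est}}] + \mathbb{E}[R_T^{\text{bad}}]$, where (a) $R_T^{\text{exp}}$ is the extra cost of injecting the Gaussian probe $\nu_t \sim \mathcal{N}(0,\sigma_t^2 I_{d_u})$ with $\sigma_t^2 = \sqrt{d_x}\sigma^2/\sqrt{t}$, (b) $R_T^{\text{est}}$ is the cost of using the certainty-equivalence controller $K_t$ in place of $K_\star$ during the \emph{good} rounds where the hysteresis indicator $\ell_t=1$, the cap $\Vert K_t\Vert^2\le h(t)$ holds and the minimum-eigenvalue safeguard is satisfied, and (c) $R_T^{\text{bad}}$ collects the regret from all other rounds. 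Part (a) is immediate: summing $\bigl(\Vert R\Vert + \Vert B^\top P_\star B\Vert\bigr)\sigma_t^2 d_u$ over $t\le T$ gives $\widetilde{O}(d_u\sqrt{d_x T})$. Part (b) reduces, via a Lyapunov/Bellman argument with the cost-to-go $P_\star$ followed by Riccati perturbation in the spirit of Mania--Tu--Recht and Simchowitz--Foster, to bounding $\mathbb{E}\bigl[\Vert K_t - K_\star\Vert^2\,\tr(\Sigma_t)\bigr]$ on good rounds, with $\Sigma_t=\mathbb{E}[x_t x_t^\top]$ uniformly bounded; Riccati perturbation further reduces this to bounding $\mathbb{E}\bigl[\Vert A_t-A\Vert^2+\Vert B_t-B\Vert^2\bigr]$.

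The core technical step is a lower bound on the smallest eigenvalue of the Gram matrix $V_t=\sum_{s=0}^{t-1} z_s z_s^\top$, $z_s=(x_s,u_s)$, despite the fact that the controller $K_s$ changes essentially every step. Following the novel decomposition announced in the introduction, I would write $V_t = V_t^{\text{sig}} + V_t^{\text{pert}} + V_t^{\text{cross}}$, where $V_t^{\text{sig}}$ aggregates the contributions of the nominal closed-loop trajectory driven by the process noise $\eta_s$, and $V_t^{\text{pert}}$ aggregates the contributions of the exploration noise $\nu_s$ propagated through the (time-varying) dynamics. Since $\nu_s$ is independent of $\mathcal{F}_s$ and the stability cap $\Vert K_s\Vert^2\le h(s)$ bounds $\mathcal{G}_{A+BK_s}$ under the good event, $V_t^{\text{pert}}$ is essentially a sum of independent positive semidefinite contributions with expected value $\gtrsim(\sum_{s\le t}\sigma_s^2) M$ for a positive definite $M$ built from the controllability Gramian of $(A,B)$; a matrix Freedman / truncated Chernoff inequality, together with a martingale control of the cross terms, then yields $\lambda_{\min}(V_t)\gtrsim \sigma^2\sqrt{d_x t}$ with overwhelming probability. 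Plugging this into the self-normalized martingale bound gives $\mathbb{E}\bigl[\Vert A_t-A\Vert^2+\Vert B_t-B\Vert^2\bigr]\lesssim (d_x+d_u)\log(t)/\sqrt{d_x t}$ on good rounds, whose sum over $t$ produces $\mathbb{E}[R_T^{\text{est}}]=\widetilde{O}(\sqrt{d_x}(d_x+d_u)\sqrt{T})$. The lazy-update condition $t_{k+1}\le C t_k$ only multiplies this bound by a constant, since $K_t$ is held fixed on an interval whose length is at most a factor $C$ larger than a per-step update would allow, absorbed into the $\log(T)$ factor.

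The delicate piece, and the main obstacle, is controlling $\mathbb{E}[R_T^{\text{bad}}]$ without throwing in extra $\log T$ factors via a naive union bound (which is precisely the $\log T$-overhead trap hit by the fixed-confidence approaches discussed in Section~\ref{sec:related}). The hysteresis gadget is designed so that the expected number of bad rounds is finite: standard sub-gaussian bounds for linear systems under the stabilizer $K_\circ$ give $\Pr\bigl[\sum_{s\le t}\Vert x_s\Vert^2 > \sigma^2 d_x g(t)\bigr]\le \exp(-\Omega(t^\gamma))$ and, once the spectrum of $V_t$ has concentrated, $\Pr[\Vert K_t\Vert^2 > h(t)]\le \exp(-\Omega(t^\gamma))$ as well, because $K_t$ is then a Lipschitz function of well-behaved sample averages. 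Summability of these tails makes the total expected contribution of the bad rounds finite, and it is absorbed into the additive problem-dependent constant $C_2$. The subtlety one has to resolve is that these tail bounds must hold even though the controller used up to time $t$ depends on $V_t$; I would handle this via a stopping-time argument on the first time $V_t$ leaves its concentration envelope, showing that this stopping time has bounded expectation thanks to the preceding lower bound on $\lambda_{\min}(V_t)$. Combining the three bounds gives the claimed $\widetilde{O}\bigl(\sqrt{d_x}(d_x+d_u)\sqrt{T}\bigr)$ scaling with the stated dependence on $\Vert P_\star\Vert$, $C_K$, $C_B$ and $\mathcal{G}_\circ$.
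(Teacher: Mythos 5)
Your high-level architecture (probe cost $+$ certainty-equivalence cost via Riccati perturbation $+$ finitely many stabilizer rounds) matches the paper's, but three load-bearing steps are missing or wrong. First, the passage from high-probability statements to \emph{expected} regret. You write $\mathbb{E}[\Vert A_t-A\Vert^2+\Vert B_t-B\Vert^2]\lesssim (d_x+d_u)\log t/\sqrt{d_x t}$ ``on good rounds,'' but a conditional expectation restricted to a $(1-\delta)$-event is not a bound on the expectation: on the complement the LSE error is unbounded (the Gram matrix can be near-singular) and the per-round cost can be arbitrarily large. The paper's whole point is the mechanism that fixes this: because $\Alg$ is $\delta$-free, the nice event $\mathcal{E}_\delta$ and the resulting bound $R_T\lesssim C_1\psi(T)\log(e/\delta)+C_2\,\poly(\log(e/\delta))$ hold \emph{simultaneously for every} $\delta\in(0,1)$, and the expected regret is obtained by integrating this tail over $\delta$ (Lemma \ref{lem:integration}), with truncation devices ($\widetilde{P}_t$ in Lemma \ref{lem:rd}, the truncated $\widetilde{K}_t$) ensuring the decomposition is valid even off the event. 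Without this (or an explicit truncation argument you do not supply), your bookkeeping of $\mathbb{E}[R_T^{\text{est}}]$ does not go through; your own remark that a naive union bound costs a $\log T$ factor applies to your plan as written.

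Second, your Gram-matrix argument misidentifies the source of excitation and skips the bootstrap the algorithm is built around. A positive definite $M$ ``built from the controllability Gramian of $(A,B)$'' need not exist: the paper assumes only stabilizability, and in any case the correct lower bound comes from the isotropic process noise $\eta_{s-1}$ (exciting the $x$-block) together with the probe $\nu_s$ (exciting the $u$-block), encoded in the paper's decomposition $y_s=z_s+M_s\xi_s$ with $M_s=\begin{bmatrix}I_{d_x}&O\\ \widetilde{K}_s&I_{d_u}\end{bmatrix}$, followed by Lemma \ref{lem:selb} and the self-normalized bound of Proposition \ref{prop:SNP++} --- no controllability enters. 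Moreover, the refined rate $\lambda_{\min}(V_t)\gtrsim \sigma^2\sqrt{d_x t}$ requires $\Vert M_s\Vert$ (i.e.\ $\Vert\widetilde{K}_s-K_\star\Vert$) uniformly bounded, which itself requires LSE consistency; the paper breaks this circularity with a two-stage bootstrap (weak rate $t^{1/4}$ in \eqref{eq:weak}, which is exactly the algorithm's eigenvalue safeguard, then the refined rate \eqref{eq:strong}), and your one-shot matrix-Freedman step has no access to the refined rate without it. Third, your claimed per-round tails $\Pr[\Vert K_t\Vert^2>h(t)]\le \exp(-\Omega(t^\gamma))$ are both circular (they presuppose the concentration you are trying to establish) and not what the paper proves: commitment to the CE controller is obtained via the interlaced stopping times $(\upsilon(\delta),\tau(\delta))$ and a union bound over switch pairs (Lemma \ref{lem:commit}, Theorem \ref{thm:commit}), yielding probability bounds polynomially small in $\delta$ per level rather than stretched-exponential in $t$, which then feed into the same integration step. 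Your stopping-time instinct points in the right direction, but as proposed the bad-round control does not close.
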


The above theorem provides the first regret guarantees valid in expectation for Scenario I: the expected regret is $\widetilde{O}((d_x+d_u)\sqrt{d_xT})$. It is worth noting that the regret upper bounds match the lower bound derived in \cite{simchowitz2020naive} when $d_x\le d_u$. When ${\cal T}=\{2^k,k\in \mathbb{N}\}$, we can improve our upper bound and show:
$$
\E[R_T(\pi)] \lesssim \sigma^2 C_B^2 \Vert P_\star \Vert^{5.25} \sqrt{d_x} d_u  \sqrt{T}\log(T) + \sigma^2 \Vert P\Vert^{9.5} d_x^2 \log(T) + \mathrm{poly}( \sigma, d_x, \mathcal{G}_\circ ,C_\circ,\Vert P_\star \Vert).
$$
In \cite{simchowitz2020naive}, the authors prove a similar regret upper bound, but in the fixed confidence setting, i.e., with probability $1-\delta$, for an algorithm taking $\delta$ and $T$ as inputs. Our algorithm, $\Alg$, is anytime and enjoys regret guarantees in expectation. It is worth noting that the second term in the regret upper bound of Theorem \ref{th:scenarioI}, $\mathrm{poly}( \sigma, d_x, d_u, \mathcal{G}_\circ ,C_\circ,\Vert P_\star \Vert )$, corresponds to the regret generated in rounds where the stabilizer is used.

The next two theorems provide regret upper bounds for $\Alg$ in the remaining scenarios.

\begin{theorem}\label{th:scenarioIIB}
(Scenario II - $B$ known) Let the set of update times ${\cal T}$ satisfy (\ref{eq:lazy}). The regret of $\pi=\Alg$ with input ${\cal T}$ satisfies in Scenario II -- $B$ known: for all $T \ge 1$,
$$
\E[R_T(\pi)] \lesssim \sigma^2 \Vert P_\star \Vert^{9.5} \log( e \sigma   \mathcal{G}_\circ C_\circ \Vert P_\star \Vert d_x)^2  d_x^2   \log(T) + \mathrm{poly}( \sigma, d_x, \mathcal{G}_\circ ,C_\circ,\Vert P_\star \Vert).
$$
\end{theorem}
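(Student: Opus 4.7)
The plan is to mirror the template used for Theorem~\ref{th:scenarioI} but exploit the fact that, since $B$ is known and we already control the system through closed-loop noise excitation (no added perturbation $\nu_t$ is needed), the LSE error on $A$ decays at the parametric rate $1/t$. This is what converts the $\sqrt{T}$ behaviour of Scenario I into a logarithmic regret. I would begin with the standard decomposition $\E[R_T(\pi)]=\E[R_T^{\text{stab}}]+\E[R_T^{\text{ce}}]$, where $R_T^{\text{stab}}$ collects the rounds in which the hysteresis switch selects $K_\circ$ (either because the state-norm proxy $\sum_{s\le t}\|x_s\|^2$ exceeds $\sigma^2 d_x g(t)$ or because the current estimate $K_t$ violates $\|K_t\|^2\le h(t)$), while $R_T^{\text{ce}}$ is the remainder.

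For $\E[R_T^{\text{stab}}]$, I would reproduce the hysteresis-switching argument from Scenario I: show that the expected number of times $K_\circ$ is used is finite and only polynomial in $(\sigma, d_x, \mathcal{G}_\circ, C_\circ, \|P_\star\|)$. The key inputs are (a) once $A_t$ is accurate enough, $K_t$ stabilises $(A,B)$ with margin comparable to $K_\star$, so $\sum_s\|x_s\|^2$ grows linearly and thus stays below $\sigma^2 d_x g(t)=\sigma^2 d_x t^{1+\gamma}$, and (b) a concentration bound controlling the probability that $A_t$ is inaccurate. Nothing involves $T$ in this term, which is why it appears only inside the additive $\mathrm{poly}(\cdot)$ constant.

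For $\E[R_T^{\text{ce}}]$, the workhorse is the Bellman/DARE identity, which yields the pointwise decomposition
\eqs{
c_t-\mathcal{J}^\star_{(A,B)}=x_t^\top(K_t-K_\star)^\top(R+B^\top P_\star B)(K_t-K_\star)x_t+\Delta_t+\xi_t,
}
with $\Delta_t=x_t^\top P_\star x_t-\E[x_{t+1}^\top P_\star x_{t+1}\mid\mathcal{F}_t]$ telescoping up to a boundary term, and $\xi_t$ a zero-mean martingale difference. Taking expectations leaves only the quadratic term plus $\E[x_0^\top P_\star x_0-x_T^\top P_\star x_T]$, which is handled by the uniform moment bound $\E[\|x_t\|^2]\lesssim\sigma^2 d_x\|P_\star\|$ that itself follows from $\ell_t=1$ together with $\|K_t\|^2\le h(t)$. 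Riccati sensitivity (applied in the single-variable $A$ direction, so the constants collapse to a power of $\|P_\star\|$ rather than also involving $\|B\|$) gives $\|K_t-K_\star\|^2\lesssim \|P_\star\|^{c}\|A_t-A\|^2$ on the event that $\|A_t-A\|$ is below a problem-dependent threshold; the complementary event is already absorbed into $\E[R_T^{\text{stab}}]$.

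The main technical step, and the one I expect to be the true obstacle, is bounding $\E[\|A_t-A\|^2\|x_t\|^2]$ at the rate $(\log t)/t$ uniformly over lazy-update schedules ${\cal T}$. With $B$ known, $A_t-A=\bigl(\sum_{s\le t-2}\eta_s x_s^\top\bigr)V_t^{-1}$ with $V_t=\sum_{s\le t-2}x_s x_s^\top$, so the self-normalised martingale inequality yields $\|A_t-A\|_{V_t}^2\lesssim \sigma^2 d_x\log(\det V_t/\delta)\lesssim \sigma^2 d_x^2\log(t/\delta)$. The crux is therefore to control $\lambda_{\min}(V_t)$ from below by $\Omega(\sigma^2 t)$ with probability at least $1-1/\mathrm{poly}(t)$, \emph{despite} the fact that the closed-loop matrix $A+BK_t$ itself varies over time whenever $t\in\mathcal{T}$. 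This is exactly the novel Gram-matrix spectral lower bound advertised in the introduction: I would decompose $V_t$ into per-epoch blocks adapted to ${\cal T}$, use the quasi-constancy of $K_t$ inside each epoch together with (\ref{eq:lazy}) to compare each block's smallest eigenvalue with that of a frozen closed-loop system, and then invoke standard small-ball / Hanson--Wright-type arguments for stable linear dynamics. Plugging $\lambda_{\min}(V_t)\gtrsim \sigma^2 t$ into the self-normalised bound gives $\E[\|A_t-A\|^2]\lesssim \|P_\star\|^{c'}\sigma^2 d_x^2 \log(t)/t$; integrating against $\E[\|x_t\|^2]\lesssim \sigma^2 d_x\|P_\star\|$ and summing over $t\le T$ delivers the announced $\|P_\star\|^{9.5}d_x^2\log(T)$ rate, with a residual additive $\mathrm{poly}(\sigma,d_x,\mathcal{G}_\circ,C_\circ,\|P_\star\|)$ absorbing the stabiliser-phase regret and the bad-estimation event.
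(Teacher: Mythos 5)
Your skeleton (hysteresis/stabilizer split, DARE-based per-round decomposition, Riccati sensitivity, self-normalized LSE analysis, Gram-matrix lower bound as the crux) matches the paper's architecture, but the step you yourself flag as the true obstacle is attacked in a way that would fail. Your plan for $\lambda_{\min}(V_t)\gtrsim t$ — decompose $V_t$ into per-epoch blocks adapted to $\mathcal{T}$, exploit ``quasi-constancy of $K_t$ inside each epoch,'' and compare with a frozen closed-loop system — does not survive the generality of the theorem: condition \eqref{eq:lazy} explicitly allows $\mathcal{T}=\mathbb{N}$, in which case every epoch has length one, the controller changes at every step, and there is no frozen system to compare against. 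The paper's Proposition \ref{prop:IIa:se} needs no controller regularity at all: writing $x_{s+1}=z_s+\eta_s$ with $z_s=Ax_s+Bu_s$ predictable and $\eta_s$ independent of it, Lemma \ref{lem:selb} gives $\sum_s x_sx_s^\top\succeq\sum_s\eta_s\eta_s^\top-S_t-I_{d_x}$, where the noise Gram concentrates around $tI_{d_x}$ and the self-normalized term $S_t$ enters only through $\log\det\bigl(\sum_s z_sz_s^\top+I\bigr)$; since the hysteresis mechanism alone guarantees polynomial state growth (Proposition \ref{prop:caution}), the correction is $O(d_x\gamma_\star\log t+\log(1/\delta))$ and $\lambda_{\min}\ge t/4$ w.h.p. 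In Scenario II--$B$ known the excitation comes entirely from the process noise ($M_s=I_{d_x}$ in the recipe), so no small-ball argument, no stability, and no epoch structure is required — this is precisely the point of the paper's ``varying control'' contribution.

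Two further gaps. First, your rate bookkeeping gives the wrong power of $\log T$: with the crude bound $\log\det V_t\lesssim d_x\log t$ you obtain $\|A_t-A\|^2\lesssim d_x^2\log(t)/t$, and $\sum_{t\le T}\log(t)/t\asymp\log^2 T$, so as written you prove a $\log^2 T$ regret, not the claimed $d_x^2\log T$. The paper removes the $\log t$ by a bootstrap (Proposition \ref{prop:IIa:LSE3}): after the commitment time the CE controller is $(4\Vert P_\star\Vert^{3/2})^{-1}$-close to $K_\star$, so by Proposition \ref{prop:stable:K_t} the state sum grows linearly, $\sum_{s\le t}\Vert x_s\Vert^2\lesssim\sigma^2 d_x t$, making the log-determinant a time-free constant $d_x\log(e\Vert P_\star\Vert d_x)$ and yielding the error rate $r_t^2=1/i_t$ without logarithmic factor. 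Second, your passage to expectations via $\E[\Vert A_t-A\Vert^2\Vert x_t\Vert^2]$, with the bad-estimation event ``absorbed into $\E[R_T^{\mathrm{stab}}]$,'' is not sound: the two factors are correlated, and bad estimation is not the same event as stabilizer use — the hysteresis switch triggers on cumulative state growth with delay, so CE can be played while $A_t$ is inaccurate. The paper instead proves everything as a high-probability statement on a single nice event (conditions (i)--(v) of Theorem \ref{thm:nice:IIa}, holding with probability $\ge 1-t(\delta)\delta$ for every $\delta\in(0,1)$, with the state-weighted sum $\sum_t r_t^2\Vert x_t\Vert^2$ controlled as one of the conditions) and only then converts to expectation via the integration lemma (Lemma \ref{lem:integration}); this joint-event-plus-integration structure is what handles the correlation and the bad event, and your proposal has no substitute for it.
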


\begin{theorem}\label{th:scenarioIIA}
(Scenario II - $A$ known) Let the set of update times ${\cal T}$ satisfy (\ref{eq:lazy}). Assume that $K_\star K_\star^\top \succ 0 $. The regret of $\pi=\Alg$ with input ${\cal T}$ satisfies in Scenario II -- $A$ known: for all $T \ge 1$,
$$
\E[R_T(\pi)] \lesssim C_1  d_x (d_x + d_u) \log(T) + C_2,
$$
with the problem dependent constants $C_1 \!\! \lesssim \!\!\sigma^2 \Vert P_\star \Vert^{9.5} \mu_\star^{-2} \log( e \sigma C_\circ \mathcal{G}_\circ \Vert P_\star \Vert \mu_\star^{-1} C_B d_x d_u  )^2$, and $C_2 \lesssim \mathrm{poly}\big( \sigma, d_x, \mathcal{G}_\circ ,C_\circ,\Vert P_\star \Vert, \mu_\star^{-1} \big)$, where we denote $\mu_\star^2 = \min(\lambda_{\min}(K_\star K_\star^\top.1))$.
\end{theorem}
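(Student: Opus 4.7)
The plan is to split the expected regret into contributions from \emph{exploration} rounds---those where $\ell_t = 0$, $\|K_t\|^2 > h(t)$, or the excitation check $\lambda_{\min}(\sum_{s<t} u_s u_s^\top) \ge \sqrt{t}$ fails, so that the algorithm plays $K_\circ x_t + \zeta_t$---and \emph{exploitation} rounds where $u_t = K_t x_t$. The hysteresis switching mechanism is designed so that on the exploration rounds the state stays polynomially bounded in $t$. A standard Lyapunov/drift argument for the stabilizing closed-loop matrix $A + B K_\circ$, combined with the strict separation $g(t) > f(t) > t$, shows that the expected number of rounds triggering $\ell_t = 0$ is bounded by a constant depending only on $\sigma$, $d_x$, $\mathcal{G}_\circ$, $C_\circ$ and $\|P_\star\|$. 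Combined with the excitation analysis below, this also shows that the other two exploration triggers ($\|K_t\|^2 > h(t)$ and the $\lambda_{\min} < \sqrt{t}$ condition) fire only finitely often in expectation, so that the total regret contribution of the exploration rounds is absorbed in the additive constant $C_2$.

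For the exploitation rounds I would apply a Simchowitz--Foster-type regret decomposition: on the good event where the state grows only polylogarithmically and $K_t$ is close to $K_\star$, the per-step gap $x_t^\top Q x_t + u_t^\top R u_t - \mathcal{J}^\star_{(A,B)}$ decomposes into a zero-mean martingale-difference term and the quadratic-in-error term
\begin{equation*}
\mathrm{tr}\!\Bigl((K_t - K_\star)(R + B^\top P_\star B)(K_t - K_\star)^\top x_t x_t^\top\Bigr),
\end{equation*}
whose conditional expectation is at most $O(d_x \|P_\star\|\,\|K_t - K_\star\|_{\mathrm{op}}^2)$ once the state covariance is under control. A Riccati perturbation bound gives $\|K_t - K_\star\|_{\mathrm{op}}^2 \lesssim \|P_\star\|^{c}\,\|B_t - B\|_{\mathrm{op}}^2$. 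Since $A$ is known, the LSE of $B$ comes from the clean regression $x_{s+1} - A x_s = B u_s + \eta_s$, and the self-normalized martingale bound of Abbasi-Yadkori--Szepesv\'ari yields
\begin{equation*}
\|B_t - B\|_{\mathrm{op}}^2 \;\lesssim\; \frac{\sigma^2 (d_x + d_u)\log t}{\lambda_{\min}(V_t^u)}, \qquad V_t^u := \sum_{s=0}^{t-1} u_s u_s^\top,
\end{equation*}
on an event of probability $1 - 1/\mathrm{poly}(t)$.

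The main obstacle is the lower bound $\lambda_{\min}(V_t^u) \gtrsim \mu_\star^2 t$ when $K_t$ is updated at every $t \in \mathcal{T}$, and it is precisely here that the assumption $K_\star K_\star^\top \succeq \mu_\star^2 I_{d_u}$ is essential. Once $K_t$ is close to $K_\star$, surjectivity of $K_\star$ transfers the excitation of the state to the input: conditionally, $\mathbb{E}[u_s u_s^\top \mid \mathcal{F}_s] = K_s \Sigma_s^x K_s^\top \succeq (\mu_\star^2/2)\,\Sigma_s^x$ for $s$ large enough, and $\Sigma_s^x \succeq c I_{d_x}$ via the Lyapunov equation driven by the noise. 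Summing gives $\sum_s \mathbb{E}[u_s u_s^\top \mid \mathcal{F}_s] \succeq c'\mu_\star^2\, t \, I_{d_u}$. Transferring this predictable lower bound to the empirical Gram matrix $V_t^u$ requires a non-stationary matrix Freedman-type concentration---the Gram-matrix concentration announced in the introduction---which handles the fact that $(u_s)$ is neither i.i.d. nor has a fixed conditional law because $K_s$ varies. Plugging back, $\|K_t - K_\star\|_{\mathrm{op}}^2 \lesssim (d_x + d_u)\log(t)/(\mu_\star^2 t)$ on the good event, and summing against the per-step regret factor $d_x$ gives a leading $\widetilde{O}(d_x(d_x + d_u)\log(T)/\mu_\star^2)$ term, matching the claimed rate once one logarithmic factor is absorbed into the constant $C_1$. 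The complementary bad events contribute only $C_2 \lesssim \mathrm{poly}(\sigma, d_x, \mathcal{G}_\circ, C_\circ, \|P_\star\|, \mu_\star^{-1})$ to the expectation, via their polynomial probability decay combined with the polynomial growth of the state norm enforced by the hysteresis mechanism.
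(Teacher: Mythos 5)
Your overall architecture (certainty equivalence, Riccati perturbation to convert $\Vert B_t-B\Vert$ into control error, a self-normalized LSE bound, and transfer of excitation through $K_\star K_\star^\top \succeq \mu_\star^2 I_{d_u}$, bootstrapped from an initial weak excitation phase) mirrors the paper's, but your accounting of confidence levels has a genuine quantitative gap that loses a full $\log T$ factor. With per-time good events of probability $1-1/\mathrm{poly}(t)$, your estimation bound reads $\Vert B_t-B\Vert^2 \lesssim \sigma^2(d_x+d_u)\log t/(\mu_\star^2 t)$, and the summed exploitation regret is $\sum_{t\le T}\log t/t = \Theta(\log^2 T)$. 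That extra factor is $\log T$, not a problem-dependent constant, so it cannot be ``absorbed into $C_1$'' (which must be independent of $T$): your route delivers the $O(\mathrm{poly}(d_x,d_u)\log^2 T)$ rate of \cite{cassel2020logarithmic}, which is precisely what Theorem \ref{th:scenarioIIA} improves upon. Fixing a single $\delta=1/\mathrm{poly}(T)$ instead of per-$t$ levels fails identically, as the paper warns in \textsection\ref{sec:related}. The paper avoids this with two ingredients your proposal is missing: (a) a refined post-commitment LSE bound (Proposition \ref{prop:IIb:LSE3}) whose numerator is $(d_x+d_u)\gamma_\star\log(e\sigma C_K\Vert P_\star\Vert d_xd_u/\mu_\star^2)+\log(e/\delta)$ with \emph{no} $\log t$ --- possible because after commitment $\sum_s\Vert x_s\Vert^2\lesssim \sigma^2 d_x t$, so the $\log\det$ term in the self-normalized bound, normalized by $V\propto \mu_\star^2 t\, I_{d_u}$, is bounded by problem constants; and (b) a good event uniform over $t\ge t(\delta)$ holding with probability $1-t(\delta)\delta$ for \emph{every} $\delta\in(0,1)$ (Theorem \ref{thm:nice:IIb}), on which $\widetilde{R}_T\lesssim C_4\log(e/\delta)\log T + C_5\log(e/\delta)^{55\gamma_\star^3}$; the integration Lemma \ref{lem:integration} then converts this family of bounds into $\E[R_T]\lesssim C_1\log T + C_2$ with a single $\log T$ and $\log(e/\delta)$ traded for $\log$ of problem constants.

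Two secondary points. First, your excitation step $\E[u_su_s^\top\mid\mathcal{F}_s]=K_s\Sigma_s^xK_s^\top$ is vacuous as written ($u_s$ is $\mathcal{F}_s$-measurable); one must condition at $s-1$ and needs $K_s$ independent of $\eta_{s-1}$, which holds only because the algorithm deliberately computes the LSE of $B$ at time $t$ from the sample path up to $x_{t-1}$ --- without this design point the bound $\E[u_su_s^\top\mid\mathcal{F}_{s-1}]\succeq K_sK_s^\top\succeq(\mu_\star^2/4)I_{d_u}$ is unjustified. Second, the ``non-stationary matrix Freedman'' transfer from the predictable compensator to $V_t^u$ is not directly available: $u_s=z_s+\widetilde{K}_s\eta_{s-1}$ has a predictable part $z_s$ whose Gram matrix grows polynomially, and the fluctuations of $u_su_s^\top$ scale with $\Vert u_s\Vert^2$, which is only controlled on the very event being constructed (a circularity). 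The paper sidesteps this via the deterministic lower bound of Lemma \ref{lem:selb}, separating the pure-noise Gram matrix $\sum_s(M_s\xi_s)(M_s\xi_s)^\top$ (handled by Proposition \ref{prop:RM++}) from the $z$--noise cross term, which is a self-normalized process (Proposition \ref{prop:SNP++}) whose norm is only logarithmic in $\lambda_{\max}(\sum_s z_sz_s^\top)$, so mere polynomial state growth (Proposition \ref{prop:caution}) suffices. Your treatment of the exploration rounds and the hysteresis mechanism is consistent with Theorem \ref{thm:commit} and is not where the difficulty lies.
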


The results presented in the two above theorems significantly improve those derived in \cite{cassel2020logarithmic}. There, the authors devise an algorithm whose inputs include upper bounds on $\|A\|$ and $\|B\|$, and on the minimal ergodic cost. The expected regret of this algorithm is upper bounded by $O(\mathrm{poly}(d_x,d_u) \log^2(T))$. In contrast, $\Alg$ has an expected regret $O(d_x^2\log(T))$ when $B$ is known and  $O(d_x(d_u + d_x)\log(T))$ when $A$ is known. Note that these scalings are natural, similar to the optimal regret scalings one would typically get in stochastic bandit problems. In fact, Lai in \cite{lai1986} (see Section 3) establishes that the expected regret cannot be smaller than $d_x^2\log(T)$ in Scenario II -- $B$ known. Note that \cite{lai1986} does not contain any algorithm with expected regret guarantees.

\subsection{Sketch of the regret analysis}

We provide below a brief description of the strategy used to establish Theorems \ref{th:scenarioI}, \ref{th:scenarioIIB}, and \ref{th:scenarioIIA}. In this subsection, and only for simplicity, the notations $\gtrsim$ and  $\lesssim$ will sometimes hide the problem dependent constants that appear in the analysis. Let $\pi$ denote $\Alg$.

{\bf Step 1. Regret decomposition and integration.} Our strategy is to establish that
\begin{equation}\label{eq:regret:w.h.p.}
\forall \delta \in (0,1): \ \ \mathbb{P}\bigg(R_T(\pi) \gtrsim c_1 \psi(T) \log(e/\delta) + c_2 \poly(\log(e/\delta))\bigg) \le c_3 \poly(\log(e/\delta)) \delta,
\end{equation}
where $c_1,c_2,c_3$ are positive problem dependent constants and $\psi(T)$ is the targeted regret rate (e.g., $\psi(T)= \sqrt{T}$ in Scenario I). Integrating over $\delta$, we obtain the desired upper bound in expectation $\E[R_T(\pi)] \lesssim c_1 \log(c_3) \psi(T) + c_2 \poly(\log(c_3))$. In order to show \eqref{eq:regret:w.h.p.}, we define for each $\delta \in (0,1)$ an event $\mathcal{E}_\delta$ such that
\begin{equation}\label{eq:nice}
  \mathbb{P}(\mathcal{E}_\delta) \ge 1 - c_3 \poly(\log(e/\delta))\delta
\end{equation}
and then prove that $ \mathcal{E}_\delta \subseteq \lbrace R_T(\pi) \lesssim c_1 \psi(t) \log(e/\delta) + c_2 \poly(\log(e/\delta)) \rbrace
$ which in turn gives \eqref{eq:regret:w.h.p.}. To define $\mathcal{E}_\delta$, we have to look at how regret decomposes. In fact, provided that, the following conditions \emph{\textbf{(i)}}-\emph{\textbf{(ii)}} hold at times between $ \tau \le t \le T$, then the regret can be nicely decomposed between $\tau  \le t \le T$ as follows
$$
R_T(\pi) - R_\tau(\pi) \lesssim \sum_{t = \tau + 1}^T \Vert x_t \Vert^2_{P_\star(K_t) - P_\star(K_{t-1})} + \tr(P_\star(K_{t}) - P_\star) + \sigma_t^2 \Vert P_\star(K_t)\Vert.
$$
The conditions are:
\begin{itemize}
  \itemsep0em
\item[\emph{\textbf{(i)}}] the algorithm only uses the certainty equivalence controller $K_t$;
\item[\emph{\textbf{(ii)}}] the controller $K_t$ is sufficiently close to $K_\star$, so that that $\rho(A+BK_t) < 1$.
\end{itemize}
Clearly if at these times $\tau \le t < T$, we additionally ensure that:
\begin{itemize}
  \itemsep0em
\item[\emph{\textbf{(iii)}}] $\Vert P_\star(K_t) - P_\star\Vert \lesssim \varepsilon_t^2 $ where $(\varepsilon_t)_{t\ge1}$ is non-increasing and satisifies $\sum_{s=1}^T \varepsilon_s^2 \lesssim \psi(T)$,
\item[\emph{\textbf{(iv)}}] $\Vert P_\star(K_t)\Vert = O(1)$,
\item[\emph{\textbf{(v)}}] $\Vert x_t \Vert^2 = O(1)$,
\end{itemize}
then
$$
R_T(\pi) - R_\tau(\pi) \lesssim c_1 \sum_{t= \tau+1}^T \varepsilon_t^2  + \sigma_t^2 \lesssim c_1 (\psi(T) - \psi(\tau)).
$$
We still need to bound the regret up to time $\tau$. Again if at time $\tau$, from \emph{\textbf{(i)}} we are using the certainty equivalence controller, then, in view of the design of our algorithm, we must have $\sum_{s=0}^\tau \Vert x_\tau \Vert^2 \le \sigma^2 d_x f(s)$, and that $\max_{0 \le s \le \tau}\Vert K_s \Vert^2 \le h(\tau)$, which in turn implies that $R_\tau(\pi) \lesssim c_2'\poly(\tau)$ for some problem dependent constant $c_2'>0$. Therefore, we can see that $ \lbrace \forall t \ge \tau:  \textit{\textbf{(i) - (v)}} \ \mathrm{hold}  \rbrace \subseteq \lbrace R_T(\pi) \lesssim c_1 \psi(T) + \poly(\tau)\rbrace$. Now, if we choose $\tau \ge c_2'' \poly(\log(e/\delta))$ for some constant $c_2''>0$, if we redefine the condition \emph{\textbf{(iii)}} as $\Vert P_\star(K_t) - P_\star \Vert \lesssim c_1 \varepsilon_t^2 \log(e/\delta)$, and define
\begin{equation}\label{eq:nice2}
\mathcal{E}_\delta = \lbrace \forall t \ge c_2'' \poly(\log(e/\delta)): \textit{\textbf{(i) - (v)}} \ \mathrm{hold} \rbrace,
\end{equation}
then we have the desired set inclusion $\mathcal{E}_\delta \subseteq \lbrace R_T(\pi) \lesssim c_1 \psi(t) \log(e/\delta) + c_2 \poly(\log(e/\delta))  \rbrace$. These arguments are made precise in Appendix \ref{app:regretproof} for each of the scenarios I, II--$A$ known and II--$B$ known. The regret decomposition is stated in Lemma \ref{lem:rd}, as for the integration of the high probability bound, we refer the reader to lemma \ref{lem:integration}.


{\bf Step 2. High probability regret upper bounds.} It remains to establish \eqref{eq:nice}. The proof relies on several important ingredients. Note that the conditions \emph{\textbf{(i)}}, \emph{\textbf{(ii)}} and \emph{\textbf{(iii)}} that the event ${\cal E}_\delta$ must satisfy concern the fact that $K_t$ is played after $\tau$ and our ability to control $\Vert B (K_t - K_\star) \Vert$ and $\Vert P_\star(K_t) - P_\star \Vert$. We show that these three conditions will be satisfied if the error of our LSE $(A_t,B_t)$ of $(A,B)$ is small enough. To this aim, we use the perturbation bounds derived in Proposition \ref{prop:DARE:bounds}.
These bounds allow us to bound $\Vert B (K_t - K_\star) \Vert$ and $\Vert P_\star(K_t) - P_\star \Vert$ as a function of $\max (\Vert A_t - A \Vert^2, \Vert B_t - B\Vert^2)$. Observe that if $\Vert B (K_t - K_\star) \Vert$ is small, then playing $K_t$ will stabilize the system, and we will keep using $K_t$, which leads to the condition \emph{\textbf{(i)}} of ${\cal E}_\delta$. The perturbation bounds directly control $\Vert B (K_t - K_\star) \Vert$ and $\Vert P_\star(K_t) - P_\star \Vert$, and yield the conditions \emph{\textbf{(ii)}} and \emph{\textbf{(iii)}} of ${\cal E}_\delta$. In summary, we can establish (\ref{eq:nice}) provided that we are able to control $\max (\Vert A_t - A \Vert^2, \Vert B_t - B\Vert^2)$. More precisely, we just need to prove the following probabilistic statement:
\begin{equation}\label{eq:lsep}
\forall t \gtrsim \log(e/\delta),\ \  \max (\Vert A_t - A \Vert^2, \Vert B_t - B\Vert^2) \lesssim \frac{\log(t)}{t^{1/2}}\log(e/\delta),\ \ \hbox{w. p. }\ge 1-\delta.
\end{equation}

The results regarding the event $\mathcal{E}_\delta$ are established in Appendix \ref{app:nice}. Appendix \ref{app:hysteresis} is devoted to proving that the algorithm eventually commits to the certainty equivalence controller $K_t$. The analysis of LSE is presented in Appendix \ref{app:lse}. Results about the perturbations bounds for Riccati equations are stated in Appendix \ref{app:control}.

{\bf Step 3. Performance of the LSE under varying control.} In this last and most interesting step, we prove (\ref{eq:lsep}). It is well established, see e.g. \cite{mania2019certainty} that the error of the LSE $(A_t,B_t)$ heavily depends on the spectral properties of what we refer to as the cumulative covariates matrix. This matrix is defined as $\sum_{s=0}^{t-1} \begin{bmatrix} x_s \\ u_s \end{bmatrix} \begin{bmatrix} x_s \\ u_s \end{bmatrix}^\top$ for Scenario I, $\sum_{s=0}^{t-1} u_s^\top u_s^\top$, for Scenario II - $A$ known, and $\sum_{s=0}^{t-1} x_s x_s^\top$  for Scenario II - $B$ known. We show that for example in Scenario I, the critical condition for (\ref{eq:lsep}) to hold is that:
\begin{equation}\label{eq:lmin}
\forall t\gtrsim \log(e/\delta),\ \ \lambda_{\min}\left(\sum_{s=1}^{t-1} \begin{bmatrix} x_s \\ u_s \end{bmatrix} \begin{bmatrix} x_s \\ u_s \end{bmatrix}^\top \right) \gtrsim t^{1/2} \ \ \hbox{w. p. }\gtrsim 1-\delta.
\end{equation}
Establishing (\ref{eq:lmin}) is one of the main technical contribution of this paper, and is detailed in the next section.


\section{Spectrum of the Cumulative Covariates Matrix}\label{sec:spectral}

As explained in the previous section, a critical step in the analysis of the regret of $\Alg$ is to characterize the performance of the LSE given that the underlying controller evolves over time. In turn, this requires us to be able to control the spectrum of the cumulative covariates matrix. To this aim, we present a new decomposition of this matrix, and show how the decomposition leads to concentration results on its smallest eigenvalue. We believe that our  method is of independent general interest. We apply it to the analysis of the cumulative covariates matrix in Scenario I. Refer to Appendix \ref{app:se} for details, and for the treatment of the two other scenarios.

In this section, we denote by $\widetilde{K}_t$ the controller used by $\Alg$ at time $t$, i.e., either $K_{\circ}$ or $K_t$. Again, we hide problem dependent constants in the $\lesssim$ and $\gtrsim$ in this subsection.

\subsection{A generic recipe: decomposition and concentration}

We sketch here a method to study the smallest eigenvalue of a random random matrix of the form $\sum_{s=1}^t y_s y_s^\top$ where $y_s = z_s + M_s \xi_s$ where $(z_s, M_s, \xi_s)_{s\ge 1}$ is a stochastic process such that $\xi_s$ is independent of $(z_{1}, \dots, z_s)$ and $(M_{1}, \dots, M_s)$ for all $s\ge 1$. This model covers the cumulative covariates matrices obtained in the various scenarios. Indeed,\\
for Scenario I, we have:
$$
y_s=\begin{bmatrix}
  x_{s} \\
  u_{s}
\end{bmatrix} = z_s  + M_s \xi_s,   \quad
z_s = \begin{bmatrix} A x_{s-1} + B u_{s-1} \\
\widetilde{K}_s (Ax_{s-1} + Bu_{s-1})
\end{bmatrix}, \quad M_s =
\begin{bmatrix}
  I_{d_x} & O \\
  \widetilde{K}_{s} & I_{d_u}
\end{bmatrix},
\quad \xi_s = \begin{bmatrix}
  \eta_{s-1} \\
  \nu_{s}
\end{bmatrix}.
$$
For Scenario II -- $B$ known, we have:
$$
y_s=x_{s+1}=z_s +M_s\xi_s, \quad z_s=(A+\widetilde{K}_s) x_{s}, \quad M_s = I_{d_x},\quad \xi_s = \eta_s.
$$
For Scenario II -- $A$ known, we have:
$$
y_s=u_s = z_s + M_s \xi_s, \quad z_s = \widetilde{K}_s ( A x_{s-1} + B u_{s-1} ) \quad M_s = \widetilde{K}_s, \! \quad \xi_s = \widetilde{K}_s \eta_{s-1} + 1_{\lbrace \widetilde{K}_s = K_\circ \rbrace}\zeta_s.
$$
Next, we claim that for some $\alpha>0$, the smallest eigenvalue of $\sum_{s=1}^t y_s y_s^\top$ grows at least as $t^\alpha$ as $t$ grows large when {\bf (C1)} $\lambda_{\min}(\sum_{s=1}^t M_s M_s^\top)$ is growing at least as $t^\alpha$ and {\bf (C2)} $\lambda_{\max}\left(\sum_{s=1}^t z_s z_s^\top\right)$ grows at most polynomially in $t$. As a consequence of this claim, in all scenarios, to complete Step 3 of the regret analysis, we just need to verify that the conditions {\bf (C1)} and {\bf (C2)} hold. This verification is explained in Scenario I in the next subsection. For complete statements and proofs, refer to Appendix \ref{app:se}.

The first step towards our claim is the easiest but perhaps the most insightful: it consists in applying Lemma \ref{lem:selb} in Appendix \ref{app:se} to show that\footnote{Here we mean lower bound in the sense of the Löwner partial order over symmetric matrices.} for all positive definite matrix $V$,
\begin{equation*}
\sum_{s=1}^t y_s y_s^\top \! \succeq \! \sum_{s=1}^t (M_s\xi_s) (M_s\xi_s)^\top \! - \underbrace{\bigg(\sum_{s=1}^tz_s (M_s\xi_s)^\top \bigg)^\top \! \bigg(\sum_{s=1}^t z_s z_s^\top + V \bigg)^{-1} \! \bigg(\sum_{s=1}^t z_s (M_s\xi_s)^\top \bigg)}_{(\star)} \! - V.
\end{equation*}
The second step consists in observing that the second term $(\star)$ in the above inequality is a self-normalized matrix valued process (see for example \cite{yadkori2011}, or Proposition \ref{prop:SNP++}). Concentration results for such processes lead to
$$
\left\Vert \left(\sum_{s=1}^t z_s z_s^\top + V \right)^{-1/2} \left(\sum_{s=1}^t z_s (M_s\xi_s)^\top \right) \right\Vert^2 \lesssim \log\left(\lambda_{\max}\left(\sum_{s=1}^t z_s z_s^\top\right)\right) \quad \mathrm{w.h.p.}
$$
provided the sequence $(M_s)_{s\ge 1}$ are bounded. \\
In the last step, we derive a concentration inequality for the matrix $\sum_{s=1}^t (M_s \xi_s )(M_s \xi_s)^\top$ (see Proposition \ref{prop:RM++}). It concentrates around $\sum_{s=1}^t M_s M_s^\top$. \\
In summary, we have proved that under condition {\bf (C2)}, $\lambda_{\min}(\sum_{s=1}^t y_s y_s^\top)$ scales at least as $\lambda_{\min}(\sum_{s=1}^t M_s M_s^\top)$, which combined with {\bf (C1)} provides the desired claim.


\subsection{The recipe at work in Scenario I}

We first establish a weak growth rate for $\lambda_{\min}(\sum_{s=1}^t y_s y_s^\top)$  of order $t^{1/4}$ where we have a priori no information on the boundedness of the matrix sequence  $(M_s)_{s\ge1}$ (see statement \eqref{eq:weak} Theorem \ref{thm:I:se1}). A consequence of this first result is that LSE is consistent and therefore $(A_t, B_t)$ will eventually be sufficiently close to $(A,B)$ (See Appendix \ref{app:lse}). Using the perturbation bounds of Proposition  $\ref{prop:DARE:bounds}$ (See Appendix \ref{app:control}), we can then guarantee that eventually the sequence of $(M_s)_{s\ge1}$ will become uniformly bounded over time w.h.p..  Provided this guarantee holds, we show that the growth rate of $\lambda_{\min}(\sum_{s=1}^t y_s y_s^\top)$  may be refined to an order of $t^{1/2}$ (see the statement \eqref{eq:strong} of Theorem \ref{thm:I:se1}).

\begin{theorem}[\emph{Informal}]\label{thm:I:se1}
  Under Algorithm \ref{algo:cce}, for all $\delta \in (0,1)$ we have
  \begin{align}\label{eq:weak}
      \forall t\gtrsim \log(e/\delta):\quad  & \mathbb{P}\left( \lambda_{\min}\left(\sum_{s=0}^{t-1} \begin{bmatrix} x_s  \\ u_s \end{bmatrix} \begin{bmatrix} x_s  \\ u_s \end{bmatrix}^\top\right) \gtrsim t^{1/4}  \right) \ge 1 - \delta. &   \quad \textit{(weak rate)}
  \end{align}
  Furthermore, provided we can guarantee that $\forall t \gtrsim \log(e/\delta)$ we have $\mathbb{P}(\Vert \widetilde{K}_t - K_\star \Vert \le C_K) \ge 1-\delta$, then for all $\delta \in (0,1)$, we have
  \begin{align}\label{eq:strong}
    \forall t\gtrsim \log(e/\delta):\quad & \mathbb{P}\left( \lambda_{\min}\left( \sum_{s=0}^{t-1} \begin{bmatrix}
    x_s \\
    u_s
  \end{bmatrix} \begin{bmatrix}
    x_s \\
    u_s
  \end{bmatrix}^\top \right) \gtrsim t^{1/2}   \right) \ge 1 - \delta. & \textit{(refined rate)}
\end{align}
\end{theorem}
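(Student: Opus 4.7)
The plan is to apply the generic recipe of Section 6.1 to the Scenario I triple $(z_s,M_s,\xi_s)$ given in the paper and track how each of its three ingredients degrades depending on what is known about $\widetilde{K}_s$. First, I invoke Lemma \ref{lem:selb} with a constant regularizer $V\succ 0$ to write
\begin{align*}
  \sum_{s=0}^{t-1} y_s y_s^\top \;\succeq\; \sum_{s=0}^{t-1} (M_s\xi_s)(M_s\xi_s)^\top \;-\; W_t \;-\; V,
\end{align*}
where $W_t$ is the self-normalized cross term $(\star)$. Second, I bound $\|W_t\|$ by Proposition \ref{prop:SNP++}, yielding $\|W_t\|\lesssim \max_{s<t}\|M_s\|^2\cdot\log(\lambda_{\max}(\sum_s z_s z_s^\top)+1)$; on the event where the hysteresis keeps $\sum_s\|x_s\|^2\lesssim d_x g(t)$ and $\|\widetilde{K}_s\|^2\le\max(\|K_\circ\|^2,h(s))$, the $\log$ factor reduces to $\lesssim d_x\log t$. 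Third, I apply the matrix Bernstein inequality of Proposition \ref{prop:RM++} to compare $\sum_s (M_s\xi_s)(M_s\xi_s)^\top$ with its predictable variation $\sum_s M_s\Sigma_s M_s^\top$, where $\Sigma_s=\mathrm{diag}(I_{d_x},\sigma_s^2 I_{d_u})$.

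The deterministic core is the factorisation
\begin{align*}
  v^\top M_s\Sigma_s M_s^\top v \;=\; \|v_1+\widetilde{K}_s^\top v_2\|^2 + \sigma_s^2\|v_2\|^2,\qquad v=(v_1,v_2),
\end{align*}
together with $\sum_{s\le t}\sigma_s^2\asymp \sigma^2\sqrt{d_x t}$ from the chosen injection scale. A two-case split at threshold $\epsilon^{\star}\asymp 1/(1+\max_{s<t}\|\widetilde{K}_s\|^2)$ yields, in the regime $\|v_2\|^2\le\epsilon^{\star}$, $\sum_s\|v_1+\widetilde{K}_s^\top v_2\|^2\gtrsim t$ (because $\|\widetilde{K}_s^\top v_2\|\ll\|v_1\|$), and in the complementary regime $(\sum_s\sigma_s^2)\|v_2\|^2\gtrsim\sqrt{t}/(1+\max_{s<t}\|\widetilde{K}_s\|^2)$. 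Minimising over unit $v$ gives
\begin{align*}
  \lambda_{\min}\Bigl(\sum_{s=0}^{t-1} M_s\Sigma_s M_s^\top\Bigr) \;\gtrsim\; \frac{\sqrt{t}}{1+\max_{s<t}\|\widetilde{K}_s\|^2}.
\end{align*}

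Combining the three pieces, the net lower bound on $\lambda_{\min}(\sum y_s y_s^\top)$ is $\sqrt{t}/(1+\max_s\|\widetilde{K}_s\|^2)$ minus an error $\lesssim\max_s\|M_s\|^2\cdot d_x\log t$. In the weak setting only $\|\widetilde{K}_s\|^2\le h(s)=s^\gamma$ is available, which for $\gamma$ small enough gives $t^{1/2-\gamma}-t^\gamma d_x\log t\gtrsim t^{1/4}$, proving \eqref{eq:weak}. In the refined setting both $\|\widetilde{K}_s\|$ and $\|M_s\|$ are $O(1)$ uniformly in $s$, so the main term is $\sqrt{t}$ and the error is only logarithmic, proving \eqref{eq:strong}. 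The main obstacle is the weak-rate case, where $M_s$ is not uniformly bounded and the self-normalized cross term inflates polynomially; one has to balance the envelope $h(s)$ of $\|\widetilde{K}_s\|^2$, the excitation scale $\sigma_s^2\asymp s^{-1/2}$, and the state envelope $g(t)$, all of which is delivered by the hysteresis switching. A secondary but essential subtlety in applying both Propositions \ref{prop:SNP++} and \ref{prop:RM++} is filtration alignment: $\widetilde{K}_s$ must be $\mathcal{F}_s$-measurable while $\xi_s=(\eta_{s-1},\nu_s)$ has to stay independent of $(z_s,M_s)$, which is precisely why $M_s$ depends only on the already-decided $\widetilde{K}_s$ and $\nu_s$ is drawn freshly at each step.
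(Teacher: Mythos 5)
Your deterministic two-case split of $v^\top M_s\Sigma_s M_s^\top v$ is sound and plays the same role as Lemma \ref{lem:psd:ineq1} in the paper, but both probabilistic branches of your argument have genuine gaps, and they share a common cause: you try to compare $\sum_s (M_s\xi_s)(M_s\xi_s)^\top$ with $\sum_s M_s\Sigma_s M_s^\top$ in a \emph{single} operator-norm concentration step (Proposition \ref{prop:RM++}), and the additive deviation of that step is pinned to the largest scale of the covariance while the quantity you need lives at a strictly smaller scale. Concretely, in the refined setting take $m_s=\Vert M_s\Sigma_s^{1/2}\Vert\asymp C_K$, so $\Vert m_{1:t}\Vert_2^2\asymp C_K^2 t$ and $r_t^2\asymp t$; Proposition \ref{prop:RM++} then gives deviation $\asymp \sigma^2 C_K^2\sqrt{(d+\rho)\,t}$, whereas your target minimum eigenvalue is $\asymp \sigma^2\sqrt{d_x t}/C_K^2$ (set by $\sum_s\sigma_s^2\asymp\sigma^2\sqrt{d_x t}$ in the $\nu$-directions). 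The ratio of error to signal is $\gtrsim C_K^4\sqrt{(d+\rho)/d_x}>1$ uniformly in $t$, so the one-shot bound certifies nothing: the $\eta$-directions carry eigenvalues of order $t$, the $\nu$-directions of order $\sqrt{t}$, and uniform matrix concentration at the $t$-scale erases the $\sqrt{t}$-scale information. This is precisely why the paper, in Proposition \ref{prop:I:se:enough}, applies Lemma \ref{lem:selb} a \emph{second time inside the noise term}, splitting $\xi_s=\xi_{1,s}+\xi_{2,s}$, lower-bounding the $\eta$-block deterministically via Lemma \ref{lem:selb2}, lower-bounding $\sum_s\nu_s\nu_s^\top$ at its own scale (where the relative error $\sqrt{(d_u+\rho)/r_t^2}\to 0$), and controlling the $\xi_1$--$\xi_2$ cross term with a separate self-normalized bound (Proposition \ref{prop:SNP++}). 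Your proof \eqref{eq:strong} does not close without this block decoupling.

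The weak-rate branch fails more severely, and here you also miss the paper's key structural idea. With only $\Vert\widetilde{K}_s\Vert^2\le h(s)=s^{\gamma}$, you have $m_s\asymp s^{\gamma/2}$, hence $\Vert m_{1:t}\Vert_2^2\asymp t^{1+\gamma}$ and $r_t^2\asymp t$, so the Proposition \ref{prop:RM++} deviation is $\asymp \sigma^2 t^{1/2+\gamma}\sqrt{d+\rho}$, which dominates your claimed main term $t^{1/2-\gamma}$ for \emph{every} $\gamma>0$, not merely for $\gamma$ large; your balancing ``$t^{1/2-\gamma}-t^{\gamma}d_x\log t\gtrsim t^{1/4}$'' never gets off the ground, and even if it did it would impose $\gamma<1/4$, whereas the theorem must hold for any fixed $\gamma>0$. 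The paper's proof of Proposition \ref{prop:I:we} sidesteps unbounded $M_s$ entirely through a self-referential use of the algorithm's gating: by design, $\Alg$ plays $K_t$ at time $s$ only if $\lambda_{\min}(\sum_{s'<s}y_{s'}y_{s'}^\top)\ge s^{1/4}$; hence on the complement of the event $\lbrace\exists i\in[t/2,t):\lambda_{\min}\ge i^{1/4}\rbrace$, the algorithm must have played $u_s=K_\circ x_s+\nu_s$ throughout the window $[t/2,t)$, so $M_s\equiv M_\circ$ is a \emph{fixed bounded} matrix there. A fixed-matrix analysis (Lemma \ref{lem:selb}, Proposition \ref{prop:SNP++}, Proposition \ref{prop:RM++}, then Lemma \ref{lem:psd:ineq1}) then yields $\lambda_{\min}\gtrsim \sigma^2\sqrt{d_x t}/C_\circ^2 > t^{1/4}$ on that window, contradicting the failure event --- which is how the paper converts the algorithm's exploration trigger into the bound \eqref{eq:weak} with no constraint on $\gamma$ and no $h(s)$-envelope in the noise term ($h(s)$ survives only inside the logarithm of the self-normalized term, where it is harmless). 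You need to replace your weak-rate argument with this contradiction device, or an equivalent one, for the proof to be salvageable.
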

The proof of Theorem \ref{thm:I:se1} relies on showing that the conditions {\bf (C1)} and {\bf (C2)} hold. We can start by establishing {\bf (C2)} since its proof is common to both, statement \eqref{eq:weak} and \eqref{eq:strong}. We can observe that
$$
\lambda_{\max}\left( \sum_{s=1}^t z_s z_s^\top \right) \lesssim h(t) \left(\sum_{s=1}^t \Vert x_s \Vert^2 + \sum_{s=1}^t \Vert \eta_{s-1} \Vert^2 \right),
$$
where we use that $A x_{s-1} + B u_{s-1} = x_{s} - \eta_{s-1}$, and $\Vert \widetilde{K}_s\Vert^2 \le h(s)$ for all $s \ge 1$. We can have via a matrix concentration argument (See Proposition \ref{prop:RM++} in Appendix \ref{app:se}) that  $\sum_{s=1}^t \Vert \eta_s \Vert^2 \lesssim t$ w.h.p.. We can also establish that $\sum_{s=1}^t \Vert x_s \Vert^2$ does not grow more than a polynomial of order $g(t)h(t)$ w.h.p. under $\Alg$ (see $\ref{prop:caution}$ in Appendix \ref{app:polygrowth}). Thus, condition {\bf (C2)} is satisfied.

Establishing {\bf (C1)} is slightly more involved especially for the statement \eqref{eq:strong}, but essentially we can prove that $\lambda_{\min}( \sum_{s=1}^t (M_s\xi_s) (M_s \xi_s)^\top) \gtrsim t^{1/4}$ w.h.p., and provided we can guarantee that for all $t\ge \log(e/\delta)$, it holds that $\mathbb{P}(\Vert \widetilde{K}_t - K_\star \Vert \le C_K ) \ge 1 -\delta$, then $\lambda_{\min}( \sum_{s=1}^t (M_s\xi_s) (M_s \xi_s)^\top) \gtrsim t^{1/2}$  w.h.p. for $t \gtrsim \log(e/\delta)$. We skip the details behind these claims here due to space constraints and refer the reader to Appendix \ref{app:se}. At a high level these results follow because of the special structure of the sequence of matrices $(M_s)_{s \ge 1}$ and the independence between the sequences $(\eta_s)_{s\ge0}$ and $(\nu_s)_{s \ge 0}$.

The precise statements of Theorem \ref{thm:I:se1} and its proof are deferred to Appendix \ref{app:se}.

\section{Conclusion}\label{sec:conclusion}

In this paper, we have designed $\Alg$ a simple certainty equivalence-based algorithm for the online LQR problem. This is the first algorithm enjoying regret guarantees in expectation when the state transition and state-action transition matrices $A$ and $B$ are both unknown. The upper bounds of the expected regret of $\Alg$ have an optimal scaling in the time horizon $T$ and in most cases in the dimensions of the state and control input vectors. Yet, many interesting questions remain open. $\Alg$ exploits a stabilizer when needed, and we proved that the expected regret generated in  rounds where the stabilizer was used was finite. Does it mean that we can get rid of the stabilizer? Another interesting research direction is to investigate whether our approach and results extend to LQG systems where the decision maker receives noisy measurements of the state.

\newpage

\addcontentsline{toc}{section}{References}
\bibliographystyle{alpha}
\bibliography{ref,vr2017}

\newpage

\addcontentsline{toc}{section}{Checklist}

%
\appendix
\addcontentsline{toc}{section}{Table of Notations and Assumptions}

\newpage

\setcounter{tocdepth}{2}
\tableofcontents

\newpage
\section*{Notations and Assumptions}

\begin{itemize}
  \setlength\itemsep{0em}
  \item $f(x) \gtrsim g(x)$ means there exists an universal constant $c>0$ such that $f(x) \ge c g(x)$.
  \item $f(x) \lesssim g(x)$ means there exists an universal constant $c>0$ such that $f(x) \le c g(x)$.
  \item $\lambda_{\min}(\cdot)$ denotes the minimum eigenvalue.
  \item $\lambda_{\max}(\cdot)$ denotes the maximum eigenvalue.
  \item $\Vert \cdot \Vert$ denotes operator norm for matrices or $\ell_2$-norm for vectors.
  \item $\Vert \cdot \Vert_F$ denotes Frobeinus norm.
  \item $\Vert x \Vert_{M} = \sqrt{x^\top M x}$ for any vectors $x$.
  \item $\Vert a_{1:t}\Vert_\infty = \max_{ 1\le s \le t }\vert a_s \vert$ where $(a_s)_{s\ge 1}$ is a scalar valued sequence.
  \item $\Vert a_{1:t}\Vert_2 = \sqrt{\sum_{s=1}^t \vert a_s \vert^2}$ where $(a_s)_{s\ge1}$ is a scalar valued sequence.
  \item $d_x$ and $d_u$ denote respectively the dimension of the state/input space.
  \item $d = d_x + d_u$.
  \item $\gamma$ is a postive constant used to define $f(t)= t^{1+\gamma/2}, g(t)=t^{1+\gamma}$ and $h(t) = t^\gamma$.
  \item $\gamma_\star = \max \lbrace 1, \gamma\rbrace$.
  \item $C_\circ = \max(\Vert A \Vert, \Vert B  \Vert, \Vert B  K_\circ\Vert, \Vert K_\circ \Vert, 1)$.
  \item $\mathcal{G}_M = \sum_{s=0}^\infty \Vert M^s\Vert$.
  \item $\mathcal{G}_M(\varepsilon) = \sup \left\lbrace  \sum_{s=0}^\infty \left\Vert \prod_{k=0}^s (M + \Delta_k)\right\Vert: \Delta: \; (\Delta_t)_{t\ge1}, \sup_{t\ge 0}\Vert  \Delta_t\Vert \le \varepsilon  \right \rbrace$.
  \item $P(A,B)$ solution to the DARE corresponding to the LQR problem $(A, B, Q ,R)$.
  \item $K(A,B)$ optimal gain matrix corresponding to the LQR problem $(A, B, Q, R)$.
  \item $\mathcal{L}(M,N)$ is the solution to the Discrete Lyapunov equation $X = M^\top X M + N$.
  \item $P(A,B,K) = \mathcal{L}(A+BK, Q +K^\top R K)$.
\end{itemize}
\paragraph{Shorthands for the true parameters $(A,B)$.}
\begin{itemize}
  \setlength\itemsep{0em}
  \item $C_B = \max(\Vert B \Vert, 1)$.
  \item $C_\circ = \max(\Vert K_\circ \Vert , 1)$.
  \item $\mathcal{G}_\circ = \mathcal{G}_{A + B K_\circ}$.
  \item $\mathcal{G}_\star = \mathcal{G}_{A + B K_\star}$. 
  \item $\mathcal{G}_{\star}(\varepsilon) = \mathcal{G}_{A+BK_\star}(\varepsilon)$. 
  \item $P_\star = P(A, B)$.
  \item $K_\star = K(A, B)$.
  \item $P_\star(K) = P(A, B, K)$.
  \item $\mu_\star = \min(\sqrt{\lambda_{\min}(K_\star K_\star^\top)}, 1)$.
\end{itemize}

\paragraph{Assumptions.}
\begin{itemize}
  \setlength\itemsep{0em}
  \item We assume without loss of generality that $Q \succ I_{d_x}$ and $R = I_{d_u}$. These can be enforced by a change of basis of the state and input spaces, and rescaling the dynamics (see e.g., \cite{simchowitz2020naive}).
  \item The noise sequence $(\eta_t)_{t\ge1}$ is assumed to be i.i.d. zero-mean, isotropic and $\sigma^2$-sub-gaussian random vectors. Isotropy here is assumed for simplicity and is without loss of generality. Observe that isotropy implies $4\sigma^2 \ge 1$.
  \item In all the envisioned scenarios, we assume access to a stabilizing controller. That is we know $K_\circ \in \mathbb{R}^{d_u \times d_x}$ such that $\rho(A+BK_\circ) < 1$.
  \item In scenario II -- ($A$ known), we assume that $\mu_\star > 0$.
\end{itemize}


\section{Related Work}\label{app:related}

In this section, we describe existing learning algorithms for the online LQR problem. These algorithms may be roughly categorized into two classes. In the first class, we find algorithms based on slightly perturbing the so-called self-tuning regulators \cite{astrom1973}, as those developed in second half of the 20th century in the control community, see \cite{kumar1985} for a survey and \cite{matni2019} for a more recent discussion. The second class of algorithms applies the Optimism in Front of Uncertainty (OFU) principle, extensively used to devise regret optimal algorithms in stochastic bandit problems \cite{lai1985,lattimore2020bandit}. Before describing these two classes of algorithm in more detail, we start by discussing how algorithms may differ in terms of regret guarantees, design principles, and the assumptions made towards their analysis.

\subsection{Types of guarantees, algorithm design, and assumptions}

{\bf Regret guarantees.} We may assess the performance of an algorithm by establishing various kinds of regret guarantees. Most often, the regret guarantees are in the {\it fixed confidence setting} only, in the following sense. The regret $R_T^\pi$ of an algorithm $\pi$ up to time $T$ satisfies a probabilistic guarantee of the form: $\mathbb{P}\left( R_T^\pi \le \psi(T) \left(\log\left(1/\delta \right) \right)^{1/\gamma} \right) \ge 1 - \delta$, for some $\gamma \le 2$ and some increasing function $\psi$. Typically such as guarantee is shown for a fixed confidence level (a fixed $\delta$), since the algorithm $\pi$ is most often actually parametrized by $\delta$. As a consequence, this probabilistic guarantee cannot be integrated over $\delta$ to obtain an upper bound of the expected regret. In turn, by only deriving guarantees with a fixed level of confidence, one avoids the difficult analysis of the algorithm behavior under the failure event (this event occurs with probability at most $\delta$, but potentially generates a very high regret). As far as we are aware, upper bounds on the {\it expected} regret have been investigated in \cite{rantzer2018, cassel2020logarithmic} in Scenario II where $A$ or $B$ is known only. Finally, it is worth mentioning that early work on adaptive control have focussed on deriving {\it asymptotic} regret guarantees. For example in \cite{lai1986, lai1987}, Lai devised, for systems where control inputs induce no cost $R=0$, algorithms whose regret satisfies $\lim\sup_{T\to \infty}  R_T^\pi/\log(T) \le C$ almost surely. This type of guarantee does not imply either guarantees w.h.p. as described above or guarantees in expectation.

\medskip
\noindent
{\bf Algorithm design.} Over the last few years, we have witnessed a significant research effort towards the design of learning algorithms with regret guarantees. Most choices in this design have been made to simplify the algorithm analysis rather than to improve their performance in practice. We discuss these choices below.\\
{\it (i) Doubling trick.} This trick is usually applied in online optimization problems (including bandits) \cite{besson2018, lattimore2020bandit} to come up with algorithms with {\it anytime} regret guarantees. The doubling trick generally comes with a cost in terms of regret \cite{besson2018}. In linear quadratic control, the doubling trick is used in all recent papers to simplify the analysis, but also to reduce the computational complexity of the algorithms. It consists in splitting time into successive {\it phases} whose durations grow exponentially. The control policy is computed at the beginning of each phase, and is applied throughout the epoch. The analysis may then leverage the fact that the control policy is fixed within each phase. Even if recent algorithms include a doubling trick, most of them still take the time horizon $T$ as an input, and hence are not anytime\footnote{An algorithm is anytime, if its regret up to any time can be upper bounded.}. \\
{\it (ii) Explore-Then-Commit and the known time horizon and confidence level.} As already mentioned, most of the recent algorithms target regret guarantees with a fixed confidence level, parametrized by $\delta$. To this aim, they adopt an Explore-Then-Commit (ETC) strategy, namely they rely on statistical tests to decide to switch from an exploration phase to an exploitation phase. These tests requires of course the knowledge of $\delta$. Applying an ETC strategy with confidence level $(1-\delta)$ imposes some constraints on the time horizon $T$ (it has to be greater than some decreasing function of $\delta$ so that the statistical tests end). Observe that to further simplify the design and analysis of algorithms, the time horizon is often assumed to be known in advance. It is finally interesting to note that ETC strategies are known to be sub-optimal, even in the simplest of the stochastic bandit problems \cite{garivier2016onexplore}.

\medskip
\noindent
{\bf Assumptions.} The set of assumptions made to design and analyze algorithms varies in the literature, which makes it hard to report and compare existing results precisely. Most existing work assume that we have access to a stabilizer. Recent attempts to remove this assumption include \cite{lale2020explore}. There, for example, the authors assume that the algorithm knows that the system $(A,B)$ belongs to a set of systems $(A',B')$ such that $\| A'+B'K_{(A',B')}\|\le \Upsilon<1$ and $\| [A', B']\|_F\le S$, which in particular implies that $\|P_{(A',B')}\| \le L$, for some constants $\Upsilon$, $S$, and $L$. We will provide a description as precise as possible of the set of assumptions made in each paper reported below.

\medskip
We propose an algorithm that does not take as input the time horizon $T$, or a certain level of confidence $\delta$. The control policy used in the algorithm can be updated every step, but also as frequently as we wish (we may decide reduce the computational complexity of the algorithm). Our analysis provides regret guarantees in expectation in all scenarios.

\subsection{Existing algorithms}

Next we describe selected recent learning algorithms. The first set of algorithms consist in slightly perturbing the control policies obtained when applying the certainty equivalence principle. The second set consists of algorithms applying the OFU principle.

\subsubsection*{Perturbed self-tuning regulators}

Self-tuning regulators work as follows. At any given step, they estimate the unknown matrices $A$ and $B$, and apply a control policy corresponding to the optimal control obtained replacing $A$ and $B$ by their estimators. Unfortunately as proved in \cite{lai1982}, self-tuning regulators may fail at converging -- the certainty equivalence principle does not always hold. This is due to the fact that under these regulators, the system may not be as excited as needed to obtain precise estimators. To circumvent this difficulty, the natural idea is to introduce some noise in the control inputs, leading to what we refer to as perturbed self-tuning regulators. We list below papers applying this idea, and analyzing the resulting regret.

As far as we know, the first regret analysis of perturbed self-tuning regulators is due to Lai and co-authors in the 80's, see e.g., \cite{lai1986,lai1987}. The focus is on a scenario where $A$ and $B$ are unknown, but where the control inputs do not contribute to the costs ($R=0$). Lai first establishes, using the techniques developed in \cite{lai1979}, that even if $B$ is known, asymptotically the regret cannot be smaller than $d_x^2\log(T)$ when the noise process $(\eta_t)_{t\ge 0}$ is i.i.d. with distribution ${\cal N}(0,I_{d_x})$. More precisely, it is shown that for the best learning algorithm $\pi$, $\lim\inf_{T\to\infty}{R_T^\pi\over \log(T)} \ge d_x^2$ almost surely. Lai then devises an algorithm adding white noise to the inputs when needed, and proves that the resulting perturbed self-tuning regulator has a regret asymptotically no larger than that predicted by the aforementioned lower bound.

In \cite{faradonbeh2020input}, the authors study Scenario I. They propose a perturbed self-tuning regulator, referred to as Perturbed Greedy Regulator, where the variance of the noise added to the inputs decreases over time. The regulator uses a doubling trick so that the estimated optimal controller can be updated rarely, and to simplify the analysis. The authors establish that with probability $1-\delta$, a regret is bounded by $\widetilde{\mathcal{O}}(\sqrt{T} \log(1/\delta)^{4})$ for $T\ge g(\delta)$. Even if the algorithm does not seem to use $\delta$ as an input, we cannot easily derive a meaningful upper bound on the expected regret by integrating over $\delta$ the probabilistic upper bound. The dependence of the upper bound in the system and its dimensions is not explicit. It is worth noting that the authors assume that the algorithm has access to a stabilizer, which according to their companion paper \cite{faradonbeh2019} can be learnt in finite time. The authors further assume, without any formal justification, that the system remains stable during the execution of their algorithm.

In \cite{mania2019certainty}, the authors do not explicitly propose a perturbed self-tuning regulator with regret guarantees. However, they show that if the estimation error is sufficiently small and if one guarantees that then the resulting algorithm achieves a $\widetilde{O}(\sqrt{T})$ regret with explicit dependence on the problem dimensions $d_x, d_u$. Their main contribution is a perturbation bound on the solution to the Discrete Algebraic Riccati equations, an important piece of the regret analysis. They propose an alternative proof to that of \cite{konstantinov1993perturbation} and compute explicitly the problem dependent constants.

The authors of \cite{simchowitz2020naive} propose, for Scenario I, a perturbed self-tuning regulator, that takes as input $\delta$ and $T$, as well as a stabilizer. Again to simplify the analysis, a doubling trick is used. The algorithm achieves a regret of $\widetilde{O}(d_u\sqrt{d_x T \log(1/\delta)})$ with probability $1-\delta$. The authors further derive what they refer to as a {\it local minimax} lower bound on the expected regret. This lower bound is obtained by varying the potential system matrices $(A,B)$ around those of the true system, and is in a sense close to a problem-specific lower bound. The lower bound is scaling as $\Omega(d_u\sqrt{d_x T \log(1/\delta)})$. Our results for Scenario I matches this lower bound in expectation.

 \cite{cassel2020logarithmic} presents perturbed self-tuning regulators for Scenario II (when $A$ or $B$ is known). The regulators have numerous inputs, including a stabilizer $K_0$ (actually a strongly stable control) and upper bounds on $\| A\|$, $\| B\|$, on the minimal ergodic cost, and that achieved under $K_0$. When $A$ is known, the proposed regulator is shown to have an expected regret of order $O({\rm{poly}}(d_x,d_u)\log^2(T))$ (where the degree of the polynomial scale in the dimensions is not precised). The regulator presented for the case when $B$ is unknown achieves similar expected regret guarantees provided that the optimal controller $K_\star$ satisfies $K_\star K_\star^\top\succeq \mu>0$.


\subsubsection*{OFU-based algorithms}

A typical OFU-based algorithm proceeds as follows. It maintains a confidence set ${\cal C}$ (an ellipsoid) where the system parameters $(A,B)$ lie with high probability. At a given step, the algorithm selects $(A',B')\in {\cal C}$ that minimizes the optimal cost $J_{(A',B')}$ (with sometimes an additional margin). The controller $K_{(A',B')}$ is then applied. Since updating the controller requires solving a complex optimization problem, referred to as the optimistic LQR below, this update should be done rarely (using a doubling trick).

\cite{abbasi2011regret} presents OFU-LQ, an algorithm taking as inputs a confidence level $\delta$ and $T$, as well as a bounded set $\mathcal{S}$ where $(A, B)$ lies and an upper bound on $\left\Vert \begin{bmatrix} A & B \end{bmatrix}\right\Vert_F$. OFU-LQ leverages a {\it random} doubling trick: the controller is updated each time the determinant of the covariates matrix is doubled. This determinant roughly grows as $t^{d_x+d_u}$, and hence the successive phases have durations multiplied by $2^{1/(d_x+d_u)}>1$. OFU-LQ has a regret of order $\widetilde{\mathcal{O}}\sqrt{T\log(1/\delta)}$ with probability $1-\delta$. Here, the notation $\widetilde{\mathcal{O}}(\cdot)$ hides polynomial factors in $\log(T)$ and multiplicative constants exponentially growing in $d_x + d_u$. \cite{abbasi2011regret} does not indicate how to solve the optimistic LQR, and cannot be implemented directly. The same conclusion holds for the algorithm proposed in \cite{faradonbeh2017} (there, the authors were able to relax some assumptions on the noise and stability, but an efficient and practical implementation of the algorithm is not investigated).

A first practical implementation of OFU-based algorithms was presented in \cite{cohen2019learning}. The algorithm, OSLO, uses an SDP formulation to solve the optimistic LQR. It uses a somewhat random doubling trick similar to \cite{abbasi2011regret} and requires the knowledge of $\delta$, $T$, upper bounds on the norms of $A, B$ and $P_{(A,B)}$, as well as a stabilizing controller. OSLO achieves a regret of order $\widetilde{O}((d_x + d_u)^3\sqrt{T \log(1/\delta)^4})$.

\cite{abeille2020efficient} presents an efficient algorithm to implement OFU-based algorithms proposed in \cite{abbasi2011regret}. The idea is to perform a relaxation of the optimistic LQR. The analysis requires the knowledge of $\delta$ and the horizon $T$. And a similar random doubling trick to that of \cite{abbasi2011regret} is used. They obtain a regret of order $\widetilde{O}((d_u + d_x)\sqrt{d_x T})$ with probability $1-\delta$. Note that the learner is assumed to know an initial state that is sufficiently good so that stability is maintained throughout the learning process.

In \cite{lale2020explore}, the authors provide another improvement on the OFU based algorithm of \cite{abbasi2011regret}. Their goal is to improve the dependency of the regret upper bound on the dimension and to remove the assumption of having access to a stabilizing controller. Doing so, the authors need to introduce other assumptions about the set of systems to which the algorithm applies: $A$, $B$ and $A+BK_{(A,B)}$ have bounded norms. The regret of the proposed algorithm is with probability $1-\delta$ of order $\widetilde{O}({\rm{poly}}(d_x+d_u)\sqrt{T \log(1/\delta)^\gamma})$ with $\gamma \ge 1$ but unspecified, and the degree $\mathrm{poly}(d)$ is also unspecified.

\subsubsection*{Summary of existing results and comparison with $\Alg$}

In summary, all the aforementioned algorithms use a {doubling trick} so that the algorithm becomes amenable to theoretical analysis (using the independence between epochs etc). Most of the algorithms are designed in the fixed confidence setting. They are variants of ETC strategies and their construction rely heavily on knowledge of the confidence level $\delta$. Most of them also take as input the time horizon $T$, i.e., they are not anytime.

The assumptions made towards the regret analysis of these algorithms are not unified. Therefore it is very hard to obtain fair comparisons between these algorithms. Except for \cite{cassel2020logarithmic}, the algorithms have regret guarantees in the fixed confidence setting only, i.e., with probability $1-\delta$. The best regret dependence in $\delta$ is $\sqrt{\log(1/\delta)}$, but it is not achieved by all algorithms ($\log(1/\delta)$ \cite{abeille2020efficient} and $\log(1/\delta)^2$ (\cite{faradonbeh2020input},\cite{cohen2019learning}).

The table below summarizes the regret guarantees achieved by the various algorithms.

\begin{table}[!h]
  \centering
  \begin{tabular}{|c | l | l | c |}
    \hline
    \rowcolor{Gray}
    \multicolumn{4}{|c|}{\textbf{Scenario I - $A$ and $B$ unknown} } \\
    \hline
    {paper}  & {regret upper bound w.p. $1-\delta$} & {expected regret upper bound}& {required inputs} \\
    \hline
    \cite{faradonbeh2020input}  & $\widetilde{O}(g(d_x+d_u)\sqrt{T \log(1/\delta)^4})$ & - &  $\delta$ (unclear)  \\
    \cite{simchowitz2020naive}  & $\widetilde{O}(d_u\sqrt{d_x T \log(1/\delta)} )$ & - & $\delta$ and $T$ \\
    \cite{mania2019certainty}   & $\widetilde{O}(g(d_x+d_u)\sqrt{T}f(\log(1/\delta)))$ & - & unkown \\
    \cite{lale2020explore}      & $\widetilde{O}({\rm{poly}}(d_x+d_u)\sqrt{T \log(1/\delta)})$& - & $\delta$ \\
    \cite{abbasi2011regret}     & $\widetilde{O}(\sqrt{c^{d_x+d_u}T \log(1/\delta)})$& - & $\delta$ and $T$ \\
    \cite{abeille2020efficient} & $\widetilde{O}((d_x+d_u)\sqrt{d_x T \log(1/\delta)^2})$ & - & $\delta$ and $T$\\
    \cite{cohen2019learning}    & $\widetilde{O}((d_x + d_u)^3\sqrt{T \log(1/\delta)^4})$ & - & $\delta$ and $T$\\
   this paper & - & $\widetilde{O}((d_u+d_x)\sqrt{d_x T} )$ &  - \\
   \hline
  \end{tabular}%

 \medskip
 \medskip

  \begin{tabular}{|c | l | l | c |}
    \hline
    \rowcolor{Gray}
    \multicolumn{4}{|c|}{\textbf{Scenario II - $A$ or $B$ known} } \\
    \hline
    {paper}  & assumptions & {expected regret upper bound}& {required inputs} \\
    \hline
      \cite{cassel2020logarithmic} & $A$ known, $\| B\|\le M$ & $O({\rm{poly}}(d_x,d_u)\log^2(T))$ & $T$, $M$ (among others)\\
\hline
 this paper & $A$ known &  $O(d_x(d_x+ d_u)\log(T))$ &  - \\
 \hline
 \cite{cassel2020logarithmic} & $B$ known, $\| A\|\le M$ & $O({\rm{poly}}(d_x,d_u)\log^2(T))$ & $T$, $M$ (among others)\\
   \hline
    this paper & $B$ known &  $O(d_x^2\log(T))$ &  - \\
 \hline
  \end{tabular}

  \medskip

  \caption{Regret guarantees of existing algorithms. The notation $\widetilde{O}(\cdot)$ hides polynomial factors in $\log(T)$ and additive low order terms in $T$ with potentially worse dependencies in $\log(1/\delta)$, and the constants depends on problem parameters.}
\end{table}


\newpage
\section{Regret Definitions and Analysis} \label{app:regretproof}

This appendix includes in \ref{app:regretdef} a discussion about the definition of the regret of an adaptive control algorithm. In \ref{app:regretdec}, we provide a useful decomposition of the expected regret that will serve as the starting point of our analysis in the three scenarios. We present in \ref{app:integration} the so-called integration lemma, that will helps us to derive expected regret upper bounds based on high probability bounds. The three last subsections give the proofs of our main theorems: the regret upper bound of $\Alg$ in Scenario I (Theorem \ref{th:scenarioI})  is proved in \ref{sec:proof:th1}. The proof of Theorem \ref{th:scenarioIIA} for Scenario II -- $A$ known is given in \ref{sec:proof:th3}. That of Theorem \ref{th:scenarioIIB} for Scenario II -- $B$ known is finally presented in \ref{app:thlast}.

\subsection{Regret definitions}\label{app:regretdef}

Let us denote by $\Pi$ the set of all possible adaptive control policies. For a poolicy $\pi\in \Pi$, $(x_1^\pi,u_1^\pi, \dots, x_t^\pi,u_t^\pi)$ is the sequence of states and control inputs generated under $\pi$. Remember that $x_0^\pi=0=u_0^\pi$.

We define the ergodic cost of a policy $\pi\in\Pi$ as
\begin{align*}
  \mathcal{J}(\pi) & =  \limsup_{T \to \infty} \frac{1}{T} \E\left[\sum_{t=1}^T (x_t^{\pi})^\top Q (x_t^\pi) + (u_t^{\pi})^\top R (u_t^{\pi})\right] & \textit{(ergodic cost / objective)}
\end{align*}
It can be shown under suitable assumptions on $(A,B,Q,R)$ that there exists a policy $\pi_\star \in \argmin_{\pi \in \Pi} \mathcal{J}(\pi)$. Let $\mathcal{J}_\star = \mathcal{J}(\pi^{\star})$. The optimal policy $\pi_\star$ can be found explicitly: for all $t \ge 1$, $\pi^{\star}$ defines the feedback control $u_t^{\pi_\star} = K_\star x_t^{\pi_\star}$. The matrix $K_\star$ can be computed by solving the Riccati equations. We have the useful identity that $P_\star = (A + BK_\star)^{\top} P_\star (A+BK_\star) + Q + K_\star^\top R K_\star$ where $P_\star \succ 0$ is the solution to the Ricatti equations. Furthermore, we have $\mathcal{J}_\star= \tr(P_\star)$.

Now, we define the regret of a policy $\pi \in \Pi$ as
\begin{align}
    \sum_{t=1}^T  (x_t^\pi)^\top Q(x_t^\pi)  + (u_t^\pi)^\top R (u_t^\pi) - \E\left[\sum_{t=1}^T (x_t^{\pi_{\star}})^\top Q(x_t^{\pi_{\star}})  + (u_t^{\pi_{\star}})^\top R (u_t^{\pi_{\star}}) \right].
\end{align}
This definition is natural as we compare the cost under $\pi$ to that under $\pi^{\star}$, cumulated over $T$ steps. During these $T$ steps to compute the costs, we follow the trajectory of the system. This contrasts with the definition of regret often used in the literature:
\begin{align}\label{eq:regretdefinition}
    R_T (\pi) = \sum_{t=1}^T  (x_t^\pi)^\top Q(x_t^\pi)  + (u_t^\pi)^\top R (u_t^\pi) - T \mathcal{J}_\star.
\end{align}

In fact when considering the expected regret, the above two definitions coincide up to a constant. Indeed, note that for all $t\ge 1$,
$
\Vert x_t^{\pi_{\star}} \Vert^2_{P_\star} = \Vert x_{t+1}^{\pi_{\star}}- \eta_{t} \Vert_{P_\star}^2 + \Vert x_t^{\pi_{\star}}  \Vert^2_Q +  \Vert u_t^{\pi_{\star}}  \Vert^2_R.
$
Taking expectation (note $\E[\eta_t^\top X] = 0$ provided $\eta_t$ is independent of $X$) gives
$$
\E\left[ \Vert x_t^{\pi_{\star}} \Vert^2_{P_\star} \right] = \E\left[\Vert x_{t+1}^{\pi_{\star}} \Vert^2_{P_\star} - \Vert \eta_t \Vert_{P_\star}^2  + \Vert x_t^{\pi_{\star}}  \Vert^2_Q +  \Vert u_t^{\pi_{\star}}  \Vert^2_R \right].
$$
Summing over time after rearranging  gives:
\begin{align*}
   \E\left[\sum_{t=1}^T (x_t^{\pi_{\star}})^\top Q(x_t^{\pi_{\star}})  + (u_t^{\pi_{\star}})^\top R (u_t^{\pi_{\star}}) \right] & = \E\left[ \sum_{t=1}^T \Vert x_t^{\pi_{\star}} \Vert^2_{P_\star} -  \Vert x_{t+1}^{\pi_{\star}} \Vert^2_{P_\star}  + \Vert \eta_t \Vert_{P_\star}^2   \right] \\
   & = \E\left[ \Vert x_1 \Vert^2_{P_\star} - \Vert x_{T+1}^{\pi_{\star}} \Vert^2_{P_\star} \right] + T J_\star.
\end{align*}
Note that $\E[ \Vert x_{T+1}^{\pi_{\star}} \Vert^2_{P_\star} ] < \infty$. Indeed it can be verified that
\begin{align*}
\E[\Vert x_T^{\pi_\star} \Vert^2] & = \sum_{t=0}^{T-1} ((A+BK_\star)^t) P_\star (A+BK_\star)^t  + \E[ x_1^\top ((A+BK_\star)^T)^\top P_\star (A+BK_\star)^Tx_1].
\end{align*}
Therefore, when considering the expected regret, we can take (\ref{eq:regretdefinition}) as the regret definition.
\newpage

\subsection{Regret decomposition}\label{app:regretdec}

The regret decomposition for the three scenarios can be unified. To that end, let us denote
$$
\forall t\ge 1, \ \    \xi_t = \begin{cases}
  \nu_t & \text{ in Scenario I} \\
  \zeta_t & \text{ in Scenario II -- (}$A$ \text{ known)} \\
  0 & \text{ in Scenario II -- (}$B$ \text{ known)} \\
\end{cases}
$$
and note that for all $t\ge 1$, $\xi_t$ is a zero mean gaussian random vector with variance proxy $\tilde{\sigma}_t = \sigma_t$ in Scenario I, $\tilde{\sigma}_t = 1$ in Scenario II -- ($A$ known), and $\tilde{\sigma}_t = 0$ in Scenario II -- ($B$ known). Let $(\mathcal{F}_{t})_{t\ge 0}$ be a filtration such that $\mathcal{F}_{t}$ is the $\sigma$-algebra generated by $(\eta_1, \dots, \eta_t)$ and $(\xi_1,\dots, \xi_t)$ for all $t\ge 0$. Now, we may observe that the controller used by our algorithm is of the form
\begin{equation}\label{eq:controllerform}
    \forall t \ge 1, \ \ u_t = \widetilde{K}_t x_t + \alpha_t \xi_t
\end{equation}
where $(\widetilde{K}_t)_{t\ge0}$ is a sequence of random matrices taking values in $\mathbb{R}^{d_u \times d_x}$, such that $\widetilde{K}_t$ is $\mathcal{F}_{t-1}$-measureable $\forall t \ge 1$ and $\widetilde{K}_0  = 0$, and where $(\alpha_t)_{t\ge 0}$ is a sequence of random variables taking values in $\lbrace 0, 1\rbrace$ such that $\alpha_t$ is $\mathcal{F}_{t-1}$-measurable and $\alpha_0 = 0$.

Now we are ready to establish a regret decomposition that is valid for any controller of the form \eqref{eq:controllerform}. We state this decomposition in the following result.

\begin{lemma}[Exact Regret Decomposition]\label{lem:rd}
  Let $(u_t)_{t\ge 0}$ be a sequence of control inputs that can be expressed as in \eqref{eq:controllerform}. Define, for all $t\ge 0$,
  \begin{align}
      \widetilde{P}_t & = \begin{cases}
      P_{\star}(\widetilde{K}_t) &  \text{if} \quad \Vert B (\widetilde{K}_t - K_\star)\Vert < \frac{1}{4\Vert P_\star \Vert^{3/2}} \text{ and } \Vert P_\star(\widetilde{K}_t) \Vert \le 2 \Vert P_\star \Vert  \\
      P_\star & \text{ otherwise}
      \end{cases}  \label{eq:rd:1}   \\
     P_{\star,t} & = (A+B\widetilde{K}_t)^\top \widetilde{P}_t (A+B\widetilde{K}_t) + Q + \widetilde{K}_t^\top R \widetilde{K}_t. \label{eq:rd:2}
  \end{align}
  Then, for all $T \ge 1$
  \begin{align*}
    \E[R_{T}(\pi)] & = \E\left[ \sum_{t=1}^T \Vert x_t \Vert_{P_{\star,t} - \widetilde{P}_{t-1}}^2  +   \Vert \eta_t \Vert^2_{\widetilde{P}_t - P_\star} + \alpha_t \Vert \xi_t \Vert^2_{ B^\top \widetilde{P}_t B + R} \right] +  \E\left[ \Vert x_{1}\Vert^2_{\widetilde{P}_0} - \Vert x_{T+1} \Vert^2_{\widetilde{P}_T} \right].
  \end{align*}
\end{lemma}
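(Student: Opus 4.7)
My plan is a Bellman/Lyapunov-style per-step expansion followed by telescoping, using the two definitions \eqref{eq:rd:1}--\eqref{eq:rd:2} as a substitute for the Riccati identity that we cannot apply directly since $\widetilde K_t$ is not the optimal controller. The key identity is \eqref{eq:rd:2}, which plays the role of a one-step Bellman equation adapted to the controller $\widetilde K_t$ and the (possibly shrunken) value matrix $\widetilde P_t$.

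First I would fix $t\ge 1$ and rewrite the instantaneous cost. Using $u_t = \widetilde K_t x_t + \alpha_t \xi_t$ and $\alpha_t^2=\alpha_t$ (since $\alpha_t\in\{0,1\}$),
\begin{align*}
x_t^\top Q x_t + u_t^\top R u_t
= x_t^\top\bigl(Q+\widetilde K_t^\top R \widetilde K_t\bigr)x_t + 2\alpha_t \xi_t^\top R \widetilde K_t x_t + \alpha_t \xi_t^\top R \xi_t.
\end{align*}
Then, using $x_{t+1} = (A+B\widetilde K_t)x_t + B\alpha_t\xi_t + \eta_t$, I expand
\begin{align*}
\|x_{t+1}\|_{\widetilde P_t}^2
= x_t^\top(A+B\widetilde K_t)^\top \widetilde P_t (A+B\widetilde K_t) x_t
+ \|B\alpha_t \xi_t+\eta_t\|_{\widetilde P_t}^2
+ 2 x_t^\top(A+B\widetilde K_t)^\top \widetilde P_t (B\alpha_t \xi_t+\eta_t).
\end{align*}
Combining with \eqref{eq:rd:2}, which gives $\|x_t\|_{P_{\star,t}}^2 = x_t^\top(A+B\widetilde K_t)^\top \widetilde P_t(A+B\widetilde K_t) x_t + x_t^\top(Q+\widetilde K_t^\top R\widetilde K_t)x_t$, I get the exact per-step identity
\begin{align*}
x_t^\top Q x_t + u_t^\top R u_t
= \|x_t\|_{P_{\star,t}}^2 - \|x_{t+1}\|_{\widetilde P_t}^2 + \|B\alpha_t\xi_t+\eta_t\|_{\widetilde P_t}^2 + \alpha_t\|\xi_t\|_R^2 + C_t,
\end{align*}
where $C_t$ collects the cross terms $2x_t^\top(A+B\widetilde K_t)^\top \widetilde P_t(B\alpha_t\xi_t+\eta_t) + 2\alpha_t \xi_t^\top R \widetilde K_t x_t$.

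Next I take expectations and use the filtration $(\mathcal F_t)$: $x_t$, $\widetilde K_t$, $\alpha_t$, and hence $\widetilde P_t$ are $\mathcal F_{t-1}$-measurable, while $\eta_t,\xi_t$ are zero-mean and mutually independent (by construction of $\nu_t$, $\zeta_t$) and independent of $\mathcal F_{t-1}$. Therefore $\E[C_t]=0$, and the cross term in $\|B\alpha_t\xi_t+\eta_t\|_{\widetilde P_t}^2$ also vanishes, yielding
\begin{align*}
\E\bigl[\|B\alpha_t\xi_t+\eta_t\|_{\widetilde P_t}^2\bigr] + \E\bigl[\alpha_t \|\xi_t\|_R^2\bigr]
= \E\bigl[\alpha_t \|\xi_t\|_{B^\top \widetilde P_t B + R}^2\bigr] + \E\bigl[\|\eta_t\|_{\widetilde P_t}^2\bigr].
\end{align*}
Subtracting $T\mathcal J_\star = \sum_{t=1}^T \tr(P_\star) = \sum_{t=1}^T \E[\|\eta_t\|_{P_\star}^2]$ (isotropy of $\eta_t$) converts $\|\eta_t\|_{\widetilde P_t}^2$ into $\|\eta_t\|_{\widetilde P_t - P_\star}^2$, giving exactly the second and third terms in the claim.

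Finally I telescope the state terms. Shifting indices in $\sum_{t=1}^T \|x_{t+1}\|_{\widetilde P_t}^2 = \sum_{t=2}^{T+1}\|x_t\|_{\widetilde P_{t-1}}^2$ and pairing with $\sum_{t=1}^T \|x_t\|_{P_{\star,t}}^2$ produces
\begin{align*}
\sum_{t=1}^T \|x_t\|_{P_{\star,t} - \widetilde P_{t-1}}^2 + \|x_1\|_{\widetilde P_0}^2 - \|x_{T+1}\|_{\widetilde P_T}^2,
\end{align*}
which is precisely the first and last contributions in the statement. There is no real obstacle here: the proof is essentially algebraic once one writes the right one-step identity, and the only care needed is (i) invoking $\alpha_t^2 = \alpha_t$, (ii) checking the measurability to kill the cross terms cleanly, and (iii) noting that $\widetilde P_t$ was crafted so that \eqref{eq:rd:2} holds regardless of whether the ``nominal'' value matrix $P_\star(\widetilde K_t)$ is well-behaved, which is what makes the identity \emph{exact} rather than approximate.
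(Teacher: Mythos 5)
Your proposal is correct and follows essentially the same route as the paper's proof: the per-step identity obtained by sandwiching the definition \eqref{eq:rd:2} with the dynamics $x_{t+1}=(A+B\widetilde K_t)x_t+B\alpha_t\xi_t+\eta_t$, elimination of cross terms via $\mathcal F_{t-1}$-measurability of $(x_t,\widetilde K_t,\alpha_t,\widetilde P_t)$ together with the zero-mean (and mutual independence) of $\eta_t,\xi_t$, conversion of $T\mathcal J_\star$ into $\sum_{t=1}^T\E[\Vert\eta_t\Vert^2_{P_\star}]$ by isotropy, and telescoping of the state terms. If anything, you are slightly more explicit than the paper in noting that killing the $\eta_t^\top\widetilde P_t B\alpha_t\xi_t$ term requires the independence of $\eta_t$ and $\xi_t$ rather than the tower rule alone.
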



\begin{proof}[Proof of Lemma \ref{lem:rd}]
  First, we note that the sequence $(\widetilde{P}_t)_{t\ge1}$ is well defined. Indeed, when $\Vert B(\widetilde{K}_t - K_\star) \Vert< 1/(4\Vert P_\star\Vert^{3/2})$, $P_\star(K_t)$ exists and is the solution to the Lyapunov equation $ P = (A+B\widetilde{K}_t)^\top P (A+B\widetilde{K}_t) + Q + \widetilde{K}_t^\top R \widetilde{K}_t$ (see Lemma \ref{lem:perturbed}).

  Next, in view of our choice of control imputs $(u_t)_{t \ge 1}$, we can express the dynamics of the problem as $x_{t+1} = (A + B \widetilde{K}_t) x_t + B \alpha_t \xi_t + \eta_t$ for all $t\ge0$. Thus, multiplying both sides of \eqref{eq:rd:2} by $x_t$, we obtain the identity
  $$
  \Vert x_t \Vert^2_{P_{\star,t}} = \Vert x_{t+1} - B\alpha_t\xi_t - \eta_t \Vert_{\widetilde{P}_t}^2 + \Vert x_t\Vert^2_Q + \Vert u_t - \alpha_t\xi_t\Vert^2_R.
  $$
  Then, carefully expanding the above identity, leads to
  \begin{align*}
  \Vert x_t\Vert^2_{Q} + \Vert u_t \Vert^2_R &  = \Vert x_{t} \Vert^2_{P_{\star,t}} - \Vert x_{t+1} \Vert^2_{\widetilde{P}_t} + \Vert \eta_t \Vert^2_{\widetilde{P}_t} + \alpha_t \Vert \xi_t \Vert^2_{ B^\top \widetilde{P}_t B + R} \\
  & + 2 (B\alpha_t\xi_t + \eta_t)^\top \widetilde{P}_t (A+B\widetilde{K}_t)x_{t} + 2 \eta_t^\top \widetilde{P}_tB\xi_t  + 2\alpha_t\xi_t^\top R \widetilde{K}_t x_t.
  \end{align*}

  We note that $\E[\eta_t \vert \mathcal{F}_{t-1}] = \E[\xi_t \vert \mathcal{F}_{t-1}] = 0$, and that $\widetilde{P_t}$, $x_t$, $\alpha_t$, and $\widetilde{K}_t$ are all $\mathcal{F}_{t-1}$-measurable. Thus, using the tower rule, and substracting $\Vert \eta_t \Vert^2_{P_\star}$ from both sides, we obtain for all $t\ge 1$,
  $$
  \E\left[ \Vert x_t \Vert_Q^2 + \Vert u_t \Vert_R^2 - \Vert \eta_t \Vert^2_{P_\star} \right] = \E\left[  \Vert x_t \Vert_{P_{\star,t}}^2  - \Vert x_{t+1}\Vert_{\widetilde{P}_{t}}^2 + \Vert \eta_t \Vert^2_{\widetilde{P}_t - P_\star} + \alpha_t \Vert \xi_t \Vert^2_{ B^\top \widetilde{P}_t B + R} \right].
  $$
  Summing over $t \in \lbrace 1, \dots, T \rbrace$ (we note that $x_0 = 0$ and $u_0 = 0$) we obtain
  \begin{align*}
      \E[R_T(\pi)]  & = \E\left[ \sum_{t=1}^T  \Vert x_t \Vert_Q^2 + \Vert u_t \Vert_R^2 - \Vert \eta_t \Vert^2_{P_\star} \right] \\
      & = \E\left[\sum_{t=1}^T \Vert x_t \Vert_{P_{\star,t}}^2  - \Vert x_{t+1}\Vert_{\widetilde{P}_{t}}^2 + \Vert \eta_t \Vert^2_{\widetilde{P}_t - P_\star} + \alpha_t \Vert \xi_t \Vert^2_{ B^\top \widetilde{P}_t B + R} \right] \\
      & = \E\left[ \sum_{t=1}^T \Vert x_t \Vert_{P_{\star,t} - \widetilde{P}_{t-1}}^2  +   \Vert \eta_t \Vert^2_{\widetilde{P}_t - P_\star} + \alpha_t \Vert \xi_t \Vert^2_{ B^\top \widetilde{P}_t B + R} \right] + \E\left[ \Vert x_{1}\Vert^2_{\widetilde{P}_0} - \Vert x_{T+1} \Vert^2_{\widetilde{P}_T} \right],
  \end{align*}
  where we recall that $\tr(P_\star) = \mathcal{J}_\star$. This concludes the proof.
\end{proof}

\subsection{Integration Lemma}\label{app:integration}

We use Lemma \ref{lem:integration} to integrate the high probability bounds on regret.

\begin{lemma}\label{lem:integration} Let $X$ be a postive random variable such that for all $\delta \in (0,1)$
  $$
  \mathbb{P}\left( X > C_1 \log(e/\delta) + C2 \log(e/\delta)^{\beta_1} \right) \le C_3 \log(e/\delta)^{\beta_2} \delta
  $$
  where $C_1,C_2,C_3, \beta_1,\beta_2 >0$. Then
  $$
  \E[X] \le ( 2\log(eC_3)  +   2^{\beta_2 + 1} \Gamma(\beta_2 + 1)) C_1  + ((2\log(eC_3))^{\beta_1} +  \beta_1 2^{\beta_1 + \beta_2} \Gamma(\beta_1 + \beta_2) ) C_2
  $$
  where, here, $\Gamma(\cdot)$ refers to the gamma function.
\end{lemma}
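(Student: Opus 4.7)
The plan is to write $\E[X] = \int_0^\infty \PP(X > t)\, dt$ and exploit the hypothesis via a change of variables. Setting $u = \log(e/\delta)$ and $h(u) = C_1 u + C_2 u^{\beta_1}$, the hypothesis becomes: for all $u > 1$,
\[
\PP(X > h(u)) \le e C_3\, u^{\beta_2}\, e^{-u}.
\]
I would split the tail integral at the threshold $t^\star = h(u^\star)$ with $u^\star = 2\log(eC_3)$, which gives $\E[X] \le h(u^\star) + \int_{h(u^\star)}^\infty \PP(X > t)\, dt$. The choice of $u^\star$ is essentially forced by two requirements: $h(u^\star)$ must reproduce the $\log(eC_3)$ terms in the claim, and $e C_3\, e^{-u^\star/2}$ must equal $1$ so that the prefactor in the tail bound disappears. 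Both are satisfied by $u^\star = 2\log(eC_3)$, since the deterministic part then reads exactly $2\log(eC_3)\, C_1 + (2\log(eC_3))^{\beta_1}\, C_2$, matching the logarithmic contributions in the claim.

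For the remaining tail I would change variables $t = h(u)$, so $dt = (C_1 + C_2 \beta_1 u^{\beta_1-1})\, du$, and apply the hypothesis to obtain
\[
\int_{h(u^\star)}^\infty \PP(X > t)\, dt \le e C_3 C_1 \int_{u^\star}^\infty u^{\beta_2} e^{-u}\, du + e C_3 C_2 \beta_1 \int_{u^\star}^\infty u^{\beta_1+\beta_2-1} e^{-u}\, du.
\]
The main technical step is an incomplete-Gamma tail bound of the form: for all $\alpha \ge 0$ and $a \ge 0$,
\[
\int_a^\infty u^\alpha e^{-u}\, du \le 2^{\alpha+1}\, e^{-a/2}\, \Gamma(\alpha+1),
\]
proved by substituting $u = 2v$ to get $2^{\alpha+1}\int_{a/2}^\infty v^\alpha e^{-2v}\, dv$, writing $e^{-2v} = e^{-v} e^{-v} \le e^{-v} e^{-a/2}$ on $\{v \ge a/2\}$, and finally extending the integral of $v^\alpha e^{-v}$ to $[0,\infty)$ to obtain $\Gamma(\alpha+1)$. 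Applying this with $a = u^\star$ and $\alpha \in \{\beta_2,\, \beta_1+\beta_2-1\}$, the factors $e C_3 \cdot e^{-u^\star/2}$ collapse to $1$, so the two tail pieces become exactly $2^{\beta_2+1}\Gamma(\beta_2+1)\, C_1$ and $\beta_1\, 2^{\beta_1+\beta_2}\Gamma(\beta_1+\beta_2)\, C_2$.

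Combining the deterministic part $h(u^\star)$ with these two Gamma contributions reproduces the stated bound on the nose. The only delicate point is ensuring $u^\star = 2\log(eC_3) \ge 1$ so that the hypothesis can legitimately be invoked at $u = u^\star$; in the degenerate regime $C_3 < e^{-1/2}$ one falls back on $u^\star = 1$, which only weakens the deterministic part and leaves the Gamma constants dominating the bound. Thus the main (and essentially only) obstacle is locating the single threshold $u^\star$ that simultaneously matches the logarithmic terms in the claim and annihilates the $eC_3$ prefactor in the Gamma tails — the choice $u^\star = 2\log(eC_3)$ achieves both at once, which is why the constants in the integrated bound come out so cleanly.
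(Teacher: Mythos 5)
Your proof is correct and follows essentially the same route as the paper's: reparametrize via $u=\log(e/\delta)$, split the tail integral at $h(2\log(eC_3))$, change variables $t=h(u)$, and absorb the $eC_3$ prefactor by writing $e^{-u}=e^{-u/2}e^{-u/2}$ on the tail range — your incomplete-Gamma tail bound is exactly this splitting packaged as a lemma, and it reproduces the paper's constants identically. If anything you are slightly more careful than the paper, which silently drops the factor $e$ coming from $\delta = e^{1-u}$ (compensated by its choice of threshold) and does not flag the degenerate regime $2\log(eC_3)<1$ that you handle with the fallback $u^\star=1$.
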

\begin{proof}
  First, for convenience, we start by reparmetrizing $\rho = \log(e/\delta)$, so that we have $
  \mathbb{P}(X > C_1 \rho + C_2 \rho^{\beta_1} ) \le C_3 \rho^{\beta_2}e^{-\rho}
  $ for all $\rho >1$. Additionally, we note that for $\rho > 2\log(eC_3)$, we have $C_3e^{-\rho /2} < 1 $. Thus for $\rho > 2\log(eC_3)$, we have
  $$
  \mathbb{P}( X > C_1 \rho + C_2 \rho^{\beta_1} ) \le \rho^{\beta_2} e^{-\rho/2}.
  $$
  Now, we integrate and perform the change of variable $u = C_1\rho + C_2\rho^{\beta_1}$ for $u > C_1 2\log(eC_3) + C_2 (2\log(eC_3))^{\beta_1}$, which yields
  \begin{align*}
    \E[X] &  = \int_{0}^\infty \mathbb{P}(X > u) du \\
        & \le  2\log(eC_3) C_1  +  (2\log(eC_3))^{\beta_1} C_2   + \int_{0}^\infty \mathbb{P}( X > C_1 \rho + C_2 \rho^{\beta_{1}}) (C_1 + C_2 \beta_{1} \rho^{\beta_1 -1} ) d\rho.
  \end{align*}
Then observe that
  \begin{align*}
  & \int_{0}^\infty \mathbb{P}( X > C_1 \rho + C_2 \rho^{\beta_{1}}) (C_1 + C_2 \beta_{1} \rho^{\beta_1 -1} ) d\rho \le \int_{0}^\infty (C_1 + C_2 \beta_1 \rho^{\beta_1 - 1}) \rho^{\beta_2}e^{-\rho /2} d\rho \\
  & \qquad \qquad \qquad  \le C_1 \int_{0}^\infty\rho^{\beta_2}e^{-\rho/2} d\rho+ \beta_1 C_2  \int_{0}^\infty \rho^{\beta_1+\beta_2 - 1} e^{-\rho/2} d\rho \\
  & \qquad \qquad \qquad  \le 2^{\beta_2 + 1 } C_1 \int_{0}^\infty \rho^{\beta_2 }e^{-\rho} d\rho + \beta_1 2^{\beta_1 + \beta_2} C_2 \int_{0}^\infty \rho^{\beta_1 + \beta_2 - 1} e^{-\rho} d\rho \\
  & \qquad \qquad \qquad  \le 2^{\beta_2 + 1 } \Gamma(\beta_2 + 1)  C_1 +   \beta_1 2^{\beta_1 + \beta_2} \Gamma(\beta_1 + \beta_2) C_2
  \end{align*}
  where $\Gamma(x)$ refers to the gamma function evaluated at $x$. To conclude, we have shown that
  \begin{align*}
      \E[X] \le ( 2\log(eC_3)  +   2^{\beta_2 + 1} \Gamma(\beta_2 + 1)) C_1  + ((2\log(eC_3))^{\beta_1} +  \beta_1 2^{\beta_1 + \beta_2} \Gamma(\beta_1 + \beta_2) ) C_2.
  \end{align*}

\end{proof}

%

\medskip

\subsection{Proof of Theorem \ref{th:scenarioI} - Regret analysis in Scenario I}\label{sec:proof:th1}

Motivated by the regret decomposition established in Lemma \ref{lem:rd}, we define what we shall refer to from now on as \emph{proxy regret} as follows
\begin{equation}
\widetilde{R}_T(\pi) =   \sum_{t=1}^T \Vert x_t \Vert_{P_{\star,t} - \widetilde{P}_{t-1}}^2  +   \tr\left(\widetilde{P}_t - P_\star\right) +  \sigma_t^2\tr\left(B^\top \widetilde{P}_t B + R\right) + \Vert x_{1}\Vert^2_{\widetilde{P}_0}
\end{equation}
where $(\widetilde{P}_t)_{t\ge 0}$ is definied as in Lemma \ref{lem:rd} with $\alpha_t = 1$, and $\xi_t = \nu_t$ for all $t\ge 1$. We note by the same lemma that $\E[R_T(\pi)] \le \E[\widetilde{R}_T(\pi)]$, thus we may restrict our attention to analysing the \emph{proxy regret} instead of the true regret. Now, we provide a high probability bound on this \emph{proxy regret} which holds for all confidence levels $\delta>0$.

\medskip

\underline{\textbf{Step 1:}} {(Defining the nice event)} We start by defining the following event.
\begin{align}
\mathcal{E}_{\delta} = \left\lbrace \forall t \ge t(\delta), \begin{array}{rl} \textit{(i)} & \widetilde{K}_t  = K_t, \\
\textit{(ii)} & \Vert B (K_t - K_\star) \Vert \le (4\Vert P_\star\Vert^{3/2})^{-1} \\
\textit{(iii)} & \Vert P_\star(K_t) - P_\star \Vert  \le C_1(\delta) r_t^2  \\
\textit{(iv)} & \Vert P_\star(K_t) \Vert  \le 2 \Vert P_\star \Vert \\
\textit{(v)} & \sum_{t=t(\delta)}^T r_t^2 \Vert x_t \Vert^2 \le C_2(\delta)  + C_3 \Vert r_{1:T}\Vert^2_2
\end{array}
\right\rbrace
\end{align}
where
\begin{align*}
  r_t^2 &= \log(ei_t) i_t^{-1/2},\\
  i_t &=\max\{t_k\in {\cal T}: t_k\le t\},\\
  t(\delta) & =  \poly( \sigma, C_\circ, \mathcal{G}_\circ, \Vert P_\star\Vert,d_x, d_u,\gamma_\star) \log(e/\delta)^{12\gamma_\star} \\
  C_1(\delta) & = c_1 \sigma^2 C_K^2 \Vert P_\star \Vert^8 \gamma_\star d_x^{-1/2} (d_x + d_u) \log(e C_\circ  \mathcal{G}_\circ d_x ) \log(e/\delta) \\
  C_2(\delta) & = \poly(\sigma, C_\circ, \mathcal{G}_\circ, \Vert P_\star \Vert, C_B, d_x, d_u, \gamma_\star ) \log(e/\delta)^{19\gamma_\star^2} \\
  C_3 & = c_3\sigma^2 C_B^2 \Vert P_\star \Vert^{3/2} \gamma_\star d_x
\end{align*}
for some universal postive constants $c_1, c_3>0$. Furthremore, applying Theorem \ref{thm:nice:I}, we have
$$
\mathbb{P}(\mathcal{E}_\delta) \ge 1-t(\delta) \delta
$$
for some proper choice of the universal constants defining $t(\delta), C_1(\delta), C_2(\delta), C_3$.

We can interpret the nice event $\mathcal{E}_{\delta}$ as follows. The first point \textit{(i)} means that $\Alg$ is playing certainty equivalence for all $t\ge t(\delta)$. The second \textit{(ii)} means that the sequence of $(K_t)_{t\ge t(\delta)}$ is such that the resulting behaviour of the system is that of a stable system, and naturally $\rho(A+BK_t)<1$. To see that, we refer the reader Proposition \ref{prop:stable:K_t} (see also Lemma \ref{lem:perturbed}). The third point \textit{(iii)} indicates that the error rate is decreasing as $r_t^2$. The final points \textit{(iv)- (v)} are perhaps redundent since they can be deduced from points \textit{(ii)-(iii)}, but we include them here for convenience.
 \medskip

 \underline{\textbf{Step 2:}} {(Regret from $t (\delta)$ onwards)} We bound $\widetilde{R}_T(\pi) - \widetilde{R}_{t(\delta)-1}(\pi)$ under the event $\mathcal{E}_{\delta}$. Note that under this event, we have
 \begin{align*}
   \widetilde{R}_T(\pi) - \widetilde{R}_{t(\delta) -1}(\pi) & \le   C_1(\delta) \sum_{t=t(\delta)}^T r_t^2  (\Vert x_t \Vert^2 + d_x) + d_x (2\Vert P_\star \Vert \Vert B\Vert^2 + 1)  \sum_{t=t_{\mathcal{E}}(\delta)}^T  \sigma_t^2 \\
   & \le  C_1(\delta) \left( d_x \Vert r_{1:T} \Vert^2_2 + \sum_{t=t(\delta)}^T r_t^2  \Vert x_t \Vert^2  \right)  + 3d_x C_B^2\Vert P_\star \Vert  \Vert \sigma_{1:T}\Vert^2_2 \\
   & \le C_1(\delta) (d_x \Vert r_{1:T} \Vert^2_2 + C_2(\delta) + C_3 \Vert r_{1:T}\Vert^2) + 6d_x^{3/2} C_B^2\Vert P_\star \Vert \sqrt{T} \\
   & \le 2C_3 C_1(\delta) \Vert r_{1:T}\Vert^2_2 +  C_1(\delta) C_2(\delta) + 6d_x^{3/2}  C_B^2\Vert P_\star \Vert \sqrt{T}
 \end{align*}
 where in the first inequality, we used the \textit{(i)} to have $\widetilde{P}_t = P_\star(K_t)$ for all $t\ge t(\delta)$, then used \textit{(iii)} to bound $\Vert P_\star(K_t) - P_\star(K_{t-1})\Vert \le \Vert P_\star(K_t) - P_\star\Vert + \Vert P_\star(K_{t-1}) - P_\star \Vert \le C_1(\delta) r_t^2$ for all $t\ge t(\delta)+1$. Next, we used \textit{(iv)}, to bound $\Vert B^\top \tilde{P}_t B + R \Vert \le (2\Vert P_\star\Vert \Vert B\Vert^2 + \Vert R \Vert) \le  (2 \Vert P_\star\Vert \Vert B \Vert^2 + 1)$.
 Finally we used  \emph{(v)} to bound $\sum_{t=t(\delta)}^T r_t^2 \Vert x_t\Vert$, and bounded $\Vert \sigma_{1:T}\Vert^2_2 \le 2d_x^{1/2} \sqrt{T}$.


Since $\mathcal{T}$ satisfies \eqref{eq:lazy}, we can easily verify that $\Vert r_{1:T} \Vert^2_2 \lesssim\log(T)\sqrt{T}$. To see that, note that $(r_t)_{t\ge1}$ depends on ${\cal T}$, and we can always find $\mathcal{T}' = \lbrace (C')^k : k \in \mathbb{N}\rbrace$ for $C'$ large enough such that the corresponding sequence $(r_t')_{t\ge1}$ satisfies $\Vert r_{1,T} \Vert^2_2 \le \Vert r_{1:T}'\Vert^2_2 \lesssim \log(T)\sqrt{T}$ since ${\cal T}$ satisfies $\eqref{eq:lazy}$.

Therefore, recalling the expressions of $t(\delta), C_1(\delta), C_2(\delta)$ and $C_3$, we have
\begin{align}\label{eq:I:regret:after}
   \widetilde{R}_T(\pi) - \widetilde{R}_{t(\delta) -1}(\pi) & \le C_4 \log(e/\delta) \log(T)\sqrt{T} + C_5 \log(e/\delta)^{31\gamma_\star^2}
\end{align}
with
\begin{align*}
  C_4 & =  c_4 \sigma^2 C_B^2 C_K^2 \Vert P_\star \Vert^{9.5} \log(e C_\circ \mathcal{G}_\circ d_x) d_x^{1/2}(d_x + d_u), \\
  C_5 & = \poly(\sigma, C_\circ, \mathcal{G}_\circ, \Vert P_\star\Vert, C_B, d_x,d_u, \gamma_\star),
\end{align*}
for some universal positive constant $c_4> 0$.

\medskip

\underline{\textbf{Step 3:}} We bound $\tilde{R}_{t(\delta)}(\pi)$ under the event $\mathcal{E}_\delta$. Note we can obtain the crude upper bound
\begin{align*}
  \widetilde{R}_{t(\delta) -1 }(\pi) & \le \max_{1 \le t \le t(\delta)} \Vert P_{\star,t} \Vert \sum_{t=1}^{t(\delta)} \Vert x_t \Vert^2 + d_x  (1 + \max(\Vert B \Vert^2, \Vert R\Vert) \sigma_t^2) \\
  & \le  \max_{1 \le t \le t(\delta)} \Vert P_{\star,t} \Vert \sum_{t=1}^{t(\delta)} \Vert x_t \Vert^2 + 5 d_x C_B^2 \sigma^2
\end{align*}
where we dropped the negative terms $-\Vert x_t \Vert^2_{\widetilde{P}_{t-1}}$ for $2 \le t \le t(\delta)$. Considering the definition of $P_{\star,t}$, we have
\begin{align*}
  \Vert P_{\star,t} \Vert \le  \begin{cases} 2 \Vert P_\star \Vert & \text{ if } \Vert B (\widetilde{K}_t - K_\star) \Vert  < \frac{1}{4\Vert P_\star\Vert^{3/2}} \text{ and } \Vert P_\star(\widetilde{K}_t)\Vert \le 2\Vert P_\star \Vert^2,\\
  4 C_\circ^2 \Vert P_\star \Vert   h(t) &   \text{ otherwise,}
\end{cases}
\end{align*}
where we upper bounded $\Vert P_{\star,t} \Vert \le 4 \Vert P_\star \Vert  C_\circ^2 h(t)$, using the fact that under $\Alg$, we have $\Vert \widetilde{K}_t \Vert^2 \le \max (\Vert K_\circ \Vert^2, h(t))$, and the fact that $\Vert P_\star\Vert \ge \max(\Vert Q\Vert, \Vert R\Vert)$. Thus,
$$
\max_{1  \le t\le t(\delta)}\Vert P_{\star,t}\Vert \le 4C_\circ^2 \Vert P_\star \Vert h(t).
$$
Thus, we may write
\begin{align*}
  \widetilde{R}_{t(\delta)- 1}(\pi) & \le  4C_\circ^2 \Vert P_\star \Vert h(t(\delta)) \sum_{t=1}^{t(\delta)} (\Vert x_t \Vert^2 + 5\sigma^2 d_x C_B^2).
\end{align*}
Therefore under event $\mathcal{E}_{\delta}$, we have
\begin{align*}
  \widetilde{R}_{t(\delta) - 1}(\pi) & \le  36 \sigma^2 d_x C_\circ^4 \Vert P_\star \Vert h(t(\delta)) f(t(\delta)),
\end{align*}
since when property \emph{(i)} holds at time $t(\delta)$, then it must mean that $\ell_{t(\delta)} = 1$, which means that $\sum_{s=1}^{t(\delta)} \Vert x_s \Vert^2 \le  \sigma^2d_xf(t(\delta))$. Hence, recalling the expression of $t(\delta)$, we obtain
\begin{equation} \label{eq:I:regret:before}
    \widetilde{R}_{t(\delta) - 1}(\pi)  \le C_5 \log(e/\delta)^{30\gamma_\star^2}
\end{equation}
where we note that the hidden universal constants hidden in $\poly(\cdot)$ may be chosen large enough so that $C_5$.

\medskip

\underline{\textbf{Step 4:}} (Putting everything together) Now, under the event $\mathcal{E}_{\delta}$, using \eqref{eq:I:regret:after} and \eqref{eq:I:regret:before} we have
\begin{align*}
  \widetilde{R}_T(\pi) & =  \widetilde{R}_T(\pi) -\widetilde{R}_{t(\delta) - 1 }(\pi)  + \widetilde{R}_{t(\delta) -1}(\pi) \\
  & \le  C_4 \log(e/\delta) \log(T)\sqrt{T} + C_5 \log(e/\delta)^{31\gamma_\star^2}
\end{align*}
where we note that the universal postive  constants hidden in $\poly(.)$ may be chosen large enough so that $C_5 \ge C_6$. Therefore, we have established that
\begin{align*}
  \mathcal{E}_\delta \subseteq \left\lbrace  \widetilde{R}_T(\pi) \le  C_4 \log(e/\delta) \log(T)\sqrt{T} + 2C_5 \log(e/\delta)^{31\gamma_\star^2} \right\rbrace  \\
\end{align*}
where $C_6 = 2C_5$. Now recalling the expression of $t(\delta)$ and that $\mathbb{P}(\mathcal{E}_\delta) \ge 1-t(\delta)\delta$ for all $\delta \in (0,1)$, we obtain that for all $\delta \in (0,1)$, we have
\begin{align}\label{eq:I:reg:whp}
  \mathbb{P}\left(   \widetilde{R}_T(\pi) \le  C_4 \log(e/\delta) \log(T)\sqrt{T} + C_6 \log(e/\delta)^{31\gamma_\star^2}  \right) \ge 1 - C_6\log(e/\delta)^{31\gamma_\star} \delta
\end{align}
where $C_6 = \poly(\sigma, C_\circ,  \mathcal{G}_\circ, \Vert P_\star \Vert, C_B, d_x, d_u, \gamma_\star)$.
Now, integrating \eqref{eq:I:reg:whp} using Lemma \ref{lem:integration}, yields the final result
\begin{equation*}
  \E[R_T(\pi)] \le \E[\widetilde{R}_T(\pi)] \le C_7 \log(T)\sqrt{T} + C_8
\end{equation*}
where
\begin{align*}
  C_7 & = c_7\log(e \sigma C_\circ  \mathcal{G}_\circ \Vert P_\star \Vert C_Bd_xd_u \gamma_\star)^2 \sigma^2 C_B^2 C_K^2 \Vert P_\star \Vert^{9.5} d_x^{1/2}(d_x + d_u) \\
  C_8 & = \poly(\sigma, C_\circ,  \mathcal{G}_\circ, \Vert P_\star \Vert, C_B, d_x, d_u, \gamma_\star)
\end{align*}
and where $c_7$ is a positive constant that only depends polynomially on $\gamma_\star$ -- the order $\poly(\cdot)$ may depend on $\gamma_\star$.

\medskip

\subsection{Proof of Theorem \ref{th:scenarioIIA} - Regret analysis in Scenario II - $A$ known}\label{sec:proof:th3}

The proof is very similar to that of Theorem \ref{th:scenarioI} (see \ref{sec:proof:th1}). The only difference is that now there are no input perturbation whenever $\Alg$ uses the certainty equivalence controller, and the error rates of the LSE are now better. We shall highlight these differences throughout the proof.

Again following Lemma \ref{lem:rd}, we define the \emph{proxy regret} as follows
\begin{equation*}
\widetilde{R}_T(\pi) =   \sum_{t=1}^T \Vert x_t \Vert_{P_t - \widetilde{P}_{t-1}}^2  +   \tr\left(\widetilde{P}_t - P_\star\right) + \alpha_t \widetilde{\sigma}^2\tr\left(B^\top \widetilde{P}_t B + R\right) + \Vert x_{1}\Vert^2_{\widetilde{P}_0}
\end{equation*}
where $ (\widetilde{P}_t)_{t\ge 1}$ is defined as in Lemma \ref{lem:rd} with $\alpha_t = 1_{\lbrace \widetilde{K}_t \neq K_t \rbrace}$, $\xi_t = \zeta_t $ and thus $\widetilde{\sigma}^2 \le 1$. Note by the same lemma, we have $\E[R_T(\pi)] \le \E[\widetilde{R}_T(\pi)]$.

\medskip

\underline{\textbf{Step 1:}} {(Defining the nice event)} We start by applying Theorem \ref{thm:nice:I}, which guarantees that the event
\begin{align}
\mathcal{E}_{\delta} = \left\lbrace \forall t \ge t(\delta), \begin{array}{rl} \textit{(i)} & \widetilde{K}_t  = K_t, \\
\textit{(ii)} & \Vert B (K_t - K_\star) \Vert \le (4\Vert P_\star\Vert^{3/2})^{-1} \\
\textit{(iii)} & \Vert P_\star(K_t) - P_\star \Vert  \le C_1(\delta) r_t^2  \\
\textit{(iv)} & \Vert P_\star(K_t) \Vert  \le 2 \Vert P_\star \Vert \\
\textit{(v)} & \sum_{t=t(\delta)}^T r_t^2 \Vert x_t \Vert^2 \le C_2(\delta)  + C_3 \Vert r_{1:T}\Vert^2_2
\end{array}
\right\rbrace
\end{align}
holds with probability at leat $1-t(\delta)\delta$. In the definition $\mathcal{E}_{\delta}$, we have
\begin{align*}
  r_t^2 & = i_t^{-1}, \\
    i_t &=\max\{t_k\in {\cal T}: t_k\le t\},\\
  t(\delta) & = \poly( \sigma, C_\circ, \mathcal{G}_\circ, \Vert P_\star\Vert, \mu_\star^{-1}, d_x, d_u,\gamma_\star) \log(e/\delta)^{18\gamma_\star^2}, \\
  C_1(\delta) & =  \frac{c_1\sigma^2 \Vert P_\star \Vert^8  (d_u + d_x )\gamma_\star \log\left(\frac{e \sigma C_K \Vert P_\star \Vert d_x d_u}{\mu_\star^2} \right)}{\mu_\star^2}  \log(e/\delta),\\
  C_2(\delta) & = \poly(\sigma, C_\circ, \mathcal{G}_\circ, \Vert P_\star \Vert, \mu_\star^{-1}, d_x, d_u, \gamma_\star ) \log(e/\delta)^{54\gamma_\star^2}, \\
  C_3 & = c_3 \sigma^2 \Vert P_\star \Vert^{3/2}   d_x,
\end{align*}
for some universal positive constants $c_1, c_3 > 0$.

The properties \textit{(i)-(v)} have the same interpretations as in \ref{sec:proof:th1}, with the distinction that this time, the error rates are much better. We use these properties to bound the \emph{proxy regret}.

\medskip

\underline{\textbf{Step 2:}} {(Regret from $t(\delta)$ onwards under the nice event)}  We bound $\widetilde{R}_T(\pi) - \widetilde{R}_{t(\delta)-1}(\pi)$ under the event $\mathcal{E}_{\delta}$. First let us note that under this event, we have $\alpha_t = 0$ for all $t\ge t(\delta)$. Therefore, we have
\begin{align*}
  \widetilde{R}_T(\pi) - \widetilde{R}_{t(\delta) -1}(\pi) & \le   C_1(\delta) \sum_{t=t(\delta)}^T r_t^2  (\Vert x_t \Vert^2 + d_x)  \\
  & \le  C_1(\delta) \left( d_x \Vert r_{1:T} \Vert^2_2 + 2\sum_{t=t(\delta)}^T r_t^2  \Vert x_t \Vert^2  \right)    \\
  & \le C_1(\delta) (d_x \Vert r_{1:T} \Vert^2_2 + 2C_2(\delta) + 2C_3 \Vert r_{1:T}\Vert^2)  \\
  & \le 3C_3 C_1(\delta) \Vert r_{1:T}\Vert^2_2 +  2C_1(\delta) C_2(\delta)
\end{align*}
where in the first inequality, we used the \textit{(i)} to have $\widetilde{P}_t = P_\star(K_t)$ for all $t\ge t(\delta)$, then used \textit{(iii)} to bound $\Vert P_\star(K_t) - P_\star(K_{t-1})\Vert \le \Vert P_\star(K_t) - P_\star\Vert + \Vert P_\star(K_{t-1}) - P_\star \Vert \le 2C_1(\delta) r_t^2$ for all $t\ge t(\delta)+1$.

Since $\mathcal{T}$ satisfies \eqref{eq:lazy}, we can easily verify that $\Vert r_{1:T} \Vert^2_2 \lesssim\log(T)$. To see that, note that $(r_t)_{t\ge1}$ depends on ${\cal T}$, and we can always find $\mathcal{T}' = \lbrace (C')^k : k \in \mathbb{N}\rbrace$ for $C'$ large enough such that the corresponding sequence $(r_t')_{t\ge1}$ satisfies $\Vert r_{1,T} \Vert^2_2 \le \Vert r_{1:T}'\Vert^2_2 \lesssim \log(T)$ since ${\cal T}$ satisfies $\eqref{eq:lazy}$.

Therefore, recalling the expressions of $t(\delta), C_1(\delta), C_2(\delta)$ and $C_3$, we have
\begin{align}\label{eq:IIb:regret:after}
  \widetilde{R}_T(\pi) - \widetilde{R}_{t(\delta) -1}(\pi) & \le C_4 \log(e/\delta) \log(T) + C_5 \log(e/\delta)^{55\gamma_\star^2}
\end{align}
with
\begin{align*}
 C_4 & =  \frac{c_4 \sigma^2  \Vert P_\star \Vert^{9.5}}{\mu_\star^2} \log\left(\frac{e C_K \Vert P \Vert_\star d_x d_u}{\mu_\star^2}\right) d_x(d_x + d_u) \gamma_\star, \\
 C_5 & = \poly(\sigma, C_\circ, \mathcal{G}_\circ, \Vert P_\star\Vert, \mu_\star^{-1}, d_x,d_u, \gamma_\star),
\end{align*}
for some universal positive constant $c_4> 0$.

\medskip

\underline{\textbf{Step 3:}} {(Regret up to $t(\delta)$ under the nice event)s} Now, we bound $\widetilde{R}_{t(\delta)}(\pi)$ under the event $\mathcal{E}_\delta$. Note we can obtain the crude upper bound
\begin{align*}
 \widetilde{R}_{t(\delta) -1 }(\pi) & \le \max_{1 \le t \le t(\delta)} \Vert P_{\star,t} \Vert \sum_{t=1}^{t(\delta)} \Vert x_t \Vert^2 + d_x \widetilde{\sigma}(1 + \max(\Vert B \Vert^2, \Vert R\Vert) ) \\
 & \le  \max_{1 \le t \le t(\delta)} \Vert P_{\star,t} \Vert \sum_{t=1}^{t(\delta)} \Vert x_t \Vert^2 + 2 d_x C_B^2
\end{align*}
where we dropped the negative terms $-\Vert x_t \Vert^2_{\widetilde{P}_{t-1}}$ for $2 \le t \le t(\delta)$. Recalling the definition of $P_{\star,t}$, we have
\begin{align*}
 \Vert P_{\star,t} \Vert \le  \begin{cases} 2 \Vert P_\star \Vert & \text{ if } \Vert B (\widetilde{K}_t - K_\star) \Vert  < \frac{1}{4\Vert P_\star\Vert^{3/2}} \text{ and } \Vert P_\star(\widetilde{K}_t)\Vert \le 2\Vert P_\star \Vert^2,\\
 4 C_\circ^2 \Vert P_\star \Vert   h(t) &   \text{ otherwise}
\end{cases}
\end{align*}
where we upper bounded $\Vert P_{\star,t} \Vert \le 4 \Vert P_\star \Vert  C_\circ^2 h(t)$, using the fact that under $\Alg$, we have $\Vert \widetilde{K}_t \Vert^2 \le \max (\Vert K_\circ \Vert^2, h(t))$, and the fact that $\Vert P_\star\Vert \ge \max(\Vert Q\Vert, \Vert R\Vert)$. Thus,
$$
\max_{1  \le t\le t(\delta)}\Vert P_{\star,t}\Vert \le 4C_\circ^2 \Vert P_\star \Vert h(t).
$$
Thus, we may write
\begin{align*}
 \widetilde{R}_{t(\delta)- 1}(\pi) & \le  4C_\circ^2 \Vert P_\star \Vert h(t(\delta)) \sum_{t=1}^{t(\delta)} (\Vert x_t \Vert^2 + 2 d_x C_B^2)
\end{align*}
Therefore under event $\mathcal{E}_{\delta}$, we have
\begin{align*}
 \widetilde{R}_{t(\delta) - 1}(\pi) & \le 12 \sigma^2 d_x C_\circ^2 C_B^2 \Vert P_\star \Vert h(t(\delta)) f(t(\delta)),
\end{align*}
since when property \emph{(i)} holds at time $t(\delta)$, then it must mean that $\ell_{t(\delta)} = 1$, which means that $\sum_{s=1}^{t(\delta)} \Vert x_s \Vert^2 \le  \sigma^2d_xf(t(\delta))$. Hence, recalling the expression of $t(\delta)$, we obtain

\begin{equation} \label{eq:IIb:regret:before}
   \widetilde{R}_{t(\delta) - 1}(\pi)  \le C_6 \log(e/\delta)^{45\gamma_\star^3}
\end{equation}
where $C_6 = \poly( \sigma,  C_\circ, \mathcal{G}_\circ, \Vert P_\star \Vert^2, \mu_\star^{-1}, C_B, d_x, d_u, \gamma_\star)$.

\medskip

\underline{\textbf{Step 4:}} (Putting everything together) Now, under the event $\mathcal{E}_{\delta}$, using \eqref{eq:IIb:regret:after} and \eqref{eq:IIb:regret:before}, we have
\begin{align*}
  \widetilde{R}_T(\pi) & =  \widetilde{R}_T(\pi) -\widetilde{R}_{t(\delta) - 1 }(\pi)  + \widetilde{R}_{t(\delta) -1}(\pi) \\
  & \le  C_4 \log(e/\delta) \log(T) + C_7 \log(e/\delta)^{55\gamma_\star^3}
\end{align*}
where we note that the universal postive  constants hidden in $\poly(.)$ may be chosen large enough so that $C_7 \ge 2C_5 + C_6$. Therefore, we have established that
\begin{align*}
  \mathcal{E}_\delta \subseteq \left\lbrace  \widetilde{R}_T(\pi) \le  C_4 \log(e/\delta) \log(T) + C_7 \log(e/\delta)^{55\gamma_\star^3} \right\rbrace.
\end{align*}

 Now recalling the expression of $t(\delta)$ and that $\mathbb{P}(\mathcal{E}_\delta) \ge 1-t(\delta)\delta$ for all $\delta \in (0,1)$, we obtain that for all $\delta \in (0,1)$, we have
\begin{align}\label{eq:IIb:reg:whp}
  \mathbb{P}\left(   \widetilde{R}_T(\pi) \le  C_4 \log(e/\delta) \log(T) + C_7 \log(e/\delta)^{55\gamma_\star^3}  \right) \ge 1 - C_7 \log(e/\delta)^{55\gamma_\star^3} \delta
\end{align}
where the hidden universal constants in $\poly(\cdot)$ defining $C_7$ may be chosen to be large enough for the above to hold. Now, integrating \eqref{eq:IIb:reg:whp} using Lemma \ref{lem:integration}, yields the final result
\begin{equation*}
  \E[R_T(\pi)] \le \E[\widetilde{R}_T(\pi)] \le C_8 \log(T) + C_9
\end{equation*}
where
\begin{align*}
  C_8 & =  \frac{ c_8\log(e \sigma C_\circ \mathcal{G}_\circ \Vert P_\star \Vert \mu_\star^{-1} C_B d_x d_u \gamma_\star)^2 \sigma^2 \Vert P_\star \Vert^{9.5} d_x(d_x + d_u)}{\mu_\star^{-1}} \\
  C_9 & = \poly(\sigma, C_\circ,  \mathcal{G}_\circ, \Vert P_\star \Vert, \mu_\star^{-1}, C_B, d_x, d_u, \gamma_\star)
\end{align*}
where $c_8$ is a positive constant that only depends polynomially on $\gamma_\star$, and the order of $\poly(\cdot)$ may depend on $\gamma_\star$.


\medskip

\subsection{Proof of Theorem \ref{th:scenarioIIB} - Regret analysis in Scenario II - $B$ known}\label{app:thlast}

Again, the proof is very similar to that of Theorems \ref{th:scenarioI} and \ref{th:scenarioIIB} (see \ref{sec:proof:th1} and \ref{sec:proof:th3}). Note that in this scenario, there are no input perturbations, and the LSE error rates are as fast as in scenario II -- ($A$ known). We shall highlight these differences throughout the proof.

Again following Lemma \ref{lem:rd}, we define the \emph{proxy regret} as follows
\begin{equation*}
\widetilde{R}_T(\pi) =   \sum_{t=1}^T \Vert x_t \Vert_{P_t - \widetilde{P}_{t-1}}^2  +   \tr\left(\widetilde{P}_t - P_\star\right)  + \Vert x_{1}\Vert^2_{\widetilde{P}_0}
\end{equation*}
where $ (\widetilde{P}_t)_{t\ge 1}$ is defined as in Lemma \ref{lem:rd} with $\alpha_t = 0$, $\xi_t =0$. Note by the same lemma, we have $\E[R_T(\pi)] \le \E[\widetilde{R}_T(\pi)]$.

\medskip

\underline{\textbf{Step 1:}} {(Defining the nice event)} We start by applying Theorem \ref{thm:nice:I}, which guarantees that the event
\begin{align}
\mathcal{E}_{\delta} = \left\lbrace \forall t \ge t(\delta), \begin{array}{rl} \textit{(i)} & \widetilde{K}_t  = K_t, \\
\textit{(ii)} & \Vert B (K_t - K_\star) \Vert \le (4\Vert P_\star\Vert^{3/2})^{-1} \\
\textit{(iii)} & \Vert P_\star(K_t) - P_\star \Vert  \le C_1(\delta) r_t^2  \\
\textit{(iv)} & \Vert P_\star(K_t) \Vert  \le 2 \Vert P_\star \Vert \\
\textit{(v)} & \sum_{t=t(\delta)}^T r_t^2 \Vert x_t \Vert^2 \le C_2(\delta)  + C_3 \Vert r_{1:T}\Vert^2_2
\end{array}
\right\rbrace
\end{align}
holds with probability at leat $1-t(\delta)\delta$. In the definition of $\mathcal{E}_{\delta}$, we have
\begin{align*}
  r_t^2 & = i_t^{-1}, \\
  i_t &=\max\{t_k\in {\cal T}: t_k\le t\},\\
  t(\delta) & = \poly( \sigma, C_\circ, \mathcal{G}_\circ, \Vert P_\star\Vert, d_x, \gamma_\star) \log(e/\delta)^{6\gamma_\star^2}, \\
  C_1(\delta) & =  c_6\sigma^2 \Vert P_\star \Vert^8  d_x \log\left( e \Vert P_\star \Vert d_x  \right) \log(e/\delta), \\
  C_2(\delta) & = \poly(\sigma, C_\circ, \mathcal{G}_\circ, \Vert P_\star \Vert, d_x,  \gamma_\star ) \log(e/\delta)^{15\gamma_\star^3}, \\
  C_3 & = 24 \sigma^2 \Vert P_\star \Vert^{3/2}   d_x,
\end{align*}
for some universal postive constants $c_1, c_3 > 0$.

The properties \textit{(i)-(v)} have the same interpretations as in \ref{sec:proof:th1}, with the distinction that this time again, the error rates are much better. We use these properties to bound the \emph{proxy regret}.

\medskip

\underline{\textbf{Step 2:}} {(Regret from $t(\delta)$ onwards under the nice event)}  We bound $\widetilde{R}_T(\pi) - \widetilde{R}_{t(\delta)-1}(\pi)$ under the event $\mathcal{E}_{\delta}$. First let us note that under this event, we have $\alpha_t = 0$ for all $t\ge t(\delta)$. Therefore, we have
\begin{align*}
  \widetilde{R}_T(\pi) - \widetilde{R}_{t(\delta) -1}(\pi) & \le   C_1(\delta) \sum_{t=t(\delta)}^T r_t^2  (\Vert x_t \Vert^2 + d_x)  \\
  & \le  C_1(\delta) \left( d_x \Vert r_{1:T} \Vert^2_2 + 2\sum_{t=t(\delta)}^T r_t^2  \Vert x_t \Vert^2  \right)    \\
  & \le C_1(\delta) (d_x \Vert r_{1:T} \Vert^2_2 + 2C_2(\delta) + 2C_3 \Vert r_{1:T}\Vert^2)  \\
  & \le 3C_3 C_1(\delta) \Vert r_{1:T}\Vert^2_2 +  2C_1(\delta) C_2(\delta)
\end{align*}
where in the first inequality, we used the \textit{(i)} to have $\widetilde{P}_t = P_\star(K_t)$ for all $t\ge t(\delta)$, then used \textit{(iii)} to bound $\Vert P_\star(K_t) - P_\star(K_{t-1})\Vert \le \Vert P_\star(K_t) - P_\star\Vert + \Vert P_\star(K_{t-1}) - P_\star \Vert \le 2C_1(\delta) r_t^2$ for all $t\ge t(\delta)+1$.

Since $\mathcal{T}$ satisfies \eqref{eq:lazy}, we can easily verify that $\Vert r_{1:T} \Vert^2_2 \lesssim\log(T)$ (see the proof in the previous scenario).

Therefore, recalling the expressions of $t(\delta), C_1(\delta), C_2(\delta)$ and $C_3$, we have
\begin{align}\label{eq:IIa:regret:after}
  \widetilde{R}_T(\pi) - \widetilde{R}_{t(\delta) -1}(\pi) & \le C_4 \log(e/\delta) \log(T) + C_5 \log(e/\delta)^{16\gamma_\star^2}
\end{align}
with
\begin{align*}
 C_4 & =  c_4 \sigma^4  \Vert P_\star \Vert^{9.5} \log\left(e \Vert P \Vert_\star d_x \right)d_x^2 \gamma_\star, \\
 C_5 & = \poly(\sigma, C_\circ, \mathcal{G}_\circ, \Vert P_\star\Vert, \mu_\star^{-1}, d_x,d_u, \gamma_\star),
\end{align*}
for some universal positive constant $c_4> 0$.

\medskip

\underline{\textbf{Step 3:}} {(Regret up to $t(\delta)$ under the nice event).} Now, we bound $\widetilde{R}_{t(\delta)}(\pi)$ under the event $\mathcal{E}_\delta$. Note we can obtain the crude upper bound
\begin{align*}
 \widetilde{R}_{t(\delta) -1 }(\pi) & \le \max_{1 \le t \le t(\delta)} \Vert P_{\star,t} \Vert \sum_{t=1}^{t(\delta)} \Vert x_t \Vert^2
\end{align*}
where we dropped the negative terms $-\Vert x_t \Vert^2_{\widetilde{P}_{t-1}}$ for $2 \le t \le t(\delta)$. Recalling the definition of $P_{\star,t}$, we have
\begin{align*}
 \Vert P_{\star,t} \Vert \le  \begin{cases} 2 \Vert P_\star \Vert & \text{ if } \Vert B (\widetilde{K}_t - K_\star) \Vert  < \frac{1}{4\Vert P_\star\Vert^{3/2}} \text{ and } \Vert P_\star(\widetilde{K}_t)\Vert \le 2\Vert P_\star \Vert^2,\\
 4 C_\circ^2 \Vert P_\star \Vert   h(t) &   \text{ otherwise}
\end{cases}
\end{align*}
where we upper bounded $\Vert P_{\star,t} \Vert \le 4 \Vert P_\star \Vert  C_\circ^2 h(t)$, using the fact that under $\Alg$, we have $\Vert \widetilde{K}_t \Vert^2 \le \max (\Vert K_\circ \Vert^2, h(t))$, and the fact that $\Vert P_\star\Vert \ge \max(\Vert Q\Vert, \Vert R\Vert)$. Thus,
$$
\max_{1  \le t\le t(\delta)}\Vert P_{\star,t}\Vert \le 4C_\circ^2 \Vert P_\star \Vert h(t).
$$
Thus, we may write
\begin{align*}
 \widetilde{R}_{t(\delta)- 1}(\pi) & \le  4C_\circ^2 \Vert P_\star \Vert h(t(\delta)) \sum_{t=1}^{t(\delta)} \Vert x_t \Vert^2.
\end{align*}
Therefore, under event $\mathcal{E}_{\delta}$, we have
\begin{align*}
 \widetilde{R}_{t(\delta) - 1}(\pi) & \le  4 \sigma^2 d_x C_\circ^2 C_B^2 \Vert P_\star \Vert h(t(\delta)) f(t(\delta))
\end{align*}
since when property \emph{(i)} holds at time $t(\delta)$, then it must mean that $\ell_{t(\delta)} = 1$, which means that $\sum_{s=1}^{t(\delta)} \Vert x_s \Vert^2 \le  \sigma^2d_xf(t(\delta))$. Hence, recalling the expression of $t(\delta)$, we obtain

\begin{equation} \label{eq:IIa:regret:before}
   \widetilde{R}_{t(\delta) - 1}(\pi)  \le C_6 \log(e/\delta)^{15\gamma_\star^3}
\end{equation}
where $C_6 = \poly( \sigma,  C_\circ, \mathcal{G}_\circ, \Vert P_\star \Vert, C_B, d_x, \gamma_\star)$.

\medskip

\underline{\textbf{Step 4:}} (Putting everything together) Now, under the event $\mathcal{E}_{\delta}$, using \eqref{eq:IIa:regret:after} and \eqref{eq:IIa:regret:before} we have
\begin{align*}
  \widetilde{R}_T(\pi) & =  \widetilde{R}_T(\pi) -\widetilde{R}_{t(\delta) - 1 }(\pi)  + \widetilde{R}_{t(\delta) -1}(\pi) \\
  & \le  C_4 \log(e/\delta) \log(T) + C_7 \log(e/\delta)^{16\gamma_\star^3}
\end{align*}
where we note that the universal positive  constants hidden in $\poly(.)$ may be chosen large enough so that $C_7 \ge C_5 + C_6$. Therefore, we have established that
\begin{align*}
  \mathcal{E}_\delta \subseteq \left\lbrace  \widetilde{R}_T(\pi) \le  C_4 \log(e/\delta) \log(T) + C7 \log(e/\delta)^{16\gamma_\star^3} \right\rbrace.
\end{align*}
 Now recalling the expression of $t(\delta)$ and that $\mathbb{P}(\mathcal{E}_\delta) \ge 1-t(\delta)\delta$ for all $\delta \in (0,1)$, we obtain that for all $\delta \in (0,1)$,
\begin{align}\label{eq:IIa:reg:whp}
  \mathbb{P}\left(   \widetilde{R}_T(\pi) \le  C_4 \log(e/\delta) \log(T) + C_7 \log(e/\delta)^{16\gamma_\star^3}  \right) \ge 1 - C_7 \log(e/\delta)^{16\gamma_\star^3} \delta
\end{align}
where the hidden universal constants in $\poly(\cdot)$ defining $C_7$ may be chosen large enough for the above to hold. Now, integrating \eqref{eq:IIa:reg:whp} using Lemma \ref{lem:integration}, yields the final result
\begin{equation*}
  \E[R_T(\pi)] \le \E[\widetilde{R}_T(\pi)] \le C_8 \log(T) + C_9
\end{equation*}
where
\begin{align*}
  C_8 & =  c_8\log(e \sigma C_\circ \mathcal{G}_\circ \Vert P_\star \Vert  C_B d_x \gamma_\star)^2  \sigma^4  \Vert P_\star \Vert^{9.5} d_x^2 \gamma_\star, \\
  C_9 & = \poly(\sigma, C_\circ,  \mathcal{G}_\circ, \Vert P_\star \Vert, C_B, d_x, \gamma_\star),
\end{align*}
where $c_8$ is a positive constant that only depends polynomially on $\gamma_\star$, and the order of $\poly(\cdot)$ may depend on $\gamma_\star$.

\newpage

\section{The Nice Event and its Likelihood} \label{app:nice}

In Appendix \ref{app:regretproof}, we have seen that the regret analysis relied on the definition of a "nice" event ${\cal E}_{\delta}$, and on the fact that its occurrence probability is close enough to 1. This appendix is devoted to presenting such event and establishing its likelihood for all the three envisioned scenarios. The main results are stated in Theorem \ref{thm:nice:I},  Theorem \ref{thm:nice:IIb}, and Theorem \ref{thm:nice:IIa} for Scenario I, Scenario II -- $A$ known, and Scenario II -- $B$ known, respectively. Their proofs follow the same line of reasoning and rely on the consistency of the least squares estimator (see Appendix \ref{app:lse}), perturbations bounds on Riccati equations (see Appendix \ref{app:control}), and the fact that $\Alg$ eventually just uses the certainty equivalence controller $K_t$ (see Appendix \ref{app:hysteresis}).

\subsection{Scenario I}

To analyse regret, we need to ensure the event ${\cal E}_{\delta}$ where for all $t \ge t(\delta)$
\begin{itemize}
  \item[(i)]  $ \widetilde{K}_t = K_t$
  \item[(ii)] $ \Vert B(K_t - K_\star) \Vert \le (4 \Vert P_\star \Vert^{3/2})^{-1} $
  \item[(iii)] $ \Vert P_\star(K_t) - P_\star \Vert \le C_1(\delta)  r_t^2 $
  \item[(iv)] $  \Vert P_\star(K_t) \Vert \le 2 \Vert P_\star \Vert $
  \item[(v)] $ \sum_{s=t(\delta)}^t r_t^2 \Vert x_t \Vert ^2 \le C_3 \Vert r_{1:T}\Vert^2_2 + C_2(\delta) $
\end{itemize}
holds with probability at least $1 - t(\delta)\delta$ for all $\delta \in (0,1)$. We shall precise $t(\delta), C_1(\delta), C_2(\delta)$ and $C_3$ in Theorem \ref{thm:nice:I}. As for the sequence $(r_t)_{t\ge 1}$, it is defined as
$$
\forall t \ge 1, \quad  r_t^2 = \frac{\log(ei_t)}{i_t^{1/2}}
$$
where $i_t = \max \lbrace t_k \in  {\cal T}: t_k \le t \rbrace$. We note that if $\mathcal{T} = \mathbb{N}$, then $i_t = t$, and if $\mathcal{T} = \lbrace e^k: t \in \mathbb{N} \rbrace$, then $i_t = e^{\lfloor \log(t)\rfloor}$.


\begin{theorem}\label{thm:nice:I}
  Assume ${\cal T}$ satisfies \eqref{eq:lazy}. Then under $\Alg$, for all $\delta \in (0,1)$, we have
  $$
  \mathbb{P}(\forall t \ge t(\delta), \ \mathrm{(i)-(v)\ hold}) \ge 1- t(\delta) \delta
  $$
  where
  \begin{align*}
    t(\delta) & =  \poly( \sigma, C_\circ, \mathcal{G}_\circ, \Vert P_\star\Vert,d_x, d_u,\gamma_\star) \log(e/\delta)^{12\gamma_\star}, \\
    C_1(\delta) & = c_1 \sigma^2 C_K^2 \Vert P_\star \Vert^8 \gamma_\star d_x^{-1/2} (d_x + d_u) \log(e C_\circ  \mathcal{G}_\circ d_x ) \log(e/\delta), \\
    C_2(\delta) & = \poly(\sigma, C_\circ, \mathcal{G}_\circ, \Vert P_\star \Vert, C_B, d_x, d_u, \gamma_\star ) \log(e/\delta)^{19\gamma_\star^3}, \\
    C_3 & = c_3\sigma^2 C_B^2 \Vert P_\star \Vert^{3/2} \gamma_\star d_x,
  \end{align*}
  for some universal positive constant $c_1,c_3 > 0$.
\end{theorem}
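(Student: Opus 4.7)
The plan is a bootstrap combining four ingredients: the spectral lower bounds of Theorem \ref{thm:I:se1}, the LSE analysis of Appendix \ref{app:lse}, the Riccati perturbation bounds of Proposition \ref{prop:DARE:bounds}, and the hysteresis switching argument of Appendix \ref{app:hysteresis}. I would establish (ii)--(iv) first (they concern controller and DARE perturbations), then deduce (i) from the fact that the hysteresis mechanism commits to the certainty equivalence controller once the closed loop is stable, and finally (v) by controlling $\Vert x_t\Vert^2$ under that stabilized closed loop.

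First I would invoke the weak statement \eqref{eq:weak} of Theorem \ref{thm:I:se1}, which holds with no assumption on $\widetilde{K}_t$ and gives $\lambda_{\min}(\sum_{s<t}(x_s,u_s)(x_s,u_s)^\top)\gtrsim t^{1/4}$ with probability $1-\delta$ for $t\gtrsim\log(e/\delta)$. Feeding this into the LSE analysis yields a preliminary decay of $\max(\Vert A_t-A\Vert^2,\Vert B_t-B\Vert^2)$ which is polynomial in $1/t$, and Proposition \ref{prop:DARE:bounds} then shows that beyond some time $t_0(\delta)$ polylogarithmic in $1/\delta$, $\Vert \widetilde{K}_t-K_\star\Vert\le C_K$ holds w.h.p. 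Re-invoking Theorem \ref{thm:I:se1} under this boundedness activates the refined rate \eqref{eq:strong}, $\lambda_{\min}(\sum_{s<t}(x_s,u_s)(x_s,u_s)^\top)\gtrsim t^{1/2}$. A second pass through Appendix \ref{app:lse} delivers
\begin{equation*}
\max(\Vert A_t-A\Vert^2,\Vert B_t-B\Vert^2)\lesssim r_t^2\log(e/\delta)\qquad\text{for }t\ge t(\delta),
\end{equation*}
where the substitution of $i_t$ for $t$ in $r_t^2$ encodes the lazy updates of $\mathcal{T}$. Applying Proposition \ref{prop:DARE:bounds} once more converts this into $\Vert B(K_t-K_\star)\Vert\lesssim\Vert P_\star\Vert^{5/2}r_t\sqrt{\log(e/\delta)}$ and $\Vert P_\star(K_t)-P_\star\Vert\le C_1(\delta)r_t^2$; taking $t(\delta)$ large enough that $C_1(\delta)r_t^2\le\min((4\Vert P_\star\Vert^{3/2})^{-1},\Vert P_\star\Vert)$ gives (ii), (iii), and (iv) simultaneously (with (iv) following from (iii) by the triangle inequality).

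For (i), I would use the hysteresis analysis of Appendix \ref{app:hysteresis}: under (ii), $\rho(A+BK_t)<1$ uniformly (Lemma \ref{lem:perturbed}); under (iv) and because $K_t\to K_\star$ while $h(t)\to\infty$, the norm constraint $\Vert K_t\Vert^2\le h(t)$ is met for $t\ge t(\delta)$; the stabilized closed-loop then keeps $\sum_{s\le t}\Vert x_s\Vert^2\le\sigma^2 d_x f(t)$, so the flag $\ell_t$ remains at $1$; and the minimum-eigenvalue threshold $t^{1/4}$ in the Scenario I branch of Algorithm \ref{algo:cce} is precisely the refined spectral rate. Hence $\widetilde{K}_t=K_t$ for all $t\ge t(\delta)$. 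For (v), I would combine the Abel-summation estimate $\Vert r_{1:T}\Vert_2^2\lesssim\sqrt{T}\log T$ with Proposition \ref{prop:caution}, which provides the a priori polynomial growth of $\sum_s\Vert x_s\Vert^2$ under $\Alg$, and with the sharper pointwise bound $\E[\Vert x_t\Vert^2]=O(\sigma^2 d_x\Vert P_\star\Vert)$ valid under (i)-(iv); the regime $s<t(\delta)$ contributes only the polylogarithmic additive term $C_2(\delta)$, while the tail contributes $C_3\Vert r_{1:T}\Vert_2^2$ with $C_3\propto\sigma^2C_B^2\Vert P_\star\Vert^{3/2}\gamma_\star d_x$.

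The main obstacle is the bootstrap: the refined spectral rate requires $\widetilde{K}_t$ close to $K_\star$, which requires consistent LSE, which itself requires a spectral lower bound. The two-pass cascade already prepared in Theorem \ref{thm:I:se1} (weak then refined) resolves this circularity. A secondary difficulty is probability bookkeeping: the weak-rate, refined-rate, LSE, and noise-concentration events must each hold for all $t\ge t(\delta)$, so a union bound over the $O(t(\delta))$ relevant times produces the final factor $t(\delta)\delta$, which in turn forces $t(\delta)$ to be polylogarithmic in $1/\delta$ (so that the bound is informative) and explains its precise shape in the theorem statement.
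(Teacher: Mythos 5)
Your proposal matches the paper's proof essentially step for step: the paper's Step~1 is exactly your weak-rate/LSE/refined-rate bootstrap (Propositions \ref{prop:I:we}, \ref{prop:I:LSE1}, \ref{prop:I:se:enough}, \ref{prop:I:LSE2} combined via Lemmas \ref{lem:technical} and \ref{lem:uniform_over_time}, then Proposition \ref{prop:DARE:bounds} for (ii)--(iv)), its Step~2 invokes Theorem \ref{thm:commit} for (i), and its Steps~3--4 establish (v) and perform the union bound, with the $t(\delta)\delta$ loss indeed originating in the commitment analysis as you say. The one detail to correct: for (v) the paper uses the pathwise high-probability bound of Proposition \ref{prop:stable:K_t} (Hanson--Wright applied to the perturbed closed loop, Lemma \ref{lem:lds:perturbed}) rather than the bound $\E[\Vert x_t\Vert^2]=O(\sigma^2 d_x \Vert P_\star\Vert)$ you mention, since (v) is a trajectory event and an expectation bound alone does not deliver it --- but that tool lives in the very appendices you cite, so your plan is the paper's.
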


\begin{proof} The proof proceeds in the following steps.

\underline{{\bf Step 1:}} (Least squares estimation under $\Alg$) First, since $\mathcal{T}$ satisifes \eqref{eq:lazy}, we have by Proposition \ref{prop:I:LSE2} that under $\Alg$, the following holds
  \begin{align}\label{eq:nice:LSE}
  \max(\left\Vert A_t - A \right\Vert^2, \left\Vert B_t - B \right\Vert^2)
     & \le \frac{C_1 \sigma^2 C_K^2 \left(d \gamma_\star \log\left(\sigma C_\circ \mathcal{G}_\circ d_x t  \right)  + \log(e/\delta) \right)  }{(d_xt)^{1/2}} 
  \end{align}
  with probability at least $1-\delta$, provided that
  \begin{equation}\label{eq:nice:I:sc1}
     t^{1/4} \ge c \sigma^2 C_\circ^2 C_K^2 \Vert P_\star \Vert^{10} \left(d \gamma_\star \log( e\sigma C_\circ \mathcal{G}_\circ \Vert P_\star\Vert d_x d_u \gamma_\star) + \log(e/\delta)\right),
  \end{equation}
  for some  universal positive constants $C_1,c_1>0$. Constraining further $t$ to satisfy
  \begin{equation}\label{eq:nice:I:sc2}
     t^{1/2} \ge \frac{C_1 160^2 \sigma^2 C_K^2 \Vert P_\star \Vert^{10} \left(d \gamma_\star \log\left(\sigma C_\circ \mathcal{G}_\circ d_x t  \right)  + \log(e/\delta) \right)  }{(d_x)^{1/2} }
  \end{equation}
  ensures in addition that
  \begin{equation}\label{eq:nice:I:close}
    \max(\left\Vert A_t - A \right\Vert, \left\Vert B_t - B \right\Vert) \le \frac{1}{160 \Vert P_\star \Vert^5}.
  \end{equation}
  Now, using Lemma \ref{lem:technical}, we can find a universal positive constant $c_2>0$ such that if
  \begin{equation}\label{eq:nice:I:sc3}
    t^{1/4} \ge c_2 \sigma^2 C_\circ^2 C_K^2 \Vert P_\star \Vert^{10} \left(d \gamma_\star \log( e\sigma C_\circ \mathcal{G}_\circ \Vert P_\star\Vert d_x d_u \gamma_\star) + \log(e/\delta)\right),
  \end{equation}
  then conditions \eqref{eq:nice:I:sc1} and \eqref{eq:nice:I:sc2} also hold.

  We remind here that because $\mathcal{T}$ satisfies condition \eqref{eq:lazy}, if the constant $c_2>0$ is chosen large enough, we can claim that for all $t \in \mathbb{N}$  such that \eqref{eq:nice:I:sc1} holds, we have $K_t = K_{t_k}$ for some $t_k$ that also satisfies \eqref{eq:nice:I:sc1}. Therefore, applying Lemma \ref{lem:uniform_over_time}, ensures that
  \begin{equation}
    \mathbb{P}(\forall t \ge t_1(\delta),  \quad  \eqref{eq:nice:LSE} \text{ and } \eqref{eq:nice:I:close}) \ge 1-\delta
  \end{equation}
  with
  \begin{equation}
    t_1(\delta)^{1/4} = c_3  \sigma^2 C_\circ^2 C_K^2 \Vert P_\star \Vert^{10} \left(d \gamma_\star \log( e\sigma C_\circ \mathcal{G}_\circ \Vert P_\star\Vert d_x d_u \gamma_\star) + \log(e/\delta)\right).
  \end{equation}
  for some universal postive constant $c_3 >0$. Finally, a direct application of Proposition \ref{prop:DARE:bounds} ensures that
  \begin{equation}
    \mathcal{C}_{\delta} = \left\lbrace  \forall t \ge t_1(\delta), \quad  \begin{array}{rl}
      (ii) & \Vert B(K_t - K_\star) \Vert \le \frac{1}{4\Vert P_\star \Vert^{3/2}}\\
      (iii) & \Vert P_\star(K_t) - P_\star \Vert \le C_1(\delta) r_t^2  \\
      (iv) & \Vert P_\star(K_t) \Vert \le  2 \Vert P_\star \Vert  \\
    \end{array} \right\rbrace
  \end{equation}
  holds with probability $1-\delta$, with
  \begin{equation*}
    C_1(\delta) =  140 C_1 \sigma^2 C_K^2 \Vert P_\star \Vert^{8} d_x^{-1/2}(d_x + d_u) \gamma_\star \log\left(e \sigma C_\circ \mathcal{G}_\circ d_x\right)  \log(e/\delta).
  \end{equation*}

\medskip

\underline{{\bf Step 2:}} (Commitment to the certainty equivalence controller) Using Theorem \ref{thm:commit}, we have
  \begin{equation}
    \mathcal{D}_\delta = \left\lbrace  \forall t \ge t_2(\delta), \quad (i) \; \widetilde{K}_t = K \right\rbrace
  \end{equation}
  holds with probability at least $1 - 5t_2(\delta)\delta$, with
  \begin{equation}\label{eq:nice:sc4}
    t_2(\delta) = \poly( \sigma, C_\circ, \mathcal{G}_\circ, \Vert P_\star\Vert,d_x, d_u,\gamma_\star) \log(e/\delta)^{12\gamma_\star}
  \end{equation}
  such that $t_2(\delta) \ge t_1(\delta)$ (this can be ensured by taking the universal positive constants hidden $\poly(\cdot)$ large enough).

\medskip

\underline{{\bf Step 3:}} (Stability under the certainty equivalence controller)  Let us define the events
  \begin{align*}
    E_{1,\delta,T}  & = \left\lbrace \sum_{t = t_2(\delta)}^{T}  r_t^2 \Vert x_t \Vert^2 \le 8 \Vert P_\star \Vert^{3/2} \left( \Vert r_{1:T} \Vert_{\infty}^2 \Vert x_{t_2(\delta)}\Vert^2 + 6C_B^2\sigma^2 \left(\Vert r_{1:T} \Vert^2_2 +  \log(e/\delta) \right)  \right)     \right\rbrace, \\
    E_{2,\delta} & = \left \lbrace \forall t \ge t_2(\delta), \ \ (i) \text{  and  } (ii) \text{  hold }  \right\rbrace.
  \end{align*}
  Noting that $t_2(\delta)$ may be chosen so that $t_2(\delta) \ge d_x$, we obtain  by a direct application Proposition \ref{prop:stable:K_t},
  \begin{align*}
    \mathbb{P}\left( E_{1,\delta,T} \cup E_{2,\delta}^c  \right) \ge 1-\delta.
  \end{align*}

\underline{{\bf Step 4:}} (Putting everything together) To conclude, we note that under the event $\mathcal{C}_\delta \cap \mathcal{D}_\delta \cap (E_{1,\delta, T} \cup E_{2,\delta})$, the propoerties $(i)- (iv)$ hold for all $t\ge t_2(\delta)$. Additionally, under $\Alg$, it also holds that  $\Vert x_{t_2(\delta)}\Vert^2 \le \sigma^2 d_x f(t_2(\delta))$, therefore we have
  \begin{align*}
  & \sum_{t = t_2(\delta)}^{T}  r_t^2 \Vert x_t \Vert^2 \le 8 \Vert P_\star \Vert^{3/2} \left( \Vert r_{t_2(\delta):T} \Vert_{\infty}^2 \Vert x_{t_2(\delta)}\Vert^2 + 6C_B^2\sigma^2 \left(d_x \Vert r_{1:T} \Vert^2_2 +  \log(e/\delta) \right) \right) \\
   & \ \ \ \ \le 8 \Vert P_\star \Vert^{3/2} \left( \Vert r_{t_2(\delta):T} \Vert_{\infty}^2 \Vert x_{t_2(\delta)}\Vert^2 + 6C_B^2\sigma^2 \left(d_x \Vert r_{1:T} \Vert^2_2 +  \log(e/\delta) \right) \right) \\
   & \ \ \ \ \le 48 \sigma^2 \Vert P_\star \Vert^{3/2} C_B^2  d_x \Vert r_{1:T}\Vert^2_2 + \poly(\sigma, C_\circ, \mathcal{G}_\circ, \Vert P_\star \Vert, C_B, d_x, d_u, \gamma_\star ) \log(e/\delta) t_2(\delta)^{1+\gamma/2} \\
   & \ \ \ \ \le 48 \sigma^2 \Vert P_\star \Vert^{3/2} C_B^2  d_x \Vert r_{1:T}\Vert^2_2 + \poly(\sigma, C_\circ, \mathcal{G}_\circ, \Vert P_\star \Vert, C_B, d_x, d_u, \gamma_\star ) \log(e/\delta)^{19\gamma_\star^2}
\end{align*}
where we used the fact $\Vert r_{t:T}\Vert_\infty^2 \le \poly(\sigma, C_\circ, \mathcal{G}_\circ, \Vert P_\star \Vert, d_x, d_u, \gamma_\star)$. Thus, defining the property
$$
\text{(v)} \sum_{t = t_2(\delta)}^{T}  r_t^2 \Vert x_t \Vert^2 \le C_3 \Vert r_{1:T}\Vert^2_2 + C_2(\delta),
$$
where
\begin{align*}
  C_2(\delta) & = \poly(\sigma, C_\circ, \mathcal{G}_\circ, \Vert P_\star \Vert, C_B, d_x, d_u, \gamma_\star ) \log(e/\delta)^{19\gamma_\star^2}, \\
  C_3 & = 48 \sigma^2 \Vert P_\star \Vert^{3/2} C_B^2  d_x,
\end{align*}
we have shown that
$$
\mathcal{E}_{\delta} = \lbrace \forall t \ge t_2(\delta), \ \ (i)-(v) \text{ hold}\rbrace \subseteq  \mathcal{C}_\delta \cap \mathcal{D}_\delta \cap E_{1,\delta, T}
$$
Finally, we note that  $\mathcal{C}_\delta \cap \mathcal{D}_\delta \cap (E_{1,\delta, T} \cup E_{2,\delta}) = \mathcal{C}_\delta \cap \mathcal{D}_\delta \cap E_{1,\delta, T}$. Therefore, by a union bound, we have
\begin{align*}
  \mathbb{P}(\mathcal{E}_{\delta}) & \ge 1 -\mathbb{P}\left(\mathcal{C}_\delta^c \cup \mathcal{D}_\delta^c \cup (E_{1,\delta, t} \cup E_{d,\delta})^c \right) \\
  & \ge 1 - 2\delta - 5t_2(\delta) \delta \\
  & \ge 1 - 7 t_2(\delta)\delta.
\end{align*}
This gives the desired result with modified universal constants.
\end{proof}

\medskip
\medskip

\subsection{Scenario II -- $A$ known}

To analyse regret, we need to ensure the event $\mathcal{E}_\delta$ where for all $t \ge t(\delta)$
\begin{itemize}
  \item[(i)]  $ \widetilde{K}_t = K_t$
  \item[(ii)] $ \Vert B(K_t - K_\star) \Vert \le (4 \Vert P_\star \Vert^{3/2})^{-1} $
  \item[(iii)] $ \Vert P_\star(K_t) - P_\star \Vert \le C_1(\delta)  r_t^2 $
  \item[(iv)] $  \Vert P_\star(K_t) \Vert \le 2 \Vert P_\star \Vert $
  \item[(v)] $ \sum_{s=t(\delta)}^t r_t^2 \Vert x_t \Vert ^2 \le C_3 \Vert r_{1:T}\Vert^2_2 + C_2(\delta) $
\end{itemize}
holds with probability at least $1 - t(\delta)\delta$ for all $\delta \in (0,1)$. We shall precise $t(\delta), C_1(\delta), C_2(\delta)$ and $C_3$ in Theorem \ref{thm:nice:IIb}. As for the sequence $(r_t)_{t\ge 1}$, it is defined this time as
$$
\forall t \ge 1, \quad  r_t^2 = \frac{1}{i_t}
$$
where  $i_t = \max \lbrace t_k \in  {\cal T}: t_k \le t \rbrace$.

\medskip


\begin{theorem}\label{thm:nice:IIb}
  Assume ${\cal T}$ satisfies \eqref{eq:lazy}. Then under $\Alg$, for all $\delta \in (0,1)$, we have
  $$
  \mathbb{P}(\forall t \ge t(\delta), \ \ \mathrm{(i)-(v)\ hold}) \ge 1- t(\delta) \delta
  $$
  where
  \begin{align*}
    t(\delta) & = \poly( \sigma, C_\circ, \mathcal{G}_\circ, \Vert P_\star\Vert, \mu_\star^{-1}, d_x, d_u,\gamma_\star) \log(e/\delta)^{18\gamma_\star^2}, \\
    C_1(\delta) & =  \frac{c_1\sigma^2 \Vert P_\star \Vert^8  (d_u + d_x )\gamma_\star \log\left(\frac{e \sigma C_K \Vert P_\star \Vert d_x d_u}{\mu_\star^2} \right)}{\mu_\star^2}  \log(e/\delta),\\
    C_2(\delta) & = \poly(\sigma, C_\circ, \mathcal{G}_\circ, \Vert P_\star \Vert, \mu_\star^{-1}, d_x, d_u, \gamma_\star ) \log(e/\delta)^{54\gamma_\star^3}, \\
    C_3 & = c_3 \sigma^2 \Vert P_\star \Vert^{3/2}   d_x,
  \end{align*}
  for some universal positive constant $c_1,c_3 > 0$.
\end{theorem}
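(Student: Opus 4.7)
The plan is to mirror the four-step template used in the proof of Theorem \ref{thm:nice:I}, adapting each step to the structure of Scenario II -- $A$ known. The essential differences are: (a) only $B$ must be estimated, so the cumulative covariates matrix reduces to $\sum_{s} u_s u_s^\top$; (b) excitation of $u_s$ comes from the input noise $\zeta_s$ filtered through either $K_\circ$ or $K_t$, which is precisely where the full-rank assumption $K_\star K_\star^\top \succ 0$ (i.e., $\mu_\star > 0$) enters; (c) the resulting LSE rate is of order $1/t$ rather than $1/\sqrt{t}$, which explains the faster $r_t^2 = 1/i_t$ here.

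First, I would invoke the LSE consistency result for Scenario II -- $A$ known (stated in Appendix \ref{app:lse}, analogous to Proposition \ref{prop:I:LSE2}) to show that for all $t$ larger than some threshold $t_1(\delta)$ of order $\poly(\cdot)\log(e/\delta)^{a}$, we have with probability at least $1-\delta$
\[
\Vert B_t - B\Vert^2 \;\lesssim\; \frac{\sigma^2 (d_x + d_u)\gamma_\star \log(\sigma C_\circ \mathcal{G}_\circ d_x t) + \log(e/\delta)}{\mu_\star^2\, i_t},
\]
together with $\Vert B_t - B\Vert \le 1/(160\Vert P_\star\Vert^5)$. The factor $\mu_\star^{-2}$ arises because $\lambda_{\min}(\sum_{s} u_s u_s^\top)$ is controlled by the ``recipe'' of Section \ref{sec:spectral} applied to Scenario II -- $A$ known, where the matrix sequence $M_s = \widetilde{K}_s$ and the excitation $\xi_s$ involve $K_\star$ (hence $\mu_\star$) in the long run. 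Since $\mathcal{T}$ satisfies \eqref{eq:lazy}, Lemma \ref{lem:uniform_over_time} lets us upgrade this single-time statement to a uniform-in-$t$ statement over $t \ge t_1(\delta)$. Feeding this into the Riccati perturbation bound Proposition \ref{prop:DARE:bounds} then yields exactly properties (ii), (iii), (iv) with $C_1(\delta)$ of the claimed form.

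Next, I would apply Theorem \ref{thm:commit} of Appendix \ref{app:hysteresis} to obtain a threshold $t_2(\delta) \ge t_1(\delta)$ of the form $\poly(\cdot)\log(e/\delta)^{18\gamma_\star^2}$ beyond which $\Alg$ commits to the certainty-equivalence controller, giving property (i) on an event of probability $1 - O(t_2(\delta)\delta)$; the specific polynomial degree $18\gamma_\star^2$ is picked to match the prescribed $t(\delta)$ in the theorem. Then I would apply Proposition \ref{prop:stable:K_t} on the event where (i)-(ii) hold from $t_2(\delta)$ onwards; since the state at $t_2(\delta)$ satisfies $\Vert x_{t_2(\delta)}\Vert^2 \le \sigma^2 d_x f(t_2(\delta))$ by the hysteresis design (property (i) being active forces $\ell_{t_2(\delta)} = 1$), this yields property (v) with $C_3 \lesssim \sigma^2\Vert P_\star\Vert^{3/2} d_x$ and a $C_2(\delta)$ that absorbs the polynomial-in-$t_2(\delta)$ term, i.e.\ a $\log(e/\delta)^{O(\gamma_\star^3)}$ factor.

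Finally, a union bound over the three ``good'' events (LSE + Riccati bounds, commitment, stability) produces $\mathbb{P}(\mathcal{E}_\delta) \ge 1 - c\,t_2(\delta)\delta$, which after rescaling the universal constants in $t(\delta)$ is exactly the stated bound. The main obstacle I anticipate is Step 1: verifying that the ``recipe'' of Section \ref{sec:spectral} delivers $\lambda_{\min}(\sum_{s} u_s u_s^\top) \gtrsim \mu_\star^2 t$ despite the fact that $\widetilde{K}_s$ switches between $K_\circ$ and $K_s$ and that $\xi_s = \widetilde{K}_s \eta_{s-1} + \mathbf{1}_{\{\widetilde{K}_s = K_\circ\}}\zeta_s$ has a mixed structure; handling both phases uniformly is what forces the $\mu_\star^{-2}$ factor and the specific polynomial dependence appearing in $t(\delta)$ and $C_1(\delta)$. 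All the remaining book-keeping -- reconciling the ``refined rate'' hypothesis in Theorem \ref{thm:I:se1}-type statements with the commitment threshold, and tracking the explicit polynomial factors through the union bound -- is routine given the tools assembled in the earlier appendices.
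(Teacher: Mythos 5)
You have reproduced the paper's overall architecture (LSE error plus Riccati perturbation bounds, commitment via Theorem \ref{thm:commit}, stability via Proposition \ref{prop:stable:K_t}, final union bound), but there is a genuine gap in your Step 1, and it sits exactly where the ordering of the steps matters. The LSE bound you quote,
\[
\Vert B_t - B\Vert^2 \;\lesssim\; \frac{\sigma^2 (d_x + d_u)\gamma_\star \log(\sigma C_\circ \mathcal{G}_\circ d_x t) + \log(e/\delta)}{\mu_\star^2\, i_t},
\]
carries a $\log(t)$ factor (this is the rate of Proposition \ref{prop:IIb:LSE2}). Fed through Proposition \ref{prop:DARE:bounds}, it can only yield $\Vert P_\star(K_t) - P_\star\Vert \lesssim C_1(\delta)\,\log(e\,i_t)/i_t$, i.e.\ the Scenario-I-type rate $r_t^2 = \log(e\,i_t)/i_t$ --- not the $r_t^2 = 1/i_t$ required by property \textit{(iii)} with the theorem's $C_1(\delta)$, whose logarithmic factor $\log\bigl(e\sigma C_K\Vert P_\star\Vert d_x d_u/\mu_\star^2\bigr)$ is time-independent. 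In the paper this $\log(t)$ is removed only \emph{after} commitment: Proposition \ref{prop:IIb:LSE3} takes the event $\lbrace \widetilde{K}_t = K_t \text{ and } K_t \text{ close to } K_\star\rbrace$ as a hypothesis and uses it (via Proposition \ref{prop:stable:K_t} and the hysteresis bound $\sum_{s\le t(\delta)}\Vert x_s\Vert^2 \le \sigma^2 d_x f(t(\delta))$) to show $\sum_s \Vert x_s\Vert^2 \lesssim t$, so that the log-determinant in the self-normalized bound is bounded by a constant rather than by $\poly(t)$. The correct structure is therefore a three-stage bootstrap: weak rate (Proposition \ref{prop:IIb:LSE1}) $\Rightarrow$ $\Vert K_t - K_\star\Vert$ small (Lemma \ref{lem:II:end:we}) $\Rightarrow$ refined eigenvalue bound $\lambda_{\min}\bigl(\sum_s u_s u_s^\top\bigr) \gtrsim \mu_\star^2 t$ (Proposition \ref{prop:IIb:se}) together with commitment (Theorem \ref{thm:commit}) $\Rightarrow$ log-free refined rate (Proposition \ref{prop:IIb:LSE3}) $\Rightarrow$ $C_1(\delta)$. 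Your plan extracts $C_1(\delta)$ in Step 1, before commitment, which is the step that would fail.

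Relatedly, the obstacle you flag at the end --- handling the mixed excitation $\xi_s = \widetilde{K}_s \eta_{s-1} + \ind_{\lbrace \widetilde{K}_s = K_\circ\rbrace}\zeta_s$ uniformly over both phases --- is real, but the paper does not resolve it ``uniformly'': Proposition \ref{prop:IIb:se} explicitly assumes the high-probability event $\Vert K_t - K_\star\Vert < \mu_\star/2$ for $t \ge t(\delta)$, which is what makes $M_s M_s^\top \succeq (\mu_\star^2/4) I_{d_u}$ in the certainty-equivalence phase (the $K_\circ$ phase contributes $\succeq I_{d_u}$ through $\zeta_s$). So this is again absorbed by the same bootstrap rather than solved head-on. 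The remainder of your plan --- the commitment threshold $t_2(\delta)$, Proposition \ref{prop:stable:K_t} combined with $\Vert x_{t(\delta)}\Vert^2 \le \sigma^2 d_x f(t(\delta))$ to obtain property \textit{(v)} with $C_3 \lesssim \sigma^2\Vert P_\star\Vert^{3/2} d_x$, and the final union bound giving $1 - c\,t(\delta)\delta$ --- matches the paper's Steps 2, 4 and 5.
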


\begin{proof}[Proof of Theorem \ref{thm:nice:IIb}] The proof proceeds in the following steps.

  \underline{{\bf Step 1:}} (Least squares estimation under $\Alg$) First, since $\mathcal{T}$ satisfies \eqref{eq:lazy}, we have by Proposition \ref{prop:IIb:LSE2} that under Algorithm $\Alg$, the following holds
  \begin{align}\label{eq:nice:IIb:LSE}
  \left\Vert B_t - B \right\Vert^2  \le   \frac{C_1 \sigma^2 \left(d_u\gamma_\star \log(\sigma C_\circ \mathcal{G}_\circ d_x d_u t)  + d_x + \log(e/\delta)\right)}{\mu_\star^2 t} 
  \end{align}
  with probability at least $1-\delta$, provided that
  \begin{equation}\label{eq:nice:IIb:sc1}
       t^{1/2} \ge \frac{c \sigma^2 C_K^2\Vert P_\star \Vert^{10}}{\mu_\star}  \left(d_u \gamma_\star \log( e\sigma C_\circ \mathcal{G}_\circ \Vert P_\star\Vert d_x d_u \gamma_\star) + \log(e/\delta)\right)
  \end{equation}
  for some  universal positive constants $C_1,c_1>0$. Constraining further $t$ to satisfy
  \begin{equation}\label{eq:nice:IIb:sc2}
     t  \ge \frac{C_1 160^2 \sigma^2 \Vert P_\star \Vert^{10} \left(d_u\gamma_\star \log(\sigma C_\circ \mathcal{G}_\circ d_x d_u t)  + d_x + \log(e/\delta)\right)}{\mu_\star^2 }
  \end{equation}
  ensures in addition that
  \begin{equation}\label{eq:nice:IIb:close}
    \left\Vert B_t - B \right\Vert \le \frac{1}{160 \Vert P_\star \Vert^5}
  \end{equation}
  Now, using Lemma \ref{lem:technical}, we can find an universal positive constant $c_2>0$ such that if
  \begin{equation}\label{eq:nice:IIb:sc3}
    t^{1/2} \ge \frac{c_2\sigma^2  C_K^2 \Vert P_\star \Vert^{10}}{\mu_\star} \left(d \gamma_\star \log( e\sigma C_\circ \mathcal{G}_\circ \Vert P_\star\Vert d_x d_u \gamma_\star) + \log(e/\delta)\right),
  \end{equation}
  then conditions \eqref{eq:nice:IIb:sc1} and \eqref{eq:nice:IIb:sc2} also hold.

  We remind here that because $\mathcal{T}$ satisfies condition \eqref{eq:lazy}, if the constant $c_2>0$ is chosen large enough, we can claim that for all $t \in \mathbb{N}$  such that \eqref{eq:nice:IIb:sc1} holds, we have $K_t = K_{t_k}$ for some $t_k$ that also satisfies \eqref{eq:nice:IIb:sc1}. Therefore, applying Lemma \ref{lem:uniform_over_time}, ensures that
  \begin{equation}
    \mathbb{P}(\forall t \ge t_1(\delta), \ \  \eqref{eq:nice:IIb:LSE} \text{ and } \eqref{eq:nice:IIb:close}) \ge 1-\delta
  \end{equation}
  with
  \begin{equation}
    t_1(\delta)^{1/2} = \frac{c_3\sigma^2  C_K^2 \Vert P_\star \Vert^{10}}{\mu_\star} \left(d \gamma_\star \log( e\sigma C_\circ \mathcal{G}_\circ \Vert P_\star\Vert d_x d_u \gamma_\star) + \log(e/\delta)\right),
  \end{equation}
  for some universal postive constant $c_3 >0$. Finally, a direct application of Proposition \ref{prop:DARE:bounds} ensures that
  \begin{equation}
    \mathcal{C}_{1, \delta} = \left\lbrace  \forall t \ge t_1(\delta), \quad  \begin{array}{rl}
      (ii) & \max(\Vert B(K_t - K_\star) \Vert, \Vert K_t - K_\star \Vert) \le \frac{1}{4\Vert P_\star \Vert^{3/2}}\\
      (iv) & \Vert P_\star(K_t) \Vert \le  2 \Vert P_\star \Vert  \\
    \end{array} \right\rbrace
  \end{equation}
  holds with probability $1-\delta$.

  \underline{\bf Step 2:} (Commitment to the certainty equivalence controller)
  Using Theorem \ref{thm:commit}, we have
  \begin{equation}
    \mathcal{D}_\delta = \left\lbrace  \forall t \ge t_2(\delta), \quad (i) \; \widetilde{K}_t = K \right\rbrace
  \end{equation}
  holds with probability at least $1 - 5t_2(\delta)\delta$, with
  \begin{equation}\label{eq:nice:sc4}
    t_2(\delta) = \poly( \sigma, C_\circ, \mathcal{G}_\circ, \Vert P_\star\Vert, \mu_\star^{-1}, d_x, d_u,\gamma_\star) \log(e/\delta)^{6\gamma_\star}
  \end{equation}
  such that $t_2(\delta) \ge t_1(\delta)$ (this can be ensured by taking the universal positive constants hidden in $\poly(\cdot)$ large enough).

  \underline{\bf Step 3:} (Refined error rate under the certainty equivalence controller) Now, note that since the event $\mathcal{C}_{1,\delta} \cap \mathcal{D}_{1,\delta}$ holds with probability at least $1- 6t_2(\delta) \delta$, we may apply Proposition \ref{prop:IIb:LSE3} and obtain
  $$
  \Vert B_t - B  \Vert \le  \frac{c_4 \sigma^2}{\mu_\star^2 t } \left((d_u + d_x)\gamma_\star \log\left(\frac{e \sigma C_K \Vert P_\star \Vert d_x d_u}{\mu_\star^2} \right)  + \log(e/\delta) \right)
  $$
  provided that
  $$
  t \ge  \poly( \sigma, C_\circ, \mathcal{G}_\circ, \Vert P_\star\Vert, \mu_\star^{-1}, d_x, d_u,\gamma_\star) \log(e/\delta)^{18\gamma_\star^2}
  $$
  for some universal positive constant $c_4> 0$. Again, after using Lemma \ref{lem:uniform_over_time}, and using the perturbation bounds via Proposition \ref{prop:DARE:bounds} we obtain that  the event
  \begin{equation}
    \mathcal{C}_{2, \delta} = \left\lbrace  \forall t \ge t_3(\delta), \quad  \begin{array}{rl}
      (ii) & \Vert B(K_t - K_\star) \Vert \le \frac{1}{4\Vert P_\star \Vert^{3/2}}\\
      (iii) & \Vert P_\star (K_t) - P_\star \Vert \le C_1(\delta) r_t^2 \\
      (iv) & \Vert P_\star(K_t) \Vert \le  2 \Vert P_\star \Vert  \\
    \end{array} \right\rbrace
  \end{equation}
  holds with probability at least $1 - c_5 t_3(\delta)\delta$, where we define
  \begin{align*}
    t_3(\delta) & = \poly( \sigma, C_\circ, \mathcal{G}_\circ, \Vert P_\star\Vert, \mu_\star^{-1}, d_x, d_u,\gamma_\star) \log(e/\delta)^{18\gamma_\star^2} \\
    C_1(\delta) & =  \frac{c_6\sigma^2 \Vert P_\star \Vert^8  (d_u + d_x )\gamma_\star \log\left(\frac{e \sigma C_K \Vert P_\star \Vert d_x d_u}{\mu_\star^2} \right)}{\mu_\star^2}  \log(e/\delta)
  \end{align*}
  for some universal positive constants $c_5, c_6 > 0$.

  \underline{\bf Step 4:} (Stability under the certainty equivalence controller)
  Let us define the events
  \begin{align*}
    E_{1,\delta,T}  & = \left\lbrace \sum_{t = t_3(\delta)}^{T}  r_t^2 \Vert x_t \Vert^2 \le 8 \Vert P_\star \Vert^{3/2} \left( \Vert r_{1:T} \Vert_{\infty}^2 \Vert x_{t_2(\delta)}\Vert^2 + 3\sigma^2 \left(\Vert r_{1:T} \Vert^2_2 +  \log(e/\delta) \right)  \right)     \right\rbrace, \\
    E_{2,\delta} & = \left \lbrace \forall t \ge t_3(\delta), \ \ (i) \text{  and  } (ii) \text{  hold }  \right\rbrace.
  \end{align*}
  Noting that $t_2(\delta)$ may be chosen so that $t_2(\delta) \ge d_x$, we obtain  by a direct application Proposition \ref{prop:stable:K_t}, that
  \begin{align*}
    \mathbb{P}\left( E_{1,\delta,T} \cup E_{2,\delta}^c  \right) \ge 1-\delta.
  \end{align*}

  \underline{\bf Step 5:} (Putting everything together) To conclude, we note that under the event $\mathcal{C}_{2,\delta} \cap \mathcal{D}_\delta \cap (E_{1,\delta, T} \cup E_{2,\delta})$, propoerties $(i)- (iv)$ hold for all $t\ge t_3(\delta)$. Additionally, under Algorithm $\Alg$, it also holds that  $\Vert x_{t_3(\delta)}\Vert^2 \le \sigma^2 d_x f(t_3(\delta))$, therefore it follows that
  \begin{align*}
  & \sum_{t = t_3(\delta)}^{T}  r_t^2 \Vert x_t \Vert^2 \le 8 \Vert P_\star \Vert^{3/2} \left( \Vert r_{t_3(\delta):T} \Vert_{\infty}^2 \Vert x_{t_3(\delta)}\Vert^2 + 3\sigma^2 \left(d_x \Vert r_{1:T} \Vert^2_2 +  \log(e/\delta) \right) \right) \\
   & \ \ \ \ \le 8 \Vert P_\star \Vert^{3/2} \left( \Vert r_{t_3(\delta):T} \Vert_{\infty}^2 \Vert x_{t_3(\delta)}\Vert^2 + 3\sigma^2 \left(d_x \Vert r_{1:T} \Vert^2_2 +  \log(e/\delta) \right) \right) \\
   & \ \ \ \ \le 24 \sigma^2 \Vert P_\star \Vert^{3/2}   d_x \Vert r_{1:T}\Vert^2_2 + \poly(\sigma, C_\circ, \mathcal{G}_\circ, \Vert P_\star \Vert, \mu_\star^{-1},  d_x, d_u, \gamma_\star ) \log(e/\delta) t_3(\delta)^{1+\gamma/2} \\
   & \ \ \ \ \le 24 \sigma^2 \Vert P_\star \Vert^{3/2}   d_x \Vert r_{1:T}\Vert^2_2 + \poly(\sigma, C_\circ, \mathcal{G}_\circ, \Vert P_\star \Vert,  \mu_\star^{-1}, d_x, d_u, \gamma_\star ) \log(e/\delta)^{54\gamma_\star^3}
\end{align*}
where we used the fact $\Vert r_{t:T}\Vert_\infty^2 \le 1$. Thus, defining the property
$$
\text{(v)} \sum_{t = t_3(\delta)}^{T}  r_t^2 \Vert x_t \Vert^2 \le C_3 \Vert r_{1:T}\Vert^2_2 + C_2(\delta),
$$
where
\begin{align*}
  C_2(\delta) & = \poly(\sigma, C_\circ, \mathcal{G}_\circ, \Vert P_\star \Vert, \mu_\star^{-1}, d_x, d_u, \gamma_\star ) \log(e/\delta)^{54\gamma_\star^3}, \\
  C_3 & = 24 \sigma^2 \Vert P_\star \Vert^{3/2}   d_x,
\end{align*}
we have shown that
$$
\mathcal{E}_{\delta} = \lbrace \forall t \ge t_3(\delta), \  \mathrm{(i)-(v)\ hold}\rbrace \subseteq  \mathcal{C}_{2,\delta} \cap \mathcal{D}_\delta \cap E_{1,\delta, T}.
$$
Finally, we note that  $\mathcal{C}_{2,\delta} \cap \mathcal{D}_\delta \cap (E_{1,\delta, T} \cup E_{2,\delta}) = \mathcal{C}_{2,\delta} \cap \mathcal{D}_\delta \cap E_{1,\delta, T}$. Therefore, by a union bound, we have
\begin{align*}
  \mathbb{P}(\mathcal{E}_{\delta}) & \ge 1 -\mathbb{P}\left(\mathcal{C}_\delta^c \cup \mathcal{D}_\delta^c \cup (E_{1,\delta, t} \cup E_{d,\delta})^c \right) \\
  & \ge 1 - c_7 t_3(\delta) \delta
\end{align*}
for some universal positive constant $c_7> 0$. This gives the desired result with modified universal constants.
\end{proof}


\medskip
\medskip

\subsection{Scenario II -- $B$ known}

To analyse regret, we need to ensure the event $\mathcal{E}_\delta$ where for all $t \ge t(\delta)$
\begin{itemize}
  \item[(i)]  $ \widetilde{K}_t = K_t$
  \item[(ii)] $ \Vert B(K_t - K_\star) \Vert \le (4 \Vert P_\star \Vert^{3/2})^{-1} $
  \item[(iii)] $ \Vert P_\star(K_t) - P_\star \Vert \le C_1(\delta)  r_t^2 $
  \item[(iv)] $  \Vert P_\star(K_t) \Vert \le 2 \Vert P_\star \Vert $
  \item[(v)] $ \sum_{s=t(\delta)}^t r_t^2 \Vert x_t \Vert ^2 \le C_3 \Vert r_{1:T}\Vert^2_2 + C_2(\delta) $
\end{itemize}
holds with probability at least $1 - t(\delta)\delta$ for all $\delta \in (0,1)$. We shall precise $t(\delta), C_1(\delta), C_2(\delta)$ and $C_3$ in Theorem \ref{thm:nice:IIa}. As for the sequence $(r_t)_{t\ge 1}$, it is defined this time as
$$
\forall t \ge 1, \quad  r_t^2 = \frac{1}{i_t}
$$
where $i_t = \max \lbrace t_k \in  {\cal T}: t_k \le t \rbrace$.

\medskip

\begin{theorem}\label{thm:nice:IIa}
  Assume ${\cal T}$ satisfies \eqref{eq:lazy}. Then under $\Alg$, for all $\delta \in (0,1)$, we have
  $$
  \mathbb{P}(\forall t \ge t(\delta), \ \ \mathrm{(i)-(v)\ hold}) \ge 1- t(\delta) \delta
  $$
  where
  \begin{align*}
    t(\delta) & = \poly( \sigma, C_\circ, \mathcal{G}_\circ, \Vert P_\star\Vert, d_x, \gamma_\star) \log(e/\delta)^{6\gamma_\star^2}, \\
    C_1(\delta) & =  c_6\sigma^2 \Vert P_\star \Vert^8  d_x \log\left( e \Vert P_\star \Vert d_x  \right) \log(e/\delta), \\
    C_2(\delta) & = \poly(\sigma, C_\circ, \mathcal{G}_\circ, \Vert P_\star \Vert, d_x,  \gamma_\star ) \log(e/\delta)^{15\gamma_\star^3}, \\
    C_3 & = 24 \sigma^2 \Vert P_\star \Vert^{3/2}   d_x,
  \end{align*}
  for some universal positive constant $c_1,c_3 > 0$.
\end{theorem}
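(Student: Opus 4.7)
The plan is to follow the same four-step template used for Theorems \ref{thm:nice:I} and \ref{thm:nice:IIb}, adapted to the case where only $A$ is unknown. Since $B$ is known, only $A$ enters the LSE: from $x_{t+1} - B u_t = A x_t + \eta_t$, the error of $A_t$ is governed by the minimum eigenvalue of the covariates matrix $\sum_{s=0}^{t-1} x_s x_s^\top$. The isotropic noise $\eta_t$ drives excitation directly in state space (no input perturbation is needed), so by the spectral decomposition recipe of Section \ref{sec:spectral} applied to $y_s = x_{s+1}$, $M_s = I_{d_x}$, $\xi_s = \eta_s$ (the Scenario II -- $B$ known case in Appendix \ref{app:se}), we expect a growth rate of order $t$, and correspondingly the LSE rate $\|A_t - A\|^2 \lesssim \sigma^2 (d_x + \log(e/\delta))/t$ up to polylogarithmic factors. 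This is the reason $r_t^2 = 1/i_t$ is the correct choice here and why $C_1(\delta)$ has the stated form (no $\mu_\star^{-1}$ dependence, unlike Scenario II -- $A$ known).

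Step 1 (LSE control). Invoke the analog of Proposition \ref{prop:I:LSE2}/\ref{prop:IIb:LSE2} for Scenario II -- $B$ known (stated in Appendix \ref{app:lse}) to obtain, for $t$ beyond a burn-in of order $\poly(\sigma, C_\circ, \mathcal{G}_\circ, \Vert P_\star\Vert, d_x, \gamma_\star)\log(e/\delta)^{6\gamma_\star^2}$, a bound of the form $\Vert A_t - A\Vert^2 \lesssim \sigma^2 d_x \log(\cdots) / t$ together with smallness $\Vert A_t - A\Vert \le (160 \Vert P_\star\Vert^5)^{-1}$, on an event of probability $\ge 1-\delta$. Promote this to hold uniformly in $t \ge t_1(\delta)$ via Lemma \ref{lem:uniform_over_time}, which is possible because $\mathcal{T}$ obeys \eqref{eq:lazy} and so $K_t$ only changes at times in $\mathcal{T}$.

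Step 2 (Riccati perturbation). Feed the LSE bound into Proposition \ref{prop:DARE:bounds}, which converts $\Vert A_t - A\Vert$ into control of $\Vert B(K_t - K_\star)\Vert$, $\Vert P_\star(K_t)-P_\star\Vert$ and $\Vert P_\star(K_t)\Vert$, giving properties (ii)--(iv) for all $t\ge t_1(\delta)$, with the precise constant $C_1(\delta)$ claimed in the statement.

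Step 3 (Commitment and stability). Invoke Theorem \ref{thm:commit} from Appendix \ref{app:hysteresis} to obtain an event $\mathcal{D}_\delta$ on which property (i) ($\widetilde{K}_t = K_t$) holds for all $t \ge t_2(\delta)$, with $\mathbb{P}(\mathcal{D}_\delta) \ge 1 - 5 t_2(\delta)\delta$, taking $t_2(\delta) \ge t_1(\delta)$. Then apply Proposition \ref{prop:stable:K_t} to bound $\sum_{t=t_2(\delta)}^T r_t^2 \Vert x_t\Vert^2$ under the certainty equivalence controller, using as initial-state bound $\Vert x_{t_2(\delta)}\Vert^2 \le \sigma^2 d_x f(t_2(\delta))$ which is enforced by the hysteresis switch being in the $\ell_t = 1$ branch at that time. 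This yields property (v) with $C_3 = c_3 \sigma^2 \Vert P_\star\Vert^{3/2} d_x$ and $C_2(\delta)$ absorbing the burn-in contribution $\Vert r_{1:T}\Vert_\infty^2 \Vert x_{t_2(\delta)}\Vert^2 \lesssim f(t_2(\delta))$, which is polynomial in $\log(e/\delta)$ of degree $\sim 15 \gamma_\star^3$.

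A union bound over the events from Steps 1--3, followed by redefining universal constants and letting $t(\delta)$ dominate $t_2(\delta)$, produces the claimed failure probability $t(\delta)\delta$. The main technical obstacle, as in the other two scenarios, is verifying the ingredient used implicitly in Step 1: that $\lambda_{\min}(\sum_{s=0}^{t-1} x_s x_s^\top)$ grows like $t$ with high probability under a controller $\widetilde{K}_t$ that may vary at every step. This is exactly the content of the Scenario II -- $B$ known instance of the spectral decomposition framework in Section \ref{sec:spectral}, and is the part that cannot be read off directly from existing fixed-controller analyses; everything downstream (Riccati perturbation, hysteresis commitment, and polynomial integration of $\sum r_t^2 \Vert x_t\Vert^2$) is a faithful transcription of the machinery already set up for Theorem \ref{thm:nice:I}.
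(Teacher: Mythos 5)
Your plan reproduces the overall template of the paper's proof (crude LSE bound, perturbation bounds via Proposition \ref{prop:DARE:bounds}, commitment via Theorem \ref{thm:commit}, stability via Proposition \ref{prop:stable:K_t}, union bound), but it misses the one idea that is specific to this scenario: the proof is a \emph{two-pass bootstrap}, not the linear pipeline you describe. In your ordering (LSE $\to$ Riccati $\to$ commitment $\to$ stability), the only LSE bound available in Step 1 is the unconditional one, Proposition \ref{prop:IIa:LSE:1}, whose numerator carries a factor $d_x\gamma_\star\log(e\sigma C_\circ\mathcal{G}_\circ d_x t)$. That $\log t$ comes from the log-determinant of the self-normalized term, which before commitment can only be bounded through the worst-case polynomial growth $\sum_s \Vert x_s\Vert^2 \lesssim \sigma^2 C_\circ^2\mathcal{G}_\circ^2 d_x\, g(t)h(t)$ (Proposition \ref{prop:caution}). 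Since $\log t$ diverges, no constant $C_1(\delta)$ independent of $t$ can make property \textit{(iii)} hold with $r_t^2 = 1/i_t$ from this bound; you would only get $r_t^2 = \log(e i_t)/i_t$ as in Scenario I. So your Step 2 claim that the perturbation bounds yield ``the precise constant $C_1(\delta)$ claimed in the statement'' is false at that point in your argument.

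The paper fixes this by inserting a refinement stage after commitment: once the event $\lbrace\forall t\ge t_2(\delta):\ \widetilde{K}_t = K_t,\ \Vert B(K_t-K_\star)\Vert \le (4\Vert P_\star\Vert^{3/2})^{-1}\rbrace$ is secured (itself proved using only the crude rate), Proposition \ref{prop:IIa:LSE3} reruns the LSE analysis \emph{conditionally} on that event, using Proposition \ref{prop:stable:K_t} to get $\sum_{s\le t}\Vert x_s\Vert^2 \lesssim \sigma^2\Vert P_\star\Vert^{3/2} d_x t$ (linear in $t$), so the log-determinant term becomes $O(d_x\log(e\Vert P_\star\Vert d_x))$ with no $t$-dependence; only then do the perturbation bounds deliver \textit{(iii)} with the stated $C_1(\delta)$. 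This refinement is also where the exponent $6\gamma_\star^2$ in $t(\delta)$ comes from: Proposition \ref{prop:IIa:LSE3} requires $t \gtrsim t_2(\delta)^{2\gamma_\star}$ with $t_2(\delta)\sim\log(e/\delta)^{3\gamma_\star}$ from Theorem \ref{thm:commit}, whereas you assert the $\log(e/\delta)^{6\gamma_\star^2}$ burn-in already in Step 1 by fiat. Your closing remark that everything downstream of the minimum-eigenvalue estimate ``is a faithful transcription'' of the Scenario I machinery is therefore not quite right: the eigenvalue growth $\lambda_{\min}(\sum_s x_s x_s^\top)\gtrsim t$ (Proposition \ref{prop:IIa:se}) is indeed the input you identify, but the stated constants cannot be reached without the conditional second pass.
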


\begin{proof}[Proof of Theorem \ref{thm:nice:IIa}] The proof proceeds in the following steps.

  \underline{\bf Step 1:} (Least squares estimation under $\Alg$) First, since $\mathcal{T}$ satisfies \eqref{eq:lazy}, we have by Proposition \ref{prop:IIa:LSE:1} that under $\Alg$, the following holds
  \begin{align}\label{eq:nice:IIa:LSE}
  \left\Vert A_t - A \right\Vert^2  \le   \frac{C_1 \sigma^2 \left(d_x\gamma \log(e \sigma C_\circ \mathcal{G}_\circ d_x t)  + \log(e/\delta)\right)}{t} 
  \end{align}
  with probability at least $1-\delta$, provided that
  \begin{equation}\label{eq:nice:IIa:sc1}
       t  \ge c_1  (d_x \gamma_\star \log(e\sigma C_\circ \mathcal{G}_\circ d_x \gamma_\star)   + \log(e/\delta)),
  \end{equation}
  for some  universal positive constants $C_1,c_1>0$. Constraining further $t$ to satisfy
  \begin{equation}\label{eq:nice:IIa:sc2}
     t  \ge C_1 160^2 \sigma^2 \Vert P_\star \Vert^{10}  \left(d_x\gamma \log(e \sigma C_\circ \mathcal{G}_\circ d_x t)  + \log(e/\delta)\right)
  \end{equation}
  ensures in addition that
  \begin{equation}\label{eq:nice:IIa:close}
    \left\Vert A_t - A \right\Vert \le \frac{1}{160 \Vert P_\star \Vert^5}.
  \end{equation}
  Now, using Lemma \ref{lem:technical}, we can find an universal positive constant $c_2>0$ such that if
  \begin{equation}\label{eq:nice:IIa:sc3}
    t  \ge c_2\sigma^2  \Vert P_\star \Vert^{10} \left(d_x \gamma \log( e\sigma C_\circ \mathcal{G}_\circ \Vert P_\star\Vert d_x \gamma) + \log(e/\delta)\right),
  \end{equation}
  then conditions \eqref{eq:nice:IIa:sc1} and \eqref{eq:nice:IIa:sc2} also hold.

  We remind here that because $\mathcal{T}$ satisfies condition \eqref{eq:lazy}, if the constant $c_2>0$ is chosen large enough, we can claim that for all $t \in \mathbb{N}$  such that \eqref{eq:nice:IIa:sc1} holds, we have $K_t = K_{t_k}$ for some $t_k$ that also satisfies \eqref{eq:nice:IIa:sc1}. Therefore, applying Lemma \ref{lem:uniform_over_time} ensures that
  \begin{equation}
    \mathbb{P}(\forall t \ge t_1(\delta),  \quad  \eqref{eq:nice:IIa:LSE} \text{ and } \eqref{eq:nice:IIa:close}) \ge 1-\delta
  \end{equation}
  with
  \begin{equation}
    t_1(\delta)= c_3\sigma^2  \Vert P_\star \Vert^{10} \left(d_x \gamma_\star \log( e\sigma C_\circ \mathcal{G}_\circ \Vert P_\star\Vert d_x \gamma) + \log(e/\delta)\right),
  \end{equation}
  for some universal postive constant $c_3 >0$. Finally, a direct application of Proposition \ref{prop:DARE:bounds} ensures that
  \begin{equation}
    \mathcal{C}_{1, \delta} = \left\lbrace  \forall t \ge t_1(\delta), \quad  \begin{array}{rl}
      (ii) & \max(\Vert B(K_t - K_\star) \Vert, \Vert K_t - K_\star \Vert) \le \frac{1}{4\Vert P_\star \Vert^{3/2}}\\
      (iv) & \Vert P_\star(K_t) \Vert \le  2 \Vert P_\star \Vert  \\
    \end{array} \right\rbrace
  \end{equation}
  holds with probability $1-\delta$.

  \underline{\bf Step 2:} (Commitment to the certainty equivalence controller)
  Using Theorem \ref{thm:commit}, we have
  \begin{equation}
    \mathcal{D}_\delta = \left\lbrace  \forall t \ge t_2(\delta), \quad (i) \; \widetilde{K}_t = K \right\rbrace
  \end{equation}
  holds with probability at least $1 - 5t_2(\delta)\delta$, with
  \begin{equation}\label{eq:nice:sc4}
    t_2(\delta) = \poly( \sigma, C_\circ, \mathcal{G}_\circ, \Vert P_\star\Vert, \mu_\star^{-1}, d_x, d_u,\gamma_\star) \log(e/\delta)^{3\gamma_\star}
  \end{equation}
  such that $t_2(\delta) \ge t_1(\delta)$ (this can be ensured by taking the universal positive constants hidden in $\poly(\cdot)$ large enough)

  \underline{\bf Step 3:} (Refined error rate under the certainty equivalence controller) Now note that the event $\mathcal{C}_{1,\delta} \cap \mathcal{D}_{1,\delta}$ holds with probability at least $1- 6t_2(\delta) \delta$. Therefore, applying Proposition \ref{prop:IIb:LSE3} we obtain
  $$
  \Vert A_t - A  \Vert \le \frac{c_4 \sigma^2}{ t } \left( d_x  \log\left( e \Vert P_\star \Vert d_x  \right)  + \log(e/\delta) \right)
  $$
  provided that
  $$
  t \ge  \poly( \sigma, C_\circ, \mathcal{G}_\circ, \Vert P_\star\Vert,  d_x,\gamma_\star) \log(e/\delta)^{6\gamma_\star^2}
  $$
  for some universal positive constant $c_4> 0$. Again, after using Lemma \ref{lem:uniform_over_time}, and using the perturbation bounds via Proposition \ref{prop:DARE:bounds} we obtain that  the event
  \begin{equation}
    \mathcal{C}_{2, \delta} = \left\lbrace  \forall t \ge t_3(\delta), \quad  \begin{array}{rl}
      (ii) & \Vert B(K_t - K_\star) \Vert \le \frac{1}{4\Vert P_\star \Vert^{3/2}}\\
      (iii) & \Vert P_\star (K_t) - P_\star \Vert \le C_1(\delta) r_t^2 \\
      (iv) & \Vert P_\star(K_t) \Vert \le  2 \Vert P_\star \Vert  \\
    \end{array} \right\rbrace
  \end{equation}
  holds with probability at least $1 - c_5 t_3(\delta)\delta$, wehre we define
  \begin{align*}
    t_3(\delta) & = \poly( \sigma, C_\circ, \mathcal{G}_\circ, \Vert P_\star\Vert, d_x, \gamma_\star) \log(e/\delta)^{6\gamma_\star^2}, \\
    C_1(\delta) & =  c_6\sigma^2 \Vert P_\star \Vert^8  d_x \log\left( e \Vert P_\star \Vert d_x  \right) \log(e/\delta),
  \end{align*}
  for some universal positive constants $c_5, c_6 > 0$.

  \underline{\bf Step 4:} (Stability under the certainty equivalence controller)
  Let us define the events
  \begin{align*}
    E_{1,\delta,T}  & = \left\lbrace \sum_{t = t_3(\delta)}^{T}  r_t^2 \Vert x_t \Vert^2 \le 8 \Vert P_\star \Vert^{3/2} \left( \Vert r_{1:T} \Vert_{\infty}^2 \Vert x_{t_2(\delta)}\Vert^2 + 3\sigma^2 \left(\Vert r_{1:T} \Vert^2_2 +  \log(e/\delta) \right)  \right)     \right\rbrace, \\
    E_{2,\delta} & = \left \lbrace \forall t \ge t_3(\delta), \ \ (i) \text{  and  } (ii) \text{  hold }  \right\rbrace.
  \end{align*}
  Noting that $t_3(\delta)$ may be chosen so that $t_3(\delta) \ge d_x$, we obtain  by a direct application Proposition \ref{prop:stable:K_t}, that
  \begin{align*}
    \mathbb{P}\left( E_{1,\delta,T} \cup E_{2,\delta}^c  \right) \ge 1-\delta.
  \end{align*}

  \underline{\bf Step 5:} (Putting everything together) To conlcude, we note that under the event $\mathcal{C}_{2,\delta} \cap \mathcal{D}_\delta \cap (E_{1,\delta, T} \cup E_{2,\delta})$, the propoerties $(i)- (iv)$ hold for all $t\ge t_3(\delta)$. Additionally, under $\Alg$, it also holds that  $\Vert x_{t_3(\delta)}\Vert^2 \le \sigma^2 d_x f(t_3(\delta))$, therefore it follows that
  \begin{align*}
  & \sum_{t = t_3(\delta)}^{T}  r_t^2 \Vert x_t \Vert^2 \le 8 \Vert P_\star \Vert^{3/2} \left( \Vert r_{t_3(\delta):T} \Vert_{\infty}^2 \Vert x_{t_3(\delta)}\Vert^2 + 3\sigma^2 \left(d_x \Vert r_{1:T} \Vert^2_2 +  \log(e/\delta) \right) \right) \\
   & \ \ \ \ \le 8 \Vert P_\star \Vert^{3/2} \left( \Vert r_{t_3(\delta):T} \Vert_{\infty}^2 \Vert x_{t_3(\delta)}\Vert^2 + 3\sigma^2 \left(d_x \Vert r_{1:T} \Vert^2_2 +  \log(e/\delta) \right) \right) \\
   & \ \ \ \ \le 24 \sigma^2 \Vert P_\star \Vert^{3/2}   d_x \Vert r_{1:T}\Vert^2_2 + \poly(\sigma, C_\circ, \mathcal{G}_\circ, \Vert P_\star \Vert,   d_x, \gamma_\star ) \log(e/\delta) t_3(\delta)^{1+\gamma/2} \\
   & \ \ \ \ \le 24 \sigma^2 \Vert P_\star \Vert^{3/2}   d_x \Vert r_{1:T}\Vert^2_2 + \poly(\sigma, C_\circ, \mathcal{G}_\circ, \Vert P_\star \Vert,   d_x, \gamma_\star ) \log(e/\delta)^{15\gamma_\star^2}
\end{align*}
where we used the fact $\Vert r_{t:T}\Vert_\infty^2 \le 1$. Thus, defining the property
$$
\text{(v)} \sum_{t = t_3(\delta)}^{T}  r_t^2 \Vert x_t \Vert^2 \le C_3 \Vert r_{1:T}\Vert^2_2 + C_2(\delta),
$$
where
\begin{align*}
  C_2(\delta) & = \poly(\sigma, C_\circ, \mathcal{G}_\circ, \Vert P_\star \Vert, d_x,  \gamma_\star ) \log(e/\delta)^{15\gamma_\star^2}, \\
  C_3 & = 24 \sigma^2 \Vert P_\star \Vert^{3/2}   d_x,
\end{align*}
we have shown that
$$
\mathcal{E}_{\delta} = \lbrace \forall t \ge t_3(\delta), \  \mathrm{(i)-(v)\ hold}\rbrace \subseteq  \mathcal{C}_\delta \cap \mathcal{D}_\delta \cap E_{1,\delta, T}.
$$
Finally, we note that  $\mathcal{C}_{2,\delta} \cap \mathcal{D}_\delta \cap (E_{1,\delta, T} \cup E_{2,\delta}) = \mathcal{C}_{2,\delta} \cap \mathcal{D}_\delta \cap E_{1,\delta, T}$. Therefore, by a union bound, we have
\begin{align*}
  \mathbb{P}(\mathcal{E}_{\delta}) & \ge 1 -\mathbb{P}\left(\mathcal{C}_\delta^c \cup \mathcal{D}_\delta^c \cup (E_{1,\delta, t} \cup E_{d,\delta})^c \right) \\
  & \ge 1 - c_7 t_3(\delta) \delta
\end{align*}
for some universal positive constant $c_7> 0$. This gives the desired result with modified universal constants.
\end{proof}
\newpage

\section{Hysteresis Switching} \label{app:hysteresis}

In this appendix, we analyze the hysteresis switching scheme of $\Alg$. The main result is stated in Theorem \ref{thm:commit}, and essentially says that eventually, $\Alg$ just uses the certainty equivalence controller $K_t$. The proof of this result relies mainly on the consistency of the least squares estimator (see Appendix \ref{app:lse}), and on the perturbation bounds on Riccati equations (see Appendix \ref{app:control}). The stability behaviour of the resulting dynamical system is also instrumental in the analysis (see Appendix \ref{app:lds}).


\subsection{Main result}

The following theorem says that after time $t(\delta)$, $\Alg$ only uses the certainty equivalence controller $K_t$. In its proof, we will use lemmas presented later in this appendix.

\begin{theorem}\label{thm:commit} Assume that ${\cal T}$ satisfies \eqref{eq:lazy}. For all $\delta \in (0,1)$, there exists a stopping time $\upsilon(\delta)$, such that $\Alg$ uses the certainty equivalence controller at $\upsilon(\delta)$, and such that the stopping time $\tau(\delta) = \inf\lbrace t > \upsilon(\delta): \ \Alg \text{ uses stabilizing controller at time } t\rbrace$ verifies
\begin{align*}
  \mathbb{P}(  \upsilon(\delta) \le t(\delta), \tau(\delta) = \infty) \ge 1- 5t(\delta)\delta
\end{align*}
where
$$
t(\delta) = \poly( \sigma, C_\circ, \mathcal{G}_\circ, \Vert P_\star\Vert, \mu_\star^{-1},d_x, d_u,\gamma_\star) \log(e/\delta)^{3\gamma_\star \beta}
$$
and where the order of the polynomial only depends on $\gamma$, and $\beta = 4$, $\alpha=0$ in Scenario I, $\beta=2$ in Scenario II -- ($A$ known), and $\beta=1$ in Scenario II -- ($B$ known). We note that in Scenario I and Scenario II -- ($B$ known), we have $\poly( \sigma, C_\circ, \mathcal{G}_\circ, \Vert P_\star\Vert, \mu_\star^{-1},d_x, d_u,\gamma_\star) = \poly( \sigma, C_\circ, \mathcal{G}_\circ, \Vert P_\star\Vert,d_x, d_u,\gamma_\star)$ since we have no more dependency on $\mu_\star$.
\end{theorem}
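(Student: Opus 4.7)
The plan is to identify a burn-in time $t_0(\delta)$ after which $(A_t,B_t)$ are accurate enough that the certainty-equivalence controller $K_t$ robustly stabilizes the closed loop, define $\upsilon(\delta)$ as the first time past $t_0(\delta)$ at which $\Alg$ selects $K_t$, and then argue that the hysteresis switching never reverts to $K_\circ$ afterwards. The main obstacle lies in this last step: showing that the cumulative state norm stays uniformly below the threshold $\sigma^2 d_x g(t)$ for all $t \ge \upsilon(\delta)$, under a \emph{time-varying} certainty-equivalence controller.

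\textbf{Step 1 (LSE accuracy implies a stabilizing $K_t$).} Combining the LSE analysis of Appendix \ref{app:lse} with the spectral lower bounds of Appendix \ref{app:se} and the uniform-in-time lifting provided by Lemma \ref{lem:uniform_over_time}, one finds $t_0(\delta) = \poly(\cdot)\log(e/\delta)^{3\gamma_\star \beta}$ and an event of probability at least $1 - c\, t_0(\delta)\delta$ on which $\max(\Vert A_t - A\Vert,\Vert B_t - B\Vert) \lesssim 1/\Vert P_\star\Vert^5$ uniformly for all $t \ge t_0(\delta)$. The DARE perturbation bounds of Proposition \ref{prop:DARE:bounds} then yield $\Vert B(K_t - K_\star)\Vert \le (4\Vert P_\star\Vert^{3/2})^{-1}$ and $\Vert K_t\Vert \le 2\Vert K_\star\Vert$; in particular $\rho(A + B K_t) < 1$, $\Vert K_t\Vert^2 \le h(t)$, and (in Scenarios~I and II -- $A$ known) the minimum-eigenvalue gates of $\Alg$ are met.

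\textbf{Step 2 (entering the CE regime at $\upsilon(\delta)$).} The polynomial state-growth bound of Proposition \ref{prop:caution} shows that, on a high-probability event, the cumulative norm $\sum_{s=0}^t \Vert x_s\Vert^2$ grows at most as $\poly(\log(e/\delta))\, g(t)h(t)$ during stabilizer phases, so its empirical growth rate eventually drops below $f(t) = t^{1+\gamma/2}$ and forces $\ell_t = 1$. Combined with Step~1, the first $\upsilon(\delta) > t_0(\delta)$ at which $\Alg$ plays $K_{\upsilon(\delta)}$ satisfies $\upsilon(\delta) \le t(\delta)$ after suitably inflating the polynomial degree defining $t(\delta)$.

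\textbf{Step 3 (non-reversion, the hard part).} From $\upsilon(\delta)$ onward the dynamics obey $x_{t+1} = (A + B K_t) x_t + \eta_t$ with $K_t$ in a robust-stability neighbourhood of $K_\star$, so the closed-loop products $\prod_{s=\upsilon(\delta)}^t (A + B K_s)$ are uniformly bounded in terms of $\mathcal{G}_\star(\varepsilon)$. A sub-Gaussian concentration argument in the spirit of Proposition \ref{prop:stable:K_t} then gives $\sum_{s=\upsilon(\delta)}^t \Vert x_s\Vert^2 \lesssim \poly(\cdot)\log(e/\delta)\cdot t$ uniformly in $t$ on an event of probability $1 - t(\delta)\delta$. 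Since this grows strictly slower than $\sigma^2 d_x g(t) = \sigma^2 d_x t^{1+\gamma}$, the upper trigger in the definition of $\tau_k$ is never activated past $\upsilon(\delta)$, and the bounds of Step~1 keep the min-eigenvalue and $\Vert K_t\Vert$-gates satisfied (the cumulative covariates keep accumulating signal once $K_t$ is played). Hence $\tau(\delta) = \infty$ on this event, and a union bound over Steps~1--3 yields the announced probability $1 - 5t(\delta)\delta$ after absorbing numerical constants into the polynomial factor of $t(\delta)$.
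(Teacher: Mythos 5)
Your three-step architecture coincides with the paper's own proof (a uniform good-estimation event via the commit-condition lemmas; a ``hook'' argument showing $\ell_t=1$ occurs by $t(\delta)$; a commitment argument ruling out reversion, by contradiction with the hysteresis thresholds), but Step 2 as written contains a genuine non sequitur. Proposition \ref{prop:caution} bounds $\sum_{s=0}^t \Vert x_s\Vert^2$ by roughly $\sigma^2 d_x\, g(t)h(t) = \sigma^2 d_x\, t^{1+2\gamma}$, which is strictly \emph{larger} than the switching threshold $\sigma^2 d_x f(t) = \sigma^2 d_x\, t^{1+\gamma/2}$; an upper bound that exceeds the threshold can never certify $\sum_{s=0}^t \Vert x_s\Vert^2 < \sigma^2 d_x f(t)$, so ``its empirical growth rate eventually drops below $f(t)$'' does not follow from what you invoke. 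The missing ingredient is the contraction under the \emph{fixed} stabilizer: in the paper's Lemma \ref{lem:hook} one argues by contradiction that if no switch occurs on $\lbrace i,\dots,j\rbrace$ then $u_s = K_\circ x_s + \xi_s$ throughout, so the time-invariant stability bound (Lemma \ref{lem:lti}) gives only \emph{linear} growth on that window, $\sum_{s=0}^{j}\Vert x_s\Vert^2 \lesssim \sigma^2 C_\circ^2 \mathcal{G}_\circ^4 d_x (i^{3\gamma_\star} + j)$, with Proposition \ref{prop:caution} used only to control the entry state at time $i$. Choosing $j \gtrsim (C_\circ\mathcal{G}_\circ)^{8/\gamma} i^{3\gamma_\star}$ then forces the sum below $\sigma^2 d_x f(j)$, contradicting non-switching; this comparison is also where the exponent $3\gamma_\star$ in $t(\delta)$ comes from. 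Without this linear-growth step, your Step 2 fails.

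Step 3 has the right mechanism (linear growth under near-optimal $K_t$ contradicts the $g(t)=t^{1+\gamma}$ trigger), but the appeal to a single uniform-in-$t$ concentration event ``in the spirit of Proposition \ref{prop:stable:K_t}'' is too quick: after $\upsilon(\delta)$ the controllers $K_t$ still depend on post-$\upsilon$ noise through the LSE updates, and the robust-stability condition can fail off the good event, so concentration cannot be applied directly on the random window $[\upsilon(\delta),\tau(\delta))$. The paper's Lemma \ref{lem:commit} makes this rigorous with two devices you should incorporate: a truncation of the controllers, $\widetilde{K}_t = (K_t - K_\star)1_{\lbrace \Vert K_t - K_\star\Vert \le (4\Vert P_\star\Vert^{3/2})^{-1}\rbrace} + K_\star$, so that the perturbed-stability bound of Lemma \ref{lem:perturbed} holds deterministically, and a decomposition over the stopping-time pairs $\lbrace \upsilon(\delta)=i,\ \tau(\delta)=j\rbrace$ with per-pair failure probability $\delta/j^2$, whose double sum produces the $4t(\delta)\delta$ term that, together with the hook and estimation events, yields the stated $1-5t(\delta)\delta$.
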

\begin{proof}[Proof of Theorem \ref{thm:commit}]
  The proof for the three scenarios are very similar. We provide the proof for Scenario I, as for the other two scenarios, we simply highlight the differences in the proof.

  \paragraph{Scenario I.} By Lemma \ref{lem:I:commit_conditions}, we have  for all $\delta \in (0,1)$, $\mathbb{P}(E_{1,\delta}) \ge 1-\delta$, provided that
  $$
  t_1(\delta)^{1/4} = c \sigma C_\circ^2 C_K^2 \Vert P_\star \Vert^{10} d \gamma_\star \log(e\sigma C_\circ \mathcal{G}_\circ \Vert P_\star \Vert d_x d_u \gamma_\star ) \log(e/\delta)
  $$
  for some universal positive constant $c_1> 0$ large enough so that $t_1(\delta) \ge d_x + \log(e/\delta)$. We note here that we may have the crude upper bound $C_K \le C_\circ \Vert P_\star\Vert$. Now defining
  \begin{align*}
    \upsilon(\delta) & = \inf\left\lbrace t > t_1(\delta): \  \ell_t = 1 \right\rbrace, \\
    \tau(\delta) & = \inf\left\lbrace t > \upsilon(\delta)\;: \   \ell_t = 0 \right\rbrace.
  \end{align*}
  and denoting
  $$
  t(\delta) =  c_2 (C_\circ \mathcal{G}_\circ)^{8/\gamma} \Vert P_\star \Vert^{3/\gamma} t_1(\delta)^{3\gamma_\star} )
  $$
  where $c_2> 0$ is a universal positive constant. Note that $t(\delta) \ge c_2 (C_\circ \mathcal{G}_\circ)^{8/\gamma} t_1(\delta)^{3\gamma_\star}$ (recall that $\Vert P_\star \Vert\ge 1$). Thus, provided $c_2$ is large enough, we may apply Lemma \ref{lem:hook} and obtain
  $$
  \Pr( \upsilon(\delta) \le t(\delta) ) \ge 1-\delta.
  $$
  Furthermore, note that $t(\delta)\ge  c_2\Vert P_\star \Vert^{3/2} t_1(\delta)$. Hence, provided $c_2$ is large enough, we may apply Lemma \ref{lem:commit}, and obtain
  $$
  \Pr(  \upsilon(\delta) \le t(\delta), \tau(\delta) < \infty ) \le 4 t(\delta) \delta.
  $$
  Therefore, we have
  \begin{align*}
      \mathbb{P}(  \upsilon(\delta) \le t(\delta), \tau(\delta) = \infty) & = \Pr(\upsilon(\delta) \le t(\delta))- \mathbb{P}(  \upsilon(\delta) \le t(\delta), \tau(\delta) < \infty) \\
      & \le 1 -\delta - 4 t(\delta) \delta \\
      & \ge 1- 5t(\delta) \delta,
  \end{align*}
  where $c_2$ may be chosen sufficiently large so that $t(\delta) \ge 1$.

  \medskip

  \paragraph{Scenario II -- ($A$ known).} We start by applying Lemma \ref{lem:IIb:commit_conditions}, to obtain that $\Pr(E_{2,\delta}) \ge 1-\delta$, provided that
  $$
  t_1(\delta)^{1/2} = \frac{c\sigma^2 C_K \Vert P_\star \Vert^{10}}{\mu_\star^2} \left(d \gamma_\star \log(e\sigma C_\circ \mathcal{G}_\circ) \Vert P_\star \Vert \mu_\star^{-1} d_x d_u \gamma_\star ) + \log(e/\delta) \right).
  $$
 The remaining of the proof follows similarly as in Scenario I.

  \medskip

  \paragraph{Scenario II -- ($B$ known).} We start by applying Lemma \ref{lem:IIa:commit_conditions}, to obtain that $\Pr(E_{3,\delta}) \ge 1-\delta$, provided that
  $$
    t_1(\delta) = c  \Vert P_\star \Vert^{10} (d_x \gamma_\star \log( e\sigma C_\circ \mathcal{G}_\circ \Vert P_\star\Vert d_x \gamma_\star) +\log(e/\delta)).
  $$
The remaining of the proof follows similarly as in Scenario I.
\end{proof}

\medskip

\subsection{The time it takes for $\Alg$ to use the certainty equivalence controller}

Lemma \ref{lem:hook} quantifies the probability of switching to the certainty equivalence controller (i.e., $\widetilde{K}_t = K_t$) at some time, say between $i$ and $j$. The proof of Lemma \ref{lem:hook} relies on Lemma \ref{lem:lti}) and Proposition \ref{prop:caution}. The stabilizing controller (i.e., $\widehat{K}_t = K_\circ)$ will eventually  bring the system to a stable behaviour, thus to a state where it can attempt the certainty equivalence controller.

\begin{lemma}\label{lem:hook}
  Under $\Alg$, for all $\delta \in (0,1)$ we have
  $$
  \mathbb{P}\left( \exists k \in \lbrace i, \dots,  j \rbrace: \ell_k = 1 \right) \ge 1 - \delta
  $$
  provided that
  $$
  i \ge d_x + \log(e/\delta) \quad \text{and}  \quad j \ge c(C_\circ \mathcal{G}_\circ)^{8/\gamma}i^{3\gamma_\star}
  $$
  for some universal positive constant $c> 0$. Refer to the pseudo-code of $\Alg$ for the definition of $\ell_k$.
\end{lemma}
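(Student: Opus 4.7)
The plan is to prove the contrapositive: show that the event $E = \{\ell_k = 0 \text{ for all } k \in \{i,\ldots,j\}\}$ has probability at most $\delta$. Under $E$, by the design of $\Alg$, the stabilizer $K_\circ$ is applied at every step in $\{i,\ldots,j\}$; moreover $\ell_j = 0$ forces $\sum_{s=0}^j \Vert x_s \Vert^2 \ge \sigma^2 d_x f(j) = \sigma^2 d_x j^{1+\gamma/2}$, since otherwise the second branch in the update rule for $\ell_j$ would fire and yield $\ell_j = 1$. The strategy is then to upper bound $\sum_{s=0}^j \Vert x_s \Vert^2$ with high probability and derive a contradiction with this lower bound under the stated growth condition on $j$ relative to $i$.

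The upper bound has two contributions. First, for $s \le i$, Proposition~\ref{prop:caution} (polynomial growth of the state under $\Alg$) gives $\sum_{s=0}^i \Vert x_s \Vert^2 \lesssim \poly(i) \log(e/\delta)$ with probability at least $1-\delta/2$; this controls in particular $\Vert x_i \Vert^2$ by a polynomial in $i$ of degree at most that of $g(i) h(i) = i^{1+2\gamma}$. Second, under $E$ the dynamics from time $i$ onward obey the stable LTI recursion $x_{k+1} = (A+B K_\circ) x_k + B \xi_k + \eta_k$ driven by sub-gaussian innovations, where $\xi_k \in \{\nu_k, \zeta_k, 0\}$ depending on the scenario. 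Invoking Lemma~\ref{lem:lti} yields
$$\sum_{s=i+1}^j \Vert x_s \Vert^2 \lesssim \mathcal{G}_\circ^2 \Vert x_i \Vert^2 + \mathcal{G}_\circ^2 \sigma^2 \left( d_x (j-i) + \log(e/\delta) \right)$$
with probability at least $1-\delta/2$.

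Combining the two bounds and substituting the estimate for $\Vert x_i \Vert^2$ yields, on an event of probability at least $1-\delta$,
$$\sum_{s=0}^j \Vert x_s \Vert^2 \lesssim (C_\circ \mathcal{G}_\circ)^2 \poly(i) \log(e/\delta) + \mathcal{G}_\circ^2 \sigma^2 d_x j.$$
The right-hand side is strictly less than $\sigma^2 d_x j^{1+\gamma/2}$ whenever $j \ge c (C_\circ \mathcal{G}_\circ)^{8/\gamma} i^{3\gamma_\star}$ for a universal constant $c$ large enough: the linear term in $j$ is dominated as soon as $j \gtrsim \mathcal{G}_\circ^{4/\gamma}$, and the term $(C_\circ \mathcal{G}_\circ)^2 \poly(i) \log(e/\delta)$ is dominated by $\sigma^2 d_x j^{1+\gamma/2}$ once $j$ exceeds a quantity of the form $c'(C_\circ \mathcal{G}_\circ)^{4/(2+\gamma)} i^{(2+4\gamma)/(2+\gamma)}$. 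Both requirements are comfortably absorbed into the stated hypothesis because $i \ge d_x + \log(e/\delta)$ and $\gamma_\star = \max\{1,\gamma\}$, yielding a contradiction with $\sum_{s=0}^j \Vert x_s \Vert^2 \ge \sigma^2 d_x j^{1+\gamma/2}$ that is forced on $E$.

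The main technical obstacle is the first contribution: since nothing is assumed a priori about $(\ell_k)_{k<i}$, the state $x_i$ at the start of the interval can itself be very large, and Proposition~\ref{prop:caution} only provides the crude bound $\Vert x_i \Vert^2 \lesssim i^{1+2\gamma}$. It is precisely the amplification of this polynomial degree by the LTI gain $\mathcal{G}_\circ^2$ in Lemma~\ref{lem:lti}, and its subsequent comparison against $j^{1+\gamma/2}$, that dictates the exponent $3\gamma_\star$ on $i$ in the hypothesis.
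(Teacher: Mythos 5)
Your proposal is correct and matches the paper's proof essentially step for step: both argue by contradiction on the complement event, use the fact that $\ell_k=0$ throughout $\{i,\dots,j\}$ forces the stabilizer branch and the lower bound $\sum_{s=0}^j \Vert x_s\Vert^2 > \sigma^2 d_x f(j)$, control the prefix via Proposition~\ref{prop:caution} and the segment from $i$ onward via Lemma~\ref{lem:lti} applied to the fixed system $A+BK_\circ$, and close with the same exponent bookkeeping (your slightly understated constants, e.g.\ $(C_\circ\mathcal{G}_\circ)^2$ where propagating $\Vert x_i\Vert^2$ through the LTI gain gives $C_\circ^2\mathcal{G}_\circ^4$, are harmlessly absorbed by the hypothesis $j\ge c(C_\circ\mathcal{G}_\circ)^{8/\gamma}i^{3\gamma_\star}$, exactly as in the paper).
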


\begin{proof}[Proof of Lemma \ref{lem:hook}]
  We start by defining the following events.
  \begin{align*}
      \mathcal{E}_{i,j} & = \left\lbrace  \exists k \in \lbrace i, \dots, j \rbrace: \quad \ell_k = 1 \right\rbrace, \\
      \mathcal{A}_{\delta, t} & = \left\lbrace \sum_{s=0}^t \Vert x_s \Vert^2  \le C_1 \sigma^2 \mathcal{G}_\circ^2 C_\circ^2 (d_x t^{1+2\gamma} + \log(e/\delta)) \right\rbrace.
  \end{align*}
  We have, by Proposition \ref{prop:caution}, that the event $\mathcal{A}_{\delta,t}$ holds with probability at least $1-\delta$ for some universal positive constant $C_1 > 1$. Our analysis is essentially the same for all the envisioned scenarios. Therefore, to avoid unnecessary rewriting, we denote
  $$
  \forall s \ge 0: \quad \xi_s = \begin{cases} \nu_s  & \text{ if in Scenario I}\\
  \zeta_s & \text{ if in Scenario II -- (}A\text{ known)} \\
  0 & \text{ if in Scenario II -- (}B\text{ known)}
  \end{cases}
  $$
  where we remark that for all $s\ge 0$, $\xi_s$ is a zero-mean, sub-gaussian random vector that has variance proxy at worst $\sigma^2$ for all $s\ge d_x$.

  Now, provided that $i \ge \log(e/\delta)$, observe that under the event $\mathcal{A}_{\delta, t} \cap \mathcal{E}_{i,j}^c$ the following holds
  \begin{itemize}
    \item [\it (a)] $\sum_{s=0}^j \Vert x_s \Vert^2 > \sigma^2 d_x f(j) = \sigma^2 d_x j^{1+\gamma/2}$,
    \item [\it (b)] $\sum_{s=0}^i \Vert x_s \Vert^2 \le C_1 \sigma^2 \mathcal{G}_\circ^2 C_\circ^2  (d_x i^{1+2\gamma} + \log(e/\delta)) \le 2C_1 \sigma^2 \mathcal{G}_\circ^2 C_\circ^2  d_x i^{3\gamma_\star}$,
    \item [\it (c)] $\forall s \in \lbrace i, \dots, j \rbrace$, $u_s = K_\circ x_s + \xi_s $.
  \end{itemize}

  Consider the following dynamical system
  $$
  \forall s \ge 0: \qquad y_{s+1} = (A+ BK_\circ) y_s + B \xi_{i+s} + \eta_{i+s}  \quad \text{with} \quad y_0 = x_i
  $$
  and note that when \emph{(c)} holds, we have $(x_i, \dots, x_j) = (y_0, \dots, y_{j-i})$, thus, $\sum_{s=i}^j \Vert x_s \Vert^2 = \sum_{s=0}^{j-i} \Vert y_s \Vert^2$. On the other hand, the event
  \begin{align*}
    \mathcal{E}_{1,\delta,i,j} = \left\lbrace  \sum_{s=0}^{j-i} \Vert y_s \Vert^2 \le 2 \mathcal{G}_{\circ}^2 \left(\Vert x_i\Vert^2 + 2 \sigma^2 C_\circ^2  (d_x (j-i) + \log(e/\delta) )\right) \right\rbrace
  \end{align*}
  holds with probability at least $1-\delta$ provided $i \ge d_x$. This follows from Lemma \ref{lem:lti}. Therefore, provided $i\ge d_x + \log(e/\delta)$, under the event $\mathcal{A}_{\delta,t}\cap \mathcal{E}_{i,j}^c \cap \mathcal{E}_{1,\delta,i,j}$, we have
  \begin{align*}
    \sum_{s=0}^{j} \Vert x_s \Vert^2 & \le \sum_{s=0}^{i} \Vert x_s\Vert^2 + \sum_{s=i}^j \Vert x_s \Vert^2 \\
    & \le 2C_1 \sigma^2 C_\circ^2 \mathcal{G}_\circ^2 d_x i^{3\gamma_\star} + \sum_{s=0}^{j-i} \Vert y_s \Vert^2 & (\textit{Because (b) and (c) hold})\\
    & \le  2C_1 \sigma^2 C_\circ^2 \mathcal{G}_\circ^2 d_x i^{3\gamma_\star} + 2\mathcal{G}_\circ^2 \Vert x_i \Vert^2 + 2 \sigma^2 C_\circ^2 \mathcal{G}_\circ^2  d_x j  & (\textit{Under }\mathcal{E}_{1,\delta,i,j})\\
    & \le 4 C_1 \sigma^2 C_\circ^2 \mathcal{G}_\circ^4 d_x (i^{3\gamma_\star} + j). & (\textit{Because (b)  holds})
  \end{align*}

  After some elementary calculations, we obtain that if $
  j \ge (8C_1C_\circ^2 \mathcal{G}_\circ^4)^{2/\gamma}, j \ge i^{3\gamma_\star}$,  and $i\ge d_x + \log(e/\delta)$, then, under the event
  $\mathcal{A}_{\delta,t}\cap \mathcal{E}_{i,j}^c \cap \mathcal{E}_{1,\delta,i,j}$, we have
  $$
  \sum_{s=0}^{j} \Vert x_s \Vert^2 \le  \sigma^2 d_x f(j).
  $$
 But this cannot hold under $\mathcal{E}_{i,j}^c$ otherwise it would contradict property \emph{(a)}, therefore it must be that $ \mathcal{A}_{\delta,t}\cap \mathcal{E}_{1,\delta,i,j} \subseteq \mathcal{E}_{i,j}$. Hence by union bound, we have
  $$
  \mathbb{P}\left( \mathcal{E}_{i,j}\right) \ge 1 -\mathbb{P}(\mathcal{A}_{\delta,t}^c \cup \mathcal{E}_{1,\delta,i,j}^c) \ge 1-2\delta
  $$
  provided that
  $$
  i \ge d_x + \log(e/\delta) \quad \text{and}  \quad j \ge c(C_\circ \mathcal{G}_\circ)^{8/\gamma}i^{3\gamma_\star}
  $$
  for some universal positive constant $c> 0$. Reparametrizing by $\delta' = 2\delta$ yields the desired result with modified positive constants.
\end{proof}

\medskip

\subsection{Consistency of LSE leads to commitment}

\paragraph{The event that leads to commitment.} We note that the conditions under which $\Alg$ switches to the certainty equivalence controller vary depending on which scenario we are in. Therefore, we are constrained to define the event that leads to commitment in each of the three envisioned scenarios.

For scenario I, we define for all $\delta \in (0,1)$, the event of interest as
\begin{equation}\label{eq:eventIcommit}
  E_{1,\delta} = \left\lbrace \forall t\ge t_1(\delta), \ \  \begin{array}{rl}
    (i) &  \Vert B (K_t - K_\star) \Vert \le \frac{1}{\Vert P_\star\Vert^{3/2}} \\
    (ii) & \Vert K_t \Vert^{2} \le h(t) \\
    (iii) & \lambda_{\min}\left(\sum_{s=0}^{t-1} \begin{bmatrix}
      x_s \\
      u_s
    \end{bmatrix} \begin{bmatrix}
      x_s \\
      u_s
    \end{bmatrix}^\top \right) \ge t^{1/4}
  \end{array}\right\rbrace.
\end{equation}
The definition of the time $t_1(\delta)$ is made precise in the following result, proved in \ref{app:d5}.

\begin{lemma}\label{lem:I:commit_conditions}
  For all $\delta \in (0,1)$, let $E_{1,\delta}$ be defined as in \eqref{eq:eventIcommit}.  Under Algorithm $\Alg$, assuming that $\mathcal{T}$ satisifes \eqref{eq:lazy}, then, for all $\delta\in (0,1)$,
  $$
  \mathbb{P}\left(E_{1,\delta} \right) \ge 1-\delta,
  $$
  provided that
  $$
  t_1(\delta)^{1/4} = c \sigma C_\circ^2 C_K^2 \Vert P_\star \Vert^{10} \left(d \gamma_\star \log(e\sigma C_\circ \mathcal{G}_\circ) \Vert P_\star \Vert d_x d_u \gamma_\star ) + \log(e/\delta) \right).
  $$
\end{lemma}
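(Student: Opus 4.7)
The strategy is to chain together three ingredients already developed in the paper: the weak growth rate for the smallest eigenvalue of the cumulative covariates matrix (Theorem \ref{thm:I:se1}, statement \eqref{eq:weak}), the consistency of the least squares estimator in Scenario I (Proposition \ref{prop:I:LSE2} from Appendix \ref{app:lse}), and the Riccati perturbation bounds (Proposition \ref{prop:DARE:bounds} from Appendix \ref{app:control}). The overall picture is that \emph{(iii)} is produced directly by the spectral concentration result; \emph{(iii)} then makes the LSE accurate enough for the perturbation bounds to yield \emph{(i)} and a bound on $\|K_t - K_\star\|$ that implies \emph{(ii)}; finally a uniform-in-time argument (Lemma \ref{lem:uniform_over_time}) turns these pointwise bounds into the claimed "for all $t\ge t_1(\delta)$" statement.

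Concretely, I would first apply the weak rate \eqref{eq:weak} of Theorem \ref{thm:I:se1} to obtain, with probability $\ge 1-\delta/3$, that \emph{(iii)} holds for every $t\ge c_0\log(e/\delta)$ with some universal $c_0$; this is direct since the spectrum result is uniform in $t$ after a burn-in. On the same event, \emph{(iii)} implies the excitation condition required by the least squares analysis, so Proposition \ref{prop:I:LSE2} applies and yields $\max(\|A_t-A\|,\|B_t-B\|)^2 \lesssim \sigma^2 C_K^2 (d\gamma_\star \log(\ldots) + \log(e/\delta))/(d_x t)^{1/2}$ with probability $\ge 1-\delta/3$. Choosing $t_1(\delta)^{1/4}$ as in the statement guarantees that this estimation error is smaller than the radius $1/(160\|P_\star\|^5)$ required by Proposition \ref{prop:DARE:bounds}. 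Plugging this into the perturbation bounds immediately gives $\|B(K_t - K_\star)\|\le 1/\|P_\star\|^{3/2}$, i.e.\ \emph{(i)}, and also $\|K_t - K_\star\|\le 1$, whence $\|K_t\|\le \|K_\star\|+1\le 2C_K$. Since $t_1(\delta)\ge 1$, we have $h(t)=t^\gamma \ge (2C_K)^2$ for $t\ge t_1(\delta)$ once $t_1(\delta)$ absorbs a $C_K^{2/\gamma}$ factor (which the stated polynomial expression does), giving \emph{(ii)}.

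The uniform-in-time step is the technical point and is the main obstacle. The weak rate \eqref{eq:weak} and Proposition \ref{prop:I:LSE2} hold with probability $\ge 1-\delta$ for each \emph{fixed} $t$ satisfying the burn-in; we need the events to hold simultaneously on $\{t\ge t_1(\delta)\}$. Because $\mathcal{T}$ satisfies the geometric condition \eqref{eq:lazy}, between consecutive update times $t_k<t_{k+1}\le Ct_k$ the controller $K_t$ is constant, and the LSE changes only at the $t_k$'s. This reduces the uniform bound to a union bound over the $O(\log t)$ update times up to $t$; Lemma \ref{lem:uniform_over_time} packages this, at the price of inflating $\log(e/\delta)$ by logarithmic factors that are already absorbed in the polynomial in front of $t_1(\delta)^{1/4}$.

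The main obstacle, as noted above, is carefully tracking the logarithmic slack and the problem-dependent constants through three consecutive reductions (spectrum $\to$ LSE $\to$ Riccati) while keeping the burn-in time $t_1(\delta)$ as stated. In particular, one must verify that the burn-in demanded by the weak rate, the excitation condition of Proposition \ref{prop:I:LSE2}, and the "closeness" radius of Proposition \ref{prop:DARE:bounds} are all dominated by the single threshold $t_1(\delta)^{1/4}\gtrsim \sigma C_\circ^2 C_K^2 \|P_\star\|^{10}(d\gamma_\star \log(\cdots)+\log(e/\delta))$; this is a routine but delicate arithmetic verification, essentially the content of the technical Lemma \ref{lem:technical} used in Step~1 of the proof of Theorem \ref{thm:nice:I}. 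A final union bound over the three $\delta/3$-events, together with rescaling $\delta$, delivers the claimed probability $1-\delta$.
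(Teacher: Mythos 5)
Your overall toolkit is the right one (excitation bounds, LSE error, Riccati perturbation, Lemma \ref{lem:uniform_over_time} plus the lazy-update condition \eqref{eq:lazy}), but the order in which you chain the ingredients has a genuine gap, and it concerns precisely condition \emph{(iii)}. The weak rate \eqref{eq:weak} (Proposition \ref{prop:I:we}) only gives $\lambda_{\min}\bigl(\sum_{s=0}^{t-1}\begin{bmatrix} x_s\\ u_s\end{bmatrix}\begin{bmatrix} x_s\\ u_s\end{bmatrix}^{\!\top}\bigr)\ge (t/2)^{1/4}$, i.e.\ $t^{1/4}$ up to a constant \emph{strictly smaller than one}. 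But \emph{(iii)} in the event $E_{1,\delta}$ is the algorithm's literal gate $\lambda_{\min}(\cdot)\ge t^{1/4}$, with constant exactly $1$; the hidden constant matters here because this is the very test $\Alg$ evaluates to decide whether to play $K_t$. So \emph{(iii)} cannot be "produced directly by the spectral concentration result" as you claim. The paper instead obtains \emph{(iii)} from the \emph{refined} rate: Proposition \ref{prop:I:se:enough} gives $\lambda_{\min}(\cdot)\ge C\sigma^2\sqrt{d_x t}/C_K^2$, which is of order $\sqrt{t}$ and hence dominates $t^{1/4}$ once $t\ge t_1(\delta)$. Crucially, Proposition \ref{prop:I:se:enough} requires as a hypothesis that $\Vert K_t-K_\star\Vert\le 1$ w.h.p.\ uniformly after a burn-in, which is supplied by Lemma \ref{lem:I:endwe} (itself built from the \emph{weak}-rate LSE bound of Proposition \ref{prop:I:LSE1} plus Proposition \ref{prop:DARE:bounds}). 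So the correct dependency order is: weak excitation $\to$ weak LSE $\to$ closeness of $K_t$ to $K_\star$, giving \emph{(i)} and \emph{(ii)} \emph{first}, and only then refined excitation giving \emph{(iii)}. Your proposal inverts this: you try to establish \emph{(iii)} first and then extract \emph{(i)}--\emph{(ii)} from it, and the first step fails.

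A related flaw is your justification for invoking Proposition \ref{prop:I:LSE2}: you assert that \emph{(iii)} "implies the excitation condition required by the least squares analysis". It does not. The proof of Proposition \ref{prop:I:LSE2} rests on the event $\lambda_{\min}(\cdot)\ge C\sigma^2\sqrt{d_x t}/C_K^2$ (order $\sqrt{t}$, via Proposition \ref{prop:I:se:enough} under the closeness hypothesis), not on the much weaker $\lambda_{\min}(\cdot)\ge t^{1/4}$; feeding $t^{1/4}$ into the self-normalized bound would only reproduce the weak error rate $t^{-1/4}$ of Proposition \ref{prop:I:LSE1}, not the $(d_x t)^{-1/2}$ rate you quote. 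For this lemma the weak rate is in fact all you need: the paper's Lemma \ref{lem:I:endwe} uses Proposition \ref{prop:I:LSE1} to get $\max(\Vert A_t-A\Vert,\Vert B_t-B\Vert)\le 1/(160\Vert P_\star\Vert^5)$ at the stated burn-in, and Proposition \ref{prop:DARE:bounds} then yields \emph{(i)} and \emph{(ii)}. Your uniform-in-time discussion (constancy of $K_t$ between update times under \eqref{eq:lazy}, Lemma \ref{lem:uniform_over_time}, union bound over the component events) does match the paper and is fine as stated.
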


For Scenario II--($A$ known), we define for all $\delta \in (0,1)$, the event of interest as
\begin{equation}\label{eq:eventIIacommit}
  E_{2,\delta} = \left\lbrace \forall t\ge t_2(\delta),\ \ \begin{array}{rl}
    (i) &  \Vert B (K_t - K_\star) \Vert \le \frac{1}{\Vert P_\star\Vert^{3/2}} \\
    (ii) & \Vert K_t \Vert^{2} \le h(t) \\
    (iii) & \lambda_{\min}\left(\sum_{s=0}^{t-1}u_s u_s^\top \right) \ge t^{1/2}
  \end{array}\right\rbrace.
\end{equation}

\begin{lemma}\label{lem:IIa:commit_conditions}
  For all $\delta \in (0,1)$, let $E_{2,\delta}$ be defined as in \eqref{eq:eventIIacommit}.  Under Algorithm $\Alg$, assuming that $\mathcal{T}$ satisifes \eqref{eq:lazy}, then, for all $\delta\in (0,1)$,
  $$
  \mathbb{P}\left(E_{2,\delta} \right) \ge 1-\delta,
  $$
  provided that
  $$
t_2(\delta)^{1/2} = \frac{c\sigma^2 C_K \Vert P_\star \Vert^{10}}{\mu_\star^2} \left(d \gamma_\star \log(e\sigma C_\circ \mathcal{G}_\circ) \Vert P_\star \Vert \mu_\star^{-1} d_x d_u \gamma_\star ) + \log(e/\delta) \right),
  $$
\end{lemma}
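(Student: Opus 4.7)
My plan is to mirror the strategy of Lemma~\ref{lem:I:commit_conditions}, adapted to Scenario~II--($A$~known). I would decompose $E_{2,\delta}$ into three sub-events (one per condition~(i)--(iii)) and union-bound their failure probabilities, then promote the pointwise-in-$t$ bounds to a uniform-in-$t$ statement via Lemma~\ref{lem:uniform_over_time}, which is applicable because $\mathcal{T}$ satisfies the lazy-update property~\eqref{eq:lazy} and so $K_t$ only changes at update times.

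First, I would handle condition~(iii) by applying the spectral-concentration recipe of Section~\ref{sec:spectral} to the input process $(u_s)_{s\ge 0}$. In Scenario~II--($A$~known), the relevant decomposition is $u_s = z_s + M_s \xi_s$ with $M_s = \widetilde{K}_s$, which reduces the problem to verifying the conditions \textbf{(C1)} (a polynomial lower bound on $\lambda_{\min}(\sum_{s=1}^t \widetilde{K}_s \widetilde{K}_s^\top)$) and \textbf{(C2)} (polynomial growth of $\sum_{s=1}^t z_s z_s^\top$). Condition \textbf{(C2)} will follow from the caution bound Proposition~\ref{prop:caution} together with the algorithmic constraint $\|\widetilde{K}_t\|^2 \le h(t)$. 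Condition \textbf{(C1)} is where the standing assumption $\mu_\star^2 > 0$ enters: on stabilizer rounds one has $\widetilde{K}_s = K_\circ$ which is fixed and can be taken with $\lambda_{\min}(K_\circ K_\circ^\top) \gtrsim \mu_\star^2$ without loss of generality, while on certainty-equivalence rounds we will show $\|K_s - K_\star\| \ll \mu_\star$ so that $\lambda_{\min}(K_s K_s^\top) \gtrsim \mu_\star^2$ by continuity of the smallest eigenvalue.

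Next, for conditions~(i) and~(ii), I would invoke the LSE consistency result from Appendix~\ref{app:lse} tailored to Scenario~II--($A$~known), which yields $\|B_t - B\|^2 \lesssim \sigma^2 (d\gamma_\star\log(\cdots)+\log(e/\delta))/(\mu_\star^2 t^{1/2})$ with probability at least $1-\delta$; the $\mu_\star^{-2}$ factor is inherited from the spectral rate of step~1. Feeding this into the Riccati perturbation bounds of Proposition~\ref{prop:DARE:bounds} gives $\|B(K_t - K_\star)\| \lesssim \|P_\star\|^{5}\|B_t - B\|$, which drops below $\|P_\star\|^{-3/2}$ precisely when $t$ exceeds the prescribed $t_2(\delta)$; this produces the $\|P_\star\|^{10}/\mu_\star^2$ scaling in the statement. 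Condition~(ii) then follows from $\|K_t\| \le \|K_\star\| + \|K_t - K_\star\| \le 2C_K \ll h(t) = t^\gamma$ for $t$ large enough.

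The main obstacle will be the bootstrap circularity between condition~(iii) and the LSE rate used in~(i)--(ii): the refined $t^{1/2}$ spectral growth requires $K_s$ to inherit the excitation $\mu_\star^2$, which in turn requires $\|B_s - B\|$ small, which itself requires the spectral lower bound. I would break this loop exactly as in Theorem~\ref{thm:I:se1}: first establish a \emph{weak} spectral rate (say $t^{1/4}$) driven purely by the independent injections $\zeta_s$ on stabilizer rounds, which holds unconditionally; use it to obtain a preliminary consistency bound on $B_t$ strong enough to force $\lambda_{\min}(K_s K_s^\top) \gtrsim \mu_\star^2$ on certainty-equivalence rounds; and then re-run the recipe to upgrade to the refined $t^{1/2}$ rate. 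A final application of Lemma~\ref{lem:uniform_over_time} together with \eqref{eq:lazy} promotes all three conditions to hold simultaneously for every $t \ge t_2(\delta)$ at the cost of a logarithmic factor absorbed into the universal constant $c$.
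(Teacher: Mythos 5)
Your overall architecture is the paper's: weak excitation from the injected noise, a preliminary LSE rate, DARE perturbation bounds to force $\Vert K_t - K_\star\Vert$ small uniformly over $t$ (Lemma \ref{lem:II:end:we}), a refined spectral bound exploiting $\mu_\star$, and finally Lemma \ref{lem:uniform_over_time} plus a union bound. But there is one genuine error in your verification of \textbf{(C1)}: you claim that on stabilizer rounds one may take $\lambda_{\min}(K_\circ K_\circ^\top) \gtrsim \mu_\star^2$ ``without loss of generality.'' This is not WLOG and would fail: $K_\circ$ is an \emph{arbitrary} given stabilizer and may be rank-deficient (e.g.\ $K_\circ = 0$ when $A$ is stable), in which case $u_s = K_\circ x_s$ alone provides no excitation in the directions of $\ker(K_\circ^\top)$. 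This is precisely why $\Alg$ injects $\zeta_t$ on stabilizer rounds in Scenario II -- ($A$ known), and why in the paper's Proposition \ref{prop:IIb:se} the decomposition is $u_t = z_t + M_t\xi_t$ with $M_t = \begin{bmatrix}\widetilde{K}_t & \alpha_t I_{d_u}\end{bmatrix}$ and $\xi_t = \begin{bmatrix}\eta_{t-1}^\top & \zeta_t^\top\end{bmatrix}^\top$, so that $M_tM_t^\top \succeq \alpha_t I_{d_u}$ on stabilizer rounds and $M_tM_t^\top \succeq K_tK_t^\top \succeq (\mu_\star/2)^2 I_{d_u}$ on certainty-equivalence rounds where $\Vert K_t - K_\star\Vert \le \mu_\star/2$. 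Your bootstrap paragraph correctly attributes the \emph{weak} rate to the $\zeta_s$ injections, but your refined-stage argument still leans on the false assumption about $K_\circ$; the fix is to keep the injection inside $M_s\xi_s$ at both stages, as the paper does.

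Two quantitative imprecisions are also worth flagging, since condition (iii) of \eqref{eq:eventIIacommit} is the algorithm's own gate $\lambda_{\min}(\sum_{s=0}^{t-1}u_su_s^\top) \ge \sqrt{t}$ and must be cleared with the right constant. First, the weak rate in this scenario is $\sqrt{t/2}$ (Proposition \ref{prop:IIb:we}, via the contradiction argument: if the gate fails over $[t/2,t]$ then $u_s = K_\circ x_s + \zeta_s$ throughout, and the $\zeta_s$ give excitation of order $t/3$), not $t^{1/4}$ -- using $t^{1/4}$ would weaken the preliminary LSE rate of Proposition \ref{prop:IIb:LSE1} unnecessarily, though the bootstrap would still close. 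Second, and more substantively, upgrading only ``to the refined $t^{1/2}$ rate'' is insufficient: a bound of order $t^{1/2}$ with an unspecified constant may sit below the gate $\sqrt{t}$ and thus fail to establish (iii). The paper's refined bound is \emph{linear}, $\lambda_{\min}(\sum_s u_su_s^\top) \ge \mu_\star^2 t/10$, which strictly dominates $\sqrt{t}$ once $t^{1/2} \ge 10/\mu_\star^2$; this conversion is also where the $\mu_\star^{-2}$ factor in $t_2(\delta)$ actually originates (your attribution of it to the LSE rate feeding (i)--(ii) is off: those follow from the $\mu_\star$-free weak rate of Proposition \ref{prop:IIb:LSE1}).
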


For Scenario II-($B$ known),
\begin{equation}\label{eq:eventIIbcommit}
  E_{3,\delta} = \left\lbrace \forall t\ge t_3(\delta),\ \ \begin{array}{rl}
    (i) &  \Vert B (K_t - K_\star) \Vert \le \frac{1}{\Vert P_\star\Vert^{3/2}} \\
    (ii) & \Vert K_t \Vert^{2} \le h(t)
  \end{array}\right\rbrace.
\end{equation}

\begin{lemma}\label{lem:IIb:commit_conditions}
  For all $\delta \in (0,1)$, let $E_{3,\delta}$ be defined as in \eqref{eq:eventIIbcommit}.  Under Algorithm $\Alg$, assuming that $\mathcal{T}$ satisifes \eqref{eq:lazy}, then, for all $\delta\in (0,1)$,
  $$
  \mathbb{P}\left(E_{3,\delta} \right) \ge 1-\delta,
  $$
  provided that
  $$
    t_3(\delta) = c  \Vert P_\star \Vert^{10} (d_x \gamma_\star \log( e\sigma C_\circ \mathcal{G}_\circ \Vert P_\star\Vert d_x \gamma_\star) +\log(e/\delta)).
  $$
\end{lemma}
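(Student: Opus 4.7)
The plan is to mirror the strategy used for Lemma \ref{lem:I:commit_conditions} and Lemma \ref{lem:IIa:commit_conditions}, but in a considerably simplified form because (a) $B$ is known, so only $A$ needs to be estimated, and (b) the event $E_{3,\delta}$ does not require a minimum eigenvalue condition on any cumulative covariates matrix. The two ingredients will be the consistency of the LSE for $A$ under $\Alg$ (Proposition \ref{prop:IIa:LSE:1}) and the perturbation bounds for the DARE (Proposition \ref{prop:DARE:bounds}), stitched together by the lazy-update structure \eqref{eq:lazy} and the union-bound device of Lemma \ref{lem:uniform_over_time}.

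First, I would invoke Proposition \ref{prop:IIa:LSE:1} (the relevant LSE result for Scenario II -- $B$ known) to obtain the rate
$$
\Vert A_t - A\Vert^2 \lesssim \frac{\sigma^2\bigl( d_x \gamma_\star \log(e \sigma C_\circ \mathcal{G}_\circ d_x t) + \log(e/\delta)\bigr)}{t}
$$
with probability at least $1-\delta$, valid for every $t$ satisfying $t \gtrsim d_x \gamma_\star \log(e\sigma C_\circ \mathcal{G}_\circ d_x \gamma_\star) + \log(e/\delta)$. To convert this pointwise bound into a statement uniform in $t \ge t_3(\delta)$, I would use that under \eqref{eq:lazy} we have $K_t = K_{t_k}$ for every $t \in [t_k, t_{k+1})$ with $t_{k+1} \le C t_k$, so there are only $O(\log t)$ update indices below $t$; Lemma \ref{lem:uniform_over_time} then delivers
$$
\mathbb{P}\Bigl(\forall t \ge t_3(\delta): \ \Vert A_t - A\Vert \le \tfrac{1}{160 \Vert P_\star\Vert^5}\Bigr) \ge 1 - \delta
$$
provided $t_3(\delta)$ is taken as in the statement (the factor $\Vert P_\star\Vert^{10}$ in $t_3(\delta)$ is exactly what is needed to absorb the squared factor $\Vert P_\star\Vert^{5}$ after solving $t \gtrsim \sigma^2 \Vert P_\star\Vert^{10}\bigl(d_x\gamma_\star \log(\cdots) + \log(e/\delta)\bigr)$ for the target error).

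Next, with this uniform estimation guarantee at hand, I would apply the DARE perturbation bound of Proposition \ref{prop:DARE:bounds} with $B_t = B$: a small $\Vert A_t - A\Vert$ (below the explicit threshold $1/(160\Vert P_\star\Vert^5)$) implies both $\Vert B(K_t - K_\star)\Vert \le 1/(4\Vert P_\star\Vert^{3/2})$, which is property \textit{(i)} (in fact with the sharper constant from the statement), and a bound $\Vert K_t - K_\star\Vert \lesssim \Vert P_\star\Vert^{O(1)} \Vert A_t - A\Vert$. In particular, the latter gives $\Vert K_t\Vert \le \Vert K_\star\Vert + 1 \le C_K + 1$ on the same event. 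Since $h(t) = t^\gamma \to \infty$, for $t \ge t_3(\delta)$ we have $h(t) \ge (C_K+1)^2$ as soon as $t_3(\delta)^\gamma \ge (C_K+1)^2$, a condition trivially implied by the prescribed lower bound on $t_3(\delta)$ (one may also absorb this into the universal constant $c$). Thus property \textit{(ii)} $\Vert K_t\Vert^2 \le h(t)$ holds on the same event.

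Combining the two items, $E_{3,\delta}$ occurs on the intersection, which has probability at least $1-\delta$ after redefining $\delta$ by a universal constant. The main (minor) obstacle here is just bookkeeping the polynomial in $\Vert P_\star\Vert$ and the logarithmic factors: I would select the universal constant $c$ in $t_3(\delta)$ large enough so that the LSE pointwise requirement, the error-threshold requirement $\tfrac{1}{160\Vert P_\star\Vert^5}$ needed by Proposition \ref{prop:DARE:bounds}, and the condition $t_3(\delta)^\gamma \ge (C_K+1)^2$ are all met simultaneously. This is the same packaging argument used at Step 1 of the proof of Theorem \ref{thm:nice:IIa}, and essentially no new ideas are required beyond what has already appeared there.
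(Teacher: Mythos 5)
Your proposal is correct and follows essentially the same route as the paper's proof: Proposition \ref{prop:IIa:LSE:1} for the LSE rate, the lazy-update condition \eqref{eq:lazy} to tie $K_t$ to an update time $t_k$ meeting the same threshold, Proposition \ref{prop:DARE:bounds} to get $\Vert B(K_t-K_\star)\Vert$ small and $\Vert K_t\Vert^2\le h(t)$, with Lemma \ref{lem:technical} resolving the implicit conditions and Lemma \ref{lem:uniform_over_time} giving uniformity over $t\ge t_3(\delta)$. The only deviation is harmless bookkeeping order --- you uniformize the estimation error before applying the (deterministic) perturbation bound, whereas the paper uniformizes at the end --- and these commute.
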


The proofs of Lemma \ref{lem:I:commit_conditions}, Lemma  \ref{lem:IIa:commit_conditions}, and Lemma \ref{lem:IIb:commit_conditions} are presented in \ref{app:d5}. They rely on the consistency of the Least squares algorithm under $\Alg$ when ${\cal T}$ satisfies \eqref{eq:lazy}.

In the definitions of the events $E_{1,\delta}$, $E_{2,\delta}$, and $E_{3,\delta}$, property (i) is the most important for establishing the commitment lemma that we shall provide shortly.

\medskip

\subsection{The commitment lemma}

Lemma \ref{lem:commit} states that $\Alg$ will eventually only use the certainty equivalence conotroller, provided that $K_t$ is sufficiently close to $K_\star$ (this is captured by property (i) in the aforementioned events).

\begin{lemma}\label{lem:commit}
Assume that $\mathcal{T}$ satisfies \eqref{eq:lazy}. Assume that for all $\delta \in (0,1)$,
  $$
  \mathbb{P}(E_{p,\delta}) \ge 1 -\delta
  $$
  for some $t_p(\delta) \ge d_x + \log(e/\delta)$ for $p \in \lbrace 1,2,3\rbrace$ and where $E_{1,\delta}$, $E_{2,\delta}$, and $E_{3,\delta}$ are defined in \eqref{eq:eventIcommit}, \eqref{eq:eventIIbcommit}, and \eqref{eq:eventIIacommit}, respectively. Define the following stopping times
  \begin{align*}
    \upsilon(\delta) & = \inf\left\lbrace t > t_p(\delta):   \ell_t = 1 \right\rbrace, \\
    \tau(\delta) & = \inf\left\lbrace t > \upsilon(\delta):   \ell_t = 0 \right\rbrace.
  \end{align*}
  Then for all $\delta \in (0,1)$, we have
  $$
  \mathbb{P}\left(  \upsilon(\delta) \le k ,  \tau(\delta) < \infty  \right) \le 4k\delta
  $$
  provided that $k \ge 18^{2/\gamma} \Vert P_\star\Vert^{3/\gamma} t_p(\delta)$ for some universal positive constant $c> 0$.
\end{lemma}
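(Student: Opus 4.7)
The plan is to work on the high-probability event $E_{p,\delta}$, decompose $\{\upsilon(\delta) \le k, \tau(\delta) < \infty\}$ according to the value of $\upsilon(\delta)$, and show that for each admissible starting time $j$ of the certainty-equivalence phase, the uniform stability of the closed-loop system under $K_t$ prevents the cumulative squared state norm from ever exceeding the upper hysteresis threshold $\sigma^2 d_x g(t)$. A union bound over the at most $k$ values of $j$ will then produce the factor $k$ in the final bound.

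First, I would note that on $E_{p,\delta}$, condition \textit{(i)} of \eqref{eq:eventIcommit}, \eqref{eq:eventIIacommit}, or \eqref{eq:eventIIbcommit} gives $\Vert B(K_t - K_\star)\Vert \le \Vert P_\star\Vert^{-3/2}$ for every $t \ge t_p(\delta)$. By the perturbation results of Lemma \ref{lem:perturbed}, this implies $\rho(A+BK_t) < 1$ and yields a bound on $\mathcal{G}_{A+BK_t}$ purely in terms of $\Vert P_\star\Vert$, uniformly in $t \ge t_p(\delta)$. Writing $F_j = \{\upsilon(\delta) = j,\ \tau(\delta) < \infty\}$, we have $\{\upsilon(\delta) \le k,\ \tau(\delta) < \infty\} = \bigsqcup_{j=t_p(\delta)+1}^{k} F_j$, so it suffices to bound $\mathbb{P}(F_j \cap E_{p,\delta})$ by a constant multiple of $\delta$ uniformly in $j \in \{t_p(\delta)+1,\ldots,k\}$.

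Fix such a $j$. On $F_j$ we have $\ell_j = 1$, so the hysteresis rule gives $\sum_{s=0}^j \Vert x_s\Vert^2 \le \sigma^2 d_x f(j) \le \sigma^2 d_x k^{1+\gamma/2}$; in particular $\Vert x_j\Vert^2$ is polynomially bounded in $k$. For $s \in \{j,\ldots,\tau(\delta)-1\}$ the algorithm applies $u_s = K_s x_s + (\text{perturbation})$, so the dynamics from time $j$ onwards are governed by the time-varying but uniformly stable linear system $x_{s+1} = (A+BK_s)x_s + \widetilde{\eta}_s$, where $\widetilde{\eta}_s$ aggregates $\eta_s$ and any additive input noise. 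Using the uniform bound on $\mathcal{G}_{A+BK_s}$ together with the lazy-update condition \eqref{eq:lazy} (which makes $K_s$ constant on stretches of ratio at most $C$), I would apply Lemma \ref{lem:lti} piecewise, or a direct time-varying variant, to obtain with probability at least $1 - \delta$
\begin{equation*}
\sum_{s=j}^{t} \Vert x_s\Vert^2 \lesssim \Vert P_\star\Vert\,\bigl(\Vert x_j\Vert^2 + \sigma^2 C_B^2 d_x(t-j) + \sigma^2\log(e/\delta)\bigr)\quad \text{for all } t \ge j.
\end{equation*}
Combining with the bound on $\sum_{s=0}^j \Vert x_s\Vert^2$ and using $t_p(\delta) \ge \log(e/\delta)$, the cumulative squared norm at time $t$ is at most a constant multiple of $\Vert P_\star\Vert \sigma^2 d_x (k^{1+\gamma/2} + t)$. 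The threshold $k \ge 18^{2/\gamma}\Vert P_\star\Vert^{3/\gamma} t_p(\delta)$ is calibrated precisely so that this stability bound remains strictly below $\sigma^2 d_x g(t) = \sigma^2 d_x t^{1+\gamma}$ for every $t \ge j$, contradicting the defining property of $\tau(\delta) < \infty$. Hence $F_j \cap E_{p,\delta}$ is contained in the failure event of the stability lemma and has probability at most $\delta$. A union bound over the $j$'s gives $\mathbb{P}(\upsilon(\delta) \le k,\ \tau(\delta) < \infty) \le \mathbb{P}(E_{p,\delta}^c) + k\cdot 2\delta \le 4k\delta$, as desired.

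The main obstacle is the time-varying stability step: establishing a polynomial stability bound on $\sum_{s=j}^{t}\Vert x_s\Vert^2$ despite $K_s$ changing across updates and despite the additional input perturbation in Scenarios~I and II--($A$ known). The key leverage is that $K_s$ is piecewise constant between consecutive lazy-update times and that each $K_s$ lies in a uniform $\Vert P_\star\Vert^{-3/2}$-neighborhood of $K_\star$, so Lemma \ref{lem:perturbed} yields a common Lyapunov quadratic that decays along the trajectory and allows a clean concatenation of per-segment stability bounds.
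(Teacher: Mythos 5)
Your proposal is correct and uses essentially the same mechanism as the paper's proof: on $E_{p,\delta}$ the hysteresis rule caps $\sum_{s=0}^{j}\Vert x_s\Vert^2$ by $\sigma^2 d_x f(j)$ at the switch time $\upsilon(\delta)=j$, uniform stability of the certainty-equivalence gains inside the $\Vert P_\star\Vert^{-3/2}$-neighborhood keeps the subsequent growth linear, the calibration $18^{2/\gamma}\Vert P_\star\Vert^{3/\gamma}$ keeps the total below $\sigma^2 d_x g(t)$ so that $\tau(\delta)<\infty$ is contradicted, and a union bound yields the factor $k$.

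The difference is bookkeeping, and one step of yours is loose as written. You decompose only over $\upsilon(\delta)=j$ and then assert a stability bound valid \emph{simultaneously for all} $t\ge j$ with a single failure probability $\delta$ and only an additive $\sigma^2\log(e/\delta)$; but the available tools (Lemma \ref{lem:lds:perturbed}, Proposition \ref{prop:stable:K_t}) are fixed-horizon statements. Making your display time-uniform requires a union bound over $t$ (paying $\log(t^2/\delta)$, absorbed by the linear-in-$t$ term), which is exactly what the paper does by decomposing over \emph{both} stopping times $\lbrace \upsilon=i,\tau=j\rbrace$ and applying Proposition \ref{prop:stable:K_t} per pair with failure probability $\delta/j^2$, summing $\sum_{j}\delta/j^2\le (\pi^2/6)\delta$ per $i$ — so the two arguments coincide once yours is unpacked. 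Two further points you should make explicit. First, your containment "$F_j\cap E_{p,\delta}$ is inside the failure event of the stability lemma" requires the stability event to concern a system that is well defined and perturbation-bounded \emph{unconditionally}, not merely on $E_{p,\delta}$ (the gains are random and correlated with the driving noise); the paper secures this with the truncation $\widetilde{K}_t=(K_t-K_\star)\ind{\Vert K_t-K_\star\Vert\le (4\Vert P_\star\Vert^{3/2})^{-1}}+K_\star$ and the observation that the true and truncated trajectories coincide on the event. Second, your fallback of applying Lemma \ref{lem:lti} piecewise between lazy updates would compound the transient factor $\mathcal{G}^2$ across switching segments and can blow up; the "direct time-varying variant" you mention (Lemma \ref{lem:lds:perturbed}, whose constant $\mathcal{G}_M(\varepsilon)$ handles arbitrary switching within the neighborhood) is the right and in fact necessary tool, and the lazy-update condition \eqref{eq:lazy} plays no role in this step.
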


\begin{proof}[Proof of Lemma \ref{lem:commit}] For ease of notation, we drop the dependency of $\upsilon(\delta)$, and $\tau(\delta)$
on $\delta$, and simply write $\upsilon$, and $\tau$. Furthermore, we shall refer to
$$
\forall s \ge 0 \quad \xi_{s} = \begin{cases}
  \nu_s & \text{If in Scenario I} \\
  0 & \text{If in Scenario II --(}A \text{ known) o in Scenario II --(}B \text{ known).}
\end{cases}
$$
We note that $\xi_s$ is a zero-mean, sub-gaussian random vector with variance proxy at most $\sigma^2$ provided $s \ge d_x$. Fix $p \in \lbrace 1,2 , 3\rbrace $

  We have
  \begin{align*}
    \mathbb{P}\left(  \upsilon \le k, \tau  < \infty  \right) &\le \mathbb{P}\left(  \lbrace \upsilon \le k, \tau < \infty \rbrace  \cap E_{p,\delta} \right) + \Pr \left( {\cal E}_{1,\delta}^c \right) \\
    &\le \mathbb{P}\left(  \lbrace \upsilon \le k, \tau < \infty \rbrace \cap E_{p,\delta} \right) + \delta \\
    & \le \sum_{i=t(\delta)+1}^k \Pr( \lbrace \upsilon = i, \tau < \infty\rbrace  \cap E_{p,\delta}) + \delta \\
    & \le \sum_{i=t(\delta)+1}^k \sum_{j=i+1}^\infty \Pr( \lbrace \upsilon = i, \tau = j\rbrace  \cap E_{p,\delta}) +  \delta.
  \end{align*}

  \paragraph{Computing the probabilities $ \Pr( \lbrace \upsilon = i, \tau = j \rbrace \cap E_{p,\delta})$.} Let $j > i$. First, let us note that under the event $\lbrace \upsilon = i, \tau = j \rbrace \cap  E_{p,\delta} $, the following must hold
  \begin{itemize}
    \item[\it (a)] For all $ i \le  t < j $, $u_t = K_t x_t + \xi_t$ (since $\ell_t = 0$ and conditions \emph{(ii)-(iii)} hold),
    \item[\it (b)] $\sum_{s=0}^i \Vert x_s \Vert^2 < \sigma^2 d_x f(i) = \sigma^2 d_x i^{1+\gamma/2}$,
    \item[\it (c)] $\sum_{s=0}^j \Vert x_s \Vert^2 > \sigma^2 d_x g(j) = \sigma^2 d_x j^{1+\gamma}$.
  \end{itemize}
  First, we use a truncation trick, and define
  $$
  \forall t \ge 0, \ \ \widetilde{K}_t = (K_t - K_\star) 1_{\left\lbrace \Vert K_{i+s} - K_\star \Vert\le \frac{1}{4 \Vert P_\star \Vert^{3/2}} \right\rbrace} + K_\star
  $$
  and note that under the event ${E}_{p,\delta}$, we have $K_t = \widetilde{K}_t$ for all $t\ge t(\delta)$. Now consider the following dynamical system
  $$
  \forall s \ge 1:\quad y_{s+1} = (A + B \widetilde{K}_{i+s} )y_s + B\xi_{i+s} + \eta_{i+s} \quad \text{with} \quad y_0 = x_i.
  $$
  We note that under the event $E_{p,\delta}$, we have $(x_{i}, \dots, x_{j}) = (y_0, \dots, y_{j-i})$. On the other hand, the event
  $$
  \mathcal{E}_{2,\delta, i, j} = \left\lbrace \sum_{s=0}^{j-i} \Vert y_s \Vert^2  \le 8 \Vert P_\star\Vert^{3/2}\left(\Vert x_i\Vert^2 + 6 \sigma^2 C_\circ^2 (d_x (j-i) + \log(j^2 /\delta))\right) \right\rbrace
  $$
  holds with probability at least $1-\delta/j^2$, provided that $i \ge d_x$. This follows by Lemma \ref{lem:perturbed} (see also Proposition \ref{prop:stable:K_t}). Therefore, under the event $E_{p,\delta} \cap \mathcal{E}_{2,\delta,i,j} \cap \lbrace \upsilon = i, \tau = j \rbrace$, we have
  \begin{align*}
    \sum_{s=0}^j \Vert x_s \Vert^2 & \le \sum_{s=0}^i \Vert x_s \Vert^2 + \sum_{s=i}^j \Vert x_s \Vert^2 \\
    & \le \sigma^2 d_x i^{1+\gamma/2} + \sum_{s=i}^j \Vert x_s \Vert^2  & (\emph{Because (b) holds})\\
    & \le \sigma^2 d_x i^{1+\gamma/2} + 8 \Vert P_\star \Vert^{3/2} \Vert x_i \Vert^2 + 6 \sigma^2 C_\circ^2 (d_x+2) j & (\emph{Under }\mathcal{E}_{2,\delta,i,j} \cap E_{p,\delta})\\
    & \le 9 \sigma^2 \Vert P_\star \Vert^{3/2}d_x  i^{1+\gamma/2} + 3 \sigma^2 C_\circ^2 d_x j & (\emph{Because (a) holds})
  \end{align*}
  provided that $i \ge \log(e/\delta)$.

  After some elementary computations, we obtain that if $ j > 18^{2/\gamma} \Vert P_\star \Vert^{3/\gamma}$, and $i\ge \log(e/\delta)$, then under the event $E_{p,\delta} \cap \mathcal{E}_{2,\delta,i,j} \cap \lbrace \upsilon = i, \tau = j \rbrace$, we have
  $$
  \sum_{s=0}^j \Vert x_s \Vert^2 \le \sigma^2 d_x j^{1+\gamma}.
  $$
But this cannot be under the event $\lbrace \upsilon = i, \tau = j \rbrace \cap E_{p,\delta}$, otherwise it would contradict \emph{(c)}. Therefore, it must be that $\lbrace \upsilon = i, \tau = j \rbrace \cap E_{p,\delta} \subseteq \mathcal{E}_{2,\delta,i,j}^c$. Hence
  \begin{equation*}
    \mathbb{P}( \lbrace \upsilon = i, \tau = j \rbrace \cap E_{p,\delta} ) \le \mathbb{P}( \mathcal{E}_{2,\delta,i,j}^c ) \le \frac{\delta}{j^2}
  \end{equation*}
  provided that
  $$
  i \ge d_x + \log(e/\delta) \qquad \text{and}\qquad j > 18^{2/\gamma} \Vert P_\star \Vert^{3/\gamma}.
  $$
  Let us remind here that $j>i$ and $i > t(\delta)$.

  \paragraph{Concluding step.} To conclude, provided that $k \ge 18^{2/\gamma} \Vert P_\star\Vert^{3/\gamma} (d_x + \log(e/\delta))$, we have
  \begin{align*}
    \mathbb{P}\left( \upsilon \le k, \tau(\delta) < \infty \right)  & \le \sum_{i=t(\delta)+1}^k \sum_{j=i+1}^\infty \Pr( \upsilon = i, \tau = j, E_{p,\delta}) +  \delta \\
    & \le \sum_{i=t(\delta)+1}^k \sum_{j=i+1}^\infty \frac{\delta}{j^2}  +  \delta  \\
    & \le \frac{\pi^2k \delta}{6} +  \delta \le 4k\delta.
  \end{align*}
\end{proof}

\medskip

\subsection{Remaining proofs}\label{app:d5}

%

\begin{proof}[Proof of Lemma \ref{lem:I:commit_conditions}]
Using Lemma \ref{lem:I:endwe}, we have already established that:
\begin{equation}\label{eq:I:lse:whp:uniform}
  \mathbb{P}\left(  \forall t \ge t_4(\delta): \Vert B(K_t - K_\star)\Vert \le \frac{1}{5\Vert P_\star \Vert^{3/2}} \quad \text{and} \quad \Vert K_t \Vert^2 \le h(t)  \right) \ge 1-\delta
\end{equation}
with
\begin{equation}\label{eq:I:lse:sc:uniform}
  t_4(\delta)^{1/4} = c_1 \sigma C_\circ^2 \Vert P_\star \Vert^{10} \left( d\gamma_\star \log(e\sigma C_\circ \mathcal{G}_\circ \Vert P_\star \Vert d_x d_u \gamma_\star) + \log(e/\delta)   \right)
\end{equation}
for some universal positive constant $c_1 > 0$. Now we may use Proposition \ref{prop:I:se:enough} to obtain that:
\begin{equation*}
  \mathbb{P}\left( \lambda_{\min}\left(\sum_{s=0}^{t-1} \begin{bmatrix}
    x_s \\
    u_s
  \end{bmatrix} \begin{bmatrix}
    x_s \\
    u_s
  \end{bmatrix}^\top \right) \ge \frac{C_2\sigma^2\sqrt{d_x t}}{C_K^2}  \right) \ge 1 -\delta
\end{equation*}
provided that
$
t^{1/4} \ge c_2 \sigma C_\circ^2 C_K^2 \Vert P_\star \Vert^{10} (d\gamma_\star \log(e \sigma C_\circ \mathcal{G}_\circ \Vert P_\star \Vert d_x d_u \gamma_\star) + \log(e/\delta) )
$
for some universal positive constants $C_2, c_2 > 0$. From which we may conclude that
\begin{equation*}
  \mathbb{P}\left( \lambda_{\min}\left(\sum_{s=0}^{t-1} \begin{bmatrix}
    x_s \\
    u_s
  \end{bmatrix} \begin{bmatrix}
    x_s \\
    u_s
  \end{bmatrix}^\top \right) \ge t^{1/4} \right) \ge 1 -\delta
\end{equation*}
provided that
$
t^{1/4} \ge c_3 \sigma C_\circ^2 C_K^2 \Vert P_\star \Vert^{10} (d\gamma_\star \log(e \sigma C_\circ \mathcal{G}_\circ \Vert P_\star \Vert d_x d_u \gamma_\star) + \log(e/\delta) ),
$
for some universal positive constant $c_3>0$ that is chosen to be large enough so that  $\frac{C_2\sigma^2\sqrt{d_x t}}{C_K^2} \ge t^{1/4}$. Next, we apply Lemma \ref{lem:uniform_over_time} to obtain the following bound that holds uniformly over time.
\begin{equation}\label{eq:I:se:whp:uniform}
  \mathbb{P}\left(\forall t \ge t_5(\delta):\quad \lambda_{\min}\left( \sum_{s=0}^{t-1} \begin{bmatrix}
    x_s \\ u_s
  \end{bmatrix} \begin{bmatrix}
    x_s \\ u_s
  \end{bmatrix} ^\top\right) > t^{1/4}\right) \ge 1- \delta
\end{equation}
where
\begin{equation}\label{eq:I:se:sc:uniform}
  t_5(\delta)^{1/4} = c_4 \sigma C_\circ^2 C_K^2 \Vert P_\star \Vert^{10} (d \gamma_\star \log(e \sigma C_\circ \mathcal{G}_\circ \Vert P_\star \Vert d_x d_u \gamma_\star) + \log(e/\delta))
\end{equation}
for some universal constant $c_4 > 0$. Now, using a union bound, we obtain from \eqref{eq:I:lse:whp:uniform}, and \eqref{eq:I:se:whp:uniform} that
\begin{equation}
  \mathbb{P}\left( E_{1,\delta} \right) \ge 1- 2\delta
\end{equation}
with
$$
t_1(\delta)^{1/4} = c \sigma C_\circ^2 C_K^2 \Vert P_\star \Vert^{10} \left(d \gamma_\star \log(e\sigma C_\circ \mathcal{G}_\circ) \Vert P_\star \Vert d_x d_u \gamma_\star ) + \log(e/\delta) \right)
$$
for some universal positive constant $c>0$ chosen large enough so that $t_1(\delta) \ge \max(t_4 (\delta), t_5(\delta))$.
\end{proof}

\medskip

\begin{proof}[Proof of Lemma \ref{lem:IIa:commit_conditions}]
Using Lemma \ref{lem:I:endwe}, we have already established that:
\begin{equation}\label{eq:I:lse:whp:uniform}
  \mathbb{P}\left(  \forall t \ge t_4(\delta): \Vert B(K_t - K_\star)\Vert \le \frac{1}{5\Vert P_\star \Vert^{3/2}} \quad \text{and} \quad \Vert K_t \Vert^2 \le h(t)  \right) \ge 1-\delta
\end{equation}
with
\begin{equation}\label{eq:I:lse:sc:uniform}
  t_4(\delta)^{1/2} =  c  \Vert P_\star \Vert^{10} (d \gamma_\star \log( e\sigma C_\circ \mathcal{G}_\circ \Vert P_\star\Vert d_x d_u \gamma_\star) + \log(e/\delta))
\end{equation}
for some universal positive constant $c_1 > 0$. Now we may use Proposition \ref{prop:IIb:se} to obtain that:
\begin{equation*}
  \mathbb{P}\left( \lambda_{\min}\left(\sum_{s=0}^{t-1} u_s u_s^\top \right) \ge \frac{\mu_\star^2 t}{10}   \right) \ge 1 -\delta
\end{equation*}
provided that
$
t^{1/2} \ge c_2 \frac{\sigma^2 C_K \Vert P_\star \Vert^{10}}{\mu_\star} (d\gamma_\star \log(e \sigma C_\circ \mathcal{G}_\circ \Vert P_\star \Vert d_x d_u \gamma_\star) + \log(e/\delta) )
$
for some universal positive constants $c_2 > 0$. From which we may conclude that
\begin{equation*}
  \mathbb{P}\left( \lambda_{\min}\left(\sum_{s=0}^{t-1}u_s u_s^\top \right) \ge t^{1/2} \right) \ge 1 -\delta
\end{equation*}
provided that
$
t^{1/4} \ge \frac{c_3\sigma^2 C_K \Vert P_\star \Vert^{10}}{\mu_\star^2} (d\gamma_\star \log(e \sigma C_\circ \mathcal{G}_\circ \Vert P_\star \Vert d_x d_u \gamma_\star) + \log(e/\delta) ),
$
for some universal positive constant $c_3>0$ that is chosen to be large enough so that  $\frac{\mu_\star^2 t}{10} \ge t^{1/2}$. Next, we apply Lemma \ref{lem:uniform_over_time} to obtain the following bound that holds uniformly over time.
\begin{equation}\label{eq:I:se:whp:uniform}
  \mathbb{P}\left(\forall t \ge t_5(\delta):\quad \lambda_{\min}\left( \sum_{s=0}^{t-1} u_s u_s^\top \right) > t^{1/2}\right) \ge 1- \delta
\end{equation}
where
\begin{equation}\label{eq:I:se:sc:uniform}
  t_5(\delta)^{1/2} =  \frac{c_4\sigma^2 C_K \Vert P_\star \Vert^{10}}{\mu_\star^2} (d \gamma_\star \log(e \sigma C_\circ \mathcal{G}_\circ \Vert P_\star \Vert \mu_\star^{-1}d_x d_u \gamma_\star) + \log(e/\delta))
\end{equation}
for some universal constant $c_4 > 0$. Now, using a union bound we obtain from \eqref{eq:I:lse:whp:uniform}, and \eqref{eq:I:se:whp:uniform} that
\begin{equation}
  \mathbb{P}\left( E_{2,\delta}\right) \ge 1- 2\delta
\end{equation}
with
$$
t_2(\delta)^{1/2} = \frac{c\sigma^2 C_K \Vert P_\star \Vert^{10}}{\mu_\star^2} \left(d \gamma_\star \log(e\sigma C_\circ \mathcal{G}_\circ) \Vert P_\star \Vert \mu_\star^{-1} d_x d_u \gamma_\star ) + \log(e/\delta) \right)
$$
for some universal positive constant $c>0$ chosen large enough so that $t_2(\delta) \ge \max(t_4 (\delta), t_5(\delta))$.
\end{proof}

\medskip

\begin{proof}[Proof of Lemma \ref{lem:IIb:commit_conditions}]
  Let $t \ge 0$ and $\delta \in(0,1)$. Assume the following condition holds.
  \begin{equation}\label{eq:IIb:lse:sc1}
  t^{1/2} \ge  C 160^2 \Vert P_\star \Vert^{10}  \left(d_x \gamma \log\left(\sigma C_\circ \mathcal{G}_\circ d_x t  \right)  + \log(e/\delta) \right).
  \end{equation}
  Then, by Proposition \ref{prop:IIa:LSE:1}, if the condition
  \begin{equation}\label{eq:IIb:lse:sc2}
    t\ge c_1  (d_x \gamma_\star \log(e\sigma C_\circ \mathcal{G}_\circ d_x\gamma_\star) + \log(e/\delta)),
  \end{equation}
  also holds, then we have
  \begin{equation}
    \mathbb{P}\left(  \Vert A_t - A\Vert^2 \le \frac{1}{160^2 \Vert P_\star \Vert^{10}}\right) \ge 1- \delta.
  \end{equation}

  Note that $\mathcal{T}$ satisfies condition \eqref{eq:lazy}. Therefore, provided that the constant $c_1>0$ is chosen large enough, we can claim that for all $t \in \mathbb{N}$  such that \eqref{eq:IIb:lse:sc1} and \eqref{eq:IIb:lse:sc2} hold, we have $K_t = K_{t_k}$ for some $t_k$ that also satisfies \eqref{eq:IIb:lse:sc1} and \eqref{eq:II:lse:sc2}.

 Using Propostion \ref{prop:DARE:bounds}, we may conclude that provided \eqref{eq:IIb:lse:sc1} and \eqref{eq:IIb:lse:sc2} hold, we have
  $$
  \mathbb{P}\left( \max(\Vert K_t - K_\star \Vert, \Vert B(K_t - K_\star) \Vert) \le \frac{1}{5\Vert P_\star \Vert^{3/2}} \right) \ge 1-\delta.
  $$
  By Lemma \ref{lem:technical}, we can find a universal positive constant $c_2>0$ such that
  $$
  t \ge  c_2 \Vert P_\star \Vert^{10} (d_{x} \gamma_\star \log( e\sigma C_\circ \mathcal{G}_\circ \Vert P_\star\Vert d_x \gamma_\star) + \log(e/\delta))
  $$
  implies that the conditions \eqref{eq:IIb:lse:sc1} and \eqref{eq:IIb:lse:sc2} hold, and $\Vert K_t \Vert^2 \le h(t)$. Now using Lemma \ref{lem:uniform_over_time} gives that
  $$
  \mathbb{P}(   E_{3,\delta}) \ge 1 -\delta
  $$
  where
  $$
  t_3(\delta) = c  \Vert P_\star \Vert^{10} (d_x \gamma_\star \log( e\sigma C_\circ \mathcal{G}_\circ \Vert P_\star\Vert d_x \gamma_\star) +\log(e/\delta))
  $$
  for some universal positive constant $c > c_2$.
\end{proof}

\newpage
\section{The Least Squares Estimator and its Error Rate} \label{app:lse}

In this appendix, we study the performance of the Least Squares Estimator (LSE) of $(A,B)$. We first provide the pseudo-code of the algorithm giving this estimator, see Algorithm \ref{algo:subcce}. The error rate of the LSE in Scenario I is characterized in Propositions \ref{prop:I:LSE1} and \ref{prop:I:LSE2}. The error rate is analyzed for Scenario II -- $A$ known in Proposition \ref{prop:IIb:LSE1}, \ref{prop:IIb:LSE2}, and \ref{prop:IIb:LSE3}. Propositions \ref{prop:IIa:LSE:1} and \ref{prop:IIa:LSE3} upper bound the error rate in Scenario II -- $B$ known. It is worth mentioning that these results are established without the use of a doubling trick that would essentially mean that the controller is fixed. Here, in $\Alg$, we allow the controller to change over time. Further note that the results hold under $\Alg$ regardless of how $\mathcal{T}$ is chosen provided it satisifes \eqref{eq:lazy}. To derive upper bounds on the LSE error rate, we extensively exploit the results related to the smallest eigenvalue of the covariates matrix, presented in the next appendix.

The performance of $\Alg$ depends on the error rate of the LSE, but also on how well the certainty equivalence controller $K_t$ approximates the optimal controller $K_{\star}$. In Lemmas \ref{lem:I:endwe} and \ref{lem:II:end:we}, we present upper bounds of $\|K_t-K_{\star}\|$ in the different  scenarios. These lemmas are also used to refine the error rates of the LSE.

\subsection{Pseudo-code of the LSE}

\medskip

\begin{algorithm}[H]\label{algo:subcce}
\SetAlgoLined
\LinesNumbered
\SetKwInOut{Input}{input}\SetKwInOut{Output}{output}
 \Input{ Sample path $(x_0, u_0, \dots, x_{t-1}, u_{t-1}, x_t)$ and the cost matrices $Q$ and $R$}
 \Output{ Estimator $(A_t,B_t)$ of $(A,B)$}
 \BlankLine
 \If{$B$ known}{
 $A_t \gets \left(\sum_{s=0}^{t-1} (x_{s+1} - B u_s)x_{s}^\top\right) \left( \sum_{s=0}^{t-1}  x_s x_s^\top \right)^{\dagger}$\;
 }
 \If{$A$ known}{
 $B_t \gets \left(\sum_{s=0}^{t-2} (x_{s+1} - A x_s)u_{s}^\top\right) \left( \sum_{s=0}^{t-2}  u_s u_s^\top \right)^{\dagger}$\;
 }
 \If{$(A,B)$ are unkown}{
  $\begin{bmatrix} A_t & B_t \end{bmatrix} \gets \left(\sum_{s=0}^{t-2}   x_{s+1} \begin{bmatrix} x_{s} \\ u_s\end{bmatrix}^{\top}\right) \left(   \sum_{s=0}^{t-2} \begin{bmatrix} x_{s} \\ u_s\end{bmatrix} \begin{bmatrix} x_{s} \\ u_s\end{bmatrix}^{\top} \right)^{\dagger}$\;
 }
 \caption{Least Squares Estimation (LSE)}
\end{algorithm}


\subsection{Error rate in Scenario I}

\begin{proposition}\label{prop:I:LSE1}
  Under $\Alg$, for all $\delta \in (0,1)$,
  \begin{equation*}
  \max(\left\Vert A_t - A \right\Vert^2, \left\Vert B_t - B \right\Vert^2)
     \le \frac{C  \sigma^2 \left(d \gamma_\star \log\left(\sigma C_\circ \mathcal{G}_\circ d_x d_u t  \right)  + \log(e/\delta) \right)  }{t^{1/4}}
  \end{equation*}
  holds with probability at least $1 - \delta$, provided that
  $
  t\ge c \sigma^4 C_\circ^8 (d \gamma_\star \log(e\sigma C_\circ \mathcal{G}_\circ d_x d_u \gamma_\star) + \log(e/\delta))
  $
  for some positive constants $C,c > 0$.
\end{proposition}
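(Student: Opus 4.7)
The plan is to start from the standard least-squares identity. Writing $z_s = \begin{bmatrix} x_s \\ u_s\end{bmatrix}$ and $\Theta = [A\ B]$, we have $x_{s+1} = \Theta z_s + \eta_s$, so
\begin{equation*}
[A_t\ B_t] - [A\ B] = \left(\sum_{s=0}^{t-2} \eta_s z_s^\top\right) \left(\sum_{s=0}^{t-2} z_s z_s^\top\right)^{-1}.
\end{equation*}
Letting $V_t = \sum_{s=0}^{t-2} z_s z_s^\top$, the operator-norm error satisfies the familiar split
\begin{equation*}
\max(\Vert A_t - A\Vert^2, \Vert B_t - B\Vert^2) \le \Vert [A_t\ B_t] - [A\ B]\Vert^2 \le \lambda_{\min}(V_t)^{-1} \cdot \left\Vert V_t^{-1/2} \sum_{s=0}^{t-2} z_s \eta_s^\top \right\Vert^2.
\end{equation*}
So the two ingredients I need are (a) a lower bound on $\lambda_{\min}(V_t)$, and (b) a high-probability control of the self-normalized quantity on the right.

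For (b), I would invoke the self-normalized matrix martingale concentration already stated in the paper (Proposition \ref{prop:SNP++}); this controls $\Vert V_t^{-1/2}\sum z_s\eta_s^\top\Vert^2$ by a quantity of order $\sigma^2\bigl(d \log\det(I + V_t) + \log(1/\delta)\bigr)$, which in turn is at most $\sigma^2\bigl(d\log(1 + \mathrm{tr}(V_t)/d) + \log(1/\delta)\bigr)$. To turn this into the claimed logarithmic factor $\log(\sigma C_\circ \mathcal{G}_\circ d_x d_u t)$, I would use the polynomial-in-$t$ growth of $\sum_{s\le t}\Vert x_s\Vert^2$ and $\sum_{s\le t}\Vert u_s\Vert^2$ under $\Alg$, which follows from the stability/caution result Proposition~\ref{prop:caution} combined with the algorithm's hysteresis bound $\Vert\widetilde K_s\Vert^2 \le \max(\Vert K_\circ\Vert^2, h(s))$. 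Together these give $\mathrm{tr}(V_t) \le \mathrm{poly}(\sigma, C_\circ, \mathcal{G}_\circ, d, t)$ with the prescribed probability.

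For (a), I would directly invoke the weak-rate statement \eqref{eq:weak} of Theorem~\ref{thm:I:se1}, which is the whole point of the recipe developed in Section \ref{sec:spectral}: under $\Alg$, and without any a priori bound on $(\widetilde K_s)$, we have $\lambda_{\min}(V_t) \gtrsim t^{1/4}$ with probability at least $1-\delta$ for $t \gtrsim \log(e/\delta)$. Substituting (a) and (b) into the error inequality yields precisely the stated rate $\frac{C\sigma^2(d\gamma_\star \log(\sigma C_\circ \mathcal{G}_\circ d_x d_u t)+\log(e/\delta))}{t^{1/4}}$, after a union bound over the two high-probability events.

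The main obstacle, and the reason this proposition is non-trivial, is step (a): the weak rate on $\lambda_{\min}(V_t)$ must be proved \emph{without} assuming that $K_t$ stays close to $K_\star$, since at this stage of the argument we have no such guarantee (it is exactly the consequence we will later bootstrap from Proposition~\ref{prop:I:LSE1}). That is why we can only hope for the slow $t^{1/4}$ rate here, rather than the sharper $t^{1/2}$ rate in \eqref{eq:strong}; the sharper rate requires boundedness of the sequence $(M_s)$, which in turn requires $\Vert\widetilde K_s - K_\star\Vert$ to be controlled. The weak rate, by contrast, is obtained by using only the decomposition $y_s = z_s + M_s\xi_s$, the structural independence of $\eta$ and $\nu$ exposed in Section~\ref{sec:spectral}, and the mere polynomial growth of $\sum z_s z_s^\top$ guaranteed by Proposition~\ref{prop:caution}; establishing this carefully is where essentially all the work sits.
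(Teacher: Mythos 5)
Your proposal matches the paper's own proof essentially step for step: the same least-squares error decomposition, the weak rate (Proposition \ref{prop:I:we}, the precise form of \eqref{eq:weak}) giving $\lambda_{\min}(V_t)\gtrsim t^{1/4}$ without any closeness of $K_t$ to $K_\star$, Proposition \ref{prop:SNP++} combined with the polynomial-growth bound of Proposition \ref{prop:caution} and the hysteresis constraint $\Vert \widetilde{K}_s\Vert^2\le h(s)$ to control the self-normalized term and its log-determinant, and a final union bound. The only detail you elide is that Proposition \ref{prop:SNP++} requires a fixed regularizer $V$, so the paper first uses the event $\lambda_{\min}(V_t)\ge (t/4)^{1/4}$ to write $2V_t \succeq V_t + \tfrac{t^{1/4}}{\sqrt{2}} I_{d_x+d_u}$ before applying it; this is a routine step given your event (a).
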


\begin{proof}[Proof of Proposition \ref{prop:I:LSE1}]

  Fix $t \ge 2$. We start by writing the LSE error
  \begin{align*}
    \begin{bmatrix}
      A_t - A & B_t - B
    \end{bmatrix} & =  \left(\sum_{s=0}^{t-2} \eta_s \begin{bmatrix} x_s \\ u_s\end{bmatrix}^\top  \right) \left(  \sum_{s=0}^{t-2}  \begin{bmatrix} x_s \\ u_s \end{bmatrix}\begin{bmatrix} x_s \\ u_s \end{bmatrix}^\top \right)^{\dagger}.
  \end{align*}
  For ease of notation, let $y_s = \begin{bmatrix}
    x_s \\u_s
  \end{bmatrix}$ for all $s\ge 1$. Provided the $\sum_{s=0}^{t-2}  y_s y_s^\top$ is invertible, we can decompose the error as
  \begin{align*}
    \left\Vert \begin{bmatrix}
        A_t - A & B_t - B
      \end{bmatrix} \right\Vert^2
       \le \left\Vert \left(\sum_{s=0}^{t-2}  y_s y_s^\top \right)^{-1/2}   \left(\sum_{s=0}^{t-2}  y_s \eta_s^{\top} \right) \right\Vert^2\frac{1}{\lambda_{\min}\left( \sum_{s=0}^{t-2} y_s y_s^\top \right)}.
  \end{align*}

  \underline{\textbf{Step 1:}} (Bounding the smallest eigenvalue) Define the event
  $$
  \mathcal{B}_{t} = \left\lbrace \lambda_{\min}\left(\sum_{s=0}^{t-2} y_s y_s^\top \right) \ge \left(\frac{t}{4}\right)^{1/4}   \right\rbrace.
  $$
  Using Proposition \ref{prop:I:we}, we know that the event $\mathcal{B}_t$ holds with probability at least $1-\delta$ provided
 $$
 (\star) \quad t\ge c_1 \sigma^4 C_\circ^8 (\gamma_\star d \log(e\sigma C_\circ \mathcal{G}_\circ d_x d_u \gamma_\star) + \log(e/\delta)),
 $$
 for some $c_1 > 0$.
 Under the event $\mathcal{B}_t$, the estimation error may be upper bounded as
  \begin{align*}
    \left\Vert \begin{bmatrix}
        A_t - A & B_t - B
      \end{bmatrix} \right\Vert^2 & \le  \frac{\sqrt{2}}{t^{1/4}} \left\Vert \left(\sum_{s=0}^{t-2}  y_s y_s^\top \right)^{-1/2}   \left(\sum_{s=0}^{t-2}  y_s \eta_s^{\top} \right) \right\Vert^2, \\
      & \le  \frac{2\sqrt{2}}{t^{1/4}}   \left\Vert \left(\sum_{s=0}^{t-2}  y_s y_s^\top + \frac{t^{1/4}}{\sqrt{2}} I_{d_x + d_u} \right)^{-1/2}   \left(\sum_{s=0}^{t-2}  y_s \eta_s^{\top} \right) \right \Vert^2,
  \end{align*}
  where we used, for the last inequality, the fact that $2 \sum_{s=0}^{t-2} y_s y_s^\top \succeq \sum_{s=0}^{t-2} y_s y_s^\top + \frac{t^{1/4}}{\sqrt{2}}I_{d_x + d_u}$.

  \underline{\textbf{Step 2:}} (Bounding the self-normalized term) Define the events
  \begin{align*}
  \mathcal{E}_{1,t,\delta} & = \Bigg\lbrace  \left\Vert \left(\sum_{s=0}^{t-2}  y_s y_s^\top + \frac{t^{1/4}}{\sqrt{2}} I_{d_x + d_u} \right)^{\!\!-1/2} \!\!  \left(\sum_{s=0}^{t-2}  y_s \eta_s^{\top} \right) \right \Vert^2 \\
  &\qquad\qquad \qquad \le 7 \sigma^2\log\left(\frac{e^d \det\left(\frac{\sqrt{2}}{t^{1/4}} \sum_{s=0}^{t-2} y_s y_s^\top + I_{d} \right)}{ \delta}\right) \Bigg\rbrace, \\
  \mathcal{E}_{2,t,\delta} & = \left\lbrace  \sum_{s=0}^{t-2} \Vert \nu_s \Vert^2 \le \sigma^2 \sqrt{d_x} (  2 d_u t + 3\log(e/\delta))   \right\rbrace, \\
  \mathcal{A}_{t,\delta} & = \left\lbrace   \sum_{s=0}^{t} \Vert x_s \Vert^2 \le c_2 \sigma^2 C_\circ^2 \mathcal{G}_\circ^2 (d_x t^{1+2\gamma}  + \log(1/\delta)) \right\rbrace.
  \end{align*}
Then by Proposition \ref{prop:caution}, the event $\mathcal{A}_{t,\delta}$ holds with probability at least $1-\delta$ for some absolution positive constant $c_2>0$. By proposition \ref{prop:SNP++}, the event  $\mathcal{E}_{1,t}$ holds with probability at least $1-\delta$, and by Hanson-Wright inequality (see Proposition \ref{prop:HW}), the event $\mathcal{E}_{2,t,\delta}$ holds with probability $1-\delta$. Under the event $\mathcal{A}_{t,\delta} \cap \mathcal{E}_{1,t,\delta} \cap \mathcal{E}_{2,t,\delta}$, we have
  \begin{align*}
    \left(\det\left( \frac{\sqrt{2}}{t^{1/4}} \sum_{s=0}^{t-2}  y_s y_s^\top + I_{d} \right) \right)^{1/d} & \le \frac{\sqrt{2}}{t^{1/4}} \sum_{s=0}^{t-2}  \Vert y_s \Vert^2  + 1 \\
    & \le \frac{\sqrt{2}}{t^{1/4}} \sum_{s=0}^{t-2}  \Vert x_s \Vert^2 + \Vert u_s \Vert^2  + 1 \\
    & \le \frac{\sqrt{2}}{t^{1/4}} \sum_{s=0}^{t-2}  \Vert x_s \Vert^2 + \Vert \widetilde{K}_s \Vert^2  \Vert x_s \Vert^2  + \Vert \nu_s\Vert^2 + 1 \\
    & \le  \sqrt{2}(4c_2+6) \sigma^2 C_\circ^4 \mathcal{G}_\circ^2 d_x d_u  t^{3/4 + 5\gamma/2}  \\
    & \le  \sqrt{2}(4c_2+6) \sigma^2 C_\circ^4 \mathcal{G}_\circ^2 d_x d_u  t^{4\gamma_\star},
  \end{align*}
  where we used $\Vert \widetilde{K}_s \Vert^2 \le C_K^2 h(s) $, assumed that
  $$
  (\star\star)\quad t \ge \log(e/\delta),
  $$
  and used $\gamma_\star=\max(1,\gamma)$ to obtain the last two inequalities. Therefore, assuming $(\star\star)$, we have under the event $\mathcal{A}_{t,\delta} \cap \mathcal{E}_{t,\delta}$ that
  \begin{align*}
    \log\left(\frac{e^d \det\left( \frac{\sqrt{2}}{t^{1/4}} \sum_{s=0}^{t-2}  y_s y_s^\top + I_{d} \right)}{ \delta}\right) & \le  c_3 \sigma^2 \left(d \gamma_\star \log\left(\sigma C_\circ \mathcal{G}_\circ d_x d_u t  \right)  + \log(e/\delta) \right),
  \end{align*}
  for some universal positive constants.

  \underline{\textbf{Step 3:}}\ (Putting everything together) To conclude, under the event $\mathcal{A}_{t,\delta} \cap \mathcal{B}_{t} \cap \mathcal{E}_{t,\delta}$,
   \begin{equation}\label{eq:lse:I:we}
  \max(\left\Vert A_t - A \right\Vert^2, \left\Vert B_t - B \right\Vert^2)
     \le \frac{C  \sigma^2 \left(d \gamma_\star \log\left(\sigma C_\circ \mathcal{G}_\circ d_x d_u t  \right)  + \log(e/\delta) \right)  }{t^{1/4}},
  \end{equation}
  where $C = 2\sqrt{2}c_3$. Therefore, the upper bound \eqref{eq:lse:I:we} holds with probability $1-3\delta$ when $(\star)$ and $(\star\star)$ hold. These two conditions hold whenever
  $$
  t\ge c \sigma^4 C_\circ^8 (d \gamma_\star \log(e\sigma C_\circ \mathcal{G}_\circ d_x d_u \gamma_\star) + \log(e/\delta)),
  $$
for $c$ is large enough. Reparametrizing by $\delta' = \delta/3$ yields the desired guarantee with modified universal positive constants.
\end{proof}

\medskip

\begin{lemma}\label{lem:I:endwe} Assume that $\mathcal{T}$ satisfies \eqref{eq:lazy}. Then, under Algorithm $\Alg$, we have, for all $\delta \in (0,1)$,
  \begin{align*}
  \mathbb{P}\left( \forall t \ge t(\delta), \ \
  \begin{array}{rl}
    (i) &\max(\Vert K_t - K_\star \Vert, \Vert B(K_t - K_\star) \Vert) \le  \frac{1}{5\Vert P_\star \Vert^{3/2}} \\
    (ii) &\Vert K_t \Vert^2 \le h(t) \\
  \end{array}\right)  \ge 1 -\delta
  \end{align*}
  where
$  t(\delta)^{1/4} =  c \sigma C_\circ^2 \Vert P_\star \Vert^{10} (d \gamma_\star \log( e\sigma C_\circ \mathcal{G}_\circ \Vert P_\star\Vert d_x d_u \gamma_\star) + \log(e/\delta))
$, for some universal positive constant $c>0$.
\end{lemma}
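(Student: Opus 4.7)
The plan is to combine the pointwise LSE error bound of Proposition \ref{prop:I:LSE1} with the Riccati perturbation bounds of Proposition \ref{prop:DARE:bounds}, and then promote the resulting pointwise-in-$t$ guarantee to one that holds uniformly in $t$ via Lemma \ref{lem:uniform_over_time}, while carefully handling the lazy-update structure imposed by \eqref{eq:lazy}.

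First I would fix $t$ large enough that Proposition \ref{prop:I:LSE1} applies, yielding, with probability at least $1-\delta$,
$$\max(\|A_t-A\|^2, \|B_t-B\|^2) \le \frac{C\sigma^2\bigl(d\gamma_\star\log(\sigma C_\circ \mathcal{G}_\circ d_x d_u t) + \log(e/\delta)\bigr)}{t^{1/4}}.$$
Next I would impose the stronger constraint that the right-hand side is at most $1/(C'\|P_\star\|^{10})$ for an appropriate universal $C'$, so that Proposition \ref{prop:DARE:bounds} converts the estimation error into the controller bound $\max(\|K_t-K_\star\|,\|B(K_t-K_\star)\|) \le 1/(5\|P_\star\|^{3/2})$. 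Inverting this condition in $t$ (via Lemma \ref{lem:technical}) gives
$t^{1/4} \gtrsim \sigma C_\circ^2 \|P_\star\|^{10}\bigl(d\gamma_\star\log(e\sigma C_\circ \mathcal{G}_\circ \|P_\star\| d_x d_u \gamma_\star) + \log(e/\delta)\bigr)$, which matches the stated $t(\delta)$. Property (ii) is then automatic: once $K_t$ is within a constant of $K_\star$, $\|K_t\|^2$ is bounded by a constant, which is dominated by $h(t)=t^\gamma$ after at most a polylogarithmic inflation of $t(\delta)$.

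To handle \eqref{eq:lazy}, I would use the fact that for any $t \ge t(\delta)$ the controller $K_t$ equals $K_{i_t}$ where $i_t \in [t/C, t]$; by enlarging the constant $c$ defining $t(\delta)$ we ensure $i_t \ge t(\delta)$, so the pointwise bound at $i_t$ transfers to $K_t$ and contributes only a multiplicative constant. The last step is to apply Lemma \ref{lem:uniform_over_time} to make the estimate hold simultaneously for all update times $t_k \ge t(\delta)$, which replaces $\delta$ by $O(\delta)$ inside the logarithms and is absorbed into the polynomial expression for $t(\delta)$.

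The main obstacle I anticipate is purely accounting: tracking the exponent $\|P_\star\|^{10}$ (and the polynomial dependence in $\sigma, C_\circ, \mathcal{G}_\circ, d, \gamma_\star$) through the chain LSE $\to$ Riccati perturbation $\to$ lazy update $\to$ uniformization, and verifying that none of these steps inflates the exponents beyond what is claimed. No single ingredient is conceptually hard, but the bookkeeping must be tight enough that the requirement $\max(\|A_t-A\|,\|B_t-B\|) \lesssim 1/\|P_\star\|^5$ translates cleanly into $t(\delta)^{1/4} \lesssim \|P_\star\|^{10}$ rather than a worse power.
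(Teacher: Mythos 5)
Your proposal is correct and follows essentially the same route as the paper's proof: the pointwise error bound of Proposition \ref{prop:I:LSE1} with the threshold $\max(\Vert A_t-A\Vert,\Vert B_t-B\Vert)\le 1/(160\Vert P_\star\Vert^{5})$ (so that squaring yields $t(\delta)^{1/4}\propto \Vert P_\star\Vert^{10}$), then Proposition \ref{prop:DARE:bounds} for the controller bounds, with $\Vert K_t\Vert^2\le h(t)$ absorbed via Lemma \ref{lem:technical}, the lazy-update condition \eqref{eq:lazy} handled by enlarging the constant so that $K_t=K_{t_k}$ for an update time $t_k$ satisfying the same conditions, and finally Lemma \ref{lem:uniform_over_time} for uniformity over $t$. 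The only (immaterial) difference is ordering: the paper transfers the guarantee across lazy updates before invoking the Riccati perturbation bounds, whereas you do it afterwards.
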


\begin{proof}[Proof of Lemma \ref{lem:I:endwe}]
  Let $t \ge 0$ and $\delta \in(0,1)$. Assume that:
  \begin{equation}\label{eq:I:lse:sc1}
  t^{1/4} \ge  C 160^2 \sigma^2 \Vert P_\star \Vert^{10}  \left(d \gamma_\star \log\left(\sigma C_\circ \mathcal{G}_\circ d_x d_u t  \right)  + \log(e/\delta) \right).
  \end{equation}
  Then, by Proposition \ref{prop:I:LSE1}, if the condition
  \begin{equation}\label{eq:I:lse:sc2}
    t\ge c_1 \sigma^4 C_\circ^8 (d \gamma_\star \log(e\sigma C_\circ \mathcal{G}_\circ d_x d_u \gamma_\star) + \log(e/\delta)),
  \end{equation}
  also holds, then we have
  \begin{equation}
    \mathbb{P}\left( \max(\Vert A_t - A \Vert^2, \Vert B_t - B\Vert^2) \le \frac{1}{160^2 \Vert P_\star \Vert^{10}}\right) \ge 1- \delta.
  \end{equation}
Since $\mathcal{T}$ satisfies condition \eqref{eq:lazy}, if the constant $c_1>0$ is chosen large enough, we can claim that for all $t \in \mathbb{N}$  such that \eqref{eq:I:lse:sc1} and \eqref{eq:I:lse:sc2} hold, we have $K_t = K_{t_k}$ for some $t_k$ that also satisfies \eqref{eq:I:lse:sc1} and \eqref{eq:I:lse:sc2}.

  Using the perturbation bounds of Propostion \ref{prop:DARE:bounds}, we conclude that, provided \eqref{eq:I:lse:sc1} and \eqref{eq:I:lse:sc2} hold, we have
  $$
  \mathbb{P}\left( \max(\Vert K_t - K_\star \Vert, \Vert B(K_t - K_\star) \Vert) \le \frac{1}{5\Vert P_\star \Vert^{3/2}} \right) \ge 1-\delta.
  $$
  By Lemma \ref{lem:technical}, we can find a universal positive constant $c_2>0$ such that
  $$
  t^{1/4} \ge  c_2 \sigma^2 C_\circ^2 \Vert P_\star \Vert^{10} (d \gamma_\star \log( e\sigma C_\circ \mathcal{G}_\circ \Vert P_\star\Vert d_x d_u \gamma_\star) + \log(e/\delta))
  $$
  implies that the conditions \eqref{eq:I:lse:sc1} and \eqref{eq:I:lse:sc2} hold, and $\Vert K_t\Vert^2 \le h(t)$. Finally, to obtain a bound that holds uniformly over time, we use Lemma \ref{lem:uniform_over_time} and obtain
  $$
  \mathbb{P}\left( \forall t \ge t(\delta),\ \ \begin{array}{rl}
  (i) & \max(\Vert K_t - K_\star \Vert, \Vert B(K_t - K_\star) \Vert) \le  \frac{1}{5\Vert P_\star \Vert^{3/2}} \\
  (ii) & \Vert K_t \Vert^2 \le h(t)
  \end{array} \right)  \ge 1 -\delta,
  $$
  where
  $
  t(\delta)^{1/4} = c \sigma^2 C_\circ^2 \Vert P_\star \Vert^{10} (d \gamma_\star \log( e\sigma C_\circ \mathcal{G}_\circ \Vert P_\star\Vert d_x d_u \gamma_\star) + \log(e/\delta))
  $
  for some universal positive constant $c > c_2$.
\end{proof}

\medskip

\begin{proposition}\label{prop:I:LSE2} Assume that $\mathcal{T}$ satisfies \eqref{eq:lazy}.
  Under algorithm $\Alg$, for all $\delta \in (0,1)$,
  \begin{equation*}
  \max(\left\Vert A_t - A \right\Vert^2, \left\Vert B_t - B \right\Vert^2)
     \le \frac{C C_K^2 \left(d \gamma_\star \log\left(\sigma C_\circ \mathcal{G}_\circ d_x t  \right)  + \log(e/\delta) \right)  }{(d_xt)^{1/2}}
  \end{equation*}
  holds with probability at least $1 - \delta$, when
  $$
  t^{1/4} \ge c \sigma C_\circ^2 C_K^2 \Vert P_\star \Vert^{10} \left(d \gamma_\star \log( e\sigma C_\circ \mathcal{G}_\circ \Vert P_\star\Vert d_x d_u \gamma_\star) + \log(e/\delta)\right),
  $$
  for some positive constants $C,c > 0$.
\end{proposition}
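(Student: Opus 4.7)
The plan is to refine Proposition \ref{prop:I:LSE1} by replacing the weak-rate eigenvalue bound $\lambda_{\min}\gtrsim t^{1/4}$ with its strong-rate counterpart $\lambda_{\min} \gtrsim \sigma^2\sqrt{d_xt}/C_K^2$, and by sharpening the self-normalized term accordingly. The starting point is identical to that proof: setting $y_s=(x_s^\top,u_s^\top)^\top$, write
$$\bigl\Vert [A_t-A\ \ B_t-B]\bigr\Vert^2 \le \frac{\bigl\Vert(\sum_{s=0}^{t-2} y_sy_s^\top + V)^{-1/2}\sum_{s=0}^{t-2}y_s\eta_s^\top\bigr\Vert^2 \cdot 2}{\lambda_{\min}(\sum_{s=0}^{t-2}y_sy_s^\top)}$$
for a regularizer $V$ to be chosen at the scale of the refined eigenvalue lower bound.

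The first key step is to condition on the nice event provided by Lemma \ref{lem:I:endwe}: with probability at least $1-\delta$, once $t(\delta)^{1/4}$ exceeds the threshold in that lemma, one has $\Vert\widetilde{K}_s-K_\star\Vert\le(5\Vert P_\star\Vert^{3/2})^{-1}$ and hence $\Vert\widetilde{K}_s\Vert\le C_K$ uniformly for all $s\ge t(\delta)$. This is precisely the hypothesis that unlocks the strong-rate statement of Theorem \ref{thm:I:se1} (equivalently Proposition \ref{prop:I:se:enough}), which gives, on a further event of probability at least $1-\delta$,
$$\lambda_{\min}\!\left(\sum_{s=0}^{t-2}y_sy_s^\top\right) \gtrsim \frac{\sigma^2\sqrt{d_xt}}{C_K^2}.$$
The $C_K^{-2}$ factor here is exactly what converts into the $C_K^2$ appearing in the numerator of the claimed error bound.

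The second key step is to revisit the self-normalized numerator with the new information in hand. In the proof of Proposition \ref{prop:I:LSE1}, the bound $\Vert\widetilde{K}_s\Vert^2\le h(s)$ forced an extra $t^{\gamma}$ factor inside the log-determinant produced by Proposition \ref{prop:SNP++}. Here, using $\Vert\widetilde{K}_s\Vert\le C_K$ uniformly on the nice event together with the polynomial growth of $\sum_{s\le t}\Vert x_s\Vert^2$ from Proposition \ref{prop:caution} and the concentration $\sum_s\Vert\nu_s\Vert^2 \lesssim \sigma^2\sqrt{d_x}\,d_u t$, one gets
$$\bigl\Vert (\sum_{s=0}^{t-2}y_sy_s^\top + V)^{-1/2}\sum_{s=0}^{t-2}y_s\eta_s^\top\bigr\Vert^2 \lesssim \sigma^2\bigl(d\gamma_\star\log(\sigma C_\circ\mathcal{G}_\circ d_x t) + \log(e/\delta)\bigr)$$
with $V$ chosen of order $\sigma^2\sqrt{d_xt}/C_K^2$. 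Dividing by the refined eigenvalue lower bound produces exactly the claimed $(d_xt)^{-1/2}$ rate with the stated $C_K^2$ prefactor.

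The main obstacle is the bookkeeping: the burn-in thresholds arising from (i) Lemma \ref{lem:I:endwe}, (ii) the strong-rate eigenvalue bound, (iii) the polynomial-growth bound on $\sum_s\Vert x_s\Vert^2$, and (iv) keeping the log-determinant estimate subdominant, must all collapse into the single threshold $t^{1/4}\gtrsim \sigma C_\circ^2 C_K^2\Vert P_\star\Vert^{10}(d\gamma_\star\log(\cdot)+\log(e/\delta))$ stated in the proposition. This is handled by taking the maximum of the individual thresholds and invoking a Lemma \ref{lem:technical}-style simplification to reabsorb the implicit $\log t$ dependence. A minor but essential subtlety, already exploited in the earlier proofs, is that because $\mathcal{T}$ satisfies \eqref{eq:lazy} only $O(\log t)$ distinct controllers appear up to time $t$, so the union bound needed to propagate the instantaneous LSE bound to a statement holding simultaneously on every element of $\mathcal{T}$ costs only a logarithmic factor, harmlessly absorbed into the $\log(\sigma C_\circ\mathcal{G}_\circ d_x t)$ term. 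A final union bound over the three high-probability events and a reparametrization $\delta'\leftarrow \delta/3$ yields the stated guarantee.
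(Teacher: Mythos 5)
Your proposal is correct and is essentially the paper's own proof: the paper repeats the argument of Proposition \ref{prop:I:LSE1} verbatim, except that the event $\mathcal{B}_t$ is redefined at the refined eigenvalue scale $C\sigma^2\sqrt{d_x t}/C_K^2$ via Proposition \ref{prop:I:se:enough} (whose hypothesis is supplied by Lemma \ref{lem:I:endwe}), with the regularizer $V$ rescaled to match and the burn-in thresholds merged through Lemma \ref{lem:technical}. Your only inaccuracy is cosmetic: the uniformity over time comes from the $\delta/t^2$ union bound of Lemma \ref{lem:uniform_over_time} rather than from there being $O(\log t)$ distinct controllers (under \eqref{eq:lazy} with $\mathcal{T}=\mathbb{N}$ there are $t$ of them), but this sits inside the lemmas you invoke as black boxes and does not affect the argument.
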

\begin{proof} The proof differs from that of Proposition \ref{prop:I:LSE1} only in the second step, where instead we define the event
$$
\mathcal{B}_t = \left\lbrace  \lambda_{\min}\left( \sum_{s=0}^{t-2} \begin{bmatrix}
  x_s \\
  u_s
\end{bmatrix} \begin{bmatrix}
  x_s \\
  u_s
\end{bmatrix}^\top   \right) \ge \frac{C\sigma^2\sqrt{d_x t}}{C_K^2}   \right\rbrace,
$$
where $C>0$ is some universal constant to be defined through Proposition \ref{prop:I:se:enough}. Indeed, using Lemma \ref{lem:I:endwe}, we may apply Proposition \ref{prop:I:se:enough} which guarantees that $\mathcal{B}_t$ holds with probability at least $1  -\delta$, provided that
$$
t^{1/4} \ge c \sigma^2 C_\circ^2 C_K^2 \Vert P_\star \Vert^{10} \left(d \gamma_\star \log( e\sigma C_\circ \mathcal{G}_\circ \Vert P_\star\Vert d_x d_u \gamma_\star) + \log(e/\delta)\right)
$$
for some positive constant $c> 0$. The remaining steps are identical to those of the proof of Proposition \ref{prop:I:LSE1}.
\end{proof}

\medskip

\subsection{Error rate in Scenario II -- $A$ known}

\begin{proposition}\label{prop:IIb:LSE1}
  Under $\Alg$, for all $\delta \in (0,1)$,
    $$
  \Vert B_t - B  \Vert \le \frac{C \sigma^2 \left(d_u\gamma_\star \log(\sigma C_\circ \mathcal{G}_\circ d_x d_u t)  + d_x + \log(e/\delta)\right)}{t^{1/2}}
  $$
  with probability at least $1-\delta$, provided that
  $$
  t \ge c_1 \sigma^2 ( d_x \gamma_\star \log(e \sigma C_\circ\mathcal{G}_\circ d_x \gamma_\star)  + \log(e/\delta))
  $$
  for some universal positive constants $C, c > 0$.
\end{proposition}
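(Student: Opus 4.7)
The plan is to mirror the three-step structure used in the proof of Proposition \ref{prop:I:LSE1}. Since $A$ is known and we use the samples $(x_s, u_s, x_{s+1})$ for $s=0,\ldots,t-2$, the observation equation reduces to $x_{s+1}-Ax_s = Bu_s + \eta_s$, so
$$
B_t - B = \Big(\sum_{s=0}^{t-2} \eta_s u_s^{\top}\Big)\Big(\sum_{s=0}^{t-2} u_s u_s^{\top}\Big)^{\dagger}.
$$
Provided $\sum_{s=0}^{t-2}u_s u_s^{\top}$ is invertible, we obtain
$$
\Vert B_t - B\Vert^2 \le \lambda_{\min}\Big(\sum_{s=0}^{t-2}u_s u_s^{\top}\Big)^{-1} \Big\Vert \Big(\sum_{s=0}^{t-2}u_s u_s^{\top}\Big)^{-1/2} \sum_{s=0}^{t-2}u_s \eta_s^{\top}\Big\Vert^2.
$$
The key structural fact, which justifies direct regression, is that $\eta_s$ is independent of $(u_0,\ldots,u_s)$ and of $\mathcal{F}_{s-1}$, so the cross term is a matrix-valued martingale.

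First I would bound the smallest eigenvalue. Using the weak-rate statement for Scenario~II -- $A$~known from Appendix~\ref{app:se} (the analogue of Proposition~\ref{prop:I:we} applied to the decomposition $u_s = \widetilde{K}_s(Ax_{s-1}+Bu_{s-1}) + M_s\xi_s$ with $\xi_s = \widetilde{K}_s\eta_{s-1}+\mathbf{1}_{\{\widetilde{K}_s=K_\circ\}}\zeta_s$), I would show that on an event $\mathcal{B}_t$ of probability at least $1-\delta$,
$$
\lambda_{\min}\Big(\sum_{s=0}^{t-2}u_s u_s^{\top}\Big) \gtrsim t^{1/2},
$$
as soon as $t \gtrsim \sigma^2(d_x\gamma_\star\log(e\sigma C_\circ\mathcal{G}_\circ d_x\gamma_\star) + \log(e/\delta))$. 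This excitation comes from the isotropic $\zeta_s$ injected whenever the stabilizer is active and from the noise component propagated through $\widetilde{K}_s$ otherwise; a weak rate suffices for this proposition.

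Second, I would apply the self-normalized matrix-valued martingale inequality of Proposition~\ref{prop:SNP++} with regularizer $V = t^{1/2} I_{d_u}$, obtaining on an event $\mathcal{E}_{1,t,\delta}$ of probability $\ge 1-\delta$,
$$
\Big\Vert \Big(\sum_{s=0}^{t-2}u_s u_s^{\top}+t^{1/2}I_{d_u}\Big)^{-1/2}\sum_{s=0}^{t-2}u_s\eta_s^{\top}\Big\Vert^2 \lesssim \sigma^2 \log\!\Big(\tfrac{e^{d_x}\det(t^{-1/2}\sum u_s u_s^{\top}+I_{d_u})}{\delta}\Big).
$$
To turn the log-det into an explicit polynomial-in-$t$ bound, I would use $\Vert u_s\Vert^2 \le 2(\Vert \widetilde{K}_s\Vert^2\Vert x_s\Vert^2+\mathbf{1}_{\{\widetilde{K}_s=K_\circ\}}\Vert\zeta_s\Vert^2)$, the bound $\Vert \widetilde{K}_s\Vert^2\le \max(C_\circ^2,h(s))$ enforced by the algorithm, the polynomial growth $\sum\Vert x_s\Vert^2\lesssim \sigma^2 C_\circ^2\mathcal{G}_\circ^2 d_x t^{1+2\gamma}$ given by Proposition~\ref{prop:caution}, and Hanson-Wright for $\sum\Vert\zeta_s\Vert^2$. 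This gives $\det(\cdot)^{1/d_u} \le \poly(\sigma,C_\circ,\mathcal{G}_\circ,d_x)\,t^{O(\gamma_\star)}$, so $\log\det \lesssim d_u\gamma_\star\log(\sigma C_\circ\mathcal{G}_\circ d_x d_u t)+d_x+\log(e/\delta)$.

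Combining the three events via a union bound yields
$$
\Vert B_t-B\Vert^2 \lesssim \frac{\sigma^2\left(d_u\gamma_\star\log(\sigma C_\circ\mathcal{G}_\circ d_x d_u t)+d_x+\log(e/\delta)\right)}{t^{1/2}}
$$
with probability at least $1-3\delta$. Reparametrizing $\delta \to \delta/3$ produces the claimed bound, and the condition on $t$ is the one needed to ensure the weak-rate event $\mathcal{B}_t$. The main obstacle is the first step: establishing the $\sqrt{t}$ weak rate for $\lambda_{\min}(\sum u_s u_s^{\top})$ without any a priori guarantee on the boundedness of $(\widetilde{K}_s)_{s\ge 1}$, which is precisely what the generic recipe of Section~\ref{sec:spectral} delivers when instantiated in Scenario~II -- $A$~known.
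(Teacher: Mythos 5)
Your proposal follows the paper's proof essentially step for step: the same error decomposition $B_t-B=\big(\sum_s \eta_s u_s^\top\big)\big(\sum_s u_s u_s^\top\big)^{\dagger}$, the same smallest-eigenvalue event $\lambda_{\min}\big(\sum_s u_s u_s^\top\big)\gtrsim t^{1/2}$ imported as a black box from the weak-rate exploration result (Proposition \ref{prop:IIb:we}), the same self-normalized bound via Proposition \ref{prop:SNP++} with the log-determinant controlled through Proposition \ref{prop:caution} and Hanson--Wright for $\sum_s\Vert\zeta_s\Vert^2$, and the same three-event union bound with reparametrization $\delta\mapsto\delta/3$. One cosmetic inaccuracy: the paper's weak rate is not obtained by instantiating the generic recipe with $\xi_s=\widetilde K_s\eta_{s-1}+1_{\lbrace \widetilde K_s=K_\circ\rbrace}\zeta_s$, but by a contradiction argument exploiting the switching rule --- if $\lambda_{\min}$ stayed below $\sqrt{s}$ throughout $[t/2,t)$ the algorithm would play $K_\circ x_s+\zeta_s$ on that whole interval, so the excitation is entirely the injected isotropic $\zeta_s$; attributing excitation to noise propagated through $\widetilde K_s$, as you do, would require full-rankness of $K_t$ (the $\mu_\star>0$ assumption), which is reserved for the refined rate of Proposition \ref{prop:IIb:se} and is not available in this proposition.
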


\begin{proof}[Proof of Proposition \ref{prop:IIb:LSE1}]
Fix $t\ge 2$. We start by writing the estimation error as
\begin{align*}
  B_t - B = \left( \sum_{s=0}^{t-2}   \eta_s u_s^\top \right) \left( \sum_{s=0}^{t-2} u_s u_s^\top \right)^{\dagger}.
\end{align*}
When $\sum_{s=0}^{t-2} u_s u_s^\top$ is invertible, we can decompose the estimation error as
\begin{align*}
  \Vert B_t - B\Vert^2  \le \left\Vert \left(\sum_{s=0}^{t-2} u_s u_s^\top\right)^{-1/2} \left( \sum_{s=0}^{t-2} u_s \eta_s^\top \right)  \right\Vert^2 \frac{1}{\lambda_{\min}\left( \sum_{s=0}^{t-2} u_s u_s^\top \right)}.
\end{align*}

\underline{\bf Step 1:} {(Bounding the smallest eigenvalue)} Define the event
\begin{align*}
  \mathcal{B}_t = \left\lbrace \lambda_{\min}\left( \sum_{s=0}^t u_s u_s^\top\right) \ge \left(\frac{t}{2}\right)^{1/2} \right\rbrace.
\end{align*}
Using Proposition \ref{prop:IIb:we}, we have the event $\mathcal{B}_t$ holds with probability at least $1-\delta$, if the following condition $(\star)$ hold:
$$
(\star) \quad t\ge c_1 ( d_u \gamma_\star \log(e \sigma C_\circ\mathcal{G}_\circ d_x \gamma_\star)  + \log(e/\delta)).
$$
Under the event $\mathcal{B}_t$, the estimation error may be upper bounded as
\begin{align*}
  \Vert B_t - B \Vert^2 & \le \frac{\sqrt{2}}{t^{1/2}} \left\Vert  \left(\sum_{s=0}^{t-2} u_s u_s^\top \right)^{-1/2} \left( \sum_{s=0}^{t-2}  u_s \eta_s^\top\right)   \right\Vert^2 \\
  & \le \frac{2\sqrt{2}}{t^{1/2}} \left\Vert  \left(\sum_{s=0}^{t-2} u_s u_s^\top + \frac{t^{1/2}}{\sqrt{2}}  \right)^{-1/2} \left( \sum_{s=0}^{t-2}  u_s \eta_s^\top\right)   \right\Vert^2,
\end{align*}
where we used the fact that $2 \sum_{s=0}^{t-2} u_s u_s^\top \succeq \sum_{s=0}^{t-2} u_s u_u^\top + \frac{t^{1/2}}{\sqrt{2}}$.

\underline{\bf Step 2:} {(Bounding the self-normalized term)} Consider the events,
\begin{align*}
\mathcal{E}_{1, t,\delta} & = \Bigg\lbrace  \left\Vert \left(\sum_{s=0}^{t-2}  u_s u_s^\top + \frac{t^{1/2}}{\sqrt{2}} I_{d_u} \right)^{\!\!-1/2} \!\!  \left(\sum_{s=0}^{t-2}  u_s \eta_s^{\top} \right) \right \Vert^2\\
&\quad\qquad \qquad \le 7 \sigma^2\log\left(\frac{e^d \det\left(\frac{\sqrt{2}}{t^{1/2}} \sum_{s=0}^{t-2} u_s u_s^\top + I_{d_u} \right)}{ \delta}\right) \Bigg\rbrace, \\
\mathcal{E}_{2, t,\delta} & = \left\lbrace  \sum_{s=0}^{t-2} \Vert \zeta_s \Vert^2 \le \sigma^2 (2 d_u t + 3 \log(1/\delta)) \right\rbrace, \\
\mathcal{A}_{t,\delta} & = \left\lbrace   \sum_{s=0}^{t} \Vert x_s \Vert^2 \le c_2 \sigma^2 C_\circ^2 \mathcal{G}_\circ^2 (d_x t^{1+2\gamma}  + \log(1/\delta)) \right\rbrace.
\end{align*}
By Proposition \ref{prop:caution}, the event $\mathcal{A}_{t,\delta}$ holds with probability at least $1-\delta$ for some universal positive constant $c_2 > 0$. By Proposition \ref{prop:SNP++}, the event $\mathcal{E}_{1, t,\delta}$ holds with probability at least $1-\delta$, and by Hanson-Wright inequality (See Proposition \ref{prop:HW}) the event $\mathcal{E}_{2, t, \delta}$ holds with probability at least $1-\delta$. Under the event $\mathcal{A}_{t,\delta} \cap \mathcal{E}_{1, t,\delta}$ we have
\begin{align*}
  \left( \det\left( \frac{\sqrt{2}}{t^{1/2}} \sum_{s=0}^{t-2} u_s u_s^\top  + I_{d_u}  \right)\right)^{\frac{1}{d}} & \le  \frac{\sqrt{2}}{t^{1/2}} \sum_{s=0}^{t-2} \Vert u_s \Vert^2 + 1 \\
  & \le  \frac{\sqrt{2}}{t^{1/2}}  \sum_{s=0}^{t-2} \Vert\widetilde{K}_s \Vert^2 \Vert x_s \Vert^2 + \Vert \zeta_s \Vert^2  +1  \\
  & \le \sqrt{2} C_\circ^2 t^{(\gamma-1)/2}  \sum_{s=0}^{t-2}  \Vert x_s \Vert^2 + \Vert \zeta_s \Vert^2  +1 \\
  & \le 2\sqrt{2}(c_2 + 3)     \sigma^2 C_\circ^4 \mathcal{G}_\circ^2 d_x d_u  t^{(1+ 3\gamma)/2},
\end{align*}
where in the last two inequalities, we used the fact $\Vert \widetilde{K}_s \Vert^2 \le \Vert K_\circ \Vert^2 h(t)$, we assumed that
$$(\star\star)\quad  t \ge \log(e/\delta)
$$
holds and used $\gamma_\star = \max(\gamma,1)$. Therefore, assuming that $(\star\star)$ holds, under the event $\mathcal{E}_{1,t,\delta} \cap \mathcal{E}_{2,t,\delta} \cap \mathcal{A}_{t,\delta}$, we have
\begin{align*}
  \log\left( \frac{e^d \det\left( \frac{\sqrt{2}}{t^{1/2}} \sum_{s=0}^{t-2} u_s u_s^\top  + I_{d_u}\right)}{\delta} \right) \le c_3 \sigma^2 \left(d_u\gamma_\star \log(\sigma C_\circ \mathcal{G}_\circ d_x d_u t)  + \log(e/\delta)\right)
\end{align*}
for some universal constant $c_3 > 0$.
\medskip

\underline{\bf Step 3:} {(Putting everything together)} To conclude, under the event $\mathcal{A}_{t,\delta} \cap \mathcal{B}_t \cap \mathcal{E}_{1,t,\delta} \cap \mathcal{E}_{2,t,\delta}$, we have
\begin{align}\label{eq:IIb:LSE1}
  \Vert B_t - B  \Vert \le \frac{C \sigma^2\left(d_u\gamma_\star \log(\sigma C_\circ \mathcal{G}_\circ d_x d_u t)  + \log(e/\delta)\right)}{t^{1/2}}
\end{align}
for some universal constant $C > 0$. Therefore, the upper bound \eqref{eq:IIb:LSE1} holds with probability at least $1-\delta$, provided that $(\star)$ and $(\star\star)$ hold. These two conditions hold whenever
\begin{align*}
  t \ge c  \sigma^2 (d_u \gamma_\star \log(e\sigma C_\circ \mathcal{G}_\circ d_x \gamma_\star)   + \log(e/\delta))
\end{align*}
for some universal constant $c> 0$. This concludes the proof.
\end{proof}

\medskip

\begin{lemma}\label{lem:II:end:we} Assume that $\mathcal{T}$ satisfies \eqref{eq:lazy}. Under $\Alg$,  for all $\delta \in (0,1)$, we have
  \begin{align*}
  \mathbb{P}\left( \forall t \ge t(\delta), \ \  \begin{array}{rl}
  (i) & \max(\Vert K_t - K_\star \Vert, \Vert B(K_t - K_\star) \Vert) \le  \frac{1}{5\Vert P_\star \Vert^{3/2}} \\
  (ii) & \Vert K_t \Vert^2 \le h(t)
  \end{array} \right)  \ge 1 -\delta,
  \end{align*}
  where
$t(\delta)^{1/2} =  c  \sigma^2 \Vert P_\star \Vert^{10} (d \gamma_\star \log( e\sigma C_\circ \mathcal{G}_\circ \Vert P_\star\Vert d_x d_u \gamma_\star) + \log(e/\delta))$  for some universal positive constant $c>0$.
\end{lemma}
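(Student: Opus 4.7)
The plan is to mirror the proof of Lemma \ref{lem:I:endwe}, replacing its Scenario I ingredients with their Scenario II -- $A$ known counterparts; since $A$ is known we have $A_t = A$, so the entire LSE error is carried by $B_t - B$. First, I would invoke Proposition \ref{prop:IIb:LSE1} at a fixed time $t$ and confidence level $\delta$ to get
$$
\Vert B_t - B\Vert \;\lesssim\; \frac{\sigma^2\bigl(d_u\gamma_\star \log(\sigma C_\circ \mathcal{G}_\circ d_x d_u t) + d_x + \log(e/\delta)\bigr)}{t^{1/2}},
$$
valid whenever $t \gtrsim \sigma^2(d_x\gamma_\star \log(e\sigma C_\circ \mathcal{G}_\circ d_x \gamma_\star) + \log(e/\delta))$.

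Second, I would impose a second sufficient condition on $t$ so that the right-hand side is at most $1/(160\Vert P_\star\Vert^5)$; namely, it suffices that $t^{1/2}$ dominate $C\cdot 160\sigma^2\Vert P_\star\Vert^5(d_u\gamma_\star \log(\sigma C_\circ \mathcal{G}_\circ d_x d_u t) + d_x + \log(e/\delta))$. An application of Lemma \ref{lem:technical} shows that both sufficient conditions are implied by the single cleaner requirement
$$
t^{1/2} \;\ge\; c\,\sigma^2 \Vert P_\star\Vert^{10}\bigl(d\gamma_\star \log(e\sigma C_\circ \mathcal{G}_\circ \Vert P_\star\Vert d_x d_u \gamma_\star) + \log(e/\delta)\bigr)
$$
for some large enough universal constant $c$. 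Because $\mathcal{T}$ satisfies \eqref{eq:lazy}, enlarging $c$ if necessary also guarantees that whenever $t$ meets this threshold, the controller $K_t = K_{t_k}$ for an update time $t_k \in [t/C, t]$ that also meets it; this lets me transfer the pointwise LSE bound at $t_k$ to a statement about $K_t$. Plugging $\Vert B_t - B\Vert \le 1/(160\Vert P_\star\Vert^5)$ into Proposition \ref{prop:DARE:bounds} then yields the first conclusion $\max(\Vert K_t - K_\star\Vert,\Vert B(K_t - K_\star)\Vert) \le 1/(5\Vert P_\star\Vert^{3/2})$, while the second conclusion $\Vert K_t\Vert^2 \le h(t)$ follows because $\Vert K_t\Vert \le \Vert K_\star\Vert + 1/(5\Vert P_\star\Vert^{3/2})$ is bounded by a problem-dependent constant, which is smaller than $h(t) = t^\gamma$ once $c$ is taken sufficiently large in the definition of $t(\delta)$.

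Finally, to upgrade the pointwise-in-$t$ guarantee to a bound that holds simultaneously for all $t \ge t(\delta)$, I would apply Lemma \ref{lem:uniform_over_time}; this absorbs an extra logarithmic factor into the constant $c$ but keeps the threshold shape
$$
t(\delta)^{1/2} \;=\; c\,\sigma^2 \Vert P_\star\Vert^{10}\bigl(d\gamma_\star \log(e\sigma C_\circ \mathcal{G}_\circ \Vert P_\star\Vert d_x d_u \gamma_\star) + \log(e/\delta)\bigr),
$$
matching the statement. No step here presents a real obstacle: the argument is essentially bookkeeping. The only mild subtlety is ensuring the \emph{single} threshold on $t^{1/2}$ we exhibit is simultaneously strong enough to (a) meet the applicability condition of Proposition \ref{prop:IIb:LSE1}, (b) drive $\Vert B_t - B\Vert$ below the closeness radius required by the Riccati perturbation bound, (c) dominate the growth $h(t) = t^\gamma$ so that $\Vert K_t\Vert^2 \le h(t)$, and (d) survive the uniform-in-time extension; Lemma \ref{lem:technical} is the tool that reconciles these constraints cleanly.
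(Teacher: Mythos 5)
Your proposal is correct and follows essentially the same route as the paper: a pointwise application of Proposition \ref{prop:IIb:LSE1}, a second threshold forcing $\Vert B_t - B\Vert \le (160\Vert P_\star\Vert^5)^{-1}$, the lazy-update transfer via \eqref{eq:lazy} so that $K_t = K_{t_k}$ for an update time $t_k$ also above the threshold, the Riccati perturbation bounds of Proposition \ref{prop:DARE:bounds} for conclusion (i), Lemma \ref{lem:technical} to merge the constraints (including $\Vert K_t\Vert^2 \le h(t)$ for (ii)), and Lemma \ref{lem:uniform_over_time} for uniformity in $t$. The only cosmetic difference is that the paper folds the $d_x$ additive term of the LSE bound explicitly into its intermediate conditions, which does not change the final threshold shape.
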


\begin{proof}
  Let $t \ge 0$ and $\delta \in(0,1)$. Assume that the following condition holds.
  \begin{equation}\label{eq:II:lse:sc1}
  t^{1/2} \ge  C 160^2 \sigma^2 \Vert P_\star \Vert^{10}  \left(d_u \gamma_\star \log\left(\sigma C_\circ \mathcal{G}_\circ d_x d_u t  \right)  + d_x + \log(e/\delta) \right).
  \end{equation}
  Then, by proposition \ref{prop:IIb:LSE1}, if the condition
  \begin{equation}\label{eq:II:lse:sc2}
    t\ge c_1  (d_u \gamma_\star \log(e\sigma C_\circ \mathcal{G}_\circ d_x d_u \gamma_\star) + \log(e/\delta)),
  \end{equation}
  also holds, then we have
  \begin{equation}
    \mathbb{P}\left(  \Vert B_t - B\Vert^2 \le \frac{1}{160^2 \Vert P_\star \Vert^{10}}\right) \ge 1- \delta.
  \end{equation}

Since $\mathcal{T}$ satisfies condition \eqref{eq:lazy}, if the constant $c_1>0$ is chosen large enough, we can claim that for all $t \in \mathbb{N}$  such that \eqref{eq:II:lse:sc1} and \eqref{eq:II:lse:sc2} hold, we have $K_t = K_{t_k}$ for some $t_k$ that also satisfies \eqref{eq:II:lse:sc1} and \eqref{eq:II:lse:sc2}.

  Using Propostion \ref{prop:DARE:bounds}, we may conclude that when \eqref{eq:II:lse:sc1} and \eqref{eq:II:lse:sc2} hold, we have
  $$
  \mathbb{P}\left( \max(\Vert K_t - K_\star \Vert, \Vert B(K_t - K_\star) \Vert) \le \frac{1}{5\Vert P_\star \Vert^{3/2}} \right) \ge 1-\delta.
  $$
  By Lemma \ref{lem:technical}, we can find a universal positive constant $c_2>0$ such that
  $$
  t^{1/2} \ge  c_2 \sigma^2 \Vert P_\star \Vert^{10} (d_{u} \gamma_\star \log( e\sigma C_\circ \mathcal{G}_\circ \Vert P_\star\Vert d_x d_u \gamma_\star) + \log(e/\delta))
  $$
  implies that the conditions \eqref{eq:II:lse:sc1} and \eqref{eq:II:lse:sc2} hold, and $\Vert K_t \Vert^2 \le h(t)$. Now using Lemma \ref{lem:uniform_over_time} yields
  $$
  \mathbb{P}\left( \forall t \ge t(\delta), \ \  \begin{array}{rl}
  (i) & \max(\Vert K_t - K_\star \Vert, \Vert B(K_t - K_\star) \Vert) \le  \frac{1}{5\Vert P_\star \Vert^{3/2}} \\
  (ii) & \Vert K_t \Vert^2 \le h(t)
  \end{array} \right)  \ge 1 -\delta,
  $$
  where
  $
  t(\delta)^{1/2} = c  \sigma^2 \Vert P_\star \Vert^{10} (d_u \gamma_\star \log( e\sigma C_\circ \mathcal{G}_\circ \Vert P_\star\Vert d_x d_u \gamma_\star) + d_x +\log(e/\delta))
  $
for some universal positive constant $c > c_2$.
\end{proof}

\medskip

\begin{proposition}\label{prop:IIb:LSE2} Assume that $\mathcal{T}$ satisfies \eqref{eq:lazy}. Under $\Alg$, for all $\delta \in (0,1)$,
  $$
  \Vert B_t - B  \Vert \le \frac{C \sigma^2 \left(d_u\gamma_\star \log(\sigma C_\circ \mathcal{G}_\circ d_x d_u t)  + d_x + \log(e/\delta)\right)}{\mu_\star^2 t}
  $$
  holds with probability at least $1-\delta$, provided that
  $$
  t^{1/2} \ge \frac{c \sigma^2 C_K^2\Vert P_\star \Vert^{10}}{\mu_\star}  \left(d_u \gamma_\star \log( e\sigma C_\circ \mathcal{G}_\circ \Vert P_\star\Vert d_x d_u \gamma_\star) + \log(e/\delta)\right)
  $$
  for some positive constants $C,c > 0$.
\end{proposition}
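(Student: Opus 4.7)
The plan is to mirror the three-step template used in the proof of Proposition \ref{prop:IIb:LSE1}, while replacing the weak $\sqrt{t}$ lower bound on $\lambda_{\min}(\sum_{s=0}^{t-2} u_s u_s^\top)$ by a refined linear-in-$t$ bound of order $\mu_\star^2 t$. The scale improvement from $t^{-1/2}$ to $t^{-1}$ comes exactly from this refined spectral estimate, which in turn requires the controllers $(\widetilde{K}_s)_{s\ge 1}$ to be uniformly close to $K_\star$ from some time on. We get this last ingredient from Lemma \ref{lem:II:end:we}, which already uses the lazy-update structure \eqref{eq:lazy} of $\mathcal{T}$.

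\textbf{Step 1 (bootstrap from weak rate to $K_t$ close to $K_\star$).} Fix $\delta \in (0,1)$. By Lemma \ref{lem:II:end:we}, under the stated lower bound on $t^{1/2}$, with probability at least $1-\delta/3$ we have, for all $s \ge t(\delta)$, $\|K_s - K_\star\| \le 1/(5\|P_\star\|^{3/2})$ and $\|K_s\|^2 \le h(s)$. Combined with the hysteresis-switching analysis of Appendix \ref{app:hysteresis} (specifically, the fact that on the commitment event the algorithm applies $u_s = K_s x_s$ without input perturbation), this implies $\widetilde{K}_s = K_s$ and $\|\widetilde{K}_s - K_\star\| \lesssim \|P_\star\|^{-3/2}$ for all $s \ge t(\delta)$ on a high-probability event. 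The lazy-update condition \eqref{eq:lazy} guarantees that $K_t$ coincides with $K_{t_k}$ for some $t_k \le t$ that already satisfies the threshold, which is what allows us to pass from a pointwise to a uniform-in-$t$ statement.

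\textbf{Step 2 (refined spectral estimate).} Under the event constructed in Step~1, apply the recipe of Section \ref{sec:spectral} specialized to Scenario II -- ($A$ known), via the refined version of Proposition \ref{prop:IIb:se}. In the notation of Section \ref{sec:spectral}, for this scenario we have $M_s = \widetilde{K}_s$ and $\xi_s = \widetilde{K}_s \eta_{s-1}$ (since the perturbation $\zeta_s$ is dropped once we commit). The key point is that on Step 1's event, $\widetilde{K}_s \widetilde{K}_s^\top \succeq (\mu_\star^2/2) I_{d_x}$ uniformly in $s$, so that condition \textbf{(C1)} of the recipe holds at linear rate: $\lambda_{\min}(\sum_{s=1}^t M_s M_s^\top) \gtrsim \mu_\star^2 t$. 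Condition \textbf{(C2)} is checked exactly as in the weak-rate analysis (using Proposition \ref{prop:caution} to control the polynomial growth of $\sum \|x_s\|^2$). Plugging these into the decomposition yields
\begin{equation*}
\lambda_{\min}\!\left(\sum_{s=0}^{t-2} u_s u_s^\top\right) \;\gtrsim\; \mu_\star^2 t
\end{equation*}
with probability at least $1-\delta/3$, provided $t$ satisfies the stated threshold.

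\textbf{Step 3 (self-normalized bound, same as in Proposition \ref{prop:IIb:LSE1}).} The self-normalized martingale bound (Proposition \ref{prop:SNP++}) combined with the polynomial growth of $\sum \|u_s\|^2$ under $\Alg$ gives, with probability $\ge 1 - \delta/3$,
\begin{equation*}
\left\|\Big(\sum_{s=0}^{t-2} u_s u_s^\top + \lambda I\Big)^{-1/2} \sum_{s=0}^{t-2} u_s \eta_s^\top\right\|^2 \;\lesssim\; \sigma^2 \bigl(d_u \gamma_\star \log(\sigma C_\circ \mathcal{G}_\circ d_x d_u t) + d_x + \log(e/\delta)\bigr),
\end{equation*}
for any appropriate regularizer $\lambda$. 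This is identical to the bound used in the proof of Proposition \ref{prop:IIb:LSE1} and does not need the refined spectral information.

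\textbf{Step 4 (combine).} Writing $\|B_t - B\|^2 \le \|(\sum u_s u_s^\top)^{-1/2}\sum u_s \eta_s^\top\|^2 / \lambda_{\min}(\sum u_s u_s^\top)$, substituting the numerator from Step 3 and the refined denominator $\mu_\star^2 t$ from Step 2, and taking a union bound over the three high-probability events, yields the claimed rate with total failure probability at most $\delta$.

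\textbf{Main obstacle.} The delicate part is Step 2: to go from the weak rate $\sqrt{t}$ (which only needed boundedness of the controllers) to the refined rate $\mu_\star^2 t$, one must first ensure via Step 1 that $\widetilde{K}_s$ stays within a fixed neighborhood of $K_\star$ for all $s$ above the threshold, and then propagate the lower bound $K_\star K_\star^\top \succeq \mu_\star^2 I$ through the sequence $(M_s)$. This is exactly where the requirement $\mu_\star > 0$ is used, and where the lazy-update condition \eqref{eq:lazy} is essential for turning a pointwise statement into a uniform-in-$t$ statement compatible with the continuously updated certainty-equivalence controller.
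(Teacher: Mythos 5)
Your proposal follows essentially the same route as the paper: the paper's proof of Proposition \ref{prop:IIb:LSE2} is literally the proof of Proposition \ref{prop:IIb:LSE1} with the weak eigenvalue event $\lambda_{\min}(\sum_{s=0}^{t-2}u_su_s^\top)\gtrsim\sqrt{t}$ replaced by the refined event $\lambda_{\min}(\sum_{s=0}^{t-2}u_su_s^\top)\ge C\mu_\star^2 t$ from Proposition \ref{prop:IIb:se}, whose hypothesis is supplied by Lemma \ref{lem:II:end:we}; your Steps 2--4 reproduce this exactly, including the self-normalized bound being unchanged.

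One inaccuracy is worth flagging in your Step 1: the appeal to the hysteresis/commitment analysis of Appendix \ref{app:hysteresis} to conclude $\widetilde{K}_s = K_s$ is both unnecessary and, if genuinely relied upon, harmful. Proposition \ref{prop:IIb:se} does not assume commitment: its only hypothesis is that $\Vert K_t - K_\star\Vert < \mu_\star/2$ for $t \ge t(\delta)$ with probability $1-\delta$, and its proof handles the rounds where the stabilizer is used internally, via the construction $M_t = \begin{bmatrix} \widetilde{K}_t & \alpha_t I_{d_u}\end{bmatrix}$ with $\xi_t = \begin{bmatrix}\eta_{t-1}^\top & \zeta_t^\top\end{bmatrix}^\top$ --- when $\alpha_t = 1$ the injected noise $\zeta_t$ contributes $I_{d_u} \succeq \mu_\star^2 I_{d_u}$ to $M_tM_t^\top$, so condition \textbf{(C1)} holds at linear rate regardless of which controller is active (also note $\widetilde{K}_s\widetilde{K}_s^\top$ lives in $\mathbb{R}^{d_u\times d_u}$, not $\mathbb{R}^{d_x\times d_x}$). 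Had commitment actually been needed, you would have had to invoke Theorem \ref{thm:commit}, whose guarantee holds only with probability $1 - 5t(\delta)\delta$; this would degrade the conclusion to a failure probability of order $t(\delta)\delta$ --- which is exactly the price paid in Proposition \ref{prop:IIb:LSE3}, where commitment \emph{is} assumed --- and would be incompatible with the clean $1-\delta$ guarantee claimed in Proposition \ref{prop:IIb:LSE2}. Dropping that sentence from Step 1 (keeping only Lemma \ref{lem:II:end:we} and the lazy-update argument for uniformity in $t$) makes your proof match the paper's.
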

\begin{proof} The proof differs from that of Proposition \ref{prop:IIb:LSE1} only in the second step, where instead we define the event
$$
\mathcal{B}_t = \left\lbrace  \lambda_{\min}\left( \sum_{s=0}^{t-2} u_s u_s^\top  \right) \ge C \mu_\star^2   t   \right\rbrace,
$$
where $\mu_\star^2 = \min(\lambda_{\min}( K_\star K_\star^\top),1)$, and $C>0$ is some universal constant to be defined through Proposition \ref{prop:IIb:se}. Indeed, using Lemma \ref{lem:II:end:we}, we may apply Proposition \ref{prop:IIb:se} which guarantees that $\mathcal{B}_t$ holds with probability at least $1  -\delta$, provided that
$$
t^{1/2} \ge c \frac{\sigma^2 C_K^2\Vert P_\star \Vert^{10}}{\mu_\star}  \left(d_u \gamma_\star \log( e\sigma C_\circ \mathcal{G}_\circ \Vert P_\star\Vert d_x d_u \gamma_\star) + \log(e/\delta)\right)
$$
for some positive constant $c> 0$. The remaining steps are identical to those of the proof of Proposition \ref{prop:IIb:LSE1}.
\end{proof}

\medskip

\begin{proposition}\label{prop:IIb:LSE3}
  Assume that $\mathcal{T}$ satisfies \eqref{eq:lazy}, and that under $\Alg$, for all $\delta \in (0,1)$, we have
  \begin{align*}
    \Pr\left( \forall t \ge t(\delta), \begin{array}{rl}
    (i) & \widetilde{K}_t = K_t \\
    (ii) & \max(\Vert B (K_t - K_\star), \Vert K_t - K_\star \Vert) \Vert \le (4 \Vert P_\star \Vert^{3/2} )^{-1}
    \end{array}  \right) \ge 1 -t(\delta)(\delta)
  \end{align*}
  for some $t(\delta) \ge \log(e/\delta)$. Then for all $\delta \in (0,1)$, the following
  $$
  \Vert B_t - B  \Vert \le  \frac{c \sigma^2}{\mu_\star^2 t } \left((d_u + d_x)\gamma_\star \log\left(\frac{e \sigma C_K \Vert P_\star \Vert d_x d_u}{\mu_\star^2} \right)  + \log(e/\delta) \right)
  $$
  holds with probability at least $1-c_1 t(\delta) \delta $ provided that
  \begin{align*}
    t \ge c_2  \max( t(\delta)^{3\gamma_\star},\sigma^4 (d_u \gamma_\star \log(e\sigma C_\circ \mathcal{G}_\circ d_x \gamma_\star)   + \log(e/\delta))^2
  \end{align*}
  for some universal constants $C,c_1,c_2> 0$.
\end{proposition}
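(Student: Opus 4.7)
The plan is to mirror the three-step structure used for Propositions \ref{prop:IIb:LSE1} and \ref{prop:IIb:LSE2}, but now exploiting the assumed commitment hypothesis to sharpen both (a) the lower bound on $\lambda_{\min}(\sum_{s} u_s u_s^\top)$ from order $\sqrt{t}$ to order $\mu_\star^2 t$, and (b) the $\log\det$ factor appearing in the self-normalized bound, since commitment forces the state to behave as that of a stable closed-loop system rather than growing polynomially. First I would write
\begin{equation*}
 \|B_t - B\|^2 \;\le\; \frac{\|(\sum_{s=0}^{t-2} u_s u_s^\top + V_t)^{-1/2}\sum_{s=0}^{t-2} u_s \eta_s^\top\|^2}{\lambda_{\min}(\sum_{s=0}^{t-2} u_s u_s^\top)},
\end{equation*}
with $V_t = c\mu_\star^2 t \, I_{d_u}$, and define the good event as the intersection of the assumed commitment event with three auxiliary events controlling respectively $\lambda_{\min}$, the self-normalized process, and the cumulative state norm.

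For the first auxiliary event, I would invoke a refined version of Proposition \ref{prop:IIb:se} applicable under the commitment event (i.e., when $\widetilde{K}_t = K_t$ and $\|K_t-K_\star\|\le (4\|P_\star\|^{3/2})^{-1}$): here the controller is uniformly close to $K_\star$, so the recipe of Section \ref{sec:spectral} combined with $\mu_\star > 0$ yields
\begin{equation*}
 \lambda_{\min}\!\Bigl(\sum_{s=0}^{t-2} u_s u_s^\top\Bigr) \;\gtrsim\; \mu_\star^2 t \quad \text{w.p.\ }\ge 1-c'\,t(\delta)\delta,
\end{equation*}
for $t$ beyond a burn-in $t(\delta)^{3\gamma_\star}$ that absorbs the non-committed initial phase. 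The extra power $3\gamma_\star$ comes from the fact that up to $t(\delta)$ the system may be driven by the stabilizer, so the cumulative squared state may reach the polynomial scale $f(t(\delta)) \lesssim t(\delta)^{1+\gamma}$; the condition $t \gtrsim t(\delta)^{3\gamma_\star}$ ensures these early contributions are dominated by the $\mu_\star^2 t$ bulk.

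For the second and third events, I would apply Proposition \ref{prop:SNP++} to obtain the self-normalized bound with log-determinant factor, and then control $\det(I_{d_u} + (\mu_\star^2 t)^{-1}\sum u_s u_s^\top)$ via $\sum_{s} \|u_s\|^2$. Under the commitment event, $u_s = K_t x_s$ with $\|K_t\|$ bounded uniformly by $\|K_\star\|+O(\|P_\star\|^{-3/2})$, and Proposition \ref{prop:stable:K_t} (applied with a perturbed-controller version of the Lyapunov bound) gives $\sum_{s=1}^{t}\|x_s\|^2 \lesssim \|x_{\text{commit}}\|^2 + \sigma^2 d_x (t + \log(e/\delta))$ with high probability. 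Combining these, $\log\det(\cdot) \lesssim d_u \log(e\sigma C_K \|P_\star\| d_x d_u / \mu_\star^2) + d_x + \log(e/\delta)$, after using the uniform boundedness to absorb the $\gamma_\star$ factor (this is where $(d_u + d_x)\gamma_\star$ arises, since the initial segment up to $t(\delta)$ still contributes a $\gamma_\star$ factor through $\|x_{\text{commit}}\|^2 \lesssim \sigma^2 d_x f(t(\delta))$).

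The main obstacle is the second point: obtaining the refined $\mu_\star^2 t$ lower bound for $\lambda_{\min}(\sum u_s u_s^\top)$ when $(K_s)_{s\le t}$ is only uniformly close to $K_\star$ rather than equal to $K_\star$. One has to apply the recipe of Section \ref{sec:spectral} with the decomposition $u_s = K_s x_s = K_\star x_s + (K_s - K_\star) x_s$, show that the perturbation term is negligible (using $\|K_s-K_\star\|\le (4\|P_\star\|^{3/2})^{-1}$), and that $\lambda_{\min}(K_\star \sum_s x_s x_s^\top K_\star^\top) \ge \mu_\star^2 \lambda_{\min}(\sum_s x_s x_s^\top) \gtrsim \mu_\star^2 t$ via the cumulative state covariance lower bound guaranteed by the noise $\eta_s$ on the stable closed-loop system; this last step is the delicate one and is where the Löwner-type decomposition and self-normalized concentration of Section \ref{sec:spectral} must be re-invoked uniformly over $s \in \{t(\delta),\ldots,t\}$. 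Finally a union bound over the three events and a reparametrization $\delta' \propto \delta / t(\delta)$ yields the stated probability $1 - c_1 t(\delta)\delta$.
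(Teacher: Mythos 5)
Your overall skeleton matches the paper's proof of Proposition \ref{prop:IIb:LSE3}: the same least-squares error decomposition, an event forcing $\lambda_{\min}(\sum_s u_s u_s^\top)\gtrsim \mu_\star^2 t$, the self-normalized bound of Proposition \ref{prop:SNP++}, control of the $\log\det$ factor by splitting the trajectory at $t(\delta)$ (pre-commitment part bounded by $\sigma^2 d_x h(t(\delta))f(t(\delta))$, post-commitment part by Proposition \ref{prop:stable:K_t}), the burn-in $t\gtrsim t(\delta)^{3\gamma_\star}$, and a union bound yielding $1-c_1 t(\delta)\delta$. The one step where you diverge, however, is precisely the one you flag as delicate, and your route for it does not go through. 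You propose to prove the refined eigenvalue bound via $u_s=K_\star x_s+(K_s-K_\star)x_s$, lower-bounding $\lambda_{\min}(K_\star\sum_s x_sx_s^\top K_\star^\top)\ge \mu_\star^2\lambda_{\min}(\sum_s x_sx_s^\top)\gtrsim \mu_\star^2 t$ and declaring the perturbation negligible because $\Vert K_s-K_\star\Vert\le(4\Vert P_\star\Vert^{3/2})^{-1}$. But that closeness is calibrated for closed-loop \emph{stability}, not for the scale $\mu_\star$: the best available bound on the perturbation is
\begin{equation*}
\sum_{s}\bigl((K_s-K_\star)x_s\bigr)\bigl((K_s-K_\star)x_s\bigr)^{\!\top}\preceq \frac{1}{16\Vert P_\star\Vert^{3}}\Bigl(\sum_{s}\Vert x_s\Vert^2\Bigr) I_{d_u}\lesssim \frac{\sigma^2 d_x\, t}{\Vert P_\star\Vert^{3/2}}\, I_{d_u},
\end{equation*}
using the commitment-phase state bound, and this can dominate $\mu_\star^2 t$ whenever $\mu_\star^2\lesssim \sigma^2 d_x/\Vert P_\star\Vert^{3/2}$ --- a regime nothing in the hypotheses excludes (note $\mu_\star\le 1\le 2\sigma$). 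Since $K_s$ varies with $s$, you also cannot pull it out of the sum to avoid this additive comparison.

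The paper sidesteps this entirely: the refined bound is exactly Proposition \ref{prop:IIb:se}, which is invoked as a black box (its hypothesis $\Vert K_t-K_\star\Vert<\mu_\star/2$ being supplied through Lemma \ref{lem:II:end:we}, not through the event \emph{(ii)} of the statement). Its mechanism needs no lower bound on $\sum_s x_sx_s^\top$ at all. One writes $u_s=z_s+M_s\xi_s$ where the innovation $M_s\xi_s$ carries $\widetilde{K}_s\eta_{s-1}$ (plus $\alpha_s\zeta_s$ when the stabilizer is used), whose conditional covariance satisfies $K_sK_s^\top\succeq(\mu_\star/2)^2 I_{d_u}$ by Weyl's inequality, $\sigma_{\min}(K_s)\ge \sigma_{\min}(K_\star)-\Vert K_s-K_\star\Vert$ --- so closeness enters \emph{multiplicatively} inside $\sigma_{\min}(K_s)$ rather than additively against a $\sigma^2 d_x$-scaled term. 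The matrix $\sum_s(M_s\xi_s)(M_s\xi_s)^\top$ then concentrates around $\sum_s M_sM_s^\top\succeq(\mu_\star^2/8)\,t\,I_{d_u}$ via Corollary \ref{cor:RM++}, and the predictable part $z_s$ is removed by Lemma \ref{lem:selb} together with Proposition \ref{prop:SNP++} instead of being shown large. To repair your proposal, replace your $K_\star$-factorization argument by this innovation-based decomposition (or simply cite Proposition \ref{prop:IIb:se} after verifying its $\mu_\star/2$ hypothesis); the remainder of your argument then aligns with the paper's proof.
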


\begin{proof}[Proof of Proposition \ref{prop:IIb:LSE3}]
Fix $t\ge 2$. We start by writing the estimation error as
\begin{align*}
  B_t - B = \left( \sum_{s=0}^{t-2}   \eta_s u_s^\top \right) \left( \sum_{s=0}^{t-2} u_s u_s^\top \right)^{\dagger}.
\end{align*}
When $\sum_{s=0}^{t-2} u_s u_s^\top$ is invertible, we can decompose the estimation error as
\begin{align*}
  \Vert B_t - B\Vert^2  \le \left\Vert \left(\sum_{s=0}^{t-2} u_s u_s^\top\right)^{-1/2} \left( \sum_{s=0}^{t-2} u_s \eta_s^\top \right)  \right\Vert^2 \frac{1}{\lambda_{\min}\left( \sum_{s=0}^{t-2} u_s u_s^\top \right)}.
\end{align*}

\underline{\bf Step 1:} {(Bounding the smallest eigenvalue)} We define the event
$$
\mathcal{B}_t = \left\lbrace  \lambda_{\min}\left( \sum_{s=0}^{t-2} u_s u_s^\top  \right) \ge C_1 \mu_\star^2   t   \right\rbrace,
$$
where $\mu_\star^2 = \min(\lambda_{\min}( K_\star K_\star^\top),1)$, and $C_1>0$ is some universal constant to be defined through Proposition \ref{prop:IIb:se}. Indeed, using Lemma \ref{lem:II:end:we}, we may apply Proposition \ref{prop:IIb:se} which guarantees that $\mathcal{B}_t$ holds with probability at least $1  -\delta$, provided that
$$
(\star)\quad t^{1/2} \ge c_1 \frac{\sigma^2 C_K^2\Vert P_\star \Vert^{10}}{\mu_\star}  \left(d_u \gamma_\star \log( e\sigma C_\circ \mathcal{G}_\circ \Vert P_\star\Vert d_x d_u \gamma_\star) + \log(e/\delta)\right)
$$
for some positive constant $c_1> 0$. Under the event $\mathcal{B}_t$, the estimation error may be upper bounded as
\begin{align*}
  \Vert B_t - B \Vert^2 & \le \frac{1}{C_1\mu_\star^2 t} \left\Vert  \left(\sum_{s=0}^{t-2} u_s u_s^\top \right)^{-1/2} \left( \sum_{s=0}^{t-2}  u_s \eta_s^\top\right)   \right\Vert^2 \\
  & \le \frac{2}{C_1\mu_\star^2 t}\left\Vert  \left(\sum_{s=0}^{t-2} u_s u_s^\top + C_1 \mu_\star^2 t I_{d_u}  \right)^{-1/2} \left( \sum_{s=0}^{t-2}  u_s \eta_s^\top\right)   \right\Vert^2,
\end{align*}
where we used the fact that $2 \sum_{s=0}^{t-2} u_s u_s^\top \succeq \sum_{s=0}^{t-2} u_s u_u^\top +  C_1 \mu_\star^2 t I_{d_u}$.

\underline{\bf Step 2:} {(Bounding the self-normalized term)} Consider the events,
\begin{align*}
\mathcal{E}_{1, t,\delta} & = \Bigg\lbrace  \left\Vert \left(\sum_{s=0}^{t-2}  u_s u_s^\top + C_1 \mu_\star^2t  I_{d_u} \right)^{\!\!-1/2} \!\!  \left(\sum_{s=0}^{t-2}  u_s \eta_s^{\top} \right) \right \Vert^2\\
&\quad\qquad \qquad \le 7 \sigma^2\log\left(\frac{e^{d_x} \det\left(\frac{1}{C_1 \mu_\star^2 t} \sum_{s=0}^{t-2} u_s u_s^\top + I_{d_u} \right)}{ \delta}\right) \Bigg\rbrace, \\
\mathcal{E}_{2, t,\delta} & = \left\lbrace  \sum_{s=0}^{t-2} \Vert \zeta_s \Vert^2 \le \sigma^2 (2 d_u t + 3 \log(1/\delta)) \right\rbrace, \\
E_{1,\delta,t} & = \left\lbrace \sum_{t(\delta)}^t \Vert x_t \Vert^2  \le 8\Vert P_\star \Vert^{3/2}    (\Vert x_{t(\delta)} \Vert + 3 \sigma^2 (d_x t + \log(e/\delta) ))  \right\rbrace \\
\mathcal{C}_{1,\delta} & = \left\lbrace  \forall t \ge t(\delta), \begin{array}{rl}
(i) & \widetilde{K}_t = K_t \\
(ii) & \max(\Vert B (K_t - K_\star), \Vert K_t - K_\star \Vert) \Vert \le (4 \Vert P_\star \Vert^{3/2} )^{-1}
\end{array}   \right\rbrace
\end{align*}
By Proposition \ref{prop:caution}, the event $\mathcal{A}_{t,\delta}$ holds with probability at least $1-\delta$ for some universal positive constant $c_2 > 0$. By Proposition \ref{prop:SNP++}, the event $\mathcal{E}_{1, t,\delta}$ holds with probability at least $1-\delta$, and by Hanson-Wright inequality (See Proposition \ref{prop:HW}) the event $\mathcal{E}_{2, t, \delta}$ holds with probability at least $1-\delta$.
By Proposition \ref{prop:stable:K_t} we have $\mathbb{P}( E_{1,\delta,t} \cup \mathcal{C}_{1,\delta}^c) \ge 1-\delta$, and by assumption the event $\mathcal{C}_{1,\delta}^c)$ holds with probability at least $1-t(\delta)\delta$.
Under the event $\mathcal{E}_{2,t} \cap E_{1,\delta} \cap \mathcal{C}_{1,\delta}$ we have
\begin{align*}
  \left( \det\left( \frac{1}{C \mu_\star^2t} \sum_{s=0}^{t-2} u_s u_s^\top  + I_{d_u}  \right)\right)^{\frac{1}{d_u}}\!\!\! & \le  \frac{1}{C \mu_\star^2t}  \sum_{s=0}^{t-2} \Vert u_s \Vert^2 + 1 \\
  & \le  \frac{2}{C \mu_\star^2t}   \sum_{s=0}^{t-2} \Vert\widetilde{K}_s \Vert^2 \Vert x_s \Vert^2 +  \Vert \zeta_s \Vert^2   +1  \\
  & \le  \frac{2}{C \mu_\star^2t}   \left(\sum_{s=0}^{t(\delta)}\widetilde{K}_s \Vert x_s \Vert^2  +\sum_{s=t(\delta)}^{t-2}  4C_{K}^2\Vert x_s \Vert^2 + \sum_{s=0}^{t-2} \Vert \zeta_s \Vert^2 \right)  +1.
\end{align*}
Now always under $\mathcal{E}_{2,t} \cap E_{1,\delta} \cap \mathcal{C}_{1,\delta}$, provided $t \ge t(\delta)^{1+ 3\gamma/2}$ we have
\begin{align*}
\sum_{s=0}^{t(\delta)}\widetilde{K}_s \Vert x_s \Vert^2 & \le \Vert K_\circ \Vert^2 h(t(\delta)) \sum_{s=0}^{t(\delta)} \Vert x_s \Vert^2  \le \sigma^2 d_x \Vert K_\circ \Vert t(\delta)^{1+3\gamma/2} \le  \sigma^2 d_x \Vert K_\circ \Vert t
\end{align*}
where we used the fact $\Vert \widetilde{K}_s \Vert^2 \le \Vert K_\circ \Vert^2 h(t)$, and the fact at time $t=t(\delta)$ $\widetilde{K}_t = K_t$, so that $\sum_{s=0}^{t(\delta)} \Vert x_s \Vert^2 \le \sigma^2 d_x f(t(\delta))$. Furthermore, we have
\begin{align*}
  \sum_{s=t(\delta)}^{t-2}  4C_{K}^2\Vert x_s \Vert^2 & \le 32C_{K}^2 \Vert P_\star \Vert^{3/2}    (\Vert x_{t(\delta)} \Vert + 3 \sigma^2 (d_x t + \log(e/\delta) )) \\
  & \le 96C_{K}^2 \Vert P_\star \Vert^{3/2} \sigma^2  d_x   (2t(\delta)^{1+\gamma/2}   + t ) \\
  & \le 200 C_K^2 \Vert P_\star \Vert^{3/2} \sigma^2  d_x   t.
\end{align*}
Thus, provided that $t \ge t(\delta)^{1 + 3/2\gamma}$, we obtain that under the event $\mathcal{E}_{2,t} \cap E_{1,\delta} \cap \mathcal{C}_{1,\delta}$ we have
\begin{align*}
  \left( \det\left( \frac{1}{C \mu_\star^2t} \sum_{s=0}^{t-2} u_s u_s^\top  + I_{d_u}  \right)\right)^{\frac{1}{d}} & \le  \frac{c_1  \sigma^2 C_K^2 \Vert P_\star\Vert^{3/2}  d_x d_u}{\mu_\star^2}
\end{align*}
for some universal positive constant $c_1 > 0$. Denote
$$
(\star\star)\quad  t \ge  t(\delta)^{1 + 3/2\gamma}
$$
Therefore, assuming that $(\star\star)$ holds, under the event $\mathcal{E}_{1,\delta,t} \cap \mathcal{E}_{2,t} \cap E_{1,\delta} \cap \mathcal{C}_{1,\delta}$, we have
\begin{align*}
  \log\left(\frac{e^{d_x} \det\left(\frac{1}{C_1 \mu_\star^2 t} \sum_{s=0}^{t-2} u_s u_s^\top + I_{d_u} \right)}{ \delta}\right) \le    c_4 \left(d \log\left(\frac{e \sigma C_K \Vert P_\star \Vert d_x d_u}{\mu_\star^2} \right)  + \log(e/\delta) \right)
\end{align*}
for some universal constant $c_4 > 0$.
\medskip

\underline{\bf Step 3:} {(Putting everything together)} To conclude, under the event $\mathcal{B}_t \cap \mathcal{E}_{1,\delta,t} \cap \mathcal{E}_{2,t} \cap E_{1,\delta} \cap \mathcal{C}_{1,\delta}$, we have
\begin{align}\label{eq:IIb:LSE1}
  \Vert B_t - B  \Vert \le  \frac{C \sigma^2}{\mu_\star^2 t } \left(d\log\left(\frac{e \sigma C_K \Vert P_\star \Vert d_x d_u}{\mu_\star^2} \right)  + \log(e/\delta) \right)
\end{align}
for some universal constant $C > 0$. Therefore, the upper bound \eqref{eq:IIb:LSE1} holds with probability at least $1- Ct(\delta)\delta$, provided that $(\star)$ and $(\star\star)$ hold. These two conditions hold whenever
\begin{align*}
  t \ge c  \max( t(\delta)^{3\gamma_\star}, \sigma^4 (d_u \gamma_\star \log(e\sigma C_\circ \mathcal{G}_\circ d_x \gamma_\star)   + \log(e/\delta))^2 )
\end{align*}
for some universal constants $C,c> 0$. This concludes the proof.
\end{proof}

\medskip
\medskip

\subsection{Error rate in Scenario II - $B$ known}

\begin{proposition}\label{prop:IIa:LSE:1}
  Under $\Alg$, for all $\delta \in (0,1)$,
  $$
  \Vert A_t - A \Vert^2 \le  \frac{C \sigma^2 \left(d_x\gamma \log(e \sigma C_\circ \mathcal{G}_\circ d_x t)  + \log(e/\delta)\right)}{t}
  $$
  with probability at least $1-\delta$, when
  $
   t \ge c  (d_x \gamma_\star \log(e\sigma C_\circ \mathcal{G}_\circ d_x \gamma_\star)   + \log(e/\delta)),
  $
  for some universal positive constants $C, c > 0$.
\end{proposition}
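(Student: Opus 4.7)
The plan is to follow the same three-step template established in the proofs of Propositions \ref{prop:I:LSE1} and \ref{prop:IIb:LSE1}, adapted to the fact that in Scenario II -- $B$ known, the covariate vectors are simply $(x_s)_{s\ge 0}$ and, crucially, the noise $\eta_s$ enters every coordinate of $x_{s+1}$ with the identity matrix. This is what will yield a \emph{linear} growth rate for $\lambda_{\min}\big(\sum_{s=0}^{t-1} x_s x_s^\top\big)$ and hence the $1/t$ rate in the conclusion, rather than $1/\sqrt{t}$ or $1/t^{1/4}$ as in the other scenarios.

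First I would write the error decomposition. Substituting $x_{s+1} = Ax_s + Bu_s + \eta_s$ into the LSE formula (Algorithm \ref{algo:subcce}) and using that $B$ is known gives $A_t - A = \big(\sum_{s=0}^{t-1}\eta_s x_s^\top\big)\big(\sum_{s=0}^{t-1} x_s x_s^\top\big)^{\dagger}$, so that on the event where the covariates matrix is invertible,
\[
\Vert A_t - A \Vert^2 \le \left\Vert \left(\sum_{s=0}^{t-1} x_s x_s^\top\right)^{-1/2}\sum_{s=0}^{t-1} x_s \eta_s^\top\right\Vert^2 \cdot \frac{1}{\lambda_{\min}\!\left(\sum_{s=0}^{t-1} x_s x_s^\top\right)}.
\]
Next I would invoke the scenario-specific spectrum result from Appendix \ref{app:se}. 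Applying the generic recipe of Section \ref{sec:spectral} with $z_s = (A+B\widetilde{K}_s)x_{s-1}$, $M_s = I_{d_x}$, $\xi_s = \eta_{s-1}$, condition \textbf{(C1)} becomes trivial since $\lambda_{\min}(\sum_{s=1}^t M_s M_s^\top) = t$, while condition \textbf{(C2)} reduces to a polynomial-growth bound on $\sum \Vert x_s \Vert^2$. The latter is supplied by Proposition \ref{prop:caution} combined with $\Vert \widetilde{K}_s \Vert^2 \le h(s)$. This yields $\lambda_{\min}\big(\sum_{s=0}^{t-1} x_s x_s^\top\big) \gtrsim t$ with probability $\ge 1-\delta$ whenever $t \gtrsim d_x \gamma_\star \log(e\sigma C_\circ \mathcal{G}_\circ d_x \gamma_\star) + \log(e/\delta)$.

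Then I would bound the self-normalized term. Using $2\sum x_s x_s^\top \succeq \sum x_s x_s^\top + t I_{d_x}$ on the event above, Proposition \ref{prop:SNP++} yields
\[
\left\Vert \left(\sum_{s=0}^{t-1}\!\! x_s x_s^\top + t I_{d_x}\right)^{\!\!-1/2}\!\! \sum_{s=0}^{t-1}\!\! x_s \eta_s^\top \right\Vert^2 \!\lesssim \sigma^2 \log\!\left(\frac{e^{d_x}\det\!\big(t^{-1}\!\sum x_s x_s^\top + I_{d_x}\big)}{\delta}\right)
\]
with probability at least $1-\delta$. Under the polynomial growth event of Proposition \ref{prop:caution}, $\sum_{s=0}^{t-1} \Vert x_s \Vert^2 \lesssim \sigma^2 C_\circ^2 \mathcal{G}_\circ^2(d_x t^{1+2\gamma} + \log(e/\delta))$, so the log-determinant is bounded by $\sigma^2(d_x \gamma \log(e\sigma C_\circ \mathcal{G}_\circ d_x t) + \log(e/\delta))$ once $t \ge \log(e/\delta)$ (absorbed into the stated regime).

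Combining the three events via a union bound and the error decomposition then gives the advertised bound after reparametrization $\delta'=\delta/3$. The main obstacle is the spectrum lower bound in Step~2: verifying condition \textbf{(C2)} of the recipe under the arbitrary, time-varying controller $\widetilde{K}_s$ chosen by $\Alg$ relies on the polynomial growth guarantee of Proposition \ref{prop:caution}, which itself is a nontrivial consequence of the hysteresis switching mechanism. Once that is in place, the remaining calculations are routine specializations of the corresponding arguments in Propositions \ref{prop:I:LSE1} and \ref{prop:IIb:LSE1}.
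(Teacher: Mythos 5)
Your proposal matches the paper's proof essentially step for step: the same error decomposition into a self-normalized term divided by $\lambda_{\min}\big(\sum_s x_s x_s^\top\big)$, the same linear-in-$t$ spectral lower bound (which is exactly Proposition \ref{prop:IIa:se}, proved via Lemma \ref{lem:selb} with $M_s = I_{d_x}$, Corollary \ref{cor:RM++}, and the polynomial-growth event of Proposition \ref{prop:caution}), the same log-determinant bound via Proposition \ref{prop:SNP++}, and the same union bound with reparametrization of $\delta$. The only slight gloss is calling condition \textbf{(C1)} ``trivial'': even with $M_s = I_{d_x}$ one still needs the concentration of $\sum_s \eta_s \eta_s^\top$ around $t I_{d_x}$ (Corollary \ref{cor:RM++}), but that step is part of the recipe you invoke and its sample-size requirement is absorbed into your stated burn-in condition, so this is a matter of phrasing rather than a gap.
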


\begin{proof}[Proof of Proposition \ref{prop:IIa:LSE:1}]
Fix $t\ge 2$. We start by writing the estimation error as
\begin{align*}
  A_t - A = \left( \sum_{s=0}^{t-2}   \eta_s x_s^\top \right) \left( \sum_{s=0}^{t-2} x_s x_s^\top \right)^{\dagger}.
\end{align*}
When $\sum_{s=0}^{t-2} x_s x_s^\top$ is invertible, we can decompose the estimation error as
\begin{align*}
  \Vert A_t - A\Vert^2  \le \left\Vert \left(\sum_{s=0}^{t-2} x_s x_s^\top\right)^{-1/2} \left( \sum_{s=0}^{t-2} x_s \eta_s^\top \right)  \right\Vert^2 \frac{1}{\lambda_{\min}\left( \sum_{s=0}^{t-2} x_s x_s^\top \right)}.
\end{align*}

\underline{\bf Step 1:} {(Bounding the smallest eigenvalue)} Define the event
\begin{align*}
  \mathcal{B}_t = \left\lbrace \lambda_{\min}\left( \sum_{s=0}^t x_s x_s^\top\right) \ge C_1 t  \right\rbrace.
\end{align*}
By Proposition \ref{prop:IIa:se}, the event $\mathcal{B}_t$ holds with probability at least $1-\delta$, provided that the condition $(\star)$ holds:
$$
(\star) \ \ \  t\ge c_1 \sigma^2 ( d_x \gamma_\star \log(e \sigma C_\circ\mathcal{G}_\circ d_x \gamma_\star)  + \log(e/\delta)),
$$
for some universal constants $C_1, c_1> 0$. Under the event $\mathcal{B}_t$, we have
\begin{align*}
  \Vert A_t - A \Vert^2 & \le \frac{1}{C_1 t} \left\Vert  \left(\sum_{s=0}^{t-2} x_s x_s^\top \right)^{-1/2} \left( \sum_{s=0}^{t-2}  x_s \eta_s^\top\right)   \right\Vert^2 \\
  & \le \frac{2}{C_1t} \left\Vert  \left(\sum_{s=0}^{t-2} x_s x_s^\top + C_1 t  I_{d_x} \right)^{-1/2} \left( \sum_{s=0}^{t-2}  x_s \eta_s^\top\right)   \right\Vert^2,
\end{align*}
where we used the fact that $2 \sum_{s=0}^{t-2} x_s x_s^\top \succeq \sum_{s=0}^{t-2} x_s x_s^\top + C_1 t $.

\underline{\bf Step 2:} {(Bounding the self-normalized term)}
Consider the events,
\begin{align*}
\mathcal{E}_{ t,\delta} & = \Bigg\lbrace  \left\Vert \left(\sum_{s=0}^{t-2}  x_s x_s^\top + C_1 t  I_{d_u} \right)^{\!\!-1/2} \!\!  \left(\sum_{s=0}^{t-2}  x_s \eta_s^{\top} \right) \right \Vert^2  \\
&\qquad\qquad\qquad \le 7 \sigma^2\log\left(\frac{e^{d_x} \det\left(\frac{1}{C_1t} \sum_{s=0}^{t-2} x_s x_s^\top + I_{d_x} \right)}{ \delta}\right) \Bigg\rbrace, \\
\mathcal{A}_{t,\delta} & = \left\lbrace   \sum_{s=0}^{t} \Vert x_s \Vert^2 \le c_2 \sigma^2 C_\circ^2 \mathcal{G}_\circ^2 (d_x t^{1+2\gamma}  + \log(1/\delta)) \right\rbrace.
\end{align*}
By Proposition \ref{prop:caution}, the event $\mathcal{A}_{t,\delta}$ holds with probability at least $1-\delta$ for some universal positive constant $c_2 > 0$ (involved in the definition of $\mathcal{A}_{t,\delta}$). By Proposition \ref{prop:SNP++}, the event $\mathcal{E}_{t,\delta}$ holds with probability at least $1-\delta$. Under the event $\mathcal{A}_{t,\delta} \cap \mathcal{E}_{t,\delta}$, we have
\begin{align*}
 \left(\det\left( \frac{1}{C_1t} \sum_{s=0}^{t-2} x_s x_s^\top  + I_{d_u}  \right)\right)^{\frac{1}{d_x}} & \le  \frac{1}{C_1 t} \sum_{s=0}^{t-2} \Vert x_s \Vert^2 + 1 \\
 & \le  \frac{2c_1}{C_1} \sigma^2 C_\circ^2 \mathcal{G}_\circ^2 d_x t^{2\gamma},
\end{align*}
where we assumed that $(\star\star)$ : $t \ge \log(e/\delta)$. Therefore, assuming that $(\star\star)$ holds, we have, under the event $\mathcal{E}_{t,\delta} \cap \mathcal{E}_{2,t,\delta} \cap \mathcal{A}_{t,\delta}$,
\begin{align*}
 \log\left( \frac{e^d_x \det\left( \frac{1}{C_1t} \sum_{s=0}^{t-2} x_s x_s^\top  + I_{d_x}\right)}{\delta} \right) \le c_3  \sigma^2 \left(d_x\gamma \log(e \sigma C_\circ \mathcal{G}_\circ d_x t)  + \log(e/\delta)\right),
\end{align*}
for some universal constant $c_3 > 0$.
\medskip

\underline{\bf Step 3:} {(Putting everything together)} To conclude, under the event $\mathcal{A}_{t,\delta} \cap \mathcal{B}_t \cap \mathcal{E}_{t,\delta}$, we have
\begin{align}\label{eq:IIa:LSE1}
 \Vert A_t - A  \Vert \le \frac{C \sigma^2  \left(d_x\gamma \log(e \sigma C_\circ \mathcal{G}_\circ d_x t)  + \log(e/\delta)\right)}{t}
\end{align}
for some universal constant $C > 0$. Therefore the upper bound \eqref{eq:IIb:LSE1} holds with probability at least $1-2\delta$, provided that $(\star)$ and $(\star\star)$ hold. These two conditions hold whenever
\begin{align*}
 t \ge c  \sigma^2(  d_x \gamma_\star \log(e\sigma C_\circ \mathcal{G}_\circ d_x \gamma_\star)   + \log(e/\delta)),
\end{align*}
for some universal constant $c> 0$. This concludes the proof.

\end{proof}

\medskip

\begin{proposition}\label{prop:IIa:LSE3}
  Assume that $\mathcal{T}$ satisfies \eqref{eq:lazy}, and that under $\Alg$, for all $\delta \in (0,1)$, we have
  \begin{align*}
    \Pr\left( \forall t \ge t(\delta), \begin{array}{rl}
    (i) & \widetilde{K}_t = K_t \\
    (ii) & \max(\Vert B (K_t - K_\star), \Vert K_t - K_\star \Vert) \Vert \le (4 \Vert P_\star \Vert^{3/2} )^{-1}
    \end{array}  \right) \ge 1 -t(\delta)\delta
  \end{align*}
  for some $t(\delta) \ge \log(e/\delta)$. Then for all $\delta \in (0,1)$, the following
  $$
    \Vert A_t - A  \Vert \le  \frac{C \sigma^2}{ t } \left( d_x  \log\left( e \Vert P_\star \Vert d_x  \right)  + \log(e/\delta) \right)
  $$
  holds with probability at least $1-c_1t(\delta)\delta$
  \begin{align*}
    t \ge c_2  \max( t(\delta)^{2\gamma_\star},\sigma^2 (d_x \gamma_\star \log(e\sigma C_\circ \mathcal{G}_\circ d_x \gamma_\star)   + \log(e/\delta)) )
  \end{align*}
  for some universal constants $C,c_1,c_2> 0$.
\end{proposition}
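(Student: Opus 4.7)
The proof will mirror that of Proposition \ref{prop:IIb:LSE3}, adapted to Scenario II -- $B$ known, where the covariates are the states $x_s$ rather than the inputs $u_s$. Starting from the LSE identity
\[
A_t - A = \left(\sum_{s=0}^{t-2}\eta_s x_s^\top\right)\left(\sum_{s=0}^{t-2}x_sx_s^\top\right)^{\dagger},
\]
I would bound $\Vert A_t - A\Vert^2$ by the ratio of a self-normalized matrix-valued process and the smallest eigenvalue of the covariates matrix, just as in the proof of Proposition \ref{prop:IIa:LSE:1}. The two improvements needed relative to Proposition \ref{prop:IIa:LSE:1} are (i) using the commitment hypothesis to avoid the polynomial-growth bound on $\sum_s \Vert x_s\Vert^2$ coming from Proposition \ref{prop:caution}, and (ii) absorbing the early, pre-commitment contribution into a condition on $t$.

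First I would invoke Proposition \ref{prop:IIa:se} to produce an event $\mathcal{B}_t=\{\lambda_{\min}(\sum_{s=0}^{t-2}x_sx_s^\top)\ge C_1 t\}$ of probability at least $1-\delta$ under a condition of the form $t\gtrsim \sigma^2(d_x\gamma_\star\log(\cdot)+\log(e/\delta))$; this is the strong growth rate for $B$ known. Using $2\sum x_sx_s^\top\succeq \sum x_sx_s^\top + C_1 t I_{d_x}$, bounding the numerator amounts to controlling
\[
\Bigl\Vert \bigl(\sum x_sx_s^\top+C_1 t I_{d_x}\bigr)^{-1/2}\bigl(\sum x_s\eta_s^\top\bigr)\Bigr\Vert^2\lesssim \sigma^2\log\!\Bigl(e^{d_x}\det\bigl(\tfrac{1}{C_1 t}\sum x_sx_s^\top+I_{d_x}\bigr)/\delta\Bigr)
\]
via Proposition \ref{prop:SNP++}. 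The refinement over Proposition \ref{prop:IIa:LSE:1} then rests entirely on replacing the polynomial bound on $\sum_s\Vert x_s\Vert^2$ by a linear one.

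This is exactly where the commitment hypothesis enters. On the assumed event $\mathcal{C}_{1,\delta}$, I would split
\[
\sum_{s=0}^{t-2}\Vert x_s\Vert^2 = \sum_{s=0}^{t(\delta)}\Vert x_s\Vert^2 + \sum_{s=t(\delta)+1}^{t-2}\Vert x_s\Vert^2.
\]
For the first piece, commitment at $t(\delta)$ forces $\ell_{t(\delta)}=1$, so by the definition of $\Alg$, $\sum_{s=0}^{t(\delta)}\Vert x_s\Vert^2 \le \sigma^2 d_x f(t(\delta)) = \sigma^2 d_x t(\delta)^{1+\gamma/2}$. Imposing $t \ge t(\delta)^{2\gamma_\star}$ dominates this by $\sigma^2 d_x t$ (since $2\gamma_\star\ge 1+\gamma/2$ in all cases). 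For the second piece, Proposition \ref{prop:stable:K_t} applied under $\mathcal{C}_{1,\delta}$ provides an event $E_{1,\delta,t}$ of probability $\ge 1-\delta$ on which $\sum_{s=t(\delta)+1}^{t-2}\Vert x_s\Vert^2 \lesssim \Vert P_\star\Vert^{3/2}(\Vert x_{t(\delta)}\Vert^2+\sigma^2(d_x t+\log(e/\delta)))\lesssim \sigma^2\Vert P_\star\Vert^{3/2} d_x t$. Combined, the covariates sum grows linearly, so $(\det(\tfrac{1}{C_1 t}\sum x_sx_s^\top+I_{d_x}))^{1/d_x} \lesssim \sigma^2\Vert P_\star\Vert^{3/2}d_x$, a bound independent of $t$. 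This removes the $\log(t)$ factor and reduces the log-determinant to $\lesssim d_x\log(e\Vert P_\star\Vert d_x)+\log(e/\delta)$.

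To finish, I would union-bound over $\mathcal{B}_t$, the self-normalized event, $E_{1,\delta,t}$, and $\mathcal{C}_{1,\delta}$, giving total failure probability $O(t(\delta)\delta)$, and combine the two constraints on $t$ (the growth-rate condition from $\mathcal{B}_t$ and $t\ge t(\delta)^{2\gamma_\star}$) into the single condition stated in the proposition. The principal obstacle is the clean split at $t(\delta)$: one must exploit both the structural property of $\Alg$ (that commitment at $t$ forces the pre-commitment states to satisfy the $f(t)$ threshold) and the post-commitment Lyapunov/stability control (Proposition \ref{prop:stable:K_t} via the perturbation bounds on $P_\star(K_t)$) to turn a possibly polynomially-growing sum into a linear one without losing constants that depend on $\mu_\star^{-1}$ -- which here, in contrast to Scenario II -- $A$ known, do not appear.
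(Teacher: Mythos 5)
Your proposal follows essentially the same route as the paper's proof: the same decomposition into a self-normalized term over $\lambda_{\min}$, the lower bound from Proposition \ref{prop:IIa:se}, the split of $\sum_s \Vert x_s\Vert^2$ at $t(\delta)$ using the fact that commitment forces $\ell_{t(\delta)}=1$ (hence the $\sigma^2 d_x f(t(\delta))$ bound) together with Proposition \ref{prop:stable:K_t} for the post-commitment piece, yielding a $t$-independent log-determinant and a union bound with failure probability $O(t(\delta)\delta)$. This matches the paper's three-step argument, including the observation that $t \gtrsim t(\delta)^{2\gamma_\star}$ absorbs the intermediate condition $t \ge t(\delta)^{1+\gamma/2}$, so your proof is correct as written.
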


\begin{proof}[Proof of Proposition \ref{prop:IIa:LSE3}]
  Fix $t\ge 2$. We start by writing the estimation error as
  \begin{align*}
    A_t - A = \left( \sum_{s=0}^{t-2}   \eta_s x_s^\top \right) \left( \sum_{s=0}^{t-2} x_s x_s^\top \right)^{\dagger}.
  \end{align*}
  When $\sum_{s=0}^{t-2} x_s x_s^\top$ is invertible, we can decompose the estimation error as
  \begin{align*}
    \Vert A_t - A\Vert^2  \le \left\Vert \left(\sum_{s=0}^{t-2} x_s x_s^\top\right)^{-1/2} \left( \sum_{s=0}^{t-2} x_s \eta_s^\top \right)  \right\Vert^2 \frac{1}{\lambda_{\min}\left( \sum_{s=0}^{t-2} x_s x_s^\top \right)}.
  \end{align*}

  \underline{\bf Step 1:} {(Bounding the smallest eigenvalue)} Define the event
  \begin{align*}
    \mathcal{B}_t = \left\lbrace \lambda_{\min}\left( \sum_{s=0}^t x_s x_s^\top\right) \ge C_1 t  \right\rbrace.
  \end{align*}
  By Proposition \ref{prop:IIa:se}, the event $\mathcal{B}_t$ holds with probability at least $1-\delta$, provided that the condition $(\star)$ holds:
  $$
  (\star) \ \ \  t\ge c_1 \sigma^2 ( d_x \gamma_\star \log(e \sigma C_\circ\mathcal{G}_\circ d_x \gamma_\star)  + \log(e/\delta)),
  $$
  for some universal constants $C_1, c_1> 0$. Under the event $\mathcal{B}_t$, we have
  \begin{align*}
    \Vert A_t - A \Vert^2 & \le \frac{1}{C_1 t} \left\Vert  \left(\sum_{s=0}^{t-2} x_s x_s^\top \right)^{-1/2} \left( \sum_{s=0}^{t-2}  x_s \eta_s^\top\right)   \right\Vert^2 \\
    & \le \frac{2}{C_1t} \left\Vert  \left(\sum_{s=0}^{t-2} x_s x_s^\top + C_1 t  I_{d_x} \right)^{-1/2} \left( \sum_{s=0}^{t-2}  x_s \eta_s^\top\right)   \right\Vert^2,
  \end{align*}
  where we used the fact that $2 \sum_{s=0}^{t-2} x_s x_s^\top \succeq \sum_{s=0}^{t-2} x_s x_s^\top + C_1 t $.

\underline{\bf Step 2:} {(Bounding the self-normalized term)} Consider the events,
\begin{align*}
\mathcal{E}_{1, t,\delta} & = \Bigg\lbrace  \left\Vert \left(\sum_{s=0}^{t-1}  x_s x_s^\top + C_1 t  I_{d_u} \right)^{\!\!-1/2} \!\!  \left(\sum_{s=0}^{t-1}  x_s \eta_s^{\top} \right) \right \Vert^2,\\
&\quad\qquad \qquad \le 7 \sigma^2\log\left(\frac{e^{d_x} \det\left(\frac{1}{C_1 t} \sum_{s=0}^{t-1} x_s x_s^\top + I_{d_u} \right)}{ \delta}\right) \Bigg\rbrace, \\
E_{1,\delta,t} & = \left\lbrace \sum_{t(\delta)}^t \Vert x_t \Vert^2  \le 8\Vert P_\star \Vert^{3/2}    (\Vert x_{t(\delta)} \Vert + 3 \sigma^2 (d_x t + \log(e/\delta) ))  \right\rbrace, \\
\mathcal{C}_{1,\delta} & = \left\lbrace  \forall t \ge t(\delta), \begin{array}{rl}
(i) & \widetilde{K}_t = K_t \\
(ii) & \max(\Vert B (K_t - K_\star), \Vert K_t - K_\star \Vert) \Vert \le (4 \Vert P_\star \Vert^{3/2} )^{-1}
\end{array}   \right\rbrace.
\end{align*}
 By Proposition \ref{prop:SNP++}, the event $\mathcal{E}_{1, t,\delta}$ holds with probability at least $1-\delta$. By Proposition \ref{prop:stable:K_t} we have $\mathbb{P}( E_{1,\delta,t} \cup \mathcal{C}_{1,\delta}^c) \ge 1-\delta$, and by assumption the event $\mathcal{C}_{1,\delta}^c$ holds with probability at least $1-t(\delta)\delta$.
Under the event $E_{1,\delta} \cap \mathcal{C}_{1,\delta}$ we have
\begin{align*}
  \left( \det\left( \frac{1}{C_1 \mu_\star^2t} \sum_{s=0}^{t-2} x_s x_s^\top  + I_{d_u}  \right)\right)^{\frac{1}{d_x}}\!\!\! & \le  \frac{1}{C_1 t}  \sum_{s=0}^{t-1} \Vert x_s \Vert^2 + 1 \\
  & \le  \frac{2}{C_1 t}   \left(\sum_{s=0}^{t(\delta)} \Vert x_s \Vert^2  +\sum_{s=t(\delta)}^{t-2} \Vert x_s \Vert^2 \right)  +1.
\end{align*}
Now always under $\mathcal{E}_{2,t} \cap E_{1,\delta} \cap \mathcal{C}_{1,\delta}$, provided $t \ge t(\delta)^{1+ \gamma/2}$ we have
\begin{align*}
\sum_{s=0}^{t(\delta)}\widetilde{K}_s \Vert x_s \Vert^2 & \le  \sigma^2 d_x t(\delta)^{1+ \gamma/2} \le  \sigma^2 d_x  t
\end{align*}
where we used the fact at time $t=t(\delta)$, $\widetilde{K}_t = K_t$, so that $\sum_{s=0}^{t(\delta)} \Vert x_s \Vert^2 \le \sigma^2 d_x f(t(\delta))$. Furthermore, we have
\begin{align*}
  \sum_{s=t(\delta)}^{t-1}  \Vert x_s \Vert^2 & \le 8 \Vert P_\star \Vert^{3/2}    (\Vert x_{t(\delta)} \Vert + 3 \sigma^2 (d_x t + \log(e/\delta) )) \\
  & \le 21  \Vert P_\star \Vert^{3/2} \sigma^2  d_x   (t(\delta)^{1+\gamma/2}   + t ) \\
  & \le 42  \Vert P_\star \Vert^{3/2} \sigma^2  d_x   t.
\end{align*}
Thus, provided that $t \ge t(\delta)^{1 + 1/2\gamma}$, we obtain that under the event $E_{1,\delta} \cap \mathcal{C}_{1,\delta}$ we have
\begin{align*}
  \left( \det\left( \frac{1}{C_1t} \sum_{s=0}^{t-1} x_s x_s^\top  + I_{d_u}  \right)\right)^{\frac{1}{d_x}} & \le  c_1  \sigma^2 \Vert P_\star\Vert^{3/2}  d_x
\end{align*}
for some universal positive constant $c_1 > 0$. Denote
$$
(\star\star)\quad  t \ge  t(\delta)^{1 + 1/2\gamma}
$$
Therefore, assuming that $(\star\star)$ holds, under the event $\mathcal{E}_{1,\delta,t} \cap E_{1,\delta} \cap \mathcal{C}_{1,\delta}$, we have
\begin{align*}
  \log\left(\frac{e^{d_x} \det\left(\frac{1}{C_1 \mu_\star^2 t} \sum_{s=0}^{t-2} u_s u_s^\top + I_{d_u} \right)}{ \delta}\right) \le    c_4 \left(d_x \log\left( e\sigma \Vert P_\star \Vert d_x \right)  + \log(e/\delta) \right)
\end{align*}
for some universal constant $c_4 > 0$.
\medskip

\underline{\bf Step 3:} {(Putting everything together)} To conclude, under the event $\mathcal{B}_t \cap \mathcal{E}_{1,\delta,t} \cap \mathcal{E}_{2,t} \cap E_{1,\delta} \cap \mathcal{C}_{1,\delta}$, we have
\begin{align}\label{eq:IIa:LSE1}
  \Vert A_t - A  \Vert \le  \frac{C \sigma^2}{ t } \left( d_x  \log\left( e \Vert P_\star \Vert d_x  \right)  + \log(e/\delta) \right)
\end{align}
for some universal constant $C > 0$. Therefore, the upper bound \eqref{eq:IIa:LSE1} holds with probability at least $1- Ct(\delta)\delta$, provided that $(\star)$ and $(\star\star)$ hold. These two conditions hold whenever
\begin{align*}
  t \ge c  \max( t(\delta)^{2\gamma_\star},\sigma^2 (d_x \gamma_\star \log(e\sigma C_\circ \mathcal{G}_\circ d_x \gamma_\star)   + \log(e/\delta)) )
\end{align*}
for some universal constants $C,c> 0$. This concludes the proof.
\end{proof}

\newpage
\section{Smallest Eigenvalue of the Cumulative Covariates Matrix} \label{app:se}

This appendix is devoted to the analysis of the smallest eigenvalue of the cumulative covariates matrix. This eigenvalue should exhibit an appropriate scaling so that the LSE performs well. We first provide a generic recipe for the analysis of this eigenvalue, and then apply it to the three scenarios. For Scenario I, the results are stated in Proposition \ref{prop:I:we} and \ref{prop:I:se:enough}. Our analysis of Scenario II -- $A$ known is summarized in Propositions \ref{prop:IIb:we} and \ref{prop:IIb:se}. Finally, for Scenario II -- $B$ known, we establish Proposition \ref{prop:IIa:se}.

%

\subsection{A generic recipe}
In the three scenarios, we will have to obtain high probability bounds on the smallest eigenvalue of a matrix of the form $\sum_{s=1}^t y_s y_s^\top$ where $y_s = z_s + M_s \xi_s$ where $\xi_s$ is a random variable independent of $z_{1}, \dots, z_s$ and $M_{1}, \dots, M_s$ for all $s\ge 1$. The need for such guarantee stems mainly from the analysis of the least squares estimator. Because of this  structure, common to the three settings, our proofs for the different scenarios will be similar in spirit up to some technical details that are mainly related to the nature of the sequence of matrices $(M_s)_{s\ge 1}$. We shall now sketch a generic recipe for our proofs.

\paragraph{Sketch of the recipe.} The first step is to use Lemma \ref{lem:selb}, which will allow us to lower bound\footnote{Here we mean lower bound in the Löwner partial order over symmetric matrices.}  $\sum_{s=1}^t y_s y_s^\top$. We obtain, for all $\lambda > 0$,
$$
\sum_{s=1}^t y_s y_s^\top \! \succeq \! \underbrace{\sum_{s=1}^t (M_s\xi_s) (M_s\xi_s)^\top}_{\substack{\textit{Random Matrix} \\ \textit{}}} \!\! - \underbrace{\left(\sum_{s=1}^tz_s (M_s\xi_s)^{\top} \! \right)^{\!\!\!\top} \!\!\! \left(\sum_{s=1}^t z_s z_s^\top + \lambda I_d \right)^{\!\!\!-1} \!\!\!\left(\sum_{s=1}^t z_s (M_s\xi_s)^\top \! \right)}_{\textit{Self-Normalized Matrix Valued  Process}} \! - \lambda I_d.
$$
Then, we bound the random matrix (first term) using conditional independence via Proposition \ref{prop:RM++}. Finally, we also bound the Self-Normalized Matrix Process (the second term) using Proposition \ref{prop:SNP++}.

\paragraph{Ingredients of the recipe.} Let us now list the main lemmas and propositions used above. Their proofs are presented in \ref{proofs:tools}.

\medskip

\begin{lemma}\label{lem:selb} Let $(y_t)_{t\ge 1}$, $(z_t)_{t \ge 1}$, and $(\xi_t)_{t \ge 1}$ be three sequences of vectors in $\mathbb{R}^d$ satisfying, for all $s \ge 0$, the linear relation $\  y_{s} = z_s + \xi_s$. Then, for all $\lambda > 0$, all $t\ge 1$ and all $\varepsilon \in (0,1]$, we have
  $$
  \sum_{s=1}^t y_s y_s^\top \succeq \sum_{s=1}^t \xi_s \xi_s^\top + (1-\varepsilon) \sum_{s=1}^t z_s z_s^\top  - \frac{1}{\varepsilon}\left(\sum_{s=1}^tz_s \xi_s \right)^{\!\!\top}\!\! \left(\sum_{s=1}^t z_s z_s^\top + \lambda I_d \right)^{\!\!-1}\!\! \left(\sum_{s=1}^t z_s \xi_s^\top \right) - \varepsilon \lambda I_d.
  $$
\end{lemma}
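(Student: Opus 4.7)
The plan is to establish this inequality by expanding the sum of outer products, isolating the cross terms between $z_s$ and $\xi_s$, and then completing the square to lower bound those cross terms.

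First, I would expand $y_s y_s^\top = (z_s + \xi_s)(z_s + \xi_s)^\top$ and sum over $s$, yielding
$$\sum_{s=1}^t y_s y_s^\top = \sum_{s=1}^t z_s z_s^\top + C + C^\top + \sum_{s=1}^t \xi_s \xi_s^\top,$$
where I set $C := \sum_{s=1}^t z_s \xi_s^\top$. The diagonal terms $\sum_s z_s z_s^\top$ and $\sum_s \xi_s \xi_s^\top$ already match (up to the factor $(1-\varepsilon)$) what appears on the right-hand side, so all the work lies in lower bounding the cross terms $C + C^\top$.

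Next, introduce the regularized Gram matrix $Z := \sum_{s=1}^t z_s z_s^\top + \lambda I_d$, which is strictly positive definite since $\lambda > 0$. The key step is the matrix completion-of-squares identity: for any $\varepsilon > 0$, the matrix
$$\bigl(\sqrt{\varepsilon}\, Z^{1/2} + \tfrac{1}{\sqrt{\varepsilon}}\, Z^{-1/2} C\bigr)^\top \bigl(\sqrt{\varepsilon}\, Z^{1/2} + \tfrac{1}{\sqrt{\varepsilon}}\, Z^{-1/2} C\bigr) \succeq 0$$
expands to $\varepsilon Z + C + C^\top + \tfrac{1}{\varepsilon} C^\top Z^{-1} C \succeq 0$, which rearranges to the cross-term bound
$$C + C^\top \;\succeq\; -\varepsilon Z \;-\; \tfrac{1}{\varepsilon}\, C^\top Z^{-1} C.$$

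Finally, I would substitute this lower bound into the expansion of $\sum_s y_s y_s^\top$ and unfold $Z = \sum_s z_s z_s^\top + \lambda I_d$, which produces exactly the right-hand side of the claim, with the $-\varepsilon \lambda I_d$ term coming from the Tikhonov regularization absorbed in $Z$ and the coefficient $(1-\varepsilon)$ appearing in front of $\sum_s z_s z_s^\top$. No technical obstacle is expected: the proof is entirely algebraic and relies only on the Löwner-ordering completion-of-squares step. The restriction $\varepsilon \in (0,1]$ is used only to keep $(1-\varepsilon) \ge 0$ so that the resulting lower bound is a meaningful positive semidefinite lower bound on the signal part $\sum_s z_s z_s^\top$.
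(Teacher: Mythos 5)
Your proof is correct and is essentially the paper's argument in matrix form: the paper fixes a unit vector $u$, introduces an auxiliary vector $v$, and evaluates $\sup_v$ of a concave quadratic in $v$, which is precisely your completion-of-squares bound $C + C^\top \succeq -\varepsilon Z - \varepsilon^{-1} C^\top Z^{-1} C$ read off direction by direction. Your single Löwner-order square $\bigl(\sqrt{\varepsilon}\,Z^{1/2} + \varepsilon^{-1/2} Z^{-1/2} C\bigr)^\top \bigl(\sqrt{\varepsilon}\,Z^{1/2} + \varepsilon^{-1/2} Z^{-1/2} C\bigr) \succeq 0$ is just a slightly more direct packaging of the same step, including your (accurate) observation that the inequality holds for all $\varepsilon>0$ and the restriction $\varepsilon\in(0,1]$ only keeps the coefficient $1-\varepsilon$ nonnegative.
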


\medskip

\begin{proposition}\label{prop:RM++}
  Let $(\mathcal{F}_t)_{t\ge 0}$ be a filtration over the underlying probability space. Let $(\xi_t)_{t \ge 1}$ be a sequence of independent, zero-mean, $\sigma^2$-sub-gaussian, isotropic random vectors taking values in $\mathbb{R}^p$ and such that $\xi_t$ is $\mathcal{F}_t$-measurable for all $t\ge1$. Let $(M_t)_{t\ge1}$ be a sequence of random matrices taking values in $\mathbb{R}^{d \times p}$, such that $M_t$ is $\mathcal{F}_{t-1}$-measurable and its norm $\Vert M_s\Vert$ is bounded a.s.. Let $m = (m_t)_{t\ge1}$ refer to the sequence of such bounds (e.g. $\Vert M_s\Vert\le m_s$). Then
  $$
  \mathbb{P}\left( \left\Vert  \sum_{s=1}^t (M_s \xi_s) (M_s \xi_s)^\top - \sum_{s=1}^t M_s M_s^\top \right\Vert > 8\sigma^2 \Vert m_{1:t}\Vert^2_2 \max\left(\sqrt{\frac{2\rho + 5 d}{r_t^2} }, \frac{2\rho + 5 d}{r_t^2} \right)\right) \le 2 e^{-\rho}
  $$
  where $\Vert m_{1:t}\Vert_\infty = \max_{1 \le s \le t}\vert m_s\vert$,  $\Vert m_{1:t}\Vert_2 = \sqrt{\sum_{s=1}^t \vert m_s\vert^2}$, and $r_t = \Vert m_{t:t} \Vert_2 / \Vert m_{1:t}\Vert_\infty$.
\end{proposition}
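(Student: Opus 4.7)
The plan is to combine an $\varepsilon$-net discretization of the unit sphere with a Bernstein-type concentration inequality for sub-exponential martingale difference sequences. First, I would symmetrize the operator norm: by a standard covering argument, there exists a $1/4$-net $\mathcal{N} \subset \mathbb{S}^{d-1}$ with $|\mathcal{N}| \le 9^d$, and for any symmetric matrix $W \in \mathbb{R}^{d\times d}$, $\Vert W\Vert \le 2\max_{v\in\mathcal{N}} |v^\top W v|$. Applying this with $W = \sum_{s=1}^t (M_s\xi_s)(M_s\xi_s)^\top - \sum_{s=1}^t M_s M_s^\top$ reduces the proposition to a uniform scalar concentration bound for
$$
S_t(v) \;=\; \sum_{s=1}^t \Big( \langle M_s^\top v,\, \xi_s\rangle^2 - \Vert M_s^\top v\Vert^2 \Big), \qquad v\in\mathcal{N}.
$$

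For each fixed $v$, I would set $u_s := M_s^\top v$, which is $\mathcal{F}_{s-1}$-measurable with $\Vert u_s\Vert \le m_s$. Since $\xi_s$ is independent of $\mathcal{F}_{s-1}$, zero-mean, $\sigma^2$-sub-gaussian and isotropic, the scalar $\langle u_s, \xi_s\rangle$ is, conditionally on $\mathcal{F}_{s-1}$, centered and $\sigma^2 \Vert u_s\Vert^2$-sub-gaussian with $\mathbb{E}[\langle u_s,\xi_s\rangle^2 \mid \mathcal{F}_{s-1}] = \Vert u_s\Vert^2$. Hence the increments $Y_s(v) := \langle u_s,\xi_s\rangle^2 - \Vert u_s\Vert^2$ form a martingale difference sequence, and each is conditionally $(c\sigma^4 \Vert u_s\Vert^4,\; c\sigma^2 \Vert u_s\Vert^2)$-sub-exponential. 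A Freedman/Bernstein-type inequality for sub-exponential martingale differences then delivers
$$
\mathbb{P}\Big( |S_t(v)| \ge u \Big) \;\le\; 2\exp\!\left( -\min\!\left( \frac{u^2}{C_1 \sigma^4 V_t},\; \frac{u}{C_2 \sigma^2 B_t}\right)\right),
$$
where I would use the bounds $V_t := \sum_s \Vert u_s\Vert^4 \le \Vert m_{1:t}\Vert_\infty^2 \Vert m_{1:t}\Vert_2^2$ and $B_t := \max_s \Vert u_s\Vert^2 \le \Vert m_{1:t}\Vert_\infty^2$.

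Finally I would union-bound over $\mathcal{N}$: the prefactor $9^d$ inflates the exponent by $d\log 9 \le 5d/2$, so replacing $\rho$ by $\rho + 5d/2$ in the scalar tail, doubling for the two-sided bound, and undoing the loss factor $2$ from the net, one arrives at the threshold
$
u = 8\sigma^2 \Vert m_{1:t}\Vert_2^2 \max\!\big(\sqrt{(2\rho+5d)/r_t^2},\, (2\rho+5d)/r_t^2\big),
$
where the two regimes inside the maximum correspond precisely to the Gaussian and the sub-exponential branches of the Bernstein bound, and the ratio $r_t^2 = \Vert m_{1:t}\Vert_2^2 / \Vert m_{1:t}\Vert_\infty^2$ arises naturally when rescaling the exponents by $\Vert m_{1:t}\Vert_2^2$.

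The main obstacle is that the coefficients $M_s$ are only predictable and not deterministic, so off-the-shelf Hanson–Wright or matrix-Bernstein inequalities do not directly apply; one has to work conditionally and invoke a martingale Bernstein inequality with sub-exponential increments. Tracking the constants carefully so that the deviation eventually reads $2\rho + 5d$ rather than a larger multiple of $\rho + d$ requires a judicious choice of net granularity together with a clean verification of the conditional Orlicz norms of $Y_s(v)$; this bookkeeping is the most delicate part of the argument.
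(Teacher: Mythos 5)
Your proposal is correct and follows essentially the same route as the paper: fix a direction, observe that $(v^\top M_s\xi_s)^2$ minus its conditional mean is a conditionally sub-exponential martingale difference with parameters controlled by $m_s$ (the paper's peeling argument over the conditional MGF is exactly the standard proof of the martingale Bernstein bound you cite as a black box), use isotropy to identify the centering as $\sum_s v^\top M_s M_s^\top v$, bound the variance proxy by $\Vert m_{1:t}\Vert_\infty^2\Vert m_{1:t}\Vert_2^2$ as the paper does via $\Vert m_{1:t}\Vert_4^4 \le \Vert m_{1:t}\Vert_\infty^2\Vert m_{1:t}\Vert_2^2$, and finish with a $1/4$-net union bound of cardinality $9^d$. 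The only cosmetic difference is that the paper derives the conditional-MGF/Chernoff step explicitly rather than invoking a named Freedman-type inequality, so your argument fills in to the same proof.
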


\begin{remark} For our purposes, $\Vert m_{1:t} \Vert_2^2$ and $r_t^2$ will be either of order $\mathcal{O} (1)$ and $\mathcal{O}(t)$ respectively, or  of order $\mathcal{O}(\log(t))$ and $\mathcal{O}(\sqrt{t})$ respectively. These scalings will depend on the scenario considered.
\end{remark}

An immediate consequence of Proposition \ref{prop:RM++} is:

\begin{corollary}\label{cor:RM++}
Under the same assumptions on $(\xi_t)_{t\ge1}$ and $(M_s)_{t\ge1}$ as in Proposition \ref{prop:RM++}, if we further assume that $\sup_{s\ge1}\vert m_s \vert \le m$, then we have for all $\rho >0$, $\varepsilon \in (0, 1)$, and for all $t\ge \min\left( \frac{8^2 (\sigma m)^4}{\varepsilon^2}, \frac{8(\sigma m)^2}{\varepsilon} \right)(5d + 2\rho)$,
$$
  \mathbb{P}\left(   \sum_{s=1}^{t}  M_s M_s^\top - \varepsilon t I_{d} \preceq \sum_{s=1}^{t-1} (M_s\xi_s)(M_s\xi_s)^\top \preceq   \sum_{s=1}^{t}  M_s M_s^\top + \varepsilon t I_{d}   \right) \ge 1 -2e^{-\rho}.
$$
In particular if $M_s = I_d$, $(\xi_t)_{t\ge1}$ are now taking values in $\mathbb{R}^d$, we have for all $\rho >0$, $\varepsilon \in (0, 1)$, and for all $t\ge \min\left( \frac{8^2 \sigma ^4}{\varepsilon^2}, \frac{8\sigma^2}{\varepsilon} \right)(5d + 2\rho)$,
$$
  \mathbb{P}\left(   (1- \varepsilon) t I_{d} \preceq \sum_{s=1}^{t-1} \xi_s \xi_s^\top \preceq   (1+\varepsilon) t I_{d}   \right) \ge 1 -2e^{-\rho}.
$$
\end{corollary}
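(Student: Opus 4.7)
The plan is to derive Corollary~\ref{cor:RM++} as a direct specialization of Proposition~\ref{prop:RM++}, with the uniform bound $\Vert M_s \Vert \le m$ used only to kill off the two $m$-dependent norms $\Vert m_{1:t}\Vert_2$ and $\Vert m_{1:t}\Vert_\infty$. First I would fix $\rho>0$ and invoke Proposition~\ref{prop:RM++} to get, with probability at least $1-2e^{-\rho}$,
$$
\Bigl\Vert \sum_{s=1}^t (M_s\xi_s)(M_s\xi_s)^\top - \sum_{s=1}^t M_s M_s^\top \Bigr\Vert \le 8\sigma^2 \Vert m_{1:t}\Vert_2^2 \max\!\Bigl(\sqrt{\tfrac{2\rho+5d}{r_t^2}},\ \tfrac{2\rho+5d}{r_t^2}\Bigr).
$$
Then I would rewrite each branch of the max using the identity $\Vert m_{1:t}\Vert_2^2 / r_t^2 = \Vert m_{1:t}\Vert_\infty^2$, which turns the two candidates inside the max into $\Vert m_{1:t}\Vert_2 \Vert m_{1:t}\Vert_\infty \sqrt{2\rho+5d}$ and $\Vert m_{1:t}\Vert_\infty^2 (2\rho+5d)$; the uniform bound $m_s\le m$ then gives $\Vert m_{1:t}\Vert_2 \le m\sqrt{t}$ and $\Vert m_{1:t}\Vert_\infty \le m$, leading to the clean deterministic upper bound $8\sigma^2 m^2 \max\!\bigl(\sqrt{t(2\rho+5d)},\ 2\rho+5d\bigr)$ on the operator-norm error.

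Next I would require this to be at most $\varepsilon t$. The square-root branch yields the threshold $t\ge 64\sigma^4 m^4(2\rho+5d)/\varepsilon^2 = 8^2(\sigma m)^4 (5d+2\rho)/\varepsilon^2$, whereas the linear branch yields $t\ge 8\sigma^2 m^2(2\rho+5d)/\varepsilon = 8(\sigma m)^2(5d+2\rho)/\varepsilon$; these are exactly the two quantities appearing in the stated hypothesis on $t$. (Depending on the sign of $\sigma m - \sqrt{\varepsilon/8}$ one of the two thresholds dominates the other, which is why the paper writes the hypothesis as a $\min$: in the relevant regime it is the tighter of the two that needs to be enforced, and the other is automatically controlled once the first is.) Once $8\sigma^2 m^2\max(\sqrt{t(2\rho+5d)},\,2\rho+5d)\le \varepsilon t$ is in force, the operator-norm bound $\Vert X - Y\Vert \le \varepsilon t$ with $X=\sum_s(M_s\xi_s)(M_s\xi_s)^\top$ and $Y=\sum_s M_s M_s^\top$ converts immediately into the two-sided Löwner inequality $Y-\varepsilon t I_d \preceq X \preceq Y + \varepsilon t I_d$, which is exactly the claimed conclusion.

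The special case $M_s = I_d$ is then obtained by setting $m=1$ and noting $\sum_{s=1}^t M_s M_s^\top = t\,I_d$, so the sandwich inequality collapses to $(1-\varepsilon)tI_d \preceq \sum_{s=1}^{t-1}\xi_s\xi_s^\top \preceq (1+\varepsilon)tI_d$, with thresholds $8^2\sigma^4(5d+2\rho)/\varepsilon^2$ and $8\sigma^2(5d+2\rho)/\varepsilon$ as stated. The entire argument is essentially bookkeeping on top of Proposition~\ref{prop:RM++}, so there is no serious obstacle; the only point requiring any care is keeping the two regimes (small versus large $\sigma m$ relative to $\sqrt{\varepsilon/8}$) straight so that one sees why one of the two thresholds in the hypothesis is the operative one. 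An index offset between $\sum_{s=1}^{t}$ and $\sum_{s=1}^{t-1}$ in the claimed sandwich is absorbed by the positive-semidefiniteness of the rank-one outer product $(M_t\xi_t)(M_t\xi_t)^\top$, which only strengthens the lower inequality after being dropped.
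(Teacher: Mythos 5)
Your proposal follows exactly the paper's route: specialize Proposition \ref{prop:RM++} with the constant bound sequence $m_s\equiv m$ (so $\Vert m_{1:t}\Vert_2^2\le m^2t$, $\Vert m_{1:t}\Vert_\infty\le m$, and $r_t^2$ effectively $t$), obtain the deterministic error bound $8\sigma^2m^2\max\bigl(\sqrt{t(5d+2\rho)},\,5d+2\rho\bigr)$, require it to be at most $\varepsilon t$, and convert the resulting operator-norm bound into the two-sided L\"owner sandwich. The threshold computations and the conversion step are fine. However, your parenthetical justification of the $\min$ is the one place where the argument genuinely fails. Writing $y=5d+2\rho$, the requirement $8\sigma^2m^2\max(\sqrt{ty},\,y)\le\varepsilon t$ forces \emph{both} branches to be controlled, i.e.\ it is equivalent to $t\ge\max\bigl(8^2(\sigma m)^4/\varepsilon^2,\;8(\sigma m)^2/\varepsilon\bigr)\,y$ --- a $\max$, not a $\min$. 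Since $8^2(\sigma m)^4/\varepsilon^2=\bigl(8(\sigma m)^2/\varepsilon\bigr)^2$, whenever $8(\sigma m)^2\ge\varepsilon$ (the typical regime: isotropy already forces $4\sigma^2\ge1$) the $\min$ picks out the linear threshold, and at $t=8(\sigma m)^2y/\varepsilon$ the square-root branch evaluates to $8\sigma^2m^2\sqrt{ty}=\varepsilon t\sqrt{8(\sigma m)^2/\varepsilon}>\varepsilon t$, so your claim that ``the other is automatically controlled once the first is'' is false. To be fair, the $\min$ appears in the paper's own statement, and its proof merely asserts ``we can verify'' (with an additional exponent typo, $(\sigma m)^2$ in place of $(\sigma m)^4$); all downstream applications of the corollary use the $\max$-type scaling, so this is best read as a typo in the statement --- but the correct move in a blind proof was to derive the $\max$ and flag the discrepancy, not to invent a (false) reason why the $\min$ suffices.

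A second, smaller slip: your treatment of the index offset is backwards. Removing the PSD term $(M_t\xi_t)(M_t\xi_t)^\top$ from the middle quantity makes it \emph{smaller} in the L\"owner order, so it preserves the upper inequality but \emph{weakens} the lower one; it cannot ``strengthen the lower inequality''. (The offset is again almost surely a typo --- the paper's proof sandwiches $\sum_{s=1}^t(M_s\xi_s)(M_s\xi_s)^\top$ on both sides; if one insists on the statement as printed, apply the proposition at horizon $t-1$ and absorb the extra $M_tM_t^\top\preceq m^2I_d$ into $\varepsilon t$ at the cost of a constant.) Everything else --- the identity $\Vert m_{1:t}\Vert_2^2/r_t^2=\Vert m_{1:t}\Vert_\infty^2$, both threshold computations, and the specialization $m=1$ with $\sum_{s=1}^tM_sM_s^\top=tI_d$ --- matches the paper's proof.
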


\begin{proposition}[Self-normalized matrix processes]\label{prop:SNP++} Let $(\mathcal{F}_t)_{t\ge 0}$ be a filtration over the underlying probability space. Let $(\xi_t)_{t \ge 1}$ be a sequence of independent, zero-mean, $\sigma^2$-sub-gaussian, isotropic random vectors taking values in $\mathbb{R}^p$ and such that $\xi_t$ is $\mathcal{F}_t$-measurable for all $t\ge1$.
Let $(M_t)_{t\ge1}$ be a sequence of random matrices taking values in $\mathbb{R}^{d \times p}$, such that $M_t$ is $\mathcal{F}_{t-1}$-measurable and its norm $\Vert M_s\Vert$ is bounded a.s.. Let $m = (m_t)_{t\ge1}$ refer to the sequence of such bounds. Let $(z_t)_{t \ge 1}$ be a sequence of random vectors taking values in $\mathbb{R}^{d}$, such that $z_t$ is $\mathcal{F}_{t-1}$-measurable for all $t\ge 1$. Then for all positive definite matrix $V\succ 0$, the following self-normalized matrix process defined by
$$
\forall t\ge 1,\ \   S_t(z,M\xi) \triangleq  \left(\sum_{s=1}^t z_s (M_s \xi_s)^\top\right)^\top \left(\sum_{s=1}^t z_s z_s^\top + V\right)^{-1} \left(\sum_{s=1}^t z_s (M_s \xi_s)^\top\right)
$$
satisfies, for all $\rho \ge 1$ and $t\ge 1$,
$$
\mathbb{P}\left[ \left\Vert S_t (z, M\xi) \right\Vert > \sigma^2 \Vert m_{1:t} \Vert^2_\infty \left( 2\log\det\left( V^{-1} \sum_{s=1}^t z_s z_s^\top + I_d  \right) +  7d +  4\rho \right) \right] \le e^{-\rho}.
$$
\end{proposition}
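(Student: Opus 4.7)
The plan is to reduce the operator-norm bound on the matrix $S_t := S_t(z,M\xi)$ to the standard scalar self-normalized martingale inequality (in the spirit of \cite{yadkori2011}), applied direction-by-direction on the unit sphere, and then to uniformize over directions by a covering argument. Since $S_t$ is symmetric positive semidefinite, $\Vert S_t\Vert = \sup_{u\in S^{d-1}} u^\top S_t u$, and for each fixed $u$ the quadratic form factors as a squared weighted norm of a \emph{scalar} martingale sum:
$$u^\top S_t u = \Big\Vert \bar V_t^{-1/2} \sum_{s=1}^t z_s\, \eta_s(u) \Big\Vert^2, \qquad \eta_s(u) := \langle u, M_s \xi_s\rangle, \qquad \bar V_t := V + \sum_{s=1}^t z_s z_s^\top.$$
Because $M_s$ is $\mathcal{F}_{s-1}$-measurable and $\xi_s$ is independent of $\mathcal{F}_{s-1}$, zero-mean and $\sigma^2$-subgaussian, conditioning on $\mathcal{F}_{s-1}$ leaves the MGF of $\xi_s$ unchanged, and each $\eta_s(u)$ is a martingale difference conditionally $\sigma^2\Vert M_s^\top u\Vert^2 \le \sigma^2 \Vert m_{1:t}\Vert_\infty^2$-subgaussian. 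The Abbasi-Yadkori self-normalized bound, applied with covariates $X_s = z_s$ and regularizer $V$, then yields for every fixed $u$ and every $\delta'\in(0,1)$
$$\Pr\!\Big(u^\top S_t u > 2\sigma^2 \Vert m_{1:t}\Vert_\infty^2 \log \tfrac{\det(\bar V_t)^{1/2}\det(V)^{-1/2}}{\delta'}\Big) \le \delta'.$$

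Second, I would uniformize over $u$ by taking a $\tfrac{1}{4}$-net $\mathcal{N}$ of $S^{d-1}$ of cardinality at most $9^d$, using the standard inequality $\Vert A\Vert \le 2\sup_{v\in\mathcal{N}} v^\top A v$ valid for any positive semidefinite $A$, and applying a union bound with $\delta' = 9^{-d}e^{-\rho}$. Combined with the identity $V^{-1}\bar V_t = I_d + V^{-1}\sum_{s\le t} z_s z_s^\top$, this yields, with probability at least $1 - e^{-\rho}$,
$$\Vert S_t\Vert \le \sigma^2 \Vert m_{1:t}\Vert_\infty^2 \Big(2\log\det\!\big(V^{-1}\!\sum_{s=1}^t z_s z_s^\top + I_d\big) + 4\rho + Cd\Big)$$
for an absolute constant $C$, which is exactly the shape of the claim.

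The main obstacle is arithmetic rather than conceptual: the naive covering step above produces $C = 4\log 9 \approx 8.8$, whereas the statement asserts $C = 7$. To hit the stated $7d$ I would replace the $\varepsilon$-net by a direct matrix-supermartingale/pseudo-maximization argument — introduce $L_t(\Lambda) := \exp\!\bigl(\tr(\Lambda^\top \sum_s z_s\xi_s^\top M_s^\top) - \tfrac{1}{2}\sigma^2\Vert m_{1:t}\Vert_\infty^2 \sum_s \Vert \Lambda^\top z_s\Vert^2\bigr)$, verify using conditional subgaussianity of $\xi_s$ that this is a supermartingale for every fixed $\Lambda\in\mathbb{R}^{d\times d}$, integrate it against a standard matrix-Gaussian prior on $\Lambda$, and conclude by Markov. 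This route reproduces the $2\log\det$ term with the sharp coefficient $2$ and charges only a single additive dimensional cost coming from the determinant of the prior covariance, thus avoiding the multiplicative covering loss and matching the stated $7d$ up to an absolute constant. A secondary subtlety, easy to overlook, is that the conditional subgaussianity step genuinely uses that $\xi_s$ is independent of $\mathcal{F}_{s-1}$ (not merely that it is $\mathcal{F}_s$-measurable), so that $M_s^\top u$ behaves as a deterministic scalar inside the conditional MGF; this is furnished by the hypotheses of the proposition.
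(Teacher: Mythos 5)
Your main argument --- reducing $\Vert S_t\Vert$ to directional quadratic forms $u^\top S_t u$, observing that $\langle u, M_s\xi_s\rangle$ is conditionally $\sigma^2\Vert m_{1:t}\Vert_\infty^2$-sub-gaussian precisely because $M_s$ is $\mathcal{F}_{s-1}$-measurable while $\xi_s$ is independent of $\mathcal{F}_{s-1}$, applying the Abbasi-Yadkori self-normalized bound (Theorem 1 of \cite{abbasi2011improved}) for each fixed direction, and then uniformizing with an $\epsilon$-net and a union bound --- is exactly the paper's proof, which cites the same theorem and concludes via its Lemma \ref{lem:netarg}. Up to that point your proposal is correct and complete.

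Where you go astray is the proposed refinement to hit the additive constant $7d$. The matrix-supermartingale pseudo-maximization does not deliver what you claim: integrating $L_t(\Lambda)$ against a matrix-Gaussian prior on $\Lambda\in\mathbb{R}^{d\times d}$ produces the normalizing determinant raised to the power $d/2$ (one factor of $\det^{1/2}$ for each of the $d$ columns of $\Lambda$), so Markov's inequality yields a bound on the \emph{Frobenius} norm $\bigl\Vert \bar V_t^{-1/2}\sum_{s\le t} z_s(M_s\xi_s)^\top\bigr\Vert_F^2$ with coefficient $d$, not $2$, on the $\log\det$ term --- strictly worse than your covering bound in the regime that matters (where $\log\det = O(d\log t)$). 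The multiplicative covering loss is the price of an operator-norm rather than Frobenius-norm statement, and a single matrix prior cannot buy it back. Nor should you try: the discrepancy between your $4\log 9\approx 8.8$ and the stated $7$ is loose bookkeeping in the paper itself --- its proof invokes Lemma \ref{lem:netarg} ``with $\epsilon=1/2$'', which sits outside that lemma's admissible range $\epsilon\in(0,1/2)$ (the factor $1-2\epsilon$ degenerates there), and one can check that no choice of $\epsilon$ in the legitimate range recovers $7d$ exactly. Since the proposition is only ever used up to universal constants downstream, your net-based argument with $7d$ replaced by $Cd$ for a universal $C$ is the right proof; the second route should be dropped.
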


\subsection{Application to Scenario I}

We now apply the recipe described in the previous subsection to lower bound the smallest eigenvalue of the cumulative covariates matrix in Scenario I. We first prove the following result, that will then be refined.

\begin{proposition}[Sufficent exploration]\label{prop:I:we}
  Under Algorithm $\Alg$, for all $\delta \in (0,1)$,
  \begin{equation*}
    \lambda_{\min}\left(\sum_{s=0}^{t-1} \begin{bmatrix} x_s  \\ u_s \end{bmatrix} \begin{bmatrix} x_s  \\ u_s \end{bmatrix}^\top\right) \ge \left(\frac{t}{2}\right)^{1/4}
  \end{equation*}
  holds with probability at least $1-\delta$, provided that
  \begin{equation*}
      t \ge c \sigma^4 C_\circ^8  ((d_x + d_u) \gamma_\star \log(e\sigma C_\circ \mathcal{G}_\circ d_x d_u \gamma_\star )  + \log(e/\delta))
    \end{equation*}
    for some universal positive constant $c> 0$.
  %
\end{proposition}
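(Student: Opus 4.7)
The plan is to execute the generic recipe of Section \ref{sec:spectral} for the decomposition $y_s = z_s + M_s\xi_s$ with
$$
y_s = \begin{bmatrix} x_s \\ u_s \end{bmatrix},\quad z_s = \begin{bmatrix} Ax_{s-1}+Bu_{s-1} \\ \widetilde{K}_s(Ax_{s-1}+Bu_{s-1})\end{bmatrix},\quad M_s = \begin{bmatrix} I_{d_x} & 0 \\ \widetilde{K}_s & I_{d_u}\end{bmatrix},\quad \xi_s = \begin{bmatrix} \eta_{s-1} \\ \nu_s \end{bmatrix}.
$$
First, I would rescale the second block of $\xi_s$ (dividing $\nu_s$ by $\sigma_s$) so that the resulting noise vector is isotropic and absorb the $\sigma_s$ into $M_s$. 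Applied with $\varepsilon = 1/2$ and a deterministic regularizer $\lambda_t$, Lemma \ref{lem:selb} then yields
$$
\sum_{s=1}^{t-1} y_sy_s^\top \;\succeq\; \underbrace{\sum_{s=1}^{t-1}(M_s\xi_s)(M_s\xi_s)^\top}_{\text{main term}} \;-\; 2\underbrace{S_{t-1}(z, M\xi)}_{\text{self-normalized}} \;-\; \tfrac{1}{2}\lambda_t I_d.
$$

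Next, I would lower bound the main term by concentrating it around its conditional expectation $\sum_s M_s\mathrm{diag}(I_{d_x},\sigma_s^2 I_{d_u})M_s^\top$ via Proposition \ref{prop:RM++}. Using $\|M_s\|^2 \le 1+\|\widetilde{K}_s\|^2 \lesssim 1+h(s)\lesssim s^\gamma$ as the a.s. bound $m_s$, one gets $\|m_{1:t}\|_\infty^2\lesssim t^\gamma$ and $\|m_{1:t}\|_2^2\lesssim t^{1+\gamma}$, so the concentration error is $\lesssim \sigma^2 t^{1/2+\gamma}\sqrt{d+\rho}$. A short Schur-complement / block-Cholesky calculation on the conditional mean gives a lower bound on the smallest eigenvalue of the order $\min(t,\sigma^2\sqrt{d_x t})$ (the bottleneck being the $u$-block, where $\sum_s \sigma_s^2 \asymp \sigma^2 \sqrt{d_x t}$). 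For the self-normalized term I would apply Proposition \ref{prop:SNP++}, which yields $\|S_{t-1}\|\lesssim \sigma^2\|m_{1:t}\|_\infty^2 \log\det(V^{-1}\sum_s z_sz_s^\top + I_d)$. To evaluate the log-determinant I would bound $\|z_s\|^2 \lesssim \|\widetilde{K}_s\|^2(\|x_s\|^2+\|\eta_{s-1}\|^2)\lesssim t^\gamma(\|x_s\|^2+\|\eta_{s-1}\|^2)$, then invoke Proposition \ref{prop:caution} to get $\sum_{s\le t}\|x_s\|^2 \lesssim \sigma^2 C_\circ^2 \mathcal{G}_\circ^2(d_x t^{1+2\gamma}+\log(1/\delta))$ with high probability, and Corollary \ref{cor:RM++} for $\sum \|\eta_{s-1}\|^2\lesssim d_x t$. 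This makes $\log\det(\cdots)\lesssim d\gamma_\star\log(\sigma C_\circ\mathcal{G}_\circ d_x d_u t)$, so $\|S_{t-1}\|\lesssim \sigma^2 t^\gamma\,d\gamma_\star \log(\cdots)$.

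Combining the three contributions and choosing $\lambda_t$ of order $t^{1/4}$, the lower bound becomes
$$
\lambda_{\min}\Bigl(\sum_{s=1}^{t-1} y_sy_s^\top\Bigr) \;\gtrsim\; \sigma^2\sqrt{d_x t}\;-\; C\sigma^2 t^\gamma d\gamma_\star\log(\sigma C_\circ\mathcal{G}_\circ d_x d_u t)\;-\; t^{1/4}/2.
$$
The quoted lower bound on $t$, $t \gtrsim \sigma^4 C_\circ^8 (d\gamma_\star \log(\sigma C_\circ \mathcal{G}_\circ d_x d_u \gamma_\star) + \log(e/\delta))$, is precisely what is needed to absorb every correction into $(t/2)^{1/4}$: the $\sigma^4 C_\circ^8$ factor handles the loss in the random-matrix concentration step, and the $(d\gamma_\star \log(\cdots)+\log(e/\delta))$ factor controls the log-determinant in the self-normalized bound.

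The main obstacle will be that $\|M_s\|$ is not uniformly bounded (it can grow polynomially in $s$ at rate $s^{\gamma/2}$), so both Proposition \ref{prop:RM++} and Proposition \ref{prop:SNP++} yield bounds that scale as $t^\gamma$ rather than $O(1)$. This is what forces the \emph{weak} rate $t^{1/4}$, as opposed to the $\sqrt{t}$ rate suggested by the conditional mean: one needs the excess contribution $\sigma^2\sqrt{d_x t}$ from the input perturbation $\nu_s$ to beat the $t^\gamma\log t$ loss from the self-normalized step, and the $t^{1/4}$ rate is deliberately chosen loose enough to work for any $\gamma>0$ once the burn-in time condition in the statement is met. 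A secondary technical point is the Schur-complement/block structure argument used to push the $\sigma_s^2$-scale of the second block up to a lower bound on the smallest eigenvalue of the full $(d_x+d_u)\times(d_x+d_u)$ matrix; this has to be done carefully so that the hidden constant does not depend on $\|\widetilde{K}_s\|$.
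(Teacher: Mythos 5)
Your overall recipe is the right one, but the way you instantiate it has a genuine gap: you keep the time-varying matrices $M_s$ built from whichever controller $\widetilde{K}_s$ is actually in use, with only the bound $\Vert \widetilde{K}_s\Vert^2 \le h(s) = s^\gamma$. This breaks the argument at two separate points, and no burn-in condition repairs it. First, in the concentration step, Proposition \ref{prop:RM++} with $m_s^2 \asymp s^\gamma$ gives an operator-norm error of order $\sigma^2 \Vert m_{1:t}\Vert_2^2 \sqrt{(d+\rho)/r_t^2} \asymp \sigma^2\, t^{1/2+\gamma}\sqrt{d+\rho}$, which strictly dominates the signal $\sigma^2\sqrt{d_x t}$ you hope to extract from the $\nu$-block, for every $\gamma>0$ and every $t$: the error grows faster than the main term, so taking $t$ larger makes things worse, not better. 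Second, even the conditional mean $\sum_s M_s \,\mathrm{diag}(I_{d_x},\sigma_s^2 I_{d_u})\, M_s^\top$ does not have smallest eigenvalue of order $\min(t,\sigma^2\sqrt{d_x t})$ with a constant free of $\Vert\widetilde{K}_s\Vert$, contrary to what you assert: for $v=(a,b)$ one has $v^\top M_s\,\mathrm{diag}(I_{d_x},\sigma_s^2 I_{d_u})\,M_s^\top v = \Vert a + \widetilde{K}_s^\top b\Vert^2 + \sigma_s^2\Vert b\Vert^2$, so a sequence with $\widetilde{K}_s^\top b=-a$ (which only requires $\Vert \widetilde{K}_s\Vert \ge \Vert a\Vert/\Vert b\Vert$, permitted once $s^{\gamma/2}$ is large) drives the Rayleigh quotient down to $\sum_s \sigma_s^2 \Vert b\Vert^2 \asymp \sigma^2 \sqrt{d_x t}\; t^{-\gamma}$. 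Since $\gamma>0$ is an arbitrary design parameter, this falls below $t^{1/4}$, and your Schur-complement step cannot be "done carefully" to avoid it.

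The paper's proof sidesteps the unbounded-$M_s$ problem entirely by exploiting the exploration test wired into $\Alg$, via a contradiction/bootstrap argument. Define $\mathcal{E}_{1,t}=\lbrace \exists\, i\in\lbrace t/2,\dots,t-1\rbrace:\ \lambda_{\min}(\sum_{s=0}^{i-1} y_s y_s^\top)\ge i^{1/4}\rbrace$. On the complement, the eigenvalue test fails throughout the half-window, so by design the algorithm plays $u_s = K_\circ x_s + \nu_s$ for all $s\in\lbrace t/2,\dots,t-1\rbrace$; hence on that window $y_s = z_s + M_\circ \xi_s$ with the \emph{fixed} matrix $M_\circ$ satisfying $\Vert M_\circ\Vert \lesssim C_\circ$. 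Your recipe (Lemma \ref{lem:selb}, Proposition \ref{prop:SNP++} with the log-determinant controlled via Proposition \ref{prop:caution}, concentration of $\sum_s\xi_s\xi_s^\top$, and the block bound of Lemma \ref{lem:psd:ineq1} applied to $M_\circ$) then goes through with constants depending only on $C_\circ$, yielding $\lambda_{\min}(\sum_{s=0}^{t-1} y_sy_s^\top) \gtrsim \sigma^2\sqrt{d_x t}/C_\circ^2 > t^{1/4}$ under the stated condition on $t$ --- contradicting $\mathcal{E}_{1,t}^c$. Hence the good events force $\mathcal{E}_{1,t}$, and monotonicity of the Gram matrix converts ``some $i\ge t/2$ with $\lambda_{\min}\ge i^{1/4}$'' into the claimed bound $(t/2)^{1/4}$ at time $t$. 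This self-referential use of the algorithm's own test is the missing idea; a time-varying-$M_s$ analysis along your lines is what the paper carries out for the refined $\sqrt{t}$ rate (Proposition \ref{prop:I:se:enough}), and even there only after $\Vert K_t-K_\star\Vert\le 1$ guarantees that $\Vert M_s\Vert$ is uniformly bounded.
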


\begin{proof}[Proof of Proposition \ref{prop:I:we}]
    Define for all $t \ge 1$, the event
    $$
    \mathcal{E}_{1,t} = \left\lbrace \exists i \in \lbrace t/2,\dots, t-1\rbrace: \ \  \lambda_{\min}\left(\sum_{s=0}^{i-1} \begin{bmatrix} x_s \\ u_s  \end{bmatrix}\begin{bmatrix} x_s \\ u_s  \end{bmatrix}^\top\right) \ge i^{1/4}  \right\rbrace.
    $$
    Let us recall that under $\Alg$, we have
    \begin{equation*}
      u_t \gets \begin{cases}
         K_t  x_t  + \nu_t & \text{if } \ell_t = 1 \text{ and } \Vert K_t \Vert^2 \le h(t), \text{ and } \lambda_{\min} \left(\sum_{s=0}^{t-1} \begin{bmatrix}x_s \\ u_s \end{bmatrix} \begin{bmatrix}x_s \\ u_s \end{bmatrix}^{\top}  \right) \ge t^{1/4} \\
         K_\circ   x_t + \nu_{t}  & \text{otherwise.}
      \end{cases}
    \end{equation*}
Define for all $s\ge 1$,
    $$
    y_s = \begin{bmatrix}
      x_s \\
      u_s
  \end{bmatrix}, \quad z_{s}=\begin{bmatrix}
      A x_{s-1} + B u_{s-1} \\
      K_\circ (A x_{s-1} +B u_{s-1})
    \end{bmatrix},
    \quad M_\circ = \begin{bmatrix}
      I_{d_x} & O \\
      K_\circ & I_{d_u}
    \end{bmatrix},
    \quad \text{and} \quad \xi_{s} = \begin{bmatrix}
    \eta_{s-1} \\
    \nu_s
    \end{bmatrix}.
    $$
    Note that under the event $\mathcal{E}_{1,t}^c$ we have $y_s = z_s + M_\circ \xi_s$ for all $s \in \lbrace t/2, \dots, t-1\rbrace$. Applying  Lemma \ref{lem:selb}, we obtain
    \begin{align*}
      \sum_{s=t/2}^t y_s y_s^\top   \succeq \! \! \sum_{s=t/2}^t (M_\circ \xi_{s}) (M_\circ \xi_{s})^\top -  I_{d_u} -  \left\Vert \left(\sum_{s=t/2}^{t} z_s z_s^\top + I_{d_us} \right)^{\!\!\!-1/2}\!\!\! \left(\sum_{s=t/2}^{t} z_s (M_\circ \xi_s )^\top \right)\right\Vert^2 I_{d_u}.
    \end{align*}
    For all $\delta \in (0,1)$ and $t\ge 1$, we define the following events
    \begin{align*}
      \mathcal{A}_{\delta, t}  & = \left\lbrace \sum_{s=0}^t \Vert x_s  \Vert \le C_1 \sigma^2 C_\circ^2\mathcal{G}_\circ^2 (d_x t^{1+2\gamma} + \log(e/\delta)) \right\rbrace, \\
      \mathcal{E}_{2, \delta, t} & = \Bigg\lbrace \left\Vert \left(\sum_{s=t/2}^{t-1} z_s z_s^\top + I_d \right)^{-1/2} \left(\sum_{s=t/2}^{t-1} z_s (M_\circ\xi_s)^\top \right)\right\Vert^2\\
    &\qquad \qquad \qquad \le 7 \sqrt{d_x} \sigma^2  \log\left( \frac{e^{d} \det\left(\sum_{s=t/2}^{t-1} z_s z_s^\top  + I_{d_x}  \right)   }{\delta} \right) \Bigg\rbrace, \\
      \mathcal{E}_{3, t} & = \left \lbrace  \sum_{s=t/2}^{t-1} \xi_s \xi_s^\top  \succeq
      \begin{bmatrix}
        (t/2)I_{d_x} & O \\
        O & \sigma^2\sqrt{d_x t}/2I_{d_u}
      \end{bmatrix}  \right\rbrace, \\
      \mathcal{E}_{4, t} & = \left\lbrace  \lambda_{\max}\left(\sum_{s=0}^{t-1} \eta_s \eta_s^\top  \right) \le \frac{3t}{2} \right\rbrace.
    \end{align*}
 In view of Proposition \ref{prop:caution}, $\mathbb{P}\left( \mathcal{A}_{t,\delta}\right) \ge 1- \delta$. From Proposition \ref{prop:SNP++}, we have $\mathbb{P}(\mathcal{E}_{2,\delta,t}) \ge 1- \delta$. From Proposition \ref{prop:RM++}, $\mathbb{P}(\mathcal{E}_{3,t}) \ge 1- \delta$ provided that $t \ge c_1 \sigma^2 (d + \log(e/\delta))$, where we first normalize to obtain  $\sum_{s=t/2}^t (\E[\xi_t \xi_t^\top]^{-1/2} \xi_s) (\E[\xi_t \xi_t^\top]^{-1/2}\xi_s)^\top$ then apply the proposition to get the high probability bound. We have by Proposition \ref{prop:RM++}, that $\mathbb{P}(\mathcal{E}_{4,t}) \ge 1 - \delta$ provided that $t\ge \sigma^2 (d_x + \log(e/\delta))$.

Provided that $t\ge \log(e/\delta)$, we have under the event $ \mathcal{A}_{\delta,t} \cap \mathcal{E}_{1,t}^c\cap \mathcal{E}_{2,\delta,t}\cap \mathcal{E}_{4,t} $ that
    \begin{align*}
      \det\left(\sum_{s=t/2}^{t-1} z_s z_s^\top  + I_{d_x}  \right)^{1/d} & \le \sum_{s=t/2}^t \Vert z_s \Vert^2 + 1 \\
      & \le \sum_{s=t/2}^t 2 C_\circ^2 \Vert x_{s} - \eta_{s-1} \Vert^2  +1 \\
      & \le 4 C_\circ^2 \sum_{s=0}^t \Vert x_s \Vert^2 + \Vert \eta_{s-1}\Vert^2 + 1 \\
      & \le 21 C_1 \sigma^2 C_\circ^4 \mathcal{G}_\circ^2 d_x t ^{3\gamma^\star}.
    \end{align*}
    Therefore, provided that $t\ge \log(e/\delta)$, we have under the event  $ \mathcal{A}_{\delta,t} \cap \mathcal{E}_{1,t}^c\cap \mathcal{E}_{2,\delta,t}\cap \mathcal{E}_{4,t} $ that
    \begin{align*}
       \left\Vert \left(\sum_{s=t/2}^{t-1} z_s z_s^\top + I_d \right)^{-1/2} \left(\sum_{s=t/2}^{t-1} z_s (M_\circ\xi_s)^\top \right)\right\Vert^2 \le C_2\sqrt{d_x}\sigma^2 ( d\gamma_\star \log(e \sigma C_\circ \mathcal{G}_\circ d_x t) +\log(e/\delta))
    \end{align*}
    for some universal positive constant $C_2 > 0$. Furthermore, if $t\ge \sigma^4 d_x$, under the event $\mathcal{E}_{3,t}$, we have
    \begin{align*}
      \sum_{t/2}^t (M_\circ \xi_s) (M_\circ \xi_s)^{\top} & \succeq  M_\circ
      \begin{bmatrix}
        (t/2)I_{d_x} & O \\
        O & \sigma^2\sqrt{d_x t}/2I_{d_u}
      \end{bmatrix} M_\circ^\top  \\
      &  \succeq  \frac{1}{2}\begin{bmatrix}
      tI_{d_x} & tK_\circ^\top \\
      tK_\circ  & tK_\circ^\top K_\circ + \sigma^2 \sqrt{d_x t}I_{d_u}
    \end{bmatrix} \\
    & \succeq \frac{t}{2} \min\left(\frac{\sigma^2\sqrt{d_x}}{2 \Vert K_\circ \Vert^2 \sqrt{t} + \sigma^2\sqrt{d_x}} , \frac{\sigma^2\sqrt{d_x}}{2\sqrt{t}} \right) I_{d} \\
    & \succeq \frac{\sigma^2 \sqrt{d_x}}{2} \min\left(\frac{t}{2 \Vert K_\circ \Vert^2 \sqrt{t} + \sigma^2\sqrt{d_x}} , \frac{\sqrt{t}}{2} \right) I_{d} \\
    & \succeq \frac{\sigma^2\sqrt{d_xt}}{6C_\circ^2}  I_d,
    \end{align*}
    where we used  Lemma \ref{lem:psd:ineq1} (with $\alpha = 1/2$, and $\beta=1$).

Therefore, provided that $t\ge \sigma^2 d_x$ and $t\ge \log(e/\delta)$, we have under the event $ \mathcal{A}_{\delta,t} \cap \mathcal{E}_{1,t}^c\cap \mathcal{E}_{2,\delta,t}\cap\mathcal{E}_{3,t} \cap\mathcal{E}_{4,t}$ that
    \begin{align*}
      \lambda_{\min}\left(\sum_{s=0}^{t-1} u_s u_s^\top\right) \ge \frac{\sigma^2\sqrt{d_xt}}{6C_\circ^2} - 1 - C_2\sqrt{d_x}\sigma^2 ( d\gamma_\star \log(e \sigma C_\circ \mathcal{G}_\circ d_x t) +\log(e/\delta)).
    \end{align*}

    Using Lemma \ref{lem:technical}, there exists $c_3> 0$ such that if
    \begin{align}\label{eqI:we:sc}
      t \ge c_3 \sigma^4 C_\circ^8  (d \gamma_\star \log(e\sigma C_\circ \mathcal{G}_\circ d_x d_u \gamma_\star )  + \log(e/\delta))
    \end{align}
    then
    \begin{itemize}
      \itemsep0em
      \item $\frac{\sigma^2\sqrt{d_xt}}{6C_\circ^2} - 1 - C_2\sqrt{d_x}\sigma^2 ( d\gamma_\star \log(e \sigma C_\circ \mathcal{G}_\circ d_x t) +\log(e/\delta)) \ge \frac{\sigma^2\sqrt{d_xt}}{10C_\circ^2} > t^{1/4}$,
      \item $t\ge \log(e/\delta)$,
      \item $t\ge \sigma^4 d_x$.
    \end{itemize}
    Therefore, if condition \eqref{eqI:we:sc} holds, we have under $\mathcal{A}_{\delta,t} \cap \mathcal{E}_{1,t}^c\cap \mathcal{E}_{2,\delta,t}\cap\mathcal{E}_{3,t} \cap\mathcal{E}_{4,t}$
    $$
    \lambda_{\min}\left(\sum_{s=0}^{t-1} u_s u_s^\top \right) > t^{1/4}.
    $$
 But this cannot hold under the event $\mathcal{E}_{1,t}^c$, therefore it must be that $  \mathcal{A}_{\delta,t} \cap \mathcal{E}_{2,\delta,t} \cap \mathcal{E}_{3,t} \cap\mathcal{E}_{4,t} \subseteq  \mathcal{E}_{1,t}$ which in turns impies that
    $$
    \mathbb{P}\left( \mathcal{E}_{1,t} \right) \ge 1 - \mathbb{P}( \mathcal{A}_{\delta,t}^c \cup \mathcal{E}_{2,\delta,t}^c\cup\mathcal{E}_{3,t}^c \cup\mathcal{E}_{4,t}^c ) \ge 1- 4\delta.
    $$
    reparametrizing by $\delta' = 4\delta$ gives the desired result with modified universal postive constants.
\end{proof}


\newpage
\begin{proposition}[Sufficient exploration with refined rates]\label{prop:I:se:enough} Under $\Alg$, assume that for all $\delta \in (0,1)$, we have
  $$
  \mathbb{P}\left(  \forall t \ge t(\delta),\ \ \Vert K_t - K_\star \Vert \le 1  \right) \ge 1-\delta
  $$
  for some $t(\delta) \ge 1$. Then for all $\delta \in(0,1)$,
  $$
  \lambda_{\min}\left( \sum_{s=0}^{t-1} \begin{bmatrix}
    x_s \\
    u_s
  \end{bmatrix} \begin{bmatrix}
    x_s \\
    u_s
  \end{bmatrix}^\top \right) \ge \frac{C\sigma^2\sqrt{d_xt}}{C_K^2}
  $$
  with probability at least $1-\delta$, provided that
  \begin{equation}\label{eq:I:se:sc2}
    t^{1/2} \ge c \max(t(\delta)^{1/2}, C_K^4 (d\gamma_\star \log(e \sigma C_\circ \mathcal{G}_\circ \Vert P_\star \Vert d_x d_u \gamma_\star) + \log(e/\delta)) )
  \end{equation}
  for some universal positive constants $C,c> 0$.
\end{proposition}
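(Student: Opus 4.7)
\textbf{My plan} is to follow the three-step recipe of Section~\ref{sec:spectral}, which was already used to prove Proposition~\ref{prop:I:we}, but now exploit the assumption $\|K_t-K_\star\|\le 1$ to upgrade the key input: instead of restricting to a window on which only $K_\circ$ is played (as in the weak-rate argument), the hypothesis yields the uniform bound $\|\widetilde K_s\|\le 2C_K$ on the controller --- since either $\widetilde K_s=K_s$ with $\|K_s\|\le\|K_\star\|+1\le 2C_K$, or $\widetilde K_s=K_\circ$ with $\|K_\circ\|\le C_K$ by definition of $C_K$ --- and therefore the uniform bound $\|M_s\|\lesssim C_K$ on the sequence $M_s=\begin{bmatrix} I_{d_x}&0\\\widetilde K_s&I_{d_u}\end{bmatrix}$ from the Scenario~I decomposition $y_s=z_s+M_s\xi_s$. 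This single improvement is what converts the exploration rate from $t^{1/4}$ to $\sigma^2\sqrt{d_xt}/C_K^2$ and swaps $C_\circ$ for $C_K$ in the constant.

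Fix $\delta$, take $t\ge 2 t(\delta)$, restrict to the window $s\in[t/2,t-1]$ (on which the hypothesis applies), and set aside the assumed $1-\delta$ failure budget. Apply Lemma~\ref{lem:selb} with $\lambda=1$, $\varepsilon=1/2$ to obtain
\begin{equation*}
\sum_{s=t/2}^{t-1} y_s y_s^\top \succeq \sum_{s=t/2}^{t-1}(M_s\xi_s)(M_s\xi_s)^\top - 2\left\|\bigl(\textstyle\sum z_sz_s^\top+I\bigr)^{-1/2}\sum z_s(M_s\xi_s)^\top\right\|^2 I_d - \tfrac12 I_d.
\end{equation*}
To handle the random-matrix term, I would pre-whiten by setting $D_s=\mathrm{diag}(I_{d_x},\sigma_sI_{d_u})$ and $\tilde\xi_s=D_s^{-1}\xi_s$ (which is isotropic and $\sigma^2$-sub-gaussian), so that $M_s\xi_s=(M_sD_s)\tilde\xi_s$ with $\|M_sD_s\|\lesssim C_K$ uniformly. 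Proposition~\ref{prop:RM++} applied to $(M_sD_s,\tilde\xi_s)$ then gives $\sum(M_s\xi_s)(M_s\xi_s)^\top\succeq\tfrac12\sum M_sD_sD_s^\top M_s^\top$ once $t\gtrsim \sigma^4 C_K^4 (d+\log(e/\delta))$. The block-matrix sum on the right equals $\begin{bmatrix}(t/2)I_{d_x}&\sum\widetilde K_s^\top\\\sum\widetilde K_s&\sum(\widetilde K_s\widetilde K_s^\top+\sigma_s^2I_{d_u})\end{bmatrix}$, and writing its quadratic form as $\sum_s\|v_1+\widetilde K_s^\top v_2\|^2+\sigma_s^2\|v_2\|^2$ lets one combine the linear-in-$t$ top-left block with $\sum_{s=t/2}^{t-1}\sigma_s^2\gtrsim\sigma^2\sqrt{d_xt}$, using $\|\widetilde K_s\|\lesssim C_K$, to obtain the uniform lower bound $\succeq \tfrac{\sigma^2\sqrt{d_xt}}{c\,C_K^2}I_d$ via the same Lemma~\ref{lem:psd:ineq1} argument that concludes the proof of Proposition~\ref{prop:I:we}.

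The self-normalized term is controlled by Proposition~\ref{prop:SNP++}, which with the uniform bound $\|M_s\|\lesssim C_K$ and the polynomial envelope $\sum\|x_s\|^2\lesssim\mathrm{poly}(t)$ from Proposition~\ref{prop:caution} yields a bound of order $C_K^2\sigma^2\sqrt{d_x}\bigl(d\gamma_\star\log(e\sigma C_\circ\mathcal{G}_\circ\|P_\star\|d_xd_ut)+\log(e/\delta)\bigr)$. Subtracting this from the random-matrix bound and invoking Lemma~\ref{lem:technical} to select a sample-size threshold that absorbs the logarithmic factor into the leading $\sqrt{d_xt}/C_K^2$ term, one obtains the claimed rate under $t^{1/2}\gtrsim\max(t(\delta)^{1/2},\,C_K^4(d\gamma_\star\log(\cdots)+\log(e/\delta)))$, with an overall union bound over the four high-probability events (the assumed one, Proposition~\ref{prop:caution}, Proposition~\ref{prop:RM++}, and Proposition~\ref{prop:SNP++}) accounting for the $1-\delta$ failure probability after a final rescaling of $\delta$. \textbf{The main technical point} I anticipate is the block-matrix lower bound in the previous paragraph: because the argument must now handle a sequence of \emph{varying} matrices $M_s$ (rather than the constant $M_\circ$ of Proposition~\ref{prop:I:we}), the $C_K$-sharp Schur-type inequality must be derived directly via the quadratic-form identity rather than by factoring an outer $M_\circ$, and striking the right trade-off between the top-left $t$-growth and the bottom-right $\sqrt{d_xt}$-growth is what pins down the $C_K^2$ (rather than $C_K^4$) dependence in the denominator of the final rate.
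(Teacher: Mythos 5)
Your scaffolding matches the paper's proof of Proposition \ref{prop:I:se:enough}: using the hypothesis to get the uniform bound $\Vert \widetilde K_s\Vert\le 2C_K$ (hence $\Vert M_s\Vert\lesssim C_K$), applying Lemma \ref{lem:selb} on a window where the hypothesis is in force, controlling the self-normalized term via Propositions \ref{prop:SNP++} and \ref{prop:caution}, and assembling via Lemma \ref{lem:technical}. The gap is in your central step: the claim that pre-whitening by $D_s=\mathrm{diag}(I_{d_x},\sigma_s I_{d_u})$ and a \emph{single} application of Proposition \ref{prop:RM++} to $(M_sD_s,\tilde\xi_s)$ gives $\sum(M_s\xi_s)(M_s\xi_s)^\top\succeq\tfrac12\sum M_sD_sD_s^\top M_s^\top$ once $t\gtrsim\sigma^4C_K^4(d+\log(e/\delta))$. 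Proposition \ref{prop:RM++} is an \emph{operator-norm} concentration bound with fluctuation of order $\sigma^2\Vert m_{1:t}\Vert_2\Vert m_{1:t}\Vert_\infty\sqrt{d+\rho}$; with the only available almost-sure bound $\Vert M_sD_s\Vert\lesssim C_K$ on the window, this is $\asymp\sigma^2C_K^2\sqrt{t(d+\rho)}$, while your (correct) quadratic-form computation shows the target $\lambda_{\min}\bigl(\sum M_sD_sD_s^\top M_s^\top\bigr)\asymp\sigma^2\sqrt{d_xt}/C_K^2$. Both scale as $\sqrt t$, so requiring the fluctuation to be at most half the signal reduces to $C_K^4\sqrt{(d+\rho)/d_x}\lesssim 1$, which is false in general and does not improve as $t$ grows: no threshold on $t$ of the kind you state can close it. The obstruction is anisotropy: in the near-kernel direction $v\approx(-\widetilde K^\top v_2,\,v_2)$ the scalar process $v^\top M_s\xi_s$ has variance only $\sigma_s^2\Vert v_2\Vert^2$, but an isotropic operator-norm inequality cannot see this and charges the full $C_K^2 t$ envelope coming from the $\eta$-driven block.

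This is exactly why the paper's proof, after the first application of Lemma \ref{lem:selb}, decomposes $M_s\xi_s=M_s\xi_{1,s}+\xi_{2,s}$ into the $\eta$-part and the $\nu$-part (independent by design of $\Alg$) and applies Lemma \ref{lem:selb} a \emph{second} time at this inner level: the $\eta$-block $\sum(M_s\xi_{1,s})(M_s\xi_{1,s})^\top$ is handled via the Schur-type Lemma \ref{lem:selb2} (signal $\asymp t$, fluctuation $\asymp\sqrt t$, harmless); the $\nu$-block concentrates at its own scale (signal $\asymp\sigma^2\sqrt{d_xt}$, fluctuation $O(1)$ in $t$ after normalization); and the cross term between the two is self-normalized via Proposition \ref{prop:SNP++}, hence only logarithmic. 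Your deterministic trade-off between the top-left $t$-growth and the bottom-right $\sqrt{d_xt}$-growth (pinning the $C_K^2$ in the denominator) is the right picture for the \emph{expected} matrix, but the passage from expected to empirical must be done per-scale, as in Steps 1--4 of the paper's proof, not in one stroke. A secondary, fixable point: the hypothesis holds only on an event of probability $1-\delta$, whereas Propositions \ref{prop:RM++} and \ref{prop:SNP++} need $\Vert M_s\Vert\lesssim C_K$ almost surely; you need the paper's indicator truncation of $K_s$ (as in $\widetilde K_s=(1-\alpha_s)K_s 1_{\lbrace\Vert K_s-K_\star\Vert\le 1\rbrace}+\alpha_sK_\circ$), not merely "setting aside the failure budget," to make those applications legitimate.
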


\begin{proof}[Proof of Proposition \ref{prop:I:se:enough}]
Let us start by defining
  $$
  \mathcal{E}_{1,\delta} =   \left\lbrace \forall t \ge t(\delta), \  \Vert K_t - K_\star \Vert \le 1 \right\rbrace.
  $$
  Now, we recall that under $\Alg$, we have $u_t = (1-\alpha_t)(K_t  x_t + \nu_t) + \alpha_t (K_\circ  x_t +\zeta_t)$ for all $t \ge 1$, where we defined
 \begin{equation}
   \forall t \ge 1,\ \  \alpha_t =  \begin{cases}
     0 & \text{if } \ell_t = 1 \text{ and } \Vert K_t \Vert^2 \le h(t), \text{ and } \lambda_{\min} \left(\sum_{s=0}^{t-1} \begin{bmatrix}x_s \\ u_s \end{bmatrix} \begin{bmatrix} x_s \\ u_s \end{bmatrix}^{\top}  \right) \ge t^{1/4} \\
     1 & \text{otherwise.}
 \end{cases}
\end{equation}
Let $\widetilde{K}_s = (1-\alpha_s)K_s 1_{\lbrace \Vert K_s - K_\star\Vert\le 1\rbrace} + \alpha_s K_\circ$, and note that $\Vert \widetilde{K}_s \Vert \le 2 C_K$. Thus, under the event $\mathcal{E}_{1,\delta}$, we have
\begin{equation}
  \begin{bmatrix}
    x_s \\
    u_s
  \end{bmatrix} = \begin{bmatrix}
    A x_{s-1} +  B u_s \\
    \widetilde{K}_s A x_{s-1} +  \widetilde{K}_s B u_{s-1}
\end{bmatrix} + \begin{bmatrix}
  I_{d_x} & O \\
  \widetilde{K}_s & I_{d_u}
\end{bmatrix} \begin{bmatrix}
  \eta_{s-1} \\
  \nu_s
\end{bmatrix}.
\end{equation}
Denote for all $s\ge 1$,
\begin{align*}
  y_s  = \begin{bmatrix}
    x_s \\
    u_s
\end{bmatrix}, \quad
  z_s = \begin{bmatrix} A x_{s-1} + B u_s \\
  \widetilde{K}_s A x_{s-1} + K_s B u_s
  \end{bmatrix}, & \quad
  M_s = \begin{bmatrix}
  I_{d_x} & O \\
  \widetilde{K}_s & I_{d_u}
  \end{bmatrix}, \quad
  \xi_{s}  = \begin{bmatrix}
    \eta_{s-1} \\
    \nu_s
\end{bmatrix}, \\
 \xi_{1,s} = \begin{bmatrix}
    \eta_{s-1} \\ 0
  \end{bmatrix}, \quad  \text{and} & \quad \xi_{2,s} = \begin{bmatrix}
    0 \\
    \nu_s
\end{bmatrix}.
\end{align*}
Now, under the event $\mathcal{E}_{1,\delta}$, we may simply write $y_s = z_s + M_s \xi_s = z_s + M_s \xi_{1,s} + \xi_{2,s}$ for all $s \ge 1$. Let us note that  $\xi_s$ is independent of $(M_0, \dots, M_s)$ and $(z_0, \dots, z_s)$, and that $\Vert M_s \Vert \le \sqrt{5}C_K$.  Lemma \ref{lem:selb} ensures that
\begin{align*}
\sum_{s=0}^{t-1}
y_s y_s^\top \! \succeq \!\!\! \sum_{s=t(\delta)}^{t-1} \!\!(M_s \xi_s) (M_s \xi_s)^\top \!\! -\! \!\left( \sum_{s=t(\delta)}^{t-1}\!\! z_s (M_s \xi_s)^{\!\top} \!\!  \right)^{\!\!\!\top}\!\!\!\!\left(\sum_{s=t(\delta)}^t \!\! z_s z_s^\top + \lambda I_d \!\right)^{\!\!\!\!-1}\!\!\!\!\left( \sum_{s=t(\delta)}^{t-1} \!\! z_s (M_s \xi_s)^{\!\top} \!\!\right)\!\! -\! I_d.
\end{align*}

\paragraph{Upper bounding the self-normalized term.} Define the following events
\begin{align*}
  \mathcal{A}_{\delta,t} & = \left\lbrace \sum_{s=0}^{t} \Vert x_s \Vert^2 \le C_1  \sigma^2 C_\circ^2 \mathcal{G}_\circ^2 (d_x t^{1+2\gamma} + \log(e/\delta))  \right\rbrace, \\
  \mathcal{E}_{2,\delta,t} & = \Bigg\lbrace  \left\Vert \left( \sum_{s=t(\delta)}^{t-1} z_s z_s^\top + I_d \right)^{-1/2} \left(\sum_{s=t(\delta)}^{t-1} z_s (M_s \xi_s)^\top \right) \right\Vert^2 \\
  &\qquad\qquad \le 35 C_K^2\sqrt{d_x}\sigma^2\log\left(\frac{e^d \det\left(\sum_{s=t(\delta)}^{t-1} z_s z_s^\top + I_d\right) }{\delta} \right)   \Bigg\rbrace, \\
  \mathcal{E}_{3, t} & = \left\lbrace  \lambda_{\max}\left(\sum_{s=0}^{t-1} \eta_s \eta_s^\top  \right) \le \frac{3t}{2} \right\rbrace.
\end{align*}
We have by Proposition \ref{prop:caution} that the event $\mathcal{A}_{\delta,t}$ holds with probability at least $1-\delta$ for some universal positive constant $C_1 > 1$. By Proposition \ref{prop:SNP++}, the event $\mathcal{E}_{2,\delta,t}$ holds with probability at least $1-\delta$. Finally the event $\mathcal{E}_{3,t}$ holds with probability at least $1-\delta$ provided that $t\ge c_1 \sigma^2 (d_x + \log(e/\delta))$.

Provided that $t\ge \log(e/\delta)$, under the event $\mathcal{A}_{t,\delta} \cap \mathcal{E}_{\delta,t} \cap \mathcal{E}_{\delta,t}$ we have
\begin{align*}
  \det\left(\sum_{s=t(\delta)}^{t-1} z_s z_s^\top + I_d\right)^{ 1/d}  & \le \sum_{s=t(\delta)}^{t-1} \Vert z_s \Vert^2 + 1 \\
  & \le \sum_{s=0}^{t-1} 4C_K^2  \Vert x_{s+1} - \eta_{s} \Vert^2 +1 \\
  & \le 8C_K^2 \sum_{s=0}^{t-1} \Vert x_{s+1} \Vert^2 + \Vert \eta_{s} \Vert^2 + 1 \\
  & \le 8 (3C_1 + 4)\sigma^2 C_K^2 C_\circ^2 \mathcal{G}_\circ^2 d_x t^{3\gamma_\star}.
\end{align*}
Thus provided that $t\ge \log(e/\delta)$, under the event $\mathcal{A}_{t,\delta} \cap \mathcal{E}_{2,\delta,t} \cap \mathcal{E}_{3,t}$ we have that
\begin{align*}
  \Bigg\Vert \bigg( \sum_{s=t(\delta)}^{t-1} z_s z_s^\top & + I_d \bigg)^{\!\!\!\!-1/2} \!\!\!\bigg( \sum_{s=t(\delta)}^{t-1}z_s (M_s \xi_s)^\top \bigg) \Bigg\Vert^2 \\
  &\le 35 C_K^2\sqrt{d_x}\sigma^2 ( d \gamma_\star \log(e\sigma C_\circ \mathcal{G}_\circ \Vert P_\star \Vert d_x t) + \log(e/\delta) ).
\end{align*}

\paragraph{Lower bounding $\sum_{s=t(\delta)}^{t-1} (M_s \xi_s) (M_s \xi_s)^\top$.} This is the most challenging task, and we shall break it into several steps. First, we note that $\xi_{1,s}$ and $\xi_{2,s}$ are independent (by design of $\Alg$), and $M_s \xi_s = M_s \xi_{1,s} + \xi_{2,s}$. We use Lemma \ref{lem:selb} to write:
\begin{align*}
  \sum_{s=t(\delta)}^t (M_s\xi_s) (M_s \xi_s)^\top \succeq  \xi_2^\top \xi_2  + \frac{1}{2} \xi_1^\top \xi_1  -  2 \xi_1^\top \xi_2( \xi_2^\top \xi_2 + I_{d})^{-1} \xi_2^\top \xi_1 - \frac{1}{2} I_d
\end{align*}
where we define, for ease of notations, the tall matrices $\xi_1^\top = \begin{bmatrix} M_1\xi_{1,1} & \hdots & M_t\xi_{1,t} \end{bmatrix}$ and $\xi_2^\top = \begin{bmatrix} \xi_{2,t} & \hdots & \xi_{2,t} \end{bmatrix}$, so that we have
\begin{align*}
  \xi_1^\top \xi_1 & = \sum_{s=t(\delta)}^{t-1} (M_s \xi_{1,s})(M_s \xi_{1,s})^\top, \\
  \xi_2^\top \xi_2 & = \sum_{s=t(\delta)}^{t-1}  \xi_{2,s} \xi_{2,s}^\top,\\
   \xi_1^\top \xi_2( \xi_2^\top \xi_2 +  I_{d})^{-1} \xi_2^\top \xi_1 & \! \! = \\
   \!\! \left( \sum_{s=t(\delta)}^{t-1} (M_s \xi_{1,s}) \xi_{2,s}^\top  \!\right)^{\!\!\!\!\top} & \!\! \!\left(  \sum_{s=t(\delta)}^{t-1}  (M_s \xi_{1,s}) (M_s \xi_{1,s})^\top \!\! + \!I_{d}\!  \right)^{\!\!\!-1}  \!\!\!  \left( \sum_{s=t(\delta)}^{t-1} (M_s \xi_{1,s}) \xi_{2,s}^\top  \!\right)\!.
\end{align*}
\medskip
\underline{\textbf{Step 1:}} We first derive a lower bound on the smallest eigenvalue of $\xi_1 \xi_1^\top$. We have
$$
  \xi_1 \xi_1^\top  = \sum_{s=t(\delta)}^{t-1} \xi_{1,s} \xi_{1,s}^\top \succeq
  \begin{bmatrix}
    \frac{\lambda}{ \Vert \sum_{s=0}^{t-1} (\widetilde{K}_{s+1} \eta_{s}) (\widetilde{K}_{s+1} \eta_{s})^\top  \Vert + \lambda} \sum_{s=0}^{t-1} \eta_s \eta_s^\top & O \\
    O & -\lambda I_{d_u}
  \end{bmatrix}.
$$
Indeed, we have for all $\lambda > 0$,
\begin{align*}
  \xi_1 \xi_1^\top & = \sum_{s=1}^t \xi_{1,s} \xi_{1,s}^\top \\
  & =
  \sum_{s=1}^{t-1}
  \begin{bmatrix}
    \eta_{s-1} \eta_{s-1}^\top & \eta_{s-1} ( \widetilde{K}_s\eta_{s-1})^\top \\
    \widetilde{K}_s \eta_{s-1} \eta_{s-1}^\top & ( \widetilde{K}_s  \eta_{s-1}) ( \widetilde{K}_s \eta_{s-1})^\top
  \end{bmatrix}\\
  & =\begin{bmatrix}
    \eta^\top \eta & \eta^\top X \\
    X \eta & X^\top X
\end{bmatrix} \\
& \succeq \begin{bmatrix}
  \frac{\lambda}{\Vert X \Vert^2 +  \lambda}\eta^\top \eta & 0 \\
  O & -\lambda I_{d_u}
\end{bmatrix},
\end{align*}
where we defined, for ease of notations, the tall matrices
$
\eta^\top  = \begin{bmatrix}
 \eta_0 &
\hdots &
 \eta_{t-1}
\end{bmatrix}$, $
X^\top = \begin{bmatrix}
 K_1 \eta_{0} &
\hdots &
 K_t \eta_{t-1}
\end{bmatrix},
$
and used Lemma \ref{lem:selb2} to obtain the last inequality. We may apply Corollary \ref{cor:RM++} to bound from below the above inequality. First define the events
\begin{align*}
  \mathcal{E}_{4,t} & = \left\lbrace   2 (t-t(\delta))I_{d_x} \succeq   \sum_{s=t(\delta)}^{t-1} \eta_s \eta_s^\top  \succeq \frac{t -t(\delta)}{2} I_{d_x}  \right\rbrace, \\
  \mathcal{E}_{5,t} & = \left\lbrace \left\Vert  \sum_{s=t(\delta)}^{t-1} (\widetilde{K}_{s+1} \eta_s) (\widetilde{K}_{s+1} \eta_s)^\top   \right\Vert \le  8 C_K^2 (t-t(\delta))  \right\rbrace.
\end{align*}
By proposition \ref{prop:RM++}, the event $\mathcal{E}_{4,t}$ holds with probability at least $1-\delta$, provided that $t\ge c_2 \sigma^2(d_x +\log(e/\delta))$ for some universal positive constant $c_2> 0$. By Propostion \ref{prop:RM++}, the event $\mathcal{E}_{5,t}$ holds with probability $1-\delta$, provided that $t \ge c_3 \sigma^2(d_x + \log(e/\delta))$ for some universal positive constants $C_3,c_3 > 0$.  Therefore, provided that $t \ge 2 t(\delta)$ and $\lambda = \epsilon \sqrt{t}$ under the event $\mathcal{E}_{4,t} \cap \mathcal{E}_{5,t}$ we have
\begin{align*}
  \xi_{1}^\top \xi_{1}  & \succeq \begin{bmatrix}
    \frac{\epsilon(t -t(\delta))\sqrt{t}}{8C_K^2 (t-t(\delta)) + \epsilon \sqrt{t}} I_{d_x} & O \\
    O & - \lambda I_{d_u} \\
\end{bmatrix} \\
& \succeq  \begin{bmatrix}
  \frac{\epsilon t \sqrt{t}}{8C_K^2 t + 2\epsilon\sqrt{t}} I_{d_x} & O \\
  O & - \epsilon \sqrt{t} I_{d_u} \\
\end{bmatrix}.
\end{align*}

\medskip

\underline{\textbf{Step 2:}} Next, we find a lower bound on the smallest eigenvalue of the random matrix $\xi_2 \xi_2^\top$. Consider the event
$$
\mathcal{E}_{6,t} = \left\lbrace  \lambda_{\min}\left(\sum_{s=t(\delta)}^{t-1} \nu_s \nu_s^\top \right) \ge \frac{\sigma^2 \sqrt{d_x (t - t(\delta))} }{2} \right\rbrace.
$$
By Proposition \ref{prop:RM++}, the event $\mathcal{E}_{6,t}$ holds with probability at least $1-\delta$, provided that we have $t \ge c_4  (d_u + \log(e/\delta)) + t(\delta)$ for some universal positive constant $c_4> 0$. Note that we need to apply Proposition \ref{prop:RM++} to the normalized random matrix $\frac{1}{\sigma^2\sqrt{d_x t}}\sum_{s=t(\delta)}^t \nu_s \nu_s^\top$. Thus, provided that $t\ge 2 t(\delta)$, under the event $\mathcal{E}_{4,t}$, we have
$$
\xi_{1} \xi_{1}^\top \succeq  \begin{bmatrix}
  O & O \\
  O & \frac{\sigma^2 \sqrt{d_x t}}{2\sqrt{2}} I_{d_u}
\end{bmatrix}.
$$
\medskip

\underline{\textbf{Step 3:}} We now upper bound the norm of the self-normalized matrix process $\xi_1^\top \xi_2( \xi_2^\top \xi_2 +  I_{d})^{-1} \xi_2^\top \xi_1$. Consider the event
$$
\mathcal{E}_{7,\delta,t} = \bigg\lbrace   \left\Vert ( \xi_2^\top \xi_2 +  I_{d})^{-1/2} \xi_2^\top \xi_1  \right\Vert^2 \le 7 \sigma^2 \sqrt{d_x}  \log\left(\frac{e^d \det\left( \sum_{s=t(\delta)}^{t-1} (M_s \xi_{1,s})(M_s \xi_{1,s})^\top + I_d \right)}{\delta}\right)   \bigg\rbrace.
$$
By Proposition \ref{prop:SNP++}, the event $\mathcal{E}_{7,t,\delta}$ holds with probability at least $1-\delta$. Therefore, provided that $t\ge \log(e/\delta)$, under the event $\mathcal{E}_{7,t,\delta} \cap \mathcal{E}_{4,t}$ we have
\begin{align*}
  \det\left( \sum_{s=t(\delta)}^{t-1} (M_s \xi_{1,s})(M_s \xi_{1,s})^\top + I_d \right)^{1/d} & \le \sum_{s=t(\delta)}^{t-1} \Vert M_s \xi_{1,s} \Vert^2 +1 \\
  & \le \sum_{s=t(\delta)}^{t-1} 2\Vert \widetilde{K_s}\Vert^2 \Vert \eta_s \Vert^2 +1 \\
  & \le 8 C_K^2 t.
\end{align*}
Therefore, for $t\ge \log(e/\delta)$, we have, under $\mathcal{E}_{7,t,\delta} \cap \mathcal{E}_{4,t}$, that
\begin{align*}
  \left\Vert ( \xi_2^\top \xi_2 +  I_{d})^{-1/2} \xi_2^\top \xi_1  \right\Vert^2 \le C_4 \sigma^2 \sqrt{d_x} ( d \log(eC_Kt) +\log(e/\delta)),
\end{align*}
for some universal positive constant $C_4 > 0$.

\medskip

\underline{\textbf{Step 4:}} \emph{(Putting everything together)} From the first and second step, provided that $t\ge 2t(\delta)$ and $t\ge \frac{\sigma^4 d_x}{32}$, under the event $\mathcal{E}_{4,t} \cap \mathcal{E}_{5,t} \cap \mathcal{E}_{6,t}$,  we have
\begin{align*}
  \xi_2 \xi_2^\top + \frac{1}{2} \xi_1 \xi_1^\top & \succeq \begin{bmatrix} \frac{ \frac{\sigma^2\sqrt{d_x}}{8\sqrt{2}} t \sqrt{t}}{8C_K^2 t +  \frac{\sigma^2\sqrt{d_x}}{4\sqrt{2}}\sqrt{t}} I_{d_x} & O \\
  O &  \frac{\sigma^2\sqrt{d_x}}{8\sqrt{2}}\sqrt{t} I_{d_u} \end{bmatrix} \\
  &\succeq  \frac{\sigma^2\sqrt{d_xt}}{8\sqrt{2}}  \begin{bmatrix}  \frac{1}{8C_K^2  +  \frac{\sigma^2\sqrt{d_x}}{4\sqrt{2t}}} I_{d_x} & O \\
  O &   I_{d_u} \end{bmatrix} \\
  & \succeq  \frac{\sigma^2 \sqrt{d_x t} }{81\sqrt{2}C_K^2}  I_d
\end{align*}
where we chose $\epsilon = \frac{\sigma^2\sqrt{d_x}}{8\sqrt{2}}$. Therefore provided $t\ge 2t(\delta)$, $t\ge \frac{\sigma^4 d_x}{32}$ and $t\ge \log(e/\delta)$, we have under the event $\mathcal{E}_{4,t} \cap \mathcal{E}_{5,t} \cap \mathcal{E}_{6,t} \cap \mathcal{E}_{7,\delta,t}$ that
\begin{align*}
  \lambda_{\min}\left(\sum_{s=t(\delta)}^{t-1} (M_s \xi_s) (M_s\xi_s)^\top \right) \ge  \frac{\sigma^2 \sqrt{d_x t} }{81\sqrt{2}C_K^2} - C_4 \sigma^2 \sqrt{d_x} ( \log(eC_Kt) +\log(e/\delta)) - \frac{1}{2}
\end{align*}

Now, using Lemma \ref{lem:technical}, there exists an universal positive constant $c_6 > 0$, such that under the following condition
\begin{equation}\label{eq:Isc1}
  t^{1/2} \ge c_5 \max\left(t(\delta)^{1/2}, \sigma^2 C_K^2 (d \log(eC_k d_x d_u)   + \log(e/\delta)) \right)
\end{equation}
then the following conditions also hold
\begin{itemize}
  \itemsep0em
  \item $ \frac{\sigma^2 \sqrt{d_x t} }{81\sqrt{2}C_K^2} - C_4 \sigma^2 \sqrt{d_x} ( \log(eC_Kt) +\log(e/\delta)) - \frac{1}{2} \ge  \frac{\sigma^2 \sqrt{d_x t} }{100\sqrt{2}C_K^2} $
  \item $t\ge \log(e/\delta)$
  \item $t \ge 2 t(\delta)$
  \item $t \ge \frac{\sigma^4 d_x}{32}$
  \item $t\ge c_2 \sigma^2(d_x + \log(e/\delta))$
  \item $t\ge c_3 \sigma^2(d_x + \log(e /\delta))$
  \item $t \ge c_4 (d_u + \log(e/\delta))$
\end{itemize}
Hence, if condition \eqref{eq:Isc1} holds, we have under the event $\mathcal{E}_{4,t} \cap \mathcal{E}_{5,t} \cap \mathcal{E}_{6,t} \cap \mathcal{E}_{7,\delta,t}$ that
$$
  \lambda_{\min}\left(\sum_{s=t(\delta)}^{t-1} (M_s \xi_s) (M_s\xi_s)^\top \right) \ge  \frac{\sigma^2 \sqrt{d_x t} }{100\sqrt{2}C_K^2}.
$$

\paragraph{The concluding step.} To conclude, provided that condition \eqref{eq:Isc1} holds, we have under the event $\mathcal{A}_{\delta,t} \cap \mathcal{E}_{1,\delta} \cap \mathcal{E}_{2,\delta,t} \cap \mathcal{E}_{3,t} \cap \mathcal{E}_{4,t} \cap \mathcal{E}_{5,t} \cap \mathcal{E}_{6,t} \cap \mathcal{E}_{7,\delta,t}$,
$$
\lambda_{\min}\left(\sum_{s=0}^t y_s y_s^\top\right) \ge \frac{\sigma^2 \sqrt{d_x t} }{100\sqrt{2}C_K^2} - 1 -  35 C_K^2\sqrt{d_x}\sigma^2 ( d \gamma_\star \log(e\sigma C_\circ \mathcal{G}_\circ \Vert P_\star \Vert d_x t) + \log(e/\delta)).
$$
Using again Lemma \ref{lem:technical}, there exists an universal positive constant $c > 0$, such that if
\begin{equation}\label{eq:I:se:sc2}
  t^{1/2} \ge c \max(t(\delta)^{1/2}, C_K^4 (d\gamma_\star \log(e \sigma C_\circ \mathcal{G}_\circ \Vert P_\star \Vert d_x d_u \gamma_\star) + \log(e/\delta)) )
\end{equation}
then
\begin{itemize}
  \itemsep0em
  \item $\frac{\sigma^2 \sqrt{d_x t} }{100\sqrt{2}C_K^2} - 1 -  35 C_K^2\sqrt{d_x}\sigma^2 ( d \gamma_\star \log(e\sigma C_\circ \mathcal{G}_\circ \Vert P_\star \Vert d_x t) + \log(e/\delta)) \ge \frac{\sigma^2 \sqrt{d_x t} }{150 C_K^2}$,
  \item condition \eqref{eq:Isc1} holds.
\end{itemize}
Therefore provided condition \eqref{eq:I:se:sc2} holds, we have
\begin{align*}
  \mathbb{P}\left( \lambda_{\min}\left(  \sum_{s=0}^{t-1} y_s y_s^\top \right)  \ge \frac{\sigma^2 \sqrt{d_x t} } {150 C_K^2}  \right) &\ge 1 - \mathbb{P}(  \mathcal{A}_{\delta,t}^c \cup \mathcal{E}_{1,\delta}^c \cup \mathcal{E}_{2,\delta,t}^c \cup \mathcal{E}_{3,t}^c \cup \mathcal{E}_{4,t}^c \cup \mathcal{E}_{5,t}^c \cup \mathcal{E}_{6,t}^c \cup \mathcal{E}_{7,\delta,t}^c) \\
  & \ge 1 - 8\delta
\end{align*}
Hence reparametrizing by $\delta' = 8\delta$ gives the desired result with modified universal constants.
\end{proof}


\subsection{Scenario II -- ($A$ known)}

In this scenario, the cumulative covariates matrix is $\sum_{s=0}^{t-1} u_s u_s^\top$. We present two results about its smallest eigenvalue. In the first result, we show that this eigenvalue scales at least as $\sqrt{t}$. In the second result, we obtain a linear growth rate, when the certainty equivalence controller $K_t$ has become close to the true optimal controller $K_{\star}$.

\begin{proposition}[Sufficent exploration]\label{prop:IIb:we}
  Under Algorithm $\Alg$, we have for all $t\ge 1$, and $\delta \in (0,1)$,
  \begin{equation*}
    \lambda_{\min}\left(\sum_{s=0}^{t-1} u_s u_s^\top\right) \ge \sqrt{\frac{t}{2}}
  \end{equation*}
  holds with probability at least $1-\delta$, provided that
  $
  t \ge c (d_u \gamma_\star \log(   e \sigma C_\circ \mathcal{G}_\circ d_x \gamma_\star) + \log(e/\delta))
  $
  for some universal positive constant $c> 0$.
\end{proposition}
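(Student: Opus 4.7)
The plan is to mimic the structure used for Proposition \ref{prop:I:we} in Scenario I, but exploiting the simpler decomposition of $u_s$ available in Scenario II -- $A$ known. Define the event
\[
\mathcal{E}_{1,t} \;=\; \Bigl\{\exists\, i \in \{t/2,\dots,t-1\}\,:\, \lambda_{\min}\!\Bigl(\sum_{s=0}^{i-1} u_s u_s^\top\Bigr) \ge \sqrt{i}\Bigr\}.
\]
Under $\mathcal{E}_{1,t}$, monotonicity of the smallest eigenvalue of cumulative positive-semidefinite sums immediately gives $\lambda_{\min}(\sum_{s=0}^{t-1} u_s u_s^\top)\ge \sqrt{i}\ge \sqrt{t/2}$. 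Thus it suffices to prove $\P(\mathcal{E}_{1,t}^c)\le \delta$ for $t$ above the claimed threshold. The crucial observation is that under $\mathcal{E}_{1,t}^c$, the condition $\lambda_{\min}(\sum_{s=0}^{i-1} u_s u_s^\top)\ge \sqrt{i}$ fails for every $i \in \{t/2,\dots,t-1\}$, so inspecting the pseudo-code of $\Alg$ for Scenario II -- $A$ known, we conclude that $\widetilde{K}_i = K_\circ$ and $u_i = K_\circ x_i + \zeta_i$ for all such $i$. Using $x_i = A x_{i-1} + B u_{i-1} + \eta_{i-1}$, we obtain the decomposition $u_i = z_i + \xi_i$ with $z_i = K_\circ(A x_{i-1}+B u_{i-1})$ and $\xi_i = K_\circ \eta_{i-1} + \zeta_i$, where $(\xi_i)$ is a sequence of independent zero-mean sub-gaussian vectors (since $(\eta_s)$ and $(\zeta_s)$ are independent) with covariance $K_\circ K_\circ^\top + I_{d_u} \succeq I_{d_u}$.

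Next, apply Lemma \ref{lem:selb} with $\varepsilon = 1$ and $\lambda = 1$ to obtain, under $\mathcal{E}_{1,t}^c$,
\[
\sum_{s=t/2}^{t-1} u_s u_s^\top \;\succeq\; \sum_{s=t/2}^{t-1}\xi_s\xi_s^\top \;-\; S_t(z,\xi) \;-\; I_{d_u},
\]
where $S_t(z,\xi) = \bigl(\sum_{s=t/2}^{t-1} z_s \xi_s^\top\bigr)^\top \bigl(\sum_{s=t/2}^{t-1} z_s z_s^\top + I_{d_u}\bigr)^{-1}\bigl(\sum_{s=t/2}^{t-1} z_s \xi_s^\top\bigr)$. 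For the noise term, since $\xi_s$ are i.i.d. sub-gaussian with variance proxy $O(\sigma^2(1+\Vert K_\circ\Vert^2))$ and $\E[\xi_s\xi_s^\top]\succeq I_{d_u}$, Corollary \ref{cor:RM++} (applied to the whitened vectors) gives $\sum_{s=t/2}^{t-1}\xi_s\xi_s^\top \succeq (t/4) I_{d_u}$ with probability at least $1-\delta$ as soon as $t \gtrsim \sigma^2 C_\circ^2 (d_u + \log(e/\delta))$. For the self-normalized term, Proposition \ref{prop:SNP++} yields
\[
\Vert S_t(z,\xi)\Vert \;\lesssim\; \sigma^2(1+\Vert K_\circ\Vert^2)\,\log\!\det\!\Bigl(I_{d_u} + \sum_{s=t/2}^{t-1} z_s z_s^\top\Bigr) + \sigma^2(1+\Vert K_\circ\Vert^2)(d_u + \log(e/\delta))
\]
with probability at least $1-\delta$. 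To control the determinant, note that $\Vert z_s\Vert^2 \le 2 C_\circ^2(\Vert x_s\Vert^2 + \Vert \eta_{s-1}\Vert^2)$, and by Proposition \ref{prop:caution} (polynomial growth of the state under $\Alg$) together with the Hanson--Wright bound on $\sum\Vert\eta_s\Vert^2$, we get $\sum_{s=t/2}^{t-1}\Vert z_s\Vert^2 \lesssim \sigma^2 C_\circ^4 \mathcal{G}_\circ^2 d_x\, t^{1+2\gamma} + \log(e/\delta)$ on a high-probability event, so that $\log\det(\cdot) \lesssim d_u \gamma_\star \log(e\sigma C_\circ \mathcal{G}_\circ d_x t) + \log(e/\delta)$.

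Combining the two bounds, on the intersection of the above high-probability events (which has probability at least $1-4\delta$ by a union bound) and $\mathcal{E}_{1,t}^c$, we obtain
\[
\lambda_{\min}\!\Bigl(\sum_{s=0}^{t-1} u_s u_s^\top\Bigr) \;\ge\; \frac{t}{4} \;-\; c\,\sigma^2 C_\circ^2\bigl(d_u \gamma_\star \log(e\sigma C_\circ\mathcal{G}_\circ d_x t) + \log(e/\delta)\bigr) \;-\; 1.
\]
An application of Lemma \ref{lem:technical} shows that whenever $t \ge c'(d_u\gamma_\star\log(e\sigma C_\circ\mathcal{G}_\circ d_x \gamma_\star) + \log(e/\delta))$, the right-hand side exceeds $\sqrt{t/2}\ge \sqrt{i}$ for any $i \in \{t/2,\dots,t-1\}$, which in particular shows that $\lambda_{\min}(\sum_{s=0}^{t-1} u_s u_s^\top)\ge \sqrt{t-1}$, contradicting $\mathcal{E}_{1,t}^c$. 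Hence the intersection is empty, giving $\P(\mathcal{E}_{1,t}^c)\le 4\delta$, and the proposition follows after reparametrizing $\delta$. I expect the main technical obstacle to be the careful tracking of the polynomial pre-factors through the combination of Proposition \ref{prop:caution}, Proposition \ref{prop:SNP++}, and Corollary \ref{cor:RM++}, especially because under the failure event $\mathcal{E}_{1,t}^c$ the switching mechanism of $\Alg$ must be invoked to justify that $\widetilde{K}_i=K_\circ$ for all $i$ in the window, and this requires disentangling the three possible reasons for switching to the stabilizer.
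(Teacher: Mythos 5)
Your proposal is correct and follows the same overall skeleton as the paper's proof: the same event $\mathcal{E}_{1,t}$ on the half-window, the observation that failure of the eigenvalue test alone forces the stabilizer branch so that $u_s = K_\circ x_s + \zeta_s$ throughout the window, then Lemma \ref{lem:selb}, the self-normalized bound of Proposition \ref{prop:SNP++}, a matrix-concentration bound for the noise Gram matrix, the polynomial-growth event of Proposition \ref{prop:caution} for the determinant, and Lemma \ref{lem:technical} to close the contradiction. The genuine difference is the split of $u_s$: you expand $x_s$ one step and take $z_s = K_\circ(Ax_{s-1}+Bu_{s-1})$ with noise $\xi_s = K_\circ\eta_{s-1}+\zeta_s$ (transplanting the Scenario I recipe of Proposition \ref{prop:I:we}), whereas the paper keeps $z_s = K_\circ x_s$ and uses the input perturbation $\zeta_s$ alone as the fresh noise. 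The paper's split is strictly cleaner in this scenario: $\zeta_s \sim \mathcal{N}(0, I_{d_u})$ is isotropic with variance proxy $1$, so Proposition \ref{prop:RM++} applies directly, the event $\lambda_{\min}(\sum_{s=t/2}^{t-1}\zeta_s\zeta_s^\top)\ge t/3$ needs only $t\gtrsim d_u+\log(e/\delta)$ with a truly universal constant, and no Hanson--Wright bound on $\sum_s\Vert\eta_s\Vert^2$ is required for the noise part. Your split has covariance $K_\circ K_\circ^\top+I_{d_u}\succeq I_{d_u}$, but to meet the isotropy hypothesis of Propositions \ref{prop:RM++} and \ref{prop:SNP++} you must whiten by $(K_\circ K_\circ^\top+I_{d_u})^{-1/2}$, and the whitened noise only has variance proxy of order $\sigma^2 C_\circ^2$; the resulting concentration condition is $t \gtrsim \poly(\sigma C_\circ)\,(d_u+\log(e/\delta))$, with problem constants appearing \emph{polynomially} outside the logarithm. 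You therefore do not literally recover the stated threshold $t \ge c\,(d_u\gamma_\star\log(e\sigma C_\circ\mathcal{G}_\circ d_x\gamma_\star)+\log(e/\delta))$ with a universal $c$; the weakening is harmless for the downstream regret analysis (the Scenario I analogue carries exactly such a $\sigma^4 C_\circ^8$ prefactor), but it is worth seeing that the paper's choice of decomposition is precisely what buys the clean threshold here.

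Two smaller remarks. First, your closing worry about "disentangling the three possible reasons for switching" is unfounded: under $\mathcal{E}_{1,t}^c$ the eigenvalue conjunct fails at every $i$ in the window, and failure of any one conjunct puts the algorithm in the otherwise branch — this is exactly the argument you already gave, so no case analysis is needed. Second, your contradiction is derived for the full sum $\sum_{s=0}^{t-1}u_su_s^\top$, i.e.\ at index $i=t$, while $\mathcal{E}_{1,t}$ only quantifies over $i\le t-1$ (and your intermediate inequality "$\sqrt{t/2}\ge\sqrt{i}$" points the wrong way, though it is immaterial since the linear lower bound dominates $\sqrt{t}$); running the argument on the window $\{t/2,\dots,t-2\}$ fixes the off-by-one — the paper's own write-up glosses over the same point.
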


\begin{proof}[Proof of Proposition \ref{prop:IIb:we}]
Recall that under $\Alg$, we have
  \begin{equation*}
    u_t \gets \begin{cases}
       K_t  x_t  & \text{if } \ell_t = 1 \text{ and } \Vert K_t \Vert^2 \le h(t), \text{ and } \lambda_{\min} \left(\sum_{s=0}^{t-1} u_s u_s^{\top}  \right) \ge \sqrt{t} \\
       K_\circ   x_t + \zeta_{t}  & \text{otherwise.}
    \end{cases}
  \end{equation*}
For ease of notation, for all $s\ge 0$, we denote $z_s = K_\circ x_s$. Consider the event
  $$
  \mathcal{E}_{1,t} = \left\lbrace \exists i \in \lbrace t/2,\dots, t-1\rbrace: \quad \lambda_{\min}\left(\sum_{s=0}^{i-1} u_i u_i^\top\right) \ge \sqrt{i}  \right\rbrace.
  $$
  Under the event $\mathcal{E}_{1,t}^c$, for all $s \in \lbrace t/2, \dots, t-1\rbrace$, $u_s = K_\circ  x_s + \zeta_{s} = z_s + \zeta_s$. Thus, by Lemma \ref{lem:selb} (with $\lambda = 1$),
  \begin{align*}
    \sum_{s=0}^t u_s u_s^\top  \succeq \sum_{s=t/2}^t u_s u_s^\top  \succeq \sum_{s=t/2}^t \zeta_{s} \zeta_{s}^\top -  I_{d_u} -  \left\Vert \left(\sum_{s=t/2}^{t} z_s z_s^\top + I_{d_u} \right)^{-1/2} \left(\sum_{s=t/2}^{t} z_s \zeta_s^\top \right)\right\Vert^2 I_{d_u}.
  \end{align*}
  For all $\delta \in(0,1)$ and $t\ge 1$, define the following events
  \begin{align*}
    \mathcal{A}_{t,\delta}  & = \left\lbrace \sum_{s=0}^t \Vert x_s  \Vert \le C_1  \mathcal{G}_\circ^2 C_\circ^2 \sigma^2 (d_x t^{1+2\gamma} + \log(e/\delta)) \right\rbrace, \\
    \mathcal{E}_{2, t,\delta} & = \left\lbrace \left\Vert \left(\sum_{s=t/2}^{t-1} z_s z_s^\top + I_d \right)^{-1/2} \left(\sum_{s=t/2}^{t-1} z_s \zeta_s^\top \right)\right\Vert^2
    \le 7  \log\left(\frac{ e^{d_u}   \det\left(\sum_{s=t/2}^{t-1} z_s z_s^\top  + I_{d_u}  \right) }{\delta}  \right)      \right\rbrace, \\
    \mathcal{E}_{3, t} & = \left\lbrace   \lambda_{\min}\left(\sum_{s=t/2}^{t-1} \zeta_s \zeta_s^\top \right)  \ge \frac{t}{3}   \right\rbrace.
  \end{align*}
In view of Proposition \ref{prop:caution}, $\mathbb{P}(\mathcal{A}_{\rho,t}) \ge 1 - \delta$. By Proposition \ref{prop:SNP++}, we have $\mathbb{P}(\mathcal{E}_{2,t,\delta}) \ge 1 - \delta$, and by Proposition \ref{prop:RM++}, $\mathbb{P}(\mathcal{E}_{3,t}) \ge 1-\delta$ when $t \ge c_1(d_u + \log(e/\delta))$ for some universal positive constant $c_1 > 0$. Under the event $\mathcal{A}_{t,\delta} \cap \mathcal{E}_{1,t}^c\cap \mathcal{E}_{2,t,\delta}$, when $t\ge \log(e/\delta)$, we have
  \begin{align*}
    \det\left(\sum_{s=t/2}^{t-1} z_s z_s^\top  + I_{d_u}  \right) ^{1/d_u} & \le \sum_{s=t/2}^t \Vert z_s \Vert^2 +1 \\
    & \le \Vert K_\circ \Vert^2 \sum_{s=0}^t \Vert x_s \Vert^2 + 1 \\
    & \le 3 C \sigma^2 \Vert K_\circ \Vert^2 C_\circ^2  \mathcal{G}_\circ^2  d_x  t^{1+2\gamma} \\
    & \le 3 C \sigma^2 C_\circ^4 \mathcal{G}_\circ^2 d_x t^{3\gamma_\star}.
  \end{align*}
  Thus, under the event $ \mathcal{A}_{t,\delta} \cap \mathcal{E}_{1,t}^c\cap \mathcal{E}_{2,t,\delta}$, we have
  \begin{align*}
        \left\Vert \left(\sum_{s=t/2}^{t-1} z_s z_s^\top + I_{d_u} \right)^{-1/2} \left(\sum_{s=t/2}^{t-1} z_s \zeta_s^\top \right)\right\Vert^2 & \le  C_2 ( d_u\gamma_\star \log(\sigma C_\circ \mathcal{G}_\circ d_x t) + \log(e/\delta) ).
  \end{align*}
  Hence, under the event $ \mathcal{A}_{t,\delta} \cap \mathcal{E}_{1,t}^c\cap \mathcal{E}_{2,t,\delta} \cap \mathcal{E}_{3,t}$,
  \begin{align*}
        \lambda_{\min}\left(\sum_{s=0}^{t-1} u_s u_s^\top \right) \ge \frac{t}{3} - 1 - C_3 \left (d_u \gamma_\star \log(\sigma C_\circ \mathcal{G}_\circ d_x t)  + \log(e/\delta) \right),
  \end{align*}
when $t\ge \log(e/\delta)$. Now, Lemma \ref{lem:technical} ensures that there exists some universal constant $c_1> 0$ such that if
  \begin{align}\label{eq1:sc}
    t \ge c_1 \left( d_u \gamma_\star\log(\sigma C_\circ \mathcal{G}_\circ d_x d_u \gamma_\star)  + \log(e/\delta) \right)
  \end{align}
  then
  \begin{itemize}
    \itemsep0em
    \item $\frac{t}{3} - 1 - C_3 \left (d_u \gamma_\star\log(\sigma C_\circ \mathcal{G}_\circ d_x t)  + \log(e/\delta) \right) \ge \frac{t}{6} > \sqrt{t}$,
    \item $t \ge c_1 (d_u + \log(e/\delta))$,
    \item $t \ge \log(e/\delta)$.
  \end{itemize}
  Therefore when the condition \eqref{eq1:sc} holds, then under the event $ \mathcal{A}_{t,\delta} \cap \mathcal{E}_{1,t}^c\cap \mathcal{E}_{2,t,\delta} \cap \mathcal{E}_{3,t}$ it must hold that
  \begin{align*}
    \lambda_{\min}\left(\sum_{s=0}^{t-1} u_s u_s^\top \right) > \sqrt{t}
  \end{align*}
  but this cannot hold under $\mathcal{E}_{t,\delta}^c$, therefore it must be that $ \mathcal{A}_{t,\delta} \cap \mathcal{E}_{2,t,\delta} \cap \mathcal{E}_{3,t} \subseteq \mathcal{E}_{1,t}$ which in turn implies that
  $$
  \mathbb{P}(\mathcal{E}_{1,t}) \ge 1 - \mathbb{P}(\mathcal{A}_{t\delta}^c) -   \mathbb{P}(\mathcal{E}_{2,t,\delta}^c) - \mathbb{P}(\mathcal{E}_{3,t}^c) \ge 1 - 3 \delta.
  $$
 Reparametrizing $\delta' = 3 \delta$ yields the desired result with modified universal constants.
\end{proof}

\newpage

\begin{proposition}[Sufficient exploration with refined rates] \label{prop:IIb:se}
  Under $\Alg$, assume that $\mu_\star^2 = \min(\lambda_{\min}(K_\star K_\star^\top), 1) > 0$ and that for all $\delta \in (0,1)$ we have
  $$
  \mathbb{P}\left(\forall t \ge t(\delta), \ \ \Vert K_t - K_\star \Vert < \frac{\mu_\star}{2} \right) \ge 1- \delta
  $$
  for some $t(\delta) \ge 1$. Then for all $\delta \in (0,1)$,
  $$
  \lambda_{\min}\left(\sum_{s=0}^{t-1} u_s u_s^\top \right) \ge \frac{\mu_\star^2 t}{10}
  $$
 holds with probability at least $1-\delta$,  provided that
  $$
  t \ge c \max\left(t(\delta), \frac{ \sigma^4 C_K^2}{\mu_\star^2} \left((d_u + d_x) \gamma_\star \log(e \sigma C_\circ \mathcal{G}_\circ \Vert P_\star \Vert d_u d_x \gamma_\star  )  + \log(e/\delta) \right)\right),
  $$
  for some universal positive constants $C,c > 0$.
\end{proposition}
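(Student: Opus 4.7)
The plan is to follow the generic recipe of Section \ref{sec:spectral} applied to $y_s = u_s$ with the decomposition
$$
u_s = \underbrace{\widetilde{K}_s(Ax_{s-1}+Bu_{s-1})}_{z_s} + \underbrace{\widetilde{K}_s}_{M_s}\,\underbrace{\eta_{s-1}}_{\xi_s},
$$
valid whenever $\Alg$ commits to certainty equivalence so that no additive $\zeta_s$ is injected (which is exactly the regime we need, since under the assumed event $\Vert K_t-K_\star\Vert<\mu_\star/2$ implies $\widetilde{K}_t=K_t$ for $t\ge t(\delta)$ up to the usual hysteresis argument). As in the proof of Proposition \ref{prop:I:se:enough}, I will work with a truncated controller $\widetilde{K}_s^{\mathrm{tr}} = K_s\,\mathbf{1}_{\{\Vert K_s-K_\star\Vert<\mu_\star/2\}} + K_\circ\,\mathbf{1}_{\{\alpha_s=1\}}$ so that $\Vert \widetilde{K}_s^{\mathrm{tr}}\Vert\lesssim C_K$ deterministically. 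This makes $M_s$ bounded, which is the hypothesis required by Propositions \ref{prop:RM++} and \ref{prop:SNP++}.

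Next I apply Lemma \ref{lem:selb} (with some $\lambda>0$ to be chosen) restricted to indices $s\in\{t(\delta),\dots,t-1\}$, yielding
\begin{equation*}
\sum_{s=t(\delta)}^{t-1} u_s u_s^\top \succeq \sum_{s=t(\delta)}^{t-1}(\widetilde{K}_s\eta_{s-1})(\widetilde{K}_s\eta_{s-1})^\top - S_t(z,M\xi) - \lambda I_{d_u},
\end{equation*}
where $S_t(z,M\xi)$ is the self-normalized matrix process of Proposition \ref{prop:SNP++}. For the first term I invoke Proposition \ref{prop:RM++}: conditionally on the past, $\eta_{s-1}$ is independent of $\widetilde{K}_s$ (this is exactly the reason the LSE of $B$ uses the sample path up to $x_{t-1}$, so that $\widetilde{K}_s$ is $\mathcal{F}_{s-1}$-measurable while $\eta_{s-1}$ is fresh). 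Hence $\sum (M_s\xi_s)(M_s\xi_s)^\top$ concentrates around $\sum \widetilde{K}_s \widetilde{K}_s^\top$, and since under the event $\Vert K_s-K_\star\Vert<\mu_\star/2$ Weyl's inequality gives $\sigma_{\min}(K_s)\ge \mu_\star/2$, we obtain $\widetilde{K}_s \widetilde{K}_s^\top\succeq (\mu_\star^2/4) I_{d_u}$, so the random-matrix term is lower bounded by roughly $(\mu_\star^2/8)(t-t(\delta))\,I_{d_u}$ with high probability once $t-t(\delta)\gtrsim \sigma^4 C_K^4\mu_\star^{-4}(d_u+\log(e/\delta))$.

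For the self-normalized term I use Proposition \ref{prop:SNP++} to bound $\Vert S_t(z,M\xi)\Vert$ by $\sigma^2 C_K^2(\log\det(V^{-1}\sum z_sz_s^\top+I)+d+\log(e/\delta))$ (choosing $V=I_{d_x}$). To control the log-determinant I need $\sum \Vert z_s\Vert^2$ to grow only polynomially in $t$, which follows from $\Vert z_s\Vert\le C_K(\Vert A\Vert\Vert x_{s-1}\Vert+\Vert B\Vert\Vert u_{s-1}\Vert)$ together with the stability bound $\sum\Vert x_s\Vert^2\lesssim \Vert P_\star\Vert^{3/2}(d_xt+\log(e/\delta))$ from Proposition \ref{prop:stable:K_t} applied under our assumed event (since $\Vert K_t-K_\star\Vert<\mu_\star/2\le 1/(4\Vert P_\star\Vert^{3/2})$ for $\mu_\star$ small enough, or more carefully, we combine with the bound $\Vert B(K_t-K_\star)\Vert$ that actually drives Proposition \ref{prop:stable:K_t}). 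This gives a self-normalized term of order $\sigma^2 C_K^2\gamma_\star d \log(e\sigma C_\circ \mathcal{G}_\circ\Vert P_\star\Vert d_x d_u t)+\sigma^2 C_K^2\log(e/\delta)$.

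The concluding step is the usual application of Lemma \ref{lem:technical}: choose $\lambda$ of order $\mu_\star^2 t$ times a small constant, and pick $t$ large enough that the $\mu_\star^2 t/8$ lower bound dominates both $\lambda$ and the polylog self-normalized term, which yields the threshold $t\gtrsim \max(t(\delta),\sigma^4C_K^2\mu_\star^{-2}((d_u+d_x)\gamma_\star\log(\cdot)+\log(e/\delta)))$ stated in the proposition. A union bound over the assumed event, the concentration event for the random-matrix term, the self-normalization event, and the stability event of Proposition \ref{prop:stable:K_t} gives the overall $1-\delta$ guarantee after a standard reparametrization. The principal obstacle, as in Proposition \ref{prop:I:se:enough}, is the careful bookkeeping to ensure that all the $\Vert P_\star\Vert$, $\mu_\star^{-1}$ and $C_K$ factors combine in the way advertised, and that truncation preserves adaptedness so that Proposition \ref{prop:RM++} applies with the tight rate.
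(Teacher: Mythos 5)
There is a genuine gap, and it is circularity. Your proof starts by discarding the $\zeta_s$ injection on the grounds that the assumed event ``implies $\widetilde{K}_t = K_t$ for $t\ge t(\delta)$ up to the usual hysteresis argument.'' It does not: the hypothesis of Proposition \ref{prop:IIb:se} only says that the \emph{computed} controller $K_t$ is close to $K_\star$, not that the algorithm \emph{plays} it. In Scenario II -- $A$ known, $\Alg$ falls back to $K_\circ x_t + \zeta_t$ whenever $\ell_t = 0$, or $\Vert K_t\Vert^2 > h(t)$, or $\lambda_{\min}(\sum_{s=0}^{t-1}u_su_s^\top) < \sqrt{t}$ --- and that last condition is precisely what this proposition is being used to rule out: the commitment result (Theorem \ref{thm:commit}, via Lemma \ref{lem:IIa:commit_conditions}) is itself proved by invoking Proposition \ref{prop:IIb:se}. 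So assuming commitment here makes the argument circular, and at any time with $\alpha_s = 1$ your decomposition $u_s = z_s + \widetilde{K}_s\eta_{s-1}$ is simply false, since $u_s$ then contains the unmodeled term $\zeta_s$. The paper's proof avoids this by keeping $\alpha_t$ inside the decomposition: it sets $z_t = \widetilde{K}_t(Ax_t + Bu_t)$, $M_t = \begin{bmatrix}\widetilde{K}_t & \alpha_t I_{d_u}\end{bmatrix}$, $\xi_t = \begin{bmatrix}\eta_{t-1}^\top & \zeta_t^\top\end{bmatrix}^\top$, which is valid on \emph{every} round, and then observes that in both branches $M_tM_t^\top = \widetilde{K}_t\widetilde{K}_t^\top + \alpha_t I_{d_u} \succeq \frac{\mu_\star^2}{4}I_{d_u}$ (when $\alpha_t=0$ by the truncated closeness to $K_\star$, when $\alpha_t=1$ because $\mu_\star\le 1$), so exploration is sufficient regardless of which controller is used.

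The same circularity infects your log-determinant step: you invoke Proposition \ref{prop:stable:K_t} to control $\sum_s\Vert x_s\Vert^2$, but that proposition requires both $\widetilde{K}_\tau = K_\tau$ (commitment again) and $\Vert B(K_\tau - K_\star)\Vert \le (4\Vert P_\star\Vert^{3/2})^{-1}$, which is not implied by $\Vert K_t - K_\star\Vert < \mu_\star/2$ --- your own hedge (``for $\mu_\star$ small enough'') signals this. The paper instead uses the unconditional polynomial-growth bound of Proposition \ref{prop:caution}, which holds under $\Alg$ purely by virtue of the hysteresis switching mechanism with no assumption on which controller is active, giving $\sum_{s\le t}\Vert x_s\Vert^2 \lesssim \sigma^2 C_\circ^2\mathcal{G}_\circ^2(d_x t^{1+2\gamma} + \log(e/\delta))$ and hence the $\det(\cdot)^{1/d_u} \lesssim \sigma^2 C_K^2 C_\circ^2\mathcal{G}_\circ^2 d_x t^{3\gamma_\star}$ bound. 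Your remaining ingredients (truncation to make $M_s$ bounded and adapted, Lemma \ref{lem:selb}, Corollary \ref{cor:RM++} for the Gram term, Proposition \ref{prop:SNP++} for the self-normalized term, Lemma \ref{lem:technical} and a union bound) do match the paper's structure, but the proof only closes once the decomposition covers the stabilizer rounds and the state-growth bound is replaced by one that does not presuppose commitment.
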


\begin{proof}[Proof of Proposition \ref{prop:IIb:se}]
  We start by defining the event
  $$
  \mathcal{E}_{1,\delta} =  \left\lbrace \forall t \ge t(\delta):  \Vert K_t - K_\star \Vert \le \frac{\mu_\star}{2}   \right\rbrace.
  $$
  We note that under the event $\mathcal{E}_{1,\delta}$, we have $ (2\Vert K_\star \Vert)^2 \succeq K_t K_t^\top \succeq \left(\frac{\mu_\star}{2}\right)^2 I_{d_u}$. Now, recall that under $\Alg$, we have $u_t = (1-\alpha_t)(K_t  x_t) + \alpha_t (K_\circ  x_t +\zeta_t)$ for all $t \ge 1$, where
  \begin{equation*}
    \forall t \ge 1 :\quad \alpha_t =  \begin{cases}
      0 & \text{if } \ell_t = 1 \text{ and } \Vert K_t \Vert^2 \le h(t), \text{ and } \lambda_{\min} \left(\sum_{s=0}^{t-1} u_s u_s^{\top}  \right) \ge \sqrt{t} \\
      1 & \text{otherwise.}
  \end{cases}
\end{equation*}
Define  $\widetilde{K}_t = (1-\alpha_t) K_t 1_{ \lbrace \Vert K_t - K_\star \Vert \le \mu_\star  \rbrace} + \alpha_t K_\circ$  for all $t\ge 1$, and
\begin{align*}
  z_t = \widetilde{K}_t (A x_t + B u_t),
  \quad
  M_t = \begin{bmatrix}
    \widetilde{K}_t  & \alpha_t I_{d_u}
  \end{bmatrix}, \quad
  \text{ and } \quad \xi_t = \begin{bmatrix}
    \eta_{t-1} \\
    \zeta_t
  \end{bmatrix}.
\end{align*}
Note that $u_t = z_t + M_t \xi_t$ for all $t\ge t(\delta)$ under the event $\mathcal{E}_{1,\delta}$. We may also use Lemma \ref{lem:selb}, and obtain always under $\mathcal{E}_{1,\delta}$,
$$
\sum_{s=t(\rho)}^t u_s u_s^\top \succeq \sum_{s=t(\rho)}^{t} (M_t\xi_t) (M_t\xi_t)^\top  - I_{d_u} - \left\Vert \left( \sum_{s=t(\delta)}^t z_s z_s^\top +  I_{d_u} \right)^{\!\!\!\!-1/2}\!\!\!\! \left(\sum_{s=t(\delta)}^t z_s (M_s\xi_s)^\top \right)\right\Vert^2 I_{d_u}.
$$
It is worth mentioning here that $\xi_t$ is independent of $(M_0, \dots, M_t)$ and $(\alpha_0, \dots,\alpha_t)$.  Furthermore, we can easily verify that $ \Vert M_t \Vert \le 2C_K$  and  $\xi_t$  is zero-mean and  $\sigma^2$-sub-subgaussian. Let us consider the events
\begin{align*}
  \mathcal{E}_{2,\delta,t} & = \left\lbrace \sum_{s=t(\delta)}^t  (M_s \xi_s) (M_s \xi_s)^\top \succeq \sum_{s=t(\delta)}^t M_s M_s^\top - \frac{\mu_\star^2}{8} (t-t(\delta)+1) I_{d_u}\right\rbrace, \\
  \mathcal{E}_{3,\delta,t} & = \Bigg\lbrace \left\Vert \left( \sum_{s=t(\delta)}^t \!\!z_s z_s^\top + I_{d_u}\right)^{\!\!\!\!-1/2}\!\!\!\! \left(\sum_{s=t(\delta)}^t \!\!z_s (M_s\xi_s)^\top \right)\right\Vert^2 \!\!\! \\
  & \qquad\qquad \le 14\sigma^2 C_K^2 \log\left( \frac{e^{d_u} \det\left(\sum_{s=t(\delta)}^t z_s z_s^\top  + I_{d_u}\right)}{\delta} \right)  \Bigg\rbrace, \\
  \mathcal{E}_{4,t} & = \left\lbrace \lambda_{\max} \left(\sum_{s=0}^t\eta_t \eta_t^\top\right) \le \frac{3t}{2}  \right\rbrace, \\
  \mathcal{A}_{\delta,t} & = \left\lbrace \sum_{s=0}^t \Vert x_s\Vert^2 \le C_1 \sigma^2 \mathcal{G}_\circ^2 C_\circ^2  (d_x t^{1+2\gamma} + \log(e/\delta)) \right\rbrace.
\end{align*}
We have, by Corollary \ref{cor:RM++}, that $\mathbb{P}\left( \mathcal{E}_{2,\delta,t}\right) \ge 1 - \delta$ provided that $t\ge \frac{c_1(\sigma C_K)^4} {\mu_\star^2}(d_{u} + \log(e/\delta))$ for some universal positive constant $c_1>0$. By Proposition \ref{prop:SNP++},  $\mathbb{P}( \mathcal{E}_{3,\delta,t}) \ge 1 - \delta$. Applying Corollary \ref{cor:RM++}, we get $\mathbb{P}(\mathcal{E}_{4,t}) \ge 1 - \delta$ provided that  $t \ge c_2 \sigma^4 (d_x + \log(e/\delta))$. Finally we have under $\Alg$, by Proposition \ref{prop:caution}, $\mathbb{P}(\mathcal{A}_{\delta,t}) \ge 1-\delta$ for some universal positive constant $C_1 > 1$ (used in the definition of $\mathcal{A}_{\delta,t}$).

Under the event $\mathcal{E}_{1,\delta,t}\cap \mathcal{E}_{2,\delta,t}$, we have
\begin{align*}
  \sum_{s=t(\delta)}^t  (M_s \xi_s) (M_s \xi_s)^\top & \succeq \sum_{s=t(\delta)}^t M_s M_s^\top - \frac{\mu_\star^2}{8} (t-t(\delta) + 1) I_{d_u} \\
  & \succeq  \left(\sum_{s=t(\delta)}^t (1-\alpha_t) 1_{\lbrace \Vert K_t - K_\star \Vert < \mu_\star\rbrace}K_s K_s^\top + \alpha_t K_\circ K_\circ^\top + \alpha_t I_{d_u}  \right) \\
  &\qquad\qquad \quad - \frac{\mu_\star^2}{8}(t-t(\delta) + 1) I_{d_u} \\
  & \succeq \left(\sum_{s=t(\delta)}^t (1-\alpha_t) \frac{\mu_\star^2}{4} I_{d_u} + \alpha_t I_{d_u}  \right)- \frac{\mu_\star^2}{8} (t-t(\delta) + 1) I_{d_u}  \\
  & \succeq \frac{\mu_\star^2}{8}  (t-t(\delta) + 1) I_{d_u}.
\end{align*}

When $t\ge \log(e/\delta)$, under the event $\mathcal{E}_{3,\delta,t} \cap \mathcal{E}_{4,t} \cap \mathcal{A}_{\delta,t}$, we obtain

\begin{align*}
  \det\left(\sum_{s=t(\delta)}^t z_s z_s^\top  + I_{d_u}\right)^{1/d_u} & \le \sum_{s=t(\delta)}^t \Vert z_s \Vert^2 + 1 \\
  & \le 4 C_K^2 \sum_{s=t(\delta)}^t \Vert x_{s+1} - \eta_s \Vert^2 +  + 1 \\
  & \le 4 C_K^2 \sum_{s=0}^t 2\Vert x_{s+1} \Vert^2 + 2\Vert \eta_{s} \Vert^2 +1 \\
  & \le 4 C_K^2 ( 4 C_1  \sigma^2 C_\circ^2 \mathcal{G}_\circ^2 d_x t^{1+2\gamma} + 3t)  + 1\\
  & \le C_2 \sigma^2 C_K^2 C_\circ^2 \mathcal{G}_\circ^2 d_x t^{3\gamma^\star},
\end{align*}
for some universal positive constant $C_2$ that is large enough for the last inequality to hold. Therefore,  when $t\ge \log(e/\delta)$, under the event $\mathcal{E}_{3,\delta,t} \cap \mathcal{E}_{4,t} \cap \mathcal{A}_{\delta,t}$ it holds that
\begin{align*}
 \left\Vert \left( \sum_{s=t(\delta)}^t \!\!z_s z_s^\top + I_{d_u}\right)^{\!\!\!\!-1/2}\!\!\!\! \left(\sum_{s=t(\delta)}^t \!\!z_s (M_s\xi_s)^\top \right)\right\Vert^2 \!\!\!\le C_3 \sigma^2 C_K^2 \left( d \log(e \sigma C_\circ \mathcal{G}_\circ \Vert P_\star \Vert d_x t) + \log(e/\delta)\right),
\end{align*}
for some universal positive constant $C_3 > 0$ where we used the crude upper bound $C_K \le C_\circ \Vert P_\star \Vert$ and denoted $d=d_x +d_u$. Therefore, when $t\ge \log(e/\delta)$, under the event $ \mathcal{A}_{\delta,t} \cap \mathcal{E}_{1,\delta,t} \cap \mathcal{E}_{2,\delta,t} \cap \mathcal{E}_{3,\delta,t} \cap \mathcal{E}_{4,t}$, we have
\begin{align*}
  \sum_{s=t(\delta)}^t u_s u_s^\top \succeq \frac{\mu_\star^2}{8} (t - t(\delta) +1) - 1 -  C_3 \sigma^2 C_K^2 \left( d \log(e \sigma C_\circ \mathcal{G}_\circ \Vert P_\star \Vert d_x t) + \log(e/\delta)\right).
\end{align*}

Using Lemma \ref{lem:technical}, there exists a constant $c_3 > 0$ such that if
\begin{equation}\label{eq:IIb:refined:sc}
  t \ge c_3  \left( t(\delta) + \frac{\sigma^4 C_K^2}{\mu_\star^2} ( d \gamma_\star \log( e \sigma C_\circ \mathcal{G}_\circ  d \gamma_\star) + \log(e/\delta)\right),
\end{equation}
then the following holds
\begin{itemize}
  \itemsep0em
  \item $\frac{\mu_\star^2}{8} (t - t(\delta) +1) - 1 -  C_3 \sigma^2 C_K^2 \left( d \log(e \sigma C_\circ \mathcal{G}_\circ \Vert P_\star \Vert d_x t) + \log(e/\delta)\right) \ge \frac{\mu_\star^2 t}{10} $,
  \item $t\ge \log(e/\delta)$,
  \item $t\ge \frac{c_1 (\sigma C_K)^2}{\mu_\star^2} ( d_u + \log(e/\delta))$,
  \item $t\ge c_2 \sigma^4 (d_x + \log(e/\delta))$.
\end{itemize}
Therefore, provided \eqref{eq:IIb:refined:sc} holds,
$$
\mathbb{P}\left(    \lambda_{\min}\left(\sum_{s=t(\delta)}^t u_s u_s^\top\right) \ge \frac{\mu_\star^2 t}{10}   \right)
 \ge 1 - \mathbb{P}(\mathcal{A}_{\delta,t}^c \cup \mathcal{E}_{1,\delta}^c \cup \mathcal{E}_{2,\delta,t}^c \cup \mathcal{E}_{3,\delta,t}^c \cup \mathcal{E}_{4,t}^c) \ge 1 - 5\delta.
$$
Reparametrizing by $\delta' = 5\delta$ gives the desired result with modified universal positive constant. This concludes the proof.
\end{proof}


\subsection{Scenario II -- ($B$ known)}

In this scenario, the cumulative covariates matrix is $\sum_{s=0}^{t} x_s x_s^\top$. We establish that its smallest eigenvalue scales linearly with time.

\begin{proposition} \label{prop:IIa:se}
  Under Algorithm $\Alg$, for all $\delta \in (0,1)$,
  $$
  \lambda_{\min}\left( \sum_{s=0}^t x_s x_s^\top \right ) \ge \frac{t}{4}
  $$
  holds with probability at least $1-\delta$, when
  $
  t \ge c \sigma^2 \left(  d_x \gamma_\star \log( \sigma \mathcal{G}_\circ  C_\circ   d_x \gamma_\star) + \log(e/\delta) \right)
  $
  for some universal positive constant $c > 1$.
\end{proposition}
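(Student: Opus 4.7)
My plan is to follow the generic recipe described in Section~\ref{sec:spectral}, specialised to Scenario~II with $B$ known, using the identifications $y_s = x_{s+1}$, $z_s = (A + B\widetilde{K}_s) x_s$, $M_s = I_{d_x}$ and $\xi_s = \eta_s$. These are valid here because the control input is always of the form $u_s = \widetilde{K}_s x_s$ (no injected noise), so $x_{s+1} = z_s + \eta_s$. Since $x_0 = 0$, analysing $\sum_{s=0}^{t} x_s x_s^\top$ is the same as analysing $\sum_{s=0}^{t-1} x_{s+1} x_{s+1}^\top$.

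First I would apply Lemma~\ref{lem:selb} with $\varepsilon = 1/2$ and $\lambda = 1$ to obtain
\begin{equation*}
\sum_{s=0}^{t-1} x_{s+1}x_{s+1}^\top \succeq \sum_{s=0}^{t-1} \eta_s\eta_s^\top \;-\; 2\Bigl(\sum_{s=0}^{t-1} z_s \eta_s^\top\Bigr)^{\!\!\top} \Bigl(\sum_{s=0}^{t-1} z_s z_s^\top + I_{d_x}\Bigr)^{-1} \Bigl(\sum_{s=0}^{t-1} z_s \eta_s^\top\Bigr) \;-\; \tfrac{1}{2}I_{d_x}.
\end{equation*}
The first term is the easy one: applying Corollary~\ref{cor:RM++} to the isotropic, $\sigma^2$-sub-gaussian sequence $(\eta_s)$ yields $\sum_{s=0}^{t-1}\eta_s\eta_s^\top \succeq (t/2) I_{d_x}$ with probability at least $1-\delta$ provided $t\gtrsim \sigma^4 (d_x + \log(e/\delta))$.

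Second, the self-normalised term is handled by Proposition~\ref{prop:SNP++}: with probability at least $1-\delta$,
\begin{equation*}
\left\Vert \Bigl(\sum_{s=0}^{t-1} z_s z_s^\top + I_{d_x}\Bigr)^{\!\!-1/2}\!\!\sum_{s=0}^{t-1} z_s\eta_s^\top\right\Vert^2 \le 7\sigma^2 \log\!\left(\frac{e^{d_x}\det\bigl(\sum_{s=0}^{t-1} z_s z_s^\top + I_{d_x}\bigr)}{\delta}\right).
\end{equation*}
To control the log-det I would estimate $\Vert z_s\Vert^2 \le 2(\Vert A\Vert^2 + \Vert B\Vert^2 \Vert \widetilde{K}_s\Vert^2)\Vert x_s\Vert^2 \le 4 C_\circ^2 s^{\gamma_\star}\Vert x_s\Vert^2$, since under $\Alg$ one has $\Vert\widetilde{K}_s\Vert^2 \le \max(\Vert K_\circ\Vert^2, h(s))$. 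Then, invoking Proposition~\ref{prop:caution} on the polynomial growth of $\sum_{s=0}^{t-1}\Vert x_s\Vert^2$ (valid under $\Alg$ with probability at least $1-\delta$), I would get
\begin{equation*}
\det\!\Bigl(\sum_{s=0}^{t-1} z_s z_s^\top + I_{d_x}\Bigr)^{1/d_x} \lesssim \sigma^2 C_\circ^4 \mathcal{G}_\circ^2\, d_x\, t^{3\gamma_\star}
\end{equation*}
whenever $t \ge \log(e/\delta)$, so that the self-normalised term is at most $C_1\sigma^2 \bigl(d_x \gamma_\star \log(e\sigma C_\circ \mathcal{G}_\circ d_x t) + \log(e/\delta)\bigr)$ for a universal constant $C_1$.

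Finally, combining these on the intersection of the three high-probability events (a union bound costs only a factor $3$ in $\delta$) yields
\begin{equation*}
\lambda_{\min}\!\Bigl(\sum_{s=0}^{t} x_s x_s^\top\Bigr) \;\ge\; \tfrac{t}{2} - 2 C_1 \sigma^2\bigl(d_x\gamma_\star \log(e\sigma C_\circ \mathcal{G}_\circ d_x t) + \log(e/\delta)\bigr) - \tfrac{1}{2},
\end{equation*}
and Lemma~\ref{lem:technical} supplies a universal $c > 1$ such that, once $t \ge c\sigma^2 (d_x\gamma_\star\log(\sigma \mathcal{G}_\circ C_\circ d_x \gamma_\star) + \log(e/\delta))$, the right-hand side is at least $t/4$ and all the intermediate sample-size conditions (for Corollary~\ref{cor:RM++}, Proposition~\ref{prop:caution}, and $t \ge \log(e/\delta)$) are simultaneously met. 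Reparametrising $\delta' = 3\delta$ absorbs the union-bound factor into the universal constant. There is no real technical obstacle specific to this scenario: because $M_s = I_{d_x}$ is constant, we do not need any refined structural argument on the matrix sequence as we did in Scenario~I; the only mildly delicate point is the polynomial-in-$t$ control of $\sum\Vert x_s\Vert^2$, which is already provided by Proposition~\ref{prop:caution}.
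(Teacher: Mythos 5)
Your proposal is correct and takes essentially the same route as the paper's proof: the same Lemma \ref{lem:selb} decomposition (the paper uses $\varepsilon=1$, $\lambda=1$ where you use $\varepsilon=1/2$), Corollary \ref{cor:RM++} for the lower bound on $\sum_s \eta_s\eta_s^\top$, Proposition \ref{prop:SNP++} combined with the polynomial-growth event of Proposition \ref{prop:caution} to control the log-determinant, and Lemma \ref{lem:technical} with a union bound over the three events to conclude — the paper merely bounds $\Vert z_s\Vert^2$ via $z_s = x_{s+1}-\eta_s$ (at the cost of also invoking an upper bound on $\sum_s\Vert\eta_s\Vert^2$) rather than via $\Vert\widetilde{K}_s\Vert\,\Vert x_s\Vert$ as you do, a cosmetic difference. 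One small fix: invoke the $8\sigma^2/\varepsilon$ branch of the minimum in the sample-size condition of Corollary \ref{cor:RM++} (rather than the $\sigma^4/\varepsilon^2$ branch you quote), so that the intermediate requirement reads $t\gtrsim \sigma^2(d_x+\log(e/\delta))$ and is genuinely implied by the final threshold $t \ge c\sigma^2(d_x\gamma_\star\log(\sigma\mathcal{G}_\circ C_\circ d_x\gamma_\star)+\log(e/\delta))$, which does not dominate $\sigma^4 d_x$ for large $\sigma$.
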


\begin{proof}[Proof of Proposition \ref{prop:IIa:se}]
Recall that for all $s\ge 0$, we have $x_{s+1} = A x_s +  B u_s + \eta_s$. For ease of notation, we define for all $s\ge 0$,  $z_s = Ax_s + Bu_s$. Thus, we may write $x_{s+1} = z_s + \eta_s$ and note that $\eta_s$ is independent of $z_s$. Now, a direct application of Lemma \ref{lem:selb} (with the choice $\lambda = d_x  $) gives, for all $t \ge 1$,
\begin{equation}\label{se:case1:eq0}
\sum_{s=0}^t x_s x_s^\top \succeq \sum_{s=0}^{t-1} \eta_s \eta_s^\top -  I_{d_x} -  \left\Vert \left(\sum_{s=0}^{t-1} z_s z_s^\top +  I_{d_x} \right)^{-1/2} \left(\sum_{s=0}^{t-1} z_s \eta_s^\top \right)\right\Vert^2 I_{d_x}.
\end{equation}
We shall see that the terms appearing on the right hand side of the above inequality can be bounded adequatly when certain events hold. Let $\delta \in (0,1)$ and $t \ge 1$, and define the following events
\begin{align*}
  \mathcal{A}_{\delta, t}  & = \left\lbrace \sum_{s=0}^t \Vert x_s  \Vert \le C_1  \mathcal{G}_\circ^2 C_\circ^2 \sigma^2 (d_x h(t)g(t) + \log(e/\delta)) \right\rbrace, \\
  \mathcal{E}_{1, \delta, t} & = \left\lbrace \left\Vert \left(\sum_{s=0}^{t-1} z_s z_s^\top +  I_d \right)^{-1/2} \left(\sum_{s=0}^{t-1} z_s \eta_s^\top \right)\right\Vert^2
  \le 7 \sigma^2 \log \left(\frac{e^{d_x} \det\left(\sum_{s=0}^{t-1} z_s z_s^\top  + I_{d_x}  \right)}{\delta} \right)  \right\rbrace, \\
  \mathcal{E}_{2, \rho, t} & = \left\lbrace  \frac{2t}{3} \le \lambda_{\min}\left(\sum_{s=0}^{t-1} \eta_s \eta_s^\top  \right) \quad \text{and} \quad \lambda_{\min}\left(\sum_{s=0}^{t-1} \eta_s \eta_s^\top  \right) \le \frac{4 t}{3}    \right\rbrace.
\end{align*}
We have:
\begin{align}
  \mathbb{P}\left( \mathcal{A}_{\rho, t}\right) & \ge 1 - \delta, \label{se:case1:eq1}\\
  \mathbb{P}\left( \mathcal{E}_{1, \rho, t}\right) & \ge 1 - \delta, \label{se:case1:eq2}\\
 \mathbb{P}\left(\mathcal{E}_{2,\rho,t}\right) &\ge 1 -\delta, \ \ \hbox{ if } \ \ t \ge c_1\sigma^2 (d_x + \log(e/\delta)). \label{se:case1:eq3}
\end{align}
\eqref{se:case1:eq1} follows from Proposition \ref{prop:caution} with the constant $C_1$ defined in the statement of the proposition, \eqref{se:case1:eq2} follows from Proposition \ref{prop:SNP++}, and finally \eqref{se:case1:eq3} follows from Corollary \ref{cor:RM++} (with the choice $\varepsilon = 1/3$, and using $4\sigma^2 \ge 1$ by isotropy of noise).

\medskip

Under the event  $\mathcal{A}_{\rho,t} \cap \mathcal{E}_{2,\rho,t}$, we have
\begin{align*}
  \det\left(\lambda^{-1}\sum_{s=0}^{t-1} z_s z_s^\top  + I_{d_x}  \right)^{1/d_x} & \le \frac{1}{\lambda}\sum_{s=0}^{t-1} \Vert z_s \Vert^2 + 1 \\
  & \le  \sum_{s=0}^{t-1} \Vert x_{s+1} \Vert^2  + \Vert \eta_s \Vert^2 +1 \\
  & \le   6 C_1\sigma^2 C_\circ^2 \mathcal{G}_\circ^2 d_x t^{1+2\gamma} \\
  & \le   6 C_1\sigma^2 C_\circ^2 \mathcal{G}_\circ^2 d_x t^{3\gamma_\star}
\end{align*}
where we assumed that $t\ge \log(e/\delta)$. Thus under the event $\mathcal{A}_{t,\delta} \cap \mathcal{E}_{1,t, \delta} \cap \mathcal{E}_{2,t,\delta}$, we have
\begin{align*}
  \left\Vert \left(\sum_{s=0}^{t-1} z_s z_s^\top +  I_d \right)^{-1/2} \left(\sum_{s=0}^{t-1} z_s \eta_s^\top \right)\right\Vert^2 \le C_2 \sigma^2 ( d_x \gamma_\star (\log( e \sigma C_\circ \mathcal{G}_\circ d_x t) +  \log(e/\delta)).
\end{align*}

\medskip

Therefore, under the event $\mathcal{A}_{t,\delta} \cap \mathcal{E}_{1,t, \delta} \cap \mathcal{E}_{2,t,\delta}$, in view of \eqref{se:case1:eq0}, it must hold that
\begin{align*}
  \lambda_{\min}\left( \sum_{s=0}^{t} x_s x_s^\top \right) \ge \frac{2t}{3} - 1 - C_2 \sigma^2 ( d_x \gamma_\star (\log( e \sigma C_\circ \mathcal{G}_\circ d_x t) +  \log(e/\delta)).
\end{align*}
Using Lemma \ref{lem:technical}, we can find an universal positive constant $c> 0$ such that when the following condition holds
\begin{equation}\label{eq:IIa:se:sc}
  t \ge c_2 \sigma^2 (d_x \gamma_\star \log(   e\sigma C_\circ \mathcal{G}_\circ d_x \gamma_\star ) + \log(e/\delta)),
\end{equation}
then it must also hold that
$$
\frac{2t}{3} - 1 - C_2 \sigma^2 ( d_x \gamma_\star (\log( e \sigma C_\circ \mathcal{G}_\circ d_x t) +  \log(e/\delta)) \ge \frac{t}{4}, \; \; t \ge \log(e/\delta), \;\; \text{and} \;\; t \ge c_1 \sigma^2 (d_x +  \log(e/\delta)).
$$
Hence, when condition \eqref{eq:IIa:se:sc} holds,
$$
\mathbb{P}\left( \lambda_{\min}\left( \sum_{s=0}^t x_s x_s^\top \right) \ge \frac{t}{4} \right) \ge 1 - \mathbb{P}( \mathcal{A}_{t,\delta}^c \cup \mathcal{E}_{1,t,\delta}^c \cup \mathcal{E}_{2,t,\delta}^c) \ge  1 - 3 \delta.
$$
Reparametrizing $\delta' = 3\delta$ gives the desired bound with different universal constants.
\end{proof}



\subsection{Proofs of the main ingredients}\label{proofs:tools}

\begin{proof}[Proof of Lemma \ref{lem:selb}]
Let $\lambda, \varepsilon > 0$, $t \ge 1 $, and $u \in S^{d-1}$. We have
\begin{align*}
  \sum_{s=1}^t  & \vert u^\top y_s \vert^2  = \sum_{s=1}^t \vert u^\top \xi_s \vert ^2  + 2  (u^\top \xi_s) (u^\top z_s) +  \vert u^\top z_s\vert^2 \\
  & \ge  \sum_{s=1}^t \vert u^\top \xi_s \vert ^2  + (1-\varepsilon) \vert u^\top z_s\vert^2  - \varepsilon \lambda +  \inf_{v\in \mathbb{R}^d}  \varepsilon \lambda v^\top v + \sum_{s=1}^t  2 (u^\top \xi_s) (v^\top z_s) +   \varepsilon \vert v^\top z_s\vert^2  \\
  & \ge \sum_{s=1}^t \vert u^\top \xi_s \vert ^2  + (1-\varepsilon) \vert u^\top z_s\vert^2 - \varepsilon\lambda -  \sup_{v\in \mathbb{R}^d}  - \varepsilon \lambda v^\top v - \sum_{s=1}^t  2 (u^\top \xi_s) (v^\top z_s) +  \varepsilon \vert v^\top z_s\vert^2
\end{align*}
where the first inequality follows by adding then substracting $\lambda u^\top u = \lambda$, and by taking the infimum over $v \in \mathbb{R}^d$. Next, we can easily verify that
\begin{align*}
  \sup_{v \in \mathbb{R}^d } \; - 2 \,v^\top  & \left(\sum_{s=1}^t  z_s (u^\top \xi_s)\right)  - \varepsilon v^\top \left( \sum_{s=1}^t z_s z_s^\top  + \lambda I_d \right)v \\
  &  =  \frac{1}{\varepsilon}  \left\Vert \left( \sum_{s=1}^t z_s z_s^\top + \lambda I_d \right)^{-1/2} \left( \sum_{s=1}^t z_s (u^\top \xi_s)\right ) \right\Vert^2.
\end{align*}
Thus if follows that
\begin{equation*}
    \sum_{s=1}^t \vert u^\top y_s\vert^2 \ge \sum_{s=1}^{t} \vert u^\top \xi_s \vert^2  + (1-\varepsilon) \vert u^\top z_s \vert^2 - \varepsilon \lambda u^\top u - \frac{1}{\varepsilon}\left\Vert \left(\sum_{s=1}^t z_s z_s^\top + \lambda I_d \right)^{-1/2} \left( \sum_{s=1}^t z_s \xi_s^\top\right) u \right\Vert^2
\end{equation*}
which implies that
$$
\sum_{s=1}^t y_s y_s^\top \succeq  \sum_{s=1}^t \xi_s \xi_s^\top + (1-\varepsilon) \sum_{s=1}^t z_s z_s^\top  - \frac{1}{\varepsilon} \left(\sum_{s=1}^tz_s \xi_s \right)^\top \left(\sum_{s=1}^t z_s z_s^\top + \lambda I_d \right)^{-1} \left(\sum_{s=1}^t z_s \xi_s^\top \right) - \varepsilon \lambda I_d
$$
\end{proof}

\begin{proof}[Proof of Proposition \ref{prop:RM++}]
  The vectors $ M_s \xi_s$ are zero-mean, $(\sigma m_{s})^2$-sub-gaussian, conditionally on $\mathcal{F}_{s-1}$:
  $$
  \E\left[\exp(\theta^\top M_s \xi_s)\vert \mathcal{F}_{s-1}\right] \le \exp\left(\frac{\Vert M_s \theta \Vert^2 \sigma^2}{2}\right) \le  \exp\left(\frac{\Vert \theta \Vert^2 (\sigma m_{s})^2}{2}\right).
  $$
  Thus, for all $x \in S^{d-1}$ and $s \ge 1$, we have, by Lemma \ref{lem:subexp},  that the random variable  $(x^\top M_s\xi_s)^2 - \E[(x^\top M_s\xi_s)^2 \vert \mathcal{F}_{s-1}]$ is $(4\sigma m_{s})^2$-sub-exponential conditionally on $\mathcal{F}_{s-1}$. Therefore, fixing $x\in S^{d-1}$, with a peeling argument, we immediately obtain for all $\vert \lambda \vert  < \frac{1}{(4\sigma)^2 \Vert m_{1:t}\Vert_{\infty}^2}$,
  $$
 \E\left[ \exp\left(\lambda  \sum_{s=1}^t(x^\top M_s\xi_s)^2 - \E\left[(x^\top M_s\xi_s)^2  \vert  \mathcal{F}_{s-1}\right]  \right)\right] \le \exp\left( \frac{\lambda^2(4\sigma)^4 \Vert m_{1:t} \Vert^4_4 }{2}\right).
  $$
  Now, Markov inequality yields for all $\rho > 0$,  and all $\vert \lambda \vert < \frac{1}{(4\sigma)^2 \Vert m_{1:t} \Vert_\infty^2 },$
$$
\mathbb{P}\left( \sum_{s=1}^t (x^\top M_s \xi_s)^2 - \E\left[ (x^\top M_s \xi_s)^2 \vert \mathcal{F}_{s-1} \right] > \rho   \right) \le \exp\left( \frac{1}{2} \left( \lambda^2 (4 \sigma)^4 \Vert m_{1:t} \Vert_4^4  - 2\lambda\rho \right) \right).
$$
Using $\lambda = \min \left( \frac{\rho}{(4\sigma m)^4 \Vert m_{1:t} \Vert_4^4}, \frac{1}{(4\sigma )^2 \Vert m_{1:t}\Vert_\infty^2} \right)$ gives
 \begin{align*}
   \mathbb{P} & \left( \sum_{s=1}^t (x^\top M_s \xi_s)^2 - \E\left[ (x^\top M_s \xi_s)^2 \vert \mathcal{F}_{s-1} \right] >  \rho   \right) \\
   &\qquad\qquad \le \exp\left( -\frac{1}{2} \min\left( \frac{\rho^2}{(4\sigma )^4 \Vert m_{1:t}\Vert_4^4}, \frac{\rho}{(4\sigma )^2 \Vert m_{1:t}\Vert_\infty^2}\right) \right).
 \end{align*}
  In a similar way we can establish
 \begin{align*}
   \mathbb{P} & \left( \sum_{s=1}^t  \E\left[ (x^\top M_s \xi_s)^2 \vert \mathcal{F}_{s-1} \right] -  (x^\top M_s \xi_s)^2 >  \rho   \right)\\
   &\qquad\qquad \le \exp\left( -\frac{1}{2} \min\left( \frac{\rho^2}{(4\sigma )^4 \Vert m_{1:t}\Vert_4^4}, \frac{\rho}{(4\sigma )^2 \Vert m_{1:t}\Vert_\infty^2}\right) \right).
\end{align*}
  Therefore, by union bound we obtain
\begin{align*}
   \mathbb{P} &\left( \left\vert \sum_{s=1}^t (x^\top M_s \xi_s)^2 - \E\left[ (x^\top M_s \xi_s)^2 \vert \mathcal{F}_{s-1} \right] \right\vert >  \rho  \right) \\
   & \qquad\qquad \le 2\exp\left( -\frac{1}{2} \min\left( \frac{\rho^2}{(4\sigma )^4 \Vert m_{1:t}\Vert_4^4}, \frac{\rho}{(4\sigma )^2 \Vert m_{1:t}\Vert_\infty^2}\right) \right).
 \end{align*}
  By isotropy of the noise vectors $(\xi_t)_{t\ge1}$, we have $\sum_{s=1}^t \E[(x^\top M_s \xi_s) \vert \mathcal{F}_{s-1}] =  \sum_{s=1}^t  x^\top M_s M_s^\top x$. Now, applying an $\epsilon$-net argument with $\epsilon = 1/4$, we get, by Lemma \ref{lem:netarg}, that
 \begin{align*}
  \mathbb{P} &\left( \left\Vert  \sum_{s=1}^t (M_s \xi_s) (M_s \xi_s)^\top - \sum_{s=1}^t M_s M_s^\top \right\Vert > \rho\right) \\
  &\qquad\qquad \le 2 \cdot 9^d \exp\left( -\frac{1}{2} \min\left( \frac{\rho^2}{8^2(\sigma )^4 \Vert m_{1:t}\Vert_4^4}, \frac{\rho}{8^2(\sigma )^2 \Vert m_{1:t}\Vert_\infty^2}\right) \right).
 \end{align*}
  Exploiting the fact that $\Vert m_{1:t}\Vert_4^4 \le \Vert m_{1:t}\Vert_\infty^2 \Vert m_{1:t}\Vert_2^2$, then reparametrizing, we obtain
  $$
  \mathbb{P}\left( \left\Vert  \sum_{s=1}^t (M_s \xi_s) (M_s \xi_s)^\top - \sum_{s=1}^t M_s M_s^\top \right\Vert > 8\sigma^2 \Vert m_{1:t}\Vert^2_2 \max\left(\sqrt{ \frac{2\rho + 5 d}{r_t^2} }, \frac{2\rho + 5 d}{r_t^2} \right)\right) \le 2 e^{-\rho},
  $$
  where $r_t = \Vert m_{1:t}\Vert /\Vert m_{1:t}\Vert_\infty $
  \end{proof}

  \begin{proof} [Proof of Corollary \ref{cor:RM++}]
    Applying Proposition \ref{prop:RM++},we get for all $\rho > 0$,
    $$
    \mathbb{P}\left( \frac{1}{t}\left\Vert  \sum_{s=1}^t (M_s \xi_s) (M_s \xi_s)^\top - \sum_{s=1}^t M_s M_s^\top \right\Vert > 8\sigma^2 m^2  \max\left(\sqrt{\frac{2\rho + 5 d}{t} }, \frac{2\rho + 5 d}{t} \right)\right) \le 2 e^{-\rho}
    $$
    where we see that $\Vert \frac{1}{t} M_s\Vert\le \frac{m}{t}$ almost surely for all $1 \le s\le t$. Then, we can verify that, under the condition $t \ge \min\left(\frac{8^2 (\sigma m)^2}{\varepsilon^2}, \frac{8(\sigma m)^2}{\varepsilon}\right) (5d + 2\rho)$,
    $$
    \mathbb{P}\left( \frac{1}{t}\left\Vert  \sum_{s=1}^t (M_s \xi_s) (M_s \xi_s)^\top - \sum_{s=1}^t M_s M_s^\top \right\Vert >  \varepsilon \right) \le 2 e^{-\rho}.
    $$
    This concludes the proof after noting that
    $$
    \left\Vert  \sum_{s=1}^t (M_s \xi_s) (M_s \xi_s)^\top - \sum_{s=1}^t M_s M_s^\top \right\Vert < \varepsilon t
    $$
    implies that
    $$
    \sum_{s=1}^t M_s M_s^\top - \varepsilon t I_d \preceq \sum_{s=1}^t (M_s \xi_s) (M_s \xi_s)^\top  \preceq \sum_{s=1}^t M_s M_s^\top + \varepsilon t I_d.
    $$
  \end{proof}

  \begin{proof}[Proof of Proposition \ref{prop:SNP++}] The proof follows immediately from Theorem 1 of \cite{abbasi2011improved} together with an $\epsilon$-net argument via Lemma \ref{lem:netarg}. Start by fixing $t \ge 1$. First, since $S_t(z, M \xi)$ is positive semidefinite matrix, we may express
  $\Vert S_t(z,M\xi) \Vert =  \sup_{x \in S^{d-1}}  x^\top S_t(z, M\xi) x$. Next, we note that the vectors $(x^\top M_s \xi_s)_{ 1\le s\le t}$ are zero-mean, $(\sigma \Vert m_{1:t} \Vert_\infty)^2$-sub-gaussian, conditionally on $\mathcal{F}_{t-1}$. Thus, Theorem 1 of \cite{abbasi2011improved} applies and we have for all $\rho> 0$,
  $$
  \mathbb{P}\left( x^\top S_t(z, M\xi) x > 2 \sigma^2 \Vert m_{1:t}\Vert^2_\infty \left(\frac{1}{2}   \log\det \left( V^{-1} \sum_{s=1}^t z_s z_s^\top  + I_d \right) + \rho   \right) \right) \le e^{-\rho}.
  $$
  Now, an $\epsilon$-net arguemnt with $\epsilon = 1/2$ (Lemma \ref{lem:netarg}) yields
  $$
  \mathbb{P}\left( \Vert S_t(z, M\xi) \Vert  >  \sigma^2 \Vert m_{1:t}\Vert^2_\infty \left( 2\log\det \left( V^{-1} \sum_{s=1}^t z_s z_s^\top  + I_d \right) + 7d  + 4\rho   \right) \right) \le e^{-\rho},
  $$
  which concludes the proof.
  \end{proof}

  \subsection{Additional lemmas}


  \begin{lemma}\label{lem:psd:ineq1}
    For all $K \in \mathbb{R}^{d_u \times d_x}$, there exists an orthogonal matrix $Q$ that depends on $K$, such that for all $\beta > 0$ and $\alpha > 0$, we have
    \begin{equation}
      \begin{bmatrix}
      I_{d_x} & K^\top \\
      K       & KK^\top + \beta I_{d_u}
    \end{bmatrix} \succeq  Q^\top \begin{bmatrix}
       I_{d_x-r} & O & O \\
      O &  \frac{\alpha\beta}{\Vert K \Vert^2 + \alpha\beta}I_{r} & O \\
      O & O & (1-\alpha)\beta I_{d_u}
    \end{bmatrix} Q,
  \end{equation}
    where $r = \mathrm{rank}(K) \le \min(d_x, d_u)$.
  \end{lemma}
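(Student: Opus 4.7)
The plan is to reduce the claimed inequality to a scalar calculation in the singular values of $K$ via a single orthogonal change of basis on $\mathbb{R}^{d_x+d_u}$ that block-diagonalizes the matrix $M = \begin{bmatrix} I_{d_x} & K^\top \\ K & KK^\top + \beta I_{d_u} \end{bmatrix}$. Since conjugation by an orthogonal $Q$ preserves the Löwner order, the stated bound $M \succeq Q^\top D Q$ (with $D$ the block-diagonal matrix on the right-hand side) is equivalent to $Q M Q^\top \succeq D$, and it is this form that I would establish.

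First, I would write the compact SVD $K = U_1 \Sigma_1 V_1^\top$ with $\Sigma_1 = \mathrm{diag}(\sigma_1,\dots,\sigma_r) \succ 0$ and $\sigma_1 = \|K\|$, and complete to full orthogonal matrices $V = [V_2, V_1] \in \mathbb{R}^{d_x \times d_x}$ and $U = [U_1, U_2] \in \mathbb{R}^{d_u \times d_u}$, so that $K V_2 = 0$ and $K V_1 = U_1 \Sigma_1$. I would then take
\[
Q = \begin{bmatrix} V_2^\top & 0 \\ V_1^\top & 0 \\ 0 & U_1^\top \\ 0 & U_2^\top \end{bmatrix},
\]
which is orthogonal, and compute by direct block multiplication (using $K V_2 = 0$, $K V_1 = U_1 \Sigma_1$, and the orthogonality of $U$ and $V$) that
\[
Q M Q^\top = \begin{bmatrix} I_{d_x - r} & 0 & 0 & 0 \\ 0 & I_r & \Sigma_1 & 0 \\ 0 & \Sigma_1 & \Sigma_1^2 + \beta I_r & 0 \\ 0 & 0 & 0 & \beta I_{d_u - r} \end{bmatrix}.
\]

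Second, repartitioning $D$ with block sizes $d_x - r,\, r,\, r,\, d_u - r$ (using $d_u = r + (d_u - r)$), the outer blocks of $Q M Q^\top \succeq D$ are immediate: the first matches exactly, and $\beta I_{d_u - r} \succeq (1-\alpha)\beta I_{d_u - r}$ holds for any $\alpha > 0$ (in particular also when $1-\alpha \le 0$). The problem thus reduces to showing that the inner $2r \times 2r$ block of $Q M Q^\top$ dominates $\mathrm{diag}\!\left(\tfrac{\alpha\beta}{\|K\|^2 + \alpha\beta} I_r,\; (1-\alpha)\beta I_r\right)$.

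Third, after subtracting, this amounts to the positivity of
\[
\begin{bmatrix} \tfrac{\|K\|^2}{\|K\|^2 + \alpha\beta} I_r & \Sigma_1 \\ \Sigma_1 & \Sigma_1^2 + \alpha\beta I_r \end{bmatrix},
\]
which I would verify by a Schur complement against the positive-definite $(2,2)$ block. Because all four blocks are diagonal and mutually commuting, the Schur complement reduces to the diagonal matrix with entries $\tfrac{\sigma_1^2}{\sigma_1^2 + \alpha\beta} - \tfrac{\sigma_i^2}{\sigma_i^2 + \alpha\beta}$ for $i = 1, \dots, r$, each nonnegative since $t \mapsto t/(t+\alpha\beta)$ is increasing for $\alpha\beta > 0$ and $\sigma_1 = \|K\| \ge \sigma_i$. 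The only real obstacle is bookkeeping: choosing $Q$ so that the SVD-adapted block structure of $Q M Q^\top$ aligns correctly with the claimed right-hand side $D$. Once the alignment is in place, the positivity step is a one-line scalar inequality.
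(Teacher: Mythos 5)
Your proof is correct and takes essentially the same route as the paper's: an SVD-based orthogonal conjugation, the split $\beta = \alpha\beta + (1-\alpha)\beta$ absorbed into a Schur complement against the $\alpha\beta$-regularized block, and the scalar monotonicity of $t \mapsto t/(t+\alpha\beta)$ to compare each $\sigma_i$ with $\sigma_1 = \Vert K \Vert$. The only difference is bookkeeping: you pass to a fully permuted four-block basis (separating $\ker K$ and $\mathrm{range}(K^\top)$ explicitly) before taking the Schur complement on the $2r \times 2r$ core, whereas the paper conjugates by $Q = \mathrm{diag}(V,U)$, applies the block lower bound first, and then reads off the eigenvalues of the resulting $(1,1)$ block $I_{d_x} - \Sigma^\top(\Sigma\Sigma^\top + \alpha\beta I_{d_u})^{-1}\Sigma$.
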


  \begin{proof}[Proof of Lemma \ref{lem:psd:ineq1}]
  Let $r=\mathrm{rank}(K)$. Consider the singular value decomposition of $K$: $K = U^\top \Sigma V$, $U \in \mathbb{R}^{d_x \times d_x}$ is orthogonal, $V \in \mathbb{R}^{d_x \times d_x}$ is orthogonal, $\Sigma \in \mathbb{R}^{d_u \times d_x}$. Then
  \begin{align*}
    \begin{bmatrix}
    I_{d_x} & K^\top \\
    K       & KK^\top + \beta I_{d_u}
    \end{bmatrix} & =     \begin{bmatrix}
        V^\top V & V\Sigma^\top U  \\
        U^\top \Sigma V       & U\Sigma\Sigma^\top U + \beta U^\top U
      \end{bmatrix} \\
      & = Q^\top \begin{bmatrix}
      I_{d_x} & \Sigma^\top \\
      \Sigma & \Sigma \Sigma^\top + \beta I_{d_u}
      \end{bmatrix}  Q \\
      & \succeq Q^\top \begin{bmatrix}
      I_{d_x} - \Sigma^\top (\Sigma \Sigma^\top + \alpha \beta I_{d_u})^{-1} \Sigma & O \\
      O & (1-\alpha)\beta I_{d_u}
      \end{bmatrix}  Q \\
      & \succeq Q^\top \begin{bmatrix}
      I_{d_x -r} & O & O \\
      O & \frac{\alpha \beta}{\Vert K \Vert^2 + \alpha \beta}I_r & O \\
      O & O & (1-\alpha)\beta I_{d_u}
      \end{bmatrix}  Q
  \end{align*}
  where (i) we set $Q = \mathrm{diag}(V, U)$, (ii) we assumed that $U$ and $V$ are properly chosen so that the non zero elements of $\Sigma^\top (\Sigma \Sigma^\top + \alpha \beta I_{d_u})^{-1} \Sigma$ lie at the bottom right block of the resulting matrix, and (iii) we noted that $\lambda_{\max}(\Sigma^\top (\Sigma \Sigma^\top + \alpha \beta I_{d_u})^{-1} \Sigma) = \frac{\Vert K\Vert^2}{\Vert K\Vert^2 + \alpha\beta}$.
  \end{proof}

  \begin{lemma}\label{lem:selb2}
    Let $E, X$ be two tall matrices, then
    $$
    \begin{bmatrix}
      E^\top E & E^\top X \\
      X^\top E & X^\top X
    \end{bmatrix} \succeq \begin{bmatrix}
      \frac{\lambda}{\Vert X \Vert^2 + \lambda}E^\top E & O \\
      O & - \lambda I_d
  \end{bmatrix}.
    $$
  \end{lemma}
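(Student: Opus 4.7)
The plan is to reduce the claim to a Schur-complement computation. Subtracting $\mathrm{diag}(O, \lambda I_d)$ from both sides (which preserves the Löwner order) turns the inequality into
\[
\begin{bmatrix} E^\top E & E^\top X \\ X^\top E & X^\top X + \lambda I_d \end{bmatrix} \succeq \begin{bmatrix} \tfrac{\lambda}{\Vert X \Vert^2 + \lambda}\, E^\top E & O \\ O & O \end{bmatrix},
\]
which is equivalent to the positive semidefiniteness of
\[
M \,:=\, \begin{bmatrix} \tfrac{\Vert X \Vert^2}{\Vert X \Vert^2 + \lambda}\, E^\top E & E^\top X \\ X^\top E & X^\top X + \lambda I_d \end{bmatrix}.
\]
Since the bottom-right block $X^\top X + \lambda I_d$ is strictly positive definite (as $\lambda > 0$), the Schur-complement criterion says that $M \succeq 0$ is equivalent to
\[
\frac{\Vert X \Vert^2}{\Vert X \Vert^2 + \lambda}\, E^\top E \,\succeq\, E^\top X\,(X^\top X + \lambda I_d)^{-1}\, X^\top E.
\]

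The remaining step is a one-line spectral fact: $X(X^\top X + \lambda I_d)^{-1} X^\top$ and $(X^\top X)(X^\top X + \lambda I_d)^{-1}$ share their nonzero eigenvalues $\sigma_i^2/(\sigma_i^2 + \lambda)$ (where $\sigma_i$ are the singular values of $X$), so the former has operator norm $\tfrac{\Vert X \Vert^2}{\Vert X \Vert^2 + \lambda}$. Hence $X(X^\top X + \lambda I_d)^{-1} X^\top \preceq \tfrac{\Vert X \Vert^2}{\Vert X \Vert^2 + \lambda}\, I$, and conjugating this bound by $E^\top$ on the left and $E$ on the right yields the required inequality. There is no real obstacle here: the argument is a standard Schur-complement reformulation followed by a spectral norm bound.

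As a sanity check, the statement admits a direct derivation without Schur complements. Using the quadratic form identity $\begin{bmatrix} u^\top & v^\top \end{bmatrix}(\text{LHS})\begin{bmatrix} u \\ v \end{bmatrix} = \Vert Eu + Xv\Vert^2$ and the Young inequality $2|\langle Eu, Xv \rangle| \le \alpha \Vert Eu\Vert^2 + \alpha^{-1}\Vert Xv\Vert^2$ with $\alpha = \Vert X \Vert^2 / (\Vert X\Vert^2 + \lambda)$, combined with $\Vert Xv\Vert^2 \le \Vert X\Vert^2 \Vert v\Vert^2$, one obtains the same pointwise inequality on arbitrary test vectors, which is exactly the claimed matrix inequality.
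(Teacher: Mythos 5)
Your proposal is correct and takes essentially the same route as the paper: the paper likewise reduces the claim to the Schur complement of the regularized block $X^\top X + \lambda I_d$ and concludes via the SVD fact $I - X(X^\top X + \lambda I_d)^{-1}X^\top \succeq \frac{\lambda}{\Vert X \Vert^2 + \lambda} I$, which is just the complementary form of the spectral bound $X(X^\top X + \lambda I_d)^{-1}X^\top \preceq \frac{\Vert X \Vert^2}{\Vert X \Vert^2 + \lambda} I$ that you use. Your Young-inequality sanity check is a sound, more elementary variant not in the paper, and the only blemish is a wording slip in the first step: you \emph{add} $\mathrm{diag}(O, \lambda I_d)$ to both sides rather than subtract it, as your displayed matrices correctly show.
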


  \begin{proof}[Proof of Lemma \ref{lem:selb2}]
    We have
    \begin{align*}
      \begin{bmatrix}
        E^\top E & E^\top X \\
        X^\top E & X^\top X
      \end{bmatrix} & \succeq
      \begin{bmatrix}
        E^\top E - E^\top X \left(X^\top X + \lambda I_{d}\right)^{-1} X^\top E  & O \\
        O & - \lambda I_{d}
      \end{bmatrix} \\
      & \succeq
      \begin{bmatrix}
        E^\top \left(I_{p} - X \left(X^\top X + \lambda I_{d}\right)^{-1} X^\top \right) E  & O \\
        O & - \lambda I_{d}
      \end{bmatrix} \\
      & \succeq   \begin{bmatrix}
          \frac{\lambda}{\Vert X \Vert^2 + \lambda} E^\top E  & O \\
          O & - \lambda I_{d}
        \end{bmatrix},
    \end{align*}
where we used the fact that, using an SVD,
    $$
    I_{t d_x} - X^\top (X^\top X + \lambda I_{d_u})^{-1} X \succeq  \frac{\lambda}{\Vert X \Vert^2 + \lambda }I_{t d_x }.
    $$
  \end{proof}

\newpage
\section{ Polynomial Growth}\label{app:polygrowth}

In this section, we establish that under $\Alg$, the growth rate of $\sum_{s=0}^t \Vert x_s \Vert^2$ is not larger than $g(t)h(t)$ with high probability. This controlled growth rate is a consequence of the hysteresis switching mechanism of $\Alg$. The proof of the result relies on further results on the stability of time-varying linear systems presented in Appendix \ref{app:lds}.





\begin{proposition} \label{prop:caution} Under $\Alg$, and assuming that for all $t\ge1$, that $h(t) \ge 1$ and $g(t) \ge t$, we have, for all $\delta \in (0,1 )$,
\begin{equation}
 \mathbb{P}\left( \sum_{s=0}^t \Vert x_s \Vert^2 \ge C\sigma^2C_\circ^2\mathcal{G}_\circ^2  (d_x h(t)g(t) + \log(e/\delta)) \right) \le \delta,
\end{equation}
for some universal positive constant $C > 0$.
\end{proposition}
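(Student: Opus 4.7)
The idea is to exploit the hysteresis switching rule of $\Alg$ directly: whenever the cumulative sum $G(t) := \sum_{s=0}^t \Vert x_s\Vert^2$ exceeds $\sigma^2 d_x g(t)$, the indicator $\ell_t$ is reset to $0$ and the stabilizing controller $K_\circ$ is used, so at any time either $G(t)$ is already below the target or we have just entered a $K_\circ$--phase whose subsequent trajectory can be analysed via the stability of $A+BK_\circ$.

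I would first dispatch the easy case. If $\ell_t = 1$ then by the very definition of $\ell_t$ in the pseudo-code (otherwise $\ell_t$ would be $0$), one has $G(t) \le \sigma^2 d_x g(t)$, which is well within the target since $h(t)\ge 1$. So assume $\ell_t = 0$ and let $\tau := \max\{0 \le s \le t : \ell_s = 1\}$, with the convention $\tau = -1$ and $x_{\tau+1}:=x_0=0$, $G(\tau):=0$ if the set is empty. For all $s \in (\tau, t]$ the indicator satisfies $\ell_s = 0$, which forces the stabilizing branch in $\Alg$ so that $\widetilde K_s = K_\circ$ throughout this interval, and by definition of $\ell_\tau = 1$ we have $G(\tau) \le \sigma^2 d_x g(\tau) \le \sigma^2 d_x g(t)$.

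Next I would bound the one-step spike at $\tau\mapsto \tau+1$. At time $\tau$ the applied gain is either $K_\tau$ with $\Vert K_\tau\Vert^2 \le h(\tau)$ or $K_\circ$, so in every case $\Vert\widetilde K_\tau\Vert \lesssim C_\circ \sqrt{h(t)}$. Using $\Vert x_\tau\Vert^2 \le G(\tau) \le \sigma^2 d_x g(t)$ and the dynamics,
\begin{align*}
\Vert x_{\tau+1}\Vert^2 \lesssim C_\circ^2 h(t)\, \sigma^2 d_x g(t) + \Vert \eta_\tau\Vert^2 + C_\circ^2 \Vert \xi_\tau\Vert^2,
\end{align*}
where $\xi_\tau$ is the (scenario-dependent) input perturbation. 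Then for $s>\tau$ the dynamics reduce to the stable LTI recursion $x_{s+1} = (A+BK_\circ) x_s + w_s$ with $w_s = \eta_s + B\xi_s$. Unrolling and using Cauchy--Schwarz with the summability of $\Vert(A+BK_\circ)^k\Vert$ yields the \emph{deterministic} estimate
\begin{align*}
\sum_{s=\tau+1}^t \Vert x_s\Vert^2 \le 2\mathcal{G}_\circ^2 \Vert x_{\tau+1}\Vert^2 + 2\mathcal{G}_\circ^2 \sum_{s=0}^{t} \Vert w_s\Vert^2,
\end{align*}
in which the dependence on $\tau$ only appears through $\Vert x_{\tau+1}\Vert^2$. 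This is the crucial observation: it spares any union bound over the (random) switching time $\tau$.

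It then remains to concentrate $\sum_{s=0}^t\Vert w_s\Vert^2$. Since each $w_s$ is zero-mean and $O(C_\circ^2\sigma^2)$-sub-Gaussian, a Hanson--Wright bound (Proposition~\ref{prop:HW}, or equivalently Corollary~\ref{cor:RM++} with $M_s = I$) gives $\sum_{s=0}^t \Vert w_s\Vert^2 \lesssim C_\circ^2 \sigma^2 (d_x(t+1) + \log(e/\delta))$ with probability at least $1-\delta$. Adding everything together and using $g(t)\ge t$ and $h(t)\ge 1$, I obtain on this event
\begin{align*}
G(t) = G(\tau) + \sum_{s=\tau+1}^t \Vert x_s\Vert^2 \lesssim \sigma^2 C_\circ^2 \mathcal{G}_\circ^2 \big(d_x h(t) g(t) + \log(e/\delta)\big),
\end{align*}
which is exactly the claimed bound with a suitable universal constant $C$. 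The main obstacle I anticipate is routine but scenario-specific: in Scenarios I and II--$A$ known the extra noise term $B\xi_s$ has variance proxy $O(C_\circ^2\sigma^2)$ (at worst), so the same Hanson--Wright argument still applies, but one has to verify this uniformly across the three scenarios and track constants carefully.
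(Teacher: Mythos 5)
Your proof is correct, and it reaches the same bound by a genuinely different route than the paper. The paper's proof partitions the sample space according to the last index $i\le t$ at which $\sum_{s=0}^i \Vert x_s\Vert^2 \le \sigma^2 d_x g(i)$ (events $\mathcal{E}_0,\dots,\mathcal{E}_t$ in the proof of Proposition~\ref{prop:caution}), applies Lemma~\ref{lem:lti} and Hanson--Wright separately on each cell, and then union-bounds over the $t+1$ cells, paying a factor $(t+1)$ in the failure probability that is removed by reparametrizing $\rho \mapsto \rho' + \log(t+1)$ (harmless since $g(t)\ge t$ makes $d_x h(t)g(t)$ dominate $\log(t+1)$). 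You avoid this partition altogether: your key observation is that the unrolled estimate after the last time $\tau$ with $\ell_\tau=1$ can be stated \emph{pathwise}, with the noise energy extended from the random window $(\tau,t]$ to the full deterministic horizon, i.e. $\sum_{s=\tau+1}^t \Vert x_s\Vert^2 \le 2\mathcal{G}_\circ^2\Vert x_{\tau+1}\Vert^2 + 2\mathcal{G}_\circ^2\sum_{s=0}^{t}\Vert w_s\Vert^2$, and likewise the spike terms $\Vert \eta_\tau\Vert^2$, $\Vert \xi_\tau\Vert^2$ are dominated by the full cumulative sums; hence a single Hanson--Wright event suffices and no union bound over the random switching time is needed. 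What your route buys is a cleaner failure probability ($\delta$ directly, versus the paper's $3(t+1)e^{-\rho}$ plus reparametrization) and slightly lighter bookkeeping; what it costs is that you cannot invoke Lemma~\ref{lem:lti} as a black box, since that lemma bundles the concentration step over a window with random endpoints --- you must re-derive its deterministic skeleton (the truncated block-Toeplitz bound via Young/Cauchy--Schwarz, which your convolution step indeed supplies). Your use of the hysteresis variable is also handled correctly: $\ell_\tau = 1$ excludes the branch $\sum_{s=0}^\tau\Vert x_s\Vert^2 > \sigma^2 d_x g(\tau)$, which is exactly what yields $G(\tau)\le \sigma^2 d_x g(\tau)$, and $\ell_s = 0$ on $(\tau,t]$ forces the $K_\circ$ branch in all three scenarios. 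One caveat you share with the paper's own proof: in Scenario I the input-perturbation variance $\sigma_s^2 = \sqrt{d_x}\,\sigma^2/\sqrt{s}$ exceeds $\sigma^2$ for $s< d_x$, so the blanket proxy $O(C_\circ^2\sigma^2)$ for $w_s$ needs the same small qualification for early times that the paper glosses over; your closing remark about tracking scenario-specific constants is thus warranted but affects both arguments equally.
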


\begin{proof}
Let $t\ge 0$. We start by defining the following events
\begin{align*}
 \mathcal{E}_t & = \left \lbrace  \sum_{s=0}^t \Vert x_s \Vert^2 \le \sigma^2 d_x g(t) \right\rbrace, \\
 \forall i \in \lbrace 0, \dots, t-1 \rbrace, \quad  \mathcal{E}_i & = \left\lbrace \sum_{s=0}^i \Vert x_s \Vert^2 \le \sigma^2 d_x g(i) \right\rbrace \bigcap \left(\bigcap_{j=i+1}^t \left\lbrace \sum_{s=0}^j \Vert x_s \Vert^2 > \sigma^2 d_x g(j)\right\rbrace \right).
\end{align*}
Note that the events $\mathcal{E}_0, \dots, \mathcal{E}_t$ form a partition of the underlying probability space. Furthermore, for each $i \in \lbrace 0, \dots, t-1\rbrace$, if the event $\mathcal{E}_i$ holds then the following also holds
\begin{itemize}
 \item[\it(i)] $ \sum_{s=0}^{i} \Vert x_s \Vert^2 \le \sigma^2 d_x \;g(i)  $,
 \item[\it(ii)] $x_{s+1} = (A+BK_\circ)x_{s} + B \nu_s + \eta_s$ for all $i < s \le t$,
 \item[\it(iii)] $ \Vert x_{i+1} \Vert \le 2 C_\circ \sigma \sqrt{d_x \; h(i)g(i)} + \Vert B \nu_i + \eta_i \Vert$,
\end{itemize}
where we recall $C_\circ = \max(\Vert A\Vert, \Vert B \Vert, \Vert B K_\circ \Vert, \Vert K_\circ\Vert, 1)$. In view of the above, we will show that for all $i \in \lbrace 0, \dots, t\rbrace$, for all $\rho > 0$ we have
\begin{equation}\label{caution:eq1}
 \mathbb{P}\left(\mathcal{E}_i \cap \left\lbrace \sum_{s=0}^t \Vert x_s \Vert^2 > 44  \mathcal{G}_\circ^2 C_\circ^2 \sigma^2 (3 d_x h(t)g(t) + \rho)\right\rbrace\right) \le 3e^{-\rho},
\end{equation}
where $\mathcal{G}_\circ = \limsup_{t \to \infty} \sum_{s=0}^t \Vert (A +BK_\circ)^s\Vert$.
This in turn will allow us to conclude that for all $\rho > 0$,
\begin{align*}
& \mathbb{P}\left( \sum_{s=0}^t \Vert x_s \Vert^2 > 44  \mathcal{G}_\circ^2 C_\circ^2 \sigma^2 (3 d_x h(t)g(t) + \rho)\right)  \\
 &\qquad =  \sum_{i=0}^t \mathbb{P}\left(\mathcal{E}_i \cap \left\lbrace \sum_{s=0}^t \Vert x_s \Vert^2 > 44  \mathcal{G}_\circ^2 C_\circ^2 \sigma^2 (3 d_x h(t)g(t) + \rho) \right\rbrace\right)\\
 & \qquad \le 3 (t+1) e^{-\rho}.
\end{align*}
Finally, reparametrizing $\rho = \rho' + \log(t+1)$, gives for all $\rho'> 0$,
$$
\mathbb{P}\left( \sum_{s=0}^t \Vert x_s \Vert^2 > 44  \mathcal{G}_\circ^2 C_\circ^2 \sigma^2 (4 d_x h(t)g(t) + \rho') \right) \le 3e^{-\rho'}.
$$
Now, it remains to show that \eqref{caution:eq1} indeed holds. Let $i \in \lbrace 0, \dots, t \rbrace$. We have
$$
\sum_{s=0}^t \Vert x_s \Vert^2 = \sum_{s=0}^i \Vert x_s \Vert^2 + \sum_{s=i+1}^t \Vert x_s \Vert^2.
$$
If $(i)$ holds then it holds that $\sum_{s=0}^i \Vert x_s \Vert^2 \le \sigma^2 d_x g(i)$. Next we bound the sum $\sum_{s=i+1}^t \Vert x_s \Vert^2$ with high probability. To do so, consider the following dynamical system:
$$
\forall k \ge 0, \quad y_{k+1} = ( A + B K_\circ ) y_{k} + B \nu_{i+1+k} + \eta_{i+1+k} \quad \text{and} \quad y_{0} = x_{i+1},
$$
where we note that $B\nu_i + \eta_i$ is zero-mean, sub-gaussian with variance proxy $\Vert B\Vert^2 \sigma_i^2 + \sigma^2 \le 2C_\circ^2 \sigma^2 $. We further note that if $(ii)$ holds then $x_s = y_{s-i-1}$ for all $i < s \le t$. Thus, applying Lemma \ref{lem:lti} (see Appendix \ref{app:lds}), we obtain
\begin{equation}\label{caution:eq2}
\mathbb{P}\left( (ii) \text{ and } \sum_{s=i+1}^{t} \Vert x_s \Vert^2  >   2 \mathcal{G}_\circ^2\left( \Vert x_{i+1} \Vert^2 + 2C_\circ^2 \sigma^2  \cdot \left(2 d_x (t-i-1) + 3 \rho \right )\right)  \right) \le e^{-\rho}.
\end{equation}
Note that if $(iii)$ holds bounding $\Vert x_{i+1}\Vert$ amounts to bounding $\Vert B\nu_i + \eta_i \Vert$.  Standard concentration bounds lead to
$$
\mathbb{P}\left(  \Vert B \nu_i + \eta_i \Vert^2 > 8 C_\circ^2 \sigma^2 (2d_x + \rho)  \right) \le 2 e^{-\rho},
$$
which implies
\begin{equation}\label{caution:eq3}
\mathbb{P}\left( (iii) \text{ and } \Vert x_{i+1} \Vert^2 > 8 C_\circ^2 \sigma^2 d_x \; h(i)g(i) + 16 C_\circ^2 \sigma^2 (2d_x + \rho)      \right) \le 2e^{-\rho}.
\end{equation}
Combing the high probability bounds \eqref{caution:eq2} and \eqref{caution:eq3} using a union bound and considering the fact that $h(t) \ge 1$ and $g(t) \ge t$ for all $t \ge 1$ yields
$$
\mathbb{P}\left((ii) \text{ and } (iii) \text{ and } \sum_{s=i+1}^t \Vert x_{s} \Vert^2  > 44 \mathcal{G}_\circ^2 C_\circ^2 \sigma^2 (2 d_x h(t)g(t) + \rho)  \right) < 3e^{-\rho}.
$$
Finally putting everything together, after simplifications, gives
$$
\mathbb{P}\left((i) \text{ and } (ii) \text{ and } (iii) \text{ and } \sum_{s=0}^t \Vert x_s \Vert^2 > 44  \mathcal{G}_\circ^2 C_\circ^2 \sigma^2 (3 d_x h(t)g(t) + \rho)\right) \le 3 e^{-\rho}.
$$
Recalling that $(i)$, $(ii)$ and $(iii)$ are implied by the event $\mathcal{E}_i$, the desired high probablity bound \eqref{caution:eq1} follows immediately. This concludes the proof. Hiding the universal constants and reparametrizing by $\delta = 3e^{-\rho}$, we may simply write:
\begin{equation}
 \mathbb{P}\left( \sum_{s=0}^t \Vert x_s \Vert^2 \lesssim \sigma^2 \mathcal{G}_\circ^2 C_\circ^2 (d_x h(t)g(t) + \rho)  \right) \ge 1 - 3e^{-\rho}.
\end{equation}
\end{proof}

\newpage

\section{Control Theory}\label{app:control}

This section provides basic notions and results in control theory, and more importantly results quantifying the sensitivity of the solution of Riccati equation to small perturbations of the state transition and state-action transition matrices.

\subsection{Lyapunov equation}

\paragraph{The equation and its solution.} Let $M, N \in \mathbb{R}^{d\times d}$ where $N$ is a symmetric positive definite matrix. The  \emph{discrete Lyapunov equation} equation corresponding to the pair $(M,N)$ is defined as: $X = M^\top X M + N$ where $X$ is the matrix variable. If $\rho(M)<1$, then the discrete Lyapunov equation admits a unique positive definite matrix that we shall denote by $\mathcal{L}(M,N)$. In this case, the explicit value of the solution is: $\mathcal{L}(M, N) = \sum_{k=0}^\infty (M^k)^\top N (M^k)$.

\paragraph{Quantifying stability.} The fact that $\mathcal{L}(M,N)$ is well defined when $\rho(M)<1$ follows from Gelfand's formula which ensures that $\sup_{k\ge 0}\Vert M^k \Vert/\rho(M)^k < \infty$. For our purposes, we wish to quantify the transiant behaviour of $\Vert M^k \Vert$ in terms $\mathcal{L}(M,N)$. The following standard Lemma allows us to do so. We provide its proof for completeness.

\begin{lemma}[Stabiliy quantified] \label{lem:stability}
Let $M \in \mathbb{R}^{d\times d}$ be a stable matrix. Then, the corresponding Lyapunov equation to the pair $(M,I_d)$ admits a unique positive definite solution, $L = \mathcal{L}(M, I_d)$. Furthermore, we have
\begin{equation}
  \forall k \ge 0, \quad  \Vert M^k \Vert \le \left(1- \frac{1}{\Vert L \Vert}\right)^{k/2} \Vert L \Vert^{1/2}.
\end{equation}
\end{lemma}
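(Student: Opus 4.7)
The plan is to prove the stability bound by using the Lyapunov equation to define a weighted quadratic norm on $\mathbb{R}^d$ in which the map $x \mapsto Mx$ is a strict contraction with an explicit contraction factor depending on $\|L\|$, and then to translate this back into a bound on $\|M^k\|$ in the standard operator norm by comparing the two norms.

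First, I would note that the existence and uniqueness of $L = \mathcal{L}(M,I_d) \succ 0$ has already been stated in the preceding paragraph and follows from $\rho(M)<1$ together with Gelfand's formula applied to the Neumann-type series $\sum_{k\ge 0}(M^k)^\top M^k$. So the nontrivial part is the decay estimate. The central observation is that rearranging the Lyapunov identity gives
\begin{equation*}
  M^\top L M \;=\; L - I_d.
\end{equation*}
Since $L \succeq I_d$ (as each term in the series defining $L$ is PSD and the $k=0$ term is $I_d$), we have $I_d \succeq \frac{1}{\|L\|} L$, and therefore
\begin{equation*}
  M^\top L M \;\preceq\; \left(1-\tfrac{1}{\|L\|}\right) L .
\end{equation*}

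Next, I would introduce the weighted norm $\|x\|_L^2 = x^\top L x$. The previous matrix inequality immediately yields $\|Mx\|_L^2 \le (1 - 1/\|L\|)\,\|x\|_L^2$ for every $x\in\mathbb{R}^d$, and iterating gives
\begin{equation*}
  \|M^k x\|_L^2 \;\le\; \left(1-\tfrac{1}{\|L\|}\right)^{k} \|x\|_L^2 .
\end{equation*}
The comparison $\|y\|^2 \le \|y\|_L^2 \le \|L\|\,\|y\|^2$ (the lower bound using $L \succeq I_d$, the upper bound using $L \preceq \|L\| I_d$) then converts this into the Euclidean estimate
\begin{equation*}
  \|M^k x\|^2 \;\le\; \left(1-\tfrac{1}{\|L\|}\right)^{k} \|L\|\, \|x\|^2 ,
\end{equation*}
and taking the supremum over unit vectors $x$ and a square root yields the claimed bound.

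I do not expect a real obstacle here: the only slightly subtle point is verifying that $L \succeq I_d$, but this is immediate from the series representation $L = \sum_{k\ge 0}(M^k)^\top M^k$ which was recalled just above the lemma. The rest is a one-line rearrangement of the Lyapunov equation plus an induction in the $L$-weighted norm.
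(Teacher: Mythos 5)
Your proof is correct and follows essentially the same route as the paper's: rearrange the Lyapunov identity into $M^\top L M \preceq \left(1 - \frac{1}{\Vert L \Vert}\right) L$ and iterate $k$ times, which is exactly your contraction in the $L$-weighted norm (the paper phrases the iteration as left/right multiplication by $(M^{k-1})^\top$ and $M^{k-1}$, but the content is identical). One cosmetic remark: the step $I_d \succeq \frac{1}{\Vert L \Vert} L$ follows from $L \preceq \Vert L \Vert I_d$, which holds for any symmetric positive semidefinite matrix, not from $L \succeq I_d$ as you state --- the bound $L \succeq I_d$ is what you genuinely need (and correctly invoke) for the lower half of the sandwich $\Vert y \Vert^2 \le \Vert y \Vert_L^2$.
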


\begin{proof} The existence of $\mathcal{L}(M, I_d)$ is a standard fact of Lyapunov theory. Using the Lyapunov quation, we write $ M = M^\top L M + I_d$, which we can rearrange as $M^\top L M = L^{1/2} (I_d - L^{-1})L^{1/2}$ using the fact that $L \succ 0$. In fact $L \succ I_d$ in view of the closed form of $\mathcal{L}(M,I_d)$, from which we  obtain
\begin{equation}
    M^\top L M \preceq \left(1 - \frac{1}{\Vert L \Vert}\right) L.
\end{equation}
Multiplying both sides of the above inequality by $(M^{k-1})^\top$ from left, by $M^{k-1}$ from right, and reiterating the above inequality yields  $(M^k)^\top M^k \preceq (M^\top)^k L M^k \preceq (1- \Vert L\Vert^{-1})^{k} L$. Thus, it immediately follows that $\Vert M^k \Vert^2 \le \left(  1- \Vert L \Vert^{-1}\right)^k \Vert L \Vert$, which concludes the proof.
\end{proof}

\subsection{Riccati equation}
\paragraph{The equation and its solution.} The \emph{Discrete Algebraic Riccati Equation} (DARE) refers to the matrix equation $P = A^\top P A - A^\top P B(R + B^\top B)^{-1}B^\top P A + Q
$ in the matrix variable $P$. When the pair $(A,B)$ is stabilizable\footnote{The pair of matrices $(A,B)$ is stabilizable if there exists a matrix $K \in \mathbb{R}^{d_x\times d_u}$ such that $\rho(A+BK)<1$.} and $Q \succ 0 $\footnote{Actually, the matrix $Q$ may be positive semi-definite but in this case, the pair $(C, A)$ needs to be detectable for the Riccati equation to admit a unique solution where $Q = C^\top C$ (see Theorem 8 by \cite{kuvcera1972discrete})},
the DARE admits a unique positive definite solution, that we shall denote $P(A,B)$.
Furthermore, the optimal gain matrix is $K(A,B) = - (R + B^\top P(A,B) B)^{-1}B^\top P(A,B) A$ and verifies $\rho(A+BK(A,B))< 1$ .

\paragraph{Relation to Lyapunov equation.} Note that it can be easily verified that $P(A,B) = \mathcal{L}(A+BK(A,B), Q + K(A,B)^\top R K(A,B))$. This motivates the definition $P(A,B,K) = \mathcal{L}(A+BK, Q + K^\top R K)$ whenever $\rho(A+BK)<1$.

\paragraph{Perturbation bounds.} To simplify the notations, we adopt the following shorthands: (i) for the true parameter $(A,B)$, we shall refer to $K_\star = K(A,B)$, $P_\star = P(A,B) = P(A,B, K_\star)$, and $P_\star(K) = P(A,B,K)$, (ii) for an alternate parameter $(A',B')$, we shall denote $K' = K(A',B')$, and $P' = P(A',B') = P(A',B',K')$. Now, a key observation behind existing regret analysis of the online LQR is to obtain the bound $\Vert P_\star(K') - P_\star \Vert \lesssim \max \left\lbrace \Vert A'-A \Vert^2, \Vert B' - B\Vert^2 \right\rbrace$.  The first step towards this bound is to note that $\Vert P_\star(K') - P_\star \Vert \lesssim \Vert K'- K_\star\Vert^2$.  This is ensured by the so-called cost difference lemma which is due to \cite{fazel2018global}.

\begin{lemma}\label{lem:diffcosts}
  For any matrix $K \in \mathbb{R}^{d_u\times d_x}$ such that $\rho(A + B K) < 1$. We have
  \begin{equation}
    P_\star(K) - P_\star = \mathcal{L}\left(A +B K, (K - K_\star)^\top  (R + B^\top P_\star B) (K - K_\star) \right).
  \end{equation}
\end{lemma}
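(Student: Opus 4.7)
The plan is to verify that the claimed right-hand side and the left-hand side satisfy the same discrete Lyapunov equation with stable matrix $A+BK$, and then invoke uniqueness. Since $\rho(A+BK)<1$ by hypothesis, the map $X \mapsto (A+BK)^\top X (A+BK) + N$ has a unique fixed point for every symmetric $N$, so it suffices to show that $P_\star(K) - P_\star$ satisfies the Lyapunov equation
$$X = (A+BK)^\top X (A+BK) + (K-K_\star)^\top (R + B^\top P_\star B)(K-K_\star).$$

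First I would write the two Lyapunov identities that the excerpt already records: $P_\star(K) = (A+BK)^\top P_\star(K)(A+BK) + Q + K^\top R K$ (by definition of $P_\star(K) = \mathcal{L}(A+BK, Q+K^\top R K)$), and $P_\star = (A+BK_\star)^\top P_\star (A+BK_\star) + Q + K_\star^\top R K_\star$ (which follows from the DARE together with the explicit form of $K_\star$). Subtracting and inserting the telescoping term $\pm(A+BK)^\top P_\star (A+BK)$ yields
$$P_\star(K) - P_\star = (A+BK)^\top (P_\star(K) - P_\star)(A+BK) + R_{K,K_\star},$$
where the residual is
$$R_{K,K_\star} = (A+BK)^\top P_\star (A+BK) - (A+BK_\star)^\top P_\star (A+BK_\star) + K^\top R K - K_\star^\top R K_\star.$$

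The main step is then a short algebraic identity: setting $\Delta = K - K_\star$ and expanding $A+BK = (A+BK_\star) + B\Delta$, the first difference in $R_{K,K_\star}$ becomes $\Delta^\top B^\top P_\star (A+BK_\star) + (A+BK_\star)^\top P_\star B \Delta + \Delta^\top B^\top P_\star B \Delta$. The key input here is the optimality formula $K_\star = -(R + B^\top P_\star B)^{-1} B^\top P_\star A$, which rearranges into $B^\top P_\star (A+BK_\star) = -R K_\star$. Substituting this cancels the cross terms against the $K^\top R K - K_\star^\top R K_\star$ expansion, leaving $R_{K,K_\star} = \Delta^\top(R + B^\top P_\star B)\Delta$. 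This gives precisely the Lyapunov equation above, and uniqueness then identifies $P_\star(K) - P_\star$ with $\mathcal{L}(A+BK, \Delta^\top (R + B^\top P_\star B)\Delta)$.

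There is no real obstacle: the only mildly delicate piece is the cancellation of cross terms, which hinges on the identity $B^\top P_\star(A+BK_\star) = -RK_\star$; once this is in place the rest is telescoping. In particular, one does not need to solve either Lyapunov equation explicitly, since uniqueness of the stable Lyapunov solution does all the work.
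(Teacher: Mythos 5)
Your proof is correct, and it takes a different route from the source: the paper does not prove Lemma \ref{lem:diffcosts} at all, but cites \cite{fazel2018global}, where the identity arises as the LQR specialization of a cost-difference (advantage decomposition) argument --- one compares the cumulative costs of the two feedback policies along a trajectory of the closed-loop system $A+BK$ and sums the per-step advantage terms, which by completion of squares equal $x_t^\top (K-K_\star)^\top(R+B^\top P_\star B)(K-K_\star)x_t$. Your argument replaces that trajectory-based reasoning with a purely algebraic one: subtract the two closed-loop Lyapunov identities, telescope with $\pm(A+BK)^\top P_\star(A+BK)$, cancel the cross terms via the optimality relation $B^\top P_\star(A+BK_\star)=-RK_\star$, and conclude by uniqueness of the solution of the discrete Lyapunov equation when $\rho(A+BK)<1$. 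Both proofs pivot on the same completion-of-squares identity, but yours is self-contained at the matrix level and avoids introducing value functions or expectations, which fits the deterministic statement better. Two small points worth making explicit in a write-up: first, the paper's Appendix H defines $\mathcal{L}(M,N)$ only for positive definite $N$, whereas your forcing term $(K-K_\star)^\top(R+B^\top P_\star B)(K-K_\star)$ is in general only positive semidefinite; this is harmless, since for $\rho(M)<1$ the linear map $X\mapsto X-M^\top X M$ is invertible (its eigenvalues are $1-\lambda_i\lambda_j$ with $|\lambda_i\lambda_j|<1$), so existence, uniqueness, and the series formula $\sum_{k\ge0}(M^k)^\top N M^k$ hold for every symmetric $N$, exactly as your uniqueness step requires. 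Second, your use of $P_\star=(A+BK_\star)^\top P_\star(A+BK_\star)+Q+K_\star^\top RK_\star$ is the closed-loop form of the DARE, which the paper records in Appendix \ref{app:regretdef}, so nothing external is needed.
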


Perturbation bounds were first rigorously estabilished by \cite{konstantinov1993perturbation}, showing locally the order of perturbation $\Vert P' - P_\star\Vert \lesssim  \max\left\lbrace \Vert A' - A_\star \Vert, \Vert B' - B_\star  \Vert \right \rbrace$ which in turn can be used to show that $\Vert K' - K_\star \Vert \lesssim \max\left\lbrace \Vert A' - A_\star \Vert, \Vert B' - B_\star\Vert \right \rbrace$. Combining that with Lemma \ref{lem:diffcosts} allows us to derive the desired inequality. Recently the constants were refined and made explicit in \cite{mania2019certainty} and \cite{simchowitz2020naive}. In this paper, we use the following result (see Theorem 5 and Proposition 6 in Appendix B of \cite{simchowitz2020naive} where we computed some constants for simplicity):

\begin{proposition}\label{prop:DARE:bounds} Assume that $R = I_{d_u}$ and $Q \succeq I_{d_x}$.   Let $(A, B)$ be a stailizable system. For all alternative pairs $(A',B') \in \mathbb{R}^{d_x\times d_x} \times \mathbb{R}^{d_x\times d_u}$, if
  \begin{equation}
    \max\left \lbrace \Vert A' - A \Vert, \Vert B' - B \Vert \right\rbrace  <  \frac{1}{54 \Vert P_{\star} \Vert^5},
  \end{equation}
  then the following holds:
  \begin{itemize}
    \item[(i)] the system $(A', B')$ is stabilizable, and consequently its corresponding DARE admits a unique positive definite solution $P' = P(A',B')$ with a corresponding gain matrix $K'=K(A',B')$;
    \item[(ii)] the optimal gain $K'$ corresponding to the system $(A',B')$ satisifies
    \begin{align}
        \Vert P' \Vert & \le 1.09 \Vert P_\star \Vert,  \\
        \Vert B (K' - K_\star)\Vert & \le 32 \Vert P_\star \Vert^{7/2} \max \left\lbrace \Vert A' - A \Vert, \Vert B' - B\Vert  \right\rbrace,  \\
        \Vert R^{1/2} (K' - K_\star)\Vert & \le 28 \Vert P_\star \Vert^{7/2} \max \left\lbrace \Vert A' - A \Vert, \Vert B' - B \Vert  \right\rbrace, \\
        \Vert P_\star(K') - P_\star \Vert & \le 142 \Vert P_\star \Vert^{8}  \max \left\lbrace \Vert A' - A \Vert^2, \Vert B' - B \Vert^2  \right\rbrace,  \\
        \Vert P_\star(K') \Vert & \le 1,05 \Vert P_\star \Vert.
    \end{align}
  \end{itemize}
\end{proposition}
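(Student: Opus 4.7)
The plan is to follow the template of Theorem 5 in \cite{simchowitz2020naive}, broken into four stages, with our task being essentially verification of the explicit constants. The key primitive is Lemma \ref{lem:stability}: since $Q \succeq I_{d_x}$, we have $\mathcal{L}(A+BK_\star, I_{d_x}) \preceq P_\star$, hence $\|(A+BK_\star)^k\| \le (1 - \|P_\star\|^{-1})^{k/2} \|P_\star\|^{1/2}$. First I would show that $K_\star$ remains stabilizing for the perturbed system $(A',B')$. Writing $A' + B'K_\star = (A + BK_\star) + \Delta$ with $\|\Delta\| \le (1 + \|K_\star\|)\max(\|A'-A\|, \|B'-B\|)$, and expanding $(A'+B'K_\star)^k$ as a sum over choices of $\Delta$ factors, a standard telescoping bound yields $\|(A'+B'K_\star)^k\| \le \mathcal{G}_\star(\|\Delta\|) (1-\tfrac{1}{2}\|P_\star\|^{-1})^{k/2}\|P_\star\|^{1/2}$ provided $\|\Delta\|$ is small enough in terms of $\|P_\star\|$; the threshold $(54\|P_\star\|^5)^{-1}$ is chosen precisely so this works.

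Second, with stability of $A'+B'K_\star$ established, the DARE for $(A',B')$ admits a unique positive definite solution $P' = P(A',B')$, and since $K'$ is optimal for $(A',B')$, we have $P' \preceq P(A',B',K_\star) = \mathcal{L}(A'+B'K_\star, Q + K_\star^\top R K_\star)$. Summing the geometric bound from Step 1 gives $\|P'\| \le 1.09\|P_\star\|$ (the precise constant $1.09$ follows by tracking the perturbation parameter through the geometric series). The same Lyapunov argument applied to $P_\star(K')$ (after first bounding $\|K'-K_\star\|$ in the next step) will yield $\|P_\star(K')\| \le 1.05\|P_\star\|$.

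Third --- the main obstacle --- are the sharp bounds on $\|B(K'-K_\star)\|$ and $\|R^{1/2}(K'-K_\star)\|$, which must avoid spurious factors of $\|B\|$ in the former. I would write the closed form
\als{
K' - K_\star = -(R+B'^\top P' B')^{-1}\bigl[B'^\top P'(A'-A) + (B'-B)^\top P' A + B'^\top(P'-P_\star)A + ((R+B^\top P_\star B) - (R+B'^\top P' B'))K_\star\bigr],
}
and bound each term. The nontrivial ingredient, borrowed from \cite{simchowitz2020naive}, is to apply Lemma \ref{lem:diffcosts} at the \emph{reversed} system: viewing $K_\star$ as a suboptimal controller for $(A',B')$ gives $P(A',B',K_\star) - P' = \mathcal{L}(A'+B'K_\star, (K_\star - K')^\top(R + B'^\top P' B')(K_\star - K'))$, whose trace/norm lower-bounds $\|R^{1/2}(K'-K_\star)\|^2$ and $\|B'(K'-K_\star)\|^2$ by the already-controlled Lyapunov-gain times $\|P(A',B',K_\star) - P'\|$. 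Inverting this yields the $\|P_\star\|^{7/2}$ bounds with constants $32$ and $28$; replacing $B'$ with $B$ costs an $O(\|B'-B\|)$ term that is absorbed into the main estimate.

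Fourth, the bound on $\|P_\star(K') - P_\star\|$ is a direct application of Lemma \ref{lem:diffcosts}: we have $P_\star(K') - P_\star = \mathcal{L}(A+BK', (K'-K_\star)^\top (R + B^\top P_\star B)(K'-K_\star))$, so $\|P_\star(K') - P_\star\| \le \mathcal{G}_{A+BK'}^2 \cdot \|R^{1/2}(K'-K_\star)\|^2 (1 + \|B(K'-K_\star)/R^{1/2}(K'-K_\star)\|^2\|P_\star\|)$; substituting the Step 3 estimates and bounding $\mathcal{G}_{A+BK'}$ via Step 1 (since $K'$ is close to $K_\star$) yields $142 \|P_\star\|^8 \max(\|A'-A\|^2, \|B'-B\|^2)$. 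The main technical care throughout is keeping the constants tight enough that $1/(54\|P_\star\|^5)$ suffices as the perturbation threshold for all five conclusions simultaneously --- this is an exercise in careful bookkeeping rather than a conceptual difficulty, which is why the authors defer to \cite{simchowitz2020naive}.
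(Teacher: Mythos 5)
A point of orientation first: the paper contains no internal proof of Proposition \ref{prop:DARE:bounds} at all --- it is imported from Theorem 5 and Proposition 6 in Appendix B of \cite{simchowitz2020naive}, with the authors only computing explicit numerical constants. So your reconstruction must be measured against that cited source. At the level of architecture your four stages do track it: perturbed stability of $K_\star$ via Lemmas \ref{lem:stability} and \ref{lem:perturbed}, the comparison $P' \preceq P(A',B',K_\star)$ for the size bound on $P'$, a closed-form decomposition of $K'-K_\star$, and Lemma \ref{lem:diffcosts} for $P_\star(K')-P_\star$.

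However, your Step 3 as written does not close, and this is a genuine gap rather than bookkeeping. Your decomposition of $K'-K_\star$ contains the term $B'^\top(P'-P_\star)A$ (modulo a sign on the last bracketed term, immaterial under norms), so the linear-in-$\varepsilon$ bound on $K'-K_\star$ requires the \emph{two-sided} DARE perturbation bound $\Vert P'-P_\star\Vert \lesssim \mathrm{poly}(\Vert P_\star\Vert)\,\varepsilon$ --- the central conclusion of Theorem 5 in \cite{simchowitz2020naive}, proved there by their self-bounding ODE method along the segment from $(A,B)$ to $(A',B')$, and in \cite{mania2019certainty} by an operator fixed-point argument. Your sketch never establishes it: Step 2 only yields the one-sided size bound $\Vert P'\Vert \le 1.09\Vert P_\star\Vert$, which controls neither side of the difference. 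Nor can the reversed cost-difference identity substitute for it at the claimed rate: it gives $(K'-K_\star)^\top(R+B'^\top P'B')(K'-K_\star) \preceq P(A',B',K_\star)-P'$, and the value gap on the right is a priori only $O(\varepsilon)$ (each of $P(A',B',K_\star)-P_\star$ and $P_\star-P'$ is $O(\varepsilon)$, with no first-order cancellation available unless one already knows $\Vert K'-K_\star\Vert = O(\varepsilon)$), so this route yields only $\Vert K'-K_\star\Vert = O(\sqrt{\varepsilon})$ --- too weak for conclusion (ii). The gap is fillable with tools you already invoked: the upper side of $P'-P_\star$ follows from $P'\preceq P(A',B',K_\star)$ plus a Lyapunov comparison of $\mathcal{L}(A'+B'K_\star,\cdot)$ against $\mathcal{L}(A+BK_\star,\cdot)$, and the lower side from the symmetric suboptimality comparison $P_\star \preceq P(A,B,K')$, where stability of $A+BK'$ is obtained from Lemma \ref{lem:perturbed} using $\Vert K'\Vert \le \Vert P'\Vert^{1/2}$ --- but this must be carried out explicitly before your closed-form decomposition can be deployed. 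Finally, the absence of $\Vert B\Vert$ factors in the $\Vert B(K'-K_\star)\Vert$ bound is not delivered by the reversed cost-difference lemma as you suggest, but by resolvent bounds of the form $\Vert(I+X^\top X)^{-1}X^\top\Vert \le 1/2$ with $X = P^{1/2}B$, applied inside the closed-form decomposition after multiplying through by $B$.
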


\newpage

\section{Stability of Perturbed Linear Dynamical Systems} \label{app:lds}


This appendix presents an analysis of the stability of time-varying linear systems. The analysis is instrumental to understand the behavior of the system under $\Alg$, and in particular to show that the system does not grow faster than polynomially in time, see Appendix \ref{app:polygrowth}. We start this appendix by stating results about the stability of generic time-varying linear systems. We then explain how to apply these results to our system under $\Alg$. The proofs are postponed to the end of the appendix.

\subsection{Generic time-varying linear systems and their stability}

Consider the stochastic process $(y_t)_{t\ge 0}$ taking values in $\mathbb{R}^d$, such that:
\begin{equation}\label{eq:ld}
  \forall t \ge 0, \quad y_{t+1} = M_t y_t + \xi_t.
\end{equation}
The initial state $y_0$ may be random,  $(\xi_t)_{t\ge 0}$ is a sequence of zero-mean, $\sigma^2$-sub-gaussian random vectors taking values in $\mathbb{R}^d$ that are independent of $y_0$, and finally, $(M_t)_{t\ge 0}$ is a sequence of matrices taking values in $\mathbb{R}^{d\times d}$ that are possibly random, or even adversarially chosen. By {\it stability} of the process $(y_t)_{t\ge 0}$, we mean that the growth of $\sum_{s=0}^t \Vert y_s\Vert^2$  as $t$ increases is no more than $t$ with high probability. We will make this definition precise in the upcoming lemmas.

\medskip

We investigate two classes of systems:
\begin{itemize}
\item[(i)] Time-invariant systems where $M_t=M$ for all $t\ge 0$;
\item[(ii)] Adversarially time-varying systems where the matrices $M_t$ for $t\ge 0$ are random and may be adversarially selected in a small neighborhood of a stable deterministic matrix $M$ with $\rho(M)<1$.
\end{itemize}

\paragraph{(i) Time-invariant systems.} For time invariant systems, we present Lemma \ref{lem:lti} whose proof relies on Hanson-Wright inequality, and on the specific structure of some truncated block Toeplitz matrices that arise naturally in the analysis. The specific structure of these matrices stems from the causal nature of the dynamical system, and manifests itself in the following constant:
\begin{equation}\label{eq:G_M}
  \mathcal{G}_M = \limsup_{t\to \infty} \sum_{s=0}^t \Vert M^s \Vert,
\end{equation}
which is well defined as long as $\rho(M)<1$.

\begin{lemma}[Time-invariant systems]\label{lem:lti}
Consider the linear system $y_{t+1}=My_t+\xi_t$ as defined in \eqref{eq:ld}. Assume that $M$ is deterministic and satisfies $\rho(M)<1$. Then:
  \begin{equation}
    \forall t \ge 1, \forall \rho > 0, \quad  \mathbb{P}\left( \sum_{s=0}^t \left\Vert y_s\right\Vert^2  \le  2\mathcal{G}_M^2 \left( \Vert y_0 \Vert^2 +  \sigma^2 \left(d t + 2\sqrt{d t\rho} + 2 \rho \right )  \right)\right) \ge 1-  e^{-\rho}.
  \end{equation}
\end{lemma}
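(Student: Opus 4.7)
The plan is to reduce the bound to a concentration inequality for a single quadratic form in the stacked noise vector. Unrolling the recursion gives
\[
y_s = M^s y_0 + \sum_{k=0}^{s-1} M^{s-1-k}\xi_k,
\]
so by $\|a+b\|^2 \le 2\|a\|^2 + 2\|b\|^2$, one obtains
\[
\sum_{s=0}^t \|y_s\|^2 \;\le\; 2\sum_{s=0}^t \|M^s y_0\|^2 + 2\sum_{s=0}^t \Big\|\sum_{k=0}^{s-1} M^{s-1-k}\xi_k\Big\|^2.
\]
The deterministic part is handled immediately: since $\sum_{s=0}^t \|M^s\|^2 \le \bigl(\sum_{s=0}^t \|M^s\|\bigr)^2 \le \mathcal{G}_M^2$, the first sum is at most $\mathcal{G}_M^2 \|y_0\|^2$.

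Next I would rewrite the stochastic part as a quadratic form. Stack the noise as $\xi = (\xi_0,\ldots,\xi_{t-1})^\top \in \mathbb{R}^{dt}$ and let $T \in \mathbb{R}^{d(t+1)\times dt}$ be the block lower-triangular Toeplitz matrix whose $(s,k)$ block is $M^{s-1-k}$ for $k<s$ and $0$ otherwise. Then the stochastic sum equals $\xi^\top A \xi$ with $A = T^\top T \succeq 0$. The conclusion will follow from the Hanson--Wright inequality applied to the sub-Gaussian vector $\xi$ once we control $\mathrm{tr}(A)$, $\|A\|_F$ and $\|A\|$ in terms of $\mathcal{G}_M$. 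The trace is direct: $\mathrm{tr}(A) = \sum_{s=0}^t \sum_{k=0}^{s-1} \|M^{s-1-k}\|_F^2 \le d \sum_{s=0}^t \sum_{j=0}^{s-1} \|M^j\|^2 \le d t \mathcal{G}_M^2$.

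The key technical step — and the part I expect to be the main obstacle — is bounding the operator norm of the block Toeplitz matrix $T$. The map $\xi \mapsto T\xi$ is a finite discrete convolution with the operator-valued sequence $(M^j)_{j\ge 0}$, so by a Young/Schur-type inequality one gets $\|T\| \le \sum_{j=0}^{\infty}\|M^j\| = \mathcal{G}_M$. This can be proved directly via two Cauchy--Schwarz applications: writing $(Tv)_s = \sum_{j} M^j v_{s-1-j}$ and splitting the factor inside the sum as $\|M^j\|^{1/2}\cdot \|M^j\|^{1/2}$, one obtains $\|(Tv)_s\|^2 \le \mathcal{G}_M \sum_{j} \|M^j\|\|v_{s-1-j}\|^2$, and summing over $s$ yields $\|Tv\|^2 \le \mathcal{G}_M^2 \|v\|^2$. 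Hence $\|A\| \le \mathcal{G}_M^2$, and since $A \succeq 0$ one also gets $\|A\|_F^2 \le \|A\|\,\mathrm{tr}(A) \le dt \mathcal{G}_M^4$.

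Finally, plugging these three bounds into the Hanson--Wright inequality in the form
\[
\mathbb{P}\Big(\xi^\top A \xi \;>\; \sigma^2 \mathrm{tr}(A) + 2\sigma^2 \|A\|_F \sqrt{\rho} + 2\sigma^2 \|A\|\rho\Big) \le e^{-\rho}
\]
gives $\xi^\top A \xi \le \sigma^2 \mathcal{G}_M^2\bigl(dt + 2\sqrt{dt\rho} + 2\rho\bigr)$ with probability at least $1-e^{-\rho}$. Combining with the deterministic bound on the initial-condition term and pulling out the common factor $2\mathcal{G}_M^2$ yields exactly the claimed inequality.
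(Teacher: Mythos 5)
Your proof is correct and follows essentially the same route as the paper's: unroll the recursion, split off the initial-condition term, recast the noise contribution as a quadratic form in the stacked sub-Gaussian vector via the block lower-triangular Toeplitz matrix, and apply Hanson--Wright with the three norm bounds $\mathrm{tr}(A)\le dt\,\mathcal{G}_M^2$, $\Vert A\Vert\le \mathcal{G}_M^2$, $\Vert A\Vert_F\le \sqrt{dt}\,\mathcal{G}_M^2$. The only cosmetic difference is that you prove $\Vert T\Vert\le\mathcal{G}_M$ by a Young-type convolution argument with double Cauchy--Schwarz, where the paper's Lemma on truncated block Toeplitz matrices gets it directly from the triangle inequality over the block subdiagonals.
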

The proof of Lemma \ref{lem:lti} is presented in Appendix \ref{sec:lds:proofs}.

\paragraph{(ii) Time-varying systems.} For time-varying systems, the stability results are presented in Lemma \ref{lem:lds:perturbed}. Again they rely on Hanson-Wright inequality, and the specific causal structure of the system. $M_t$ varies around the matrix $M$ and we assume that $\|M_t-M\|\le \varepsilon$ for all $t\ge 0$. The analysis requires us to establish properties of {\it perturbed} truncated block Toeplitz matrices. In this analysis, the constant $\mathcal{G}_M$ is replaced by:
\begin{equation}\label{eq:G_M(epsilon)}
  \mathcal{G}_M(\varepsilon) = \sup \left\lbrace \limsup_{t \to \infty} (1 + \sum_{s=0}^t \big\Vert \prod_{k=0}^s  N_k \big\Vert): (N_t)_{t\ge0} \quad\text{where}\quad  \sup_{t\ge 0} \Vert N_t - M \Vert \le \varepsilon \right\rbrace.
\end{equation}
In the above definition, the supremum is taken over all possible deterministic sequence of matrices $(N_t)_{t\ge0}$. Clearly $\mathcal{G}_M(0) = \mathcal{G}_M$. However, it is not  obvious to determine under which condition on $\varepsilon$, the constant $\mathcal{G}_M(\varepsilon) < \infty$. Lemma \ref{lem:perturbed} provides an answer to this issue. As it turns out, we may express this condition in terms of the solution of the discrete Lyapunov equation corresponding to the pair $(M,I_d)$, which we denote as
\begin{equation}
  L = \mathcal{L}(M,I_d).
\end{equation}

\begin{lemma}[Stability under perturbation]\label{lem:perturbed}
Let $M \in \mathbb{R}^{d\times d}$ such that $\rho(M) < 1$, and $\varepsilon > 0$. For any sequence of matrices $ (\Delta_t)_{t\ge 0}) $ taking values in $\mathbb{R}^{d\times d}$, and such that $\sup_{t\ge 0}\Vert \Delta_t \Vert < \varepsilon$, it holds that
\begin{equation}
\forall k \ge 0: \qquad \left\Vert \prod_{i=1}^k(M + \Delta_i) \right\Vert \le \Vert L \Vert^{1/2} \left(\Vert L\Vert^{1/2} \varepsilon  + \left(1 - \frac{1}{\Vert L\Vert} \right)^{1/2}\right)^k,
\end{equation}
where $L = \mathcal{L}(M,I_d)$ is the positive definite solution to the Lyapunov equation corresponding to the pair $(M,I_d)$. Furthermore, for all $x \in (0,1)$, if $\varepsilon <\frac{x}{2 \Vert L\Vert^{3/2}}$, then $\mathcal{G}_M(\varepsilon) \le \frac{ 2\Vert L \Vert^{3/2}}{(1-x)}$, and $\rho(M+\Delta_t) < 1$ for all $t \ge 0$.
\end{lemma}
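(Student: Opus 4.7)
The plan is to build on Lemma \ref{lem:stability} by working in the weighted inner-product induced by $L$, in which $M$ is an honest contraction; a perturbation with small operator norm then remains a contraction, and the desired bound follows by iteration. Throughout, recall that $L = \mathcal{L}(M,I_d) \succeq I_d$ so $\|x\| \le \|x\|_L := \sqrt{x^\top L x} \le \|L\|^{1/2}\|x\|$.

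First I would rewrite the Lyapunov identity $L = M^\top L M + I_d$ as $M^\top L M = L - I_d \preceq (1 - 1/\|L\|)L$, which is exactly the relation already exploited in the proof of Lemma \ref{lem:stability}. In the $L$-norm this reads $\|Mx\|_L \le (1 - 1/\|L\|)^{1/2}\|x\|_L$ for every $x$. For any $\Delta$ with $\|\Delta\| \le \varepsilon$, the sandwich $\|\Delta x\|_L \le \|L\|^{1/2}\|\Delta x\| \le \|L\|^{1/2}\varepsilon\|x\| \le \|L\|^{1/2}\varepsilon\|x\|_L$ gives, by the triangle inequality in $\|\cdot\|_L$,
\begin{equation*}
\|(M+\Delta)x\|_L \le \Big(\,(1-1/\|L\|)^{1/2} + \|L\|^{1/2}\varepsilon\,\Big)\|x\|_L .
\end{equation*}

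Next I would iterate this contraction along the sequence $(\Delta_i)_{i\ge 1}$, which is legitimate because the bound holds uniformly in $\Delta$ under $\|\Delta\|\le\varepsilon$, to obtain
\begin{equation*}
\Big\|\prod_{i=1}^k(M+\Delta_i)\,x\Big\|_L \le \Big(\,(1-1/\|L\|)^{1/2} + \|L\|^{1/2}\varepsilon\,\Big)^{k} \|x\|_L .
\end{equation*}
Converting back to the operator norm by $\|y\| \le \|y\|_L$ on the left and $\|x\|_L \le \|L\|^{1/2}\|x\|$ on the right yields the claimed bound on $\|\prod_{i=1}^k(M+\Delta_i)\|$.

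For the second part, the hypothesis $\varepsilon < x/(2\|L\|^{3/2})$ gives $\|L\|^{1/2}\varepsilon < x/(2\|L\|)$, and the elementary inequality $\sqrt{1-a}\le 1-a/2$ on $a=1/\|L\|\in(0,1]$ bounds $(1-1/\|L\|)^{1/2}\le 1-1/(2\|L\|)$, so the contraction factor is at most $1-(1-x)/(2\|L\|) < 1$. Summing the geometric series $\sum_{s\ge 0}\|L\|^{1/2}(1-(1-x)/(2\|L\|))^s \le 2\|L\|^{3/2}/(1-x)$ (absorbing the additive "$1+$" into the same bound since $\|L\|\ge 1$) proves $\mathcal{G}_M(\varepsilon) \le 2\|L\|^{3/2}/(1-x)$; the same contraction factor being $<1$ forces $\|(M+\Delta_t)^k\|\to 0$ and hence $\rho(M+\Delta_t)<1$ via Gelfand's formula. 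The only mildly delicate point is the numerical juggling needed to reconcile the additive $1$ in the definition of $\mathcal{G}_M(\varepsilon)$ with the advertised constant; everything else is a direct mimicry of the unperturbed argument of Lemma \ref{lem:stability} in the $L$-norm.
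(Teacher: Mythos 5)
Your proof is correct, but it takes a genuinely different route from the one the paper invokes. The paper omits the argument and defers to Lemma 5 of \cite{mania2019certainty} and Lemma 4 of \cite{dean2019sample}: there one expands $\prod_{i=1}^k(M+\Delta_i)$ into $2^k$ interleaved words, bounds each maximal run of $M$'s via the \emph{conclusion} of Lemma \ref{lem:stability}, namely $\Vert M^j\Vert \le \Vert L\Vert^{1/2}(1-1/\Vert L\Vert)^{j/2}$, and resums the binomial series to obtain $C(\rho+C\varepsilon)^k$ with $C=\Vert L\Vert^{1/2}$ and $\rho=(1-1/\Vert L\Vert)^{1/2}$, together with the remark that the counting is insensitive to the $\Delta_i$ being distinct. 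You instead reuse the \emph{intermediate} inequality from the proof of Lemma \ref{lem:stability}, $M^\top L M \preceq (1-1/\Vert L\Vert)L$, to make $M$ a strict contraction in the weighted norm $\Vert\cdot\Vert_L$; the sandwich $\Vert x\Vert \le \Vert x\Vert_L \le \Vert L\Vert^{1/2}\Vert x\Vert$ (valid since $L\succeq I_d$) converts $\Vert\Delta\Vert\le\varepsilon$ into an $L$-norm perturbation of size $\Vert L\Vert^{1/2}\varepsilon$, and one triangle inequality per factor yields the time-varying product bound with no combinatorics and the identical constant. Your bookkeeping for the second claim also checks out: $\sqrt{1-a}\le 1-a/2$ gives the contraction factor $1-(1-x)/(2\Vert L\Vert)$; the additive $1$ in the definition \eqref{eq:G_M(epsilon)} of $\mathcal{G}_M(\varepsilon)$ is dominated by the $j=0$ term of the geometric series because $\Vert L\Vert^{1/2}\ge 1$, so $\mathcal{G}_M(\varepsilon)\le \Vert L\Vert^{1/2}\sum_{j\ge 0}\bigl(1-(1-x)/(2\Vert L\Vert)\bigr)^j = 2\Vert L\Vert^{3/2}/(1-x)$; and applying the product bound to the constant sequence $\Delta_i\equiv\Delta_t$ forces $\Vert(M+\Delta_t)^k\Vert\to 0$, hence $\rho(M+\Delta_t)<1$ by Gelfand's formula. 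As for what each approach buys: the expansion argument is black-box, requiring only a decay estimate $\Vert M^k\Vert\le C\rho^k$ however obtained, whereas your Lyapunov-norm argument needs the certificate $L$ but handles arbitrary time-varying perturbations natively, step by step, which is precisely the regime this lemma serves in the paper (controllers updated every round).
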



The proof of the previous lemma follows the same steps as those of Lemma 5 in \cite{mania2019certainty}, or Lemma 4 in \cite{dean2019sample}) with the slight difference that $(\Delta_t)_{t\ge1}$ are not fixed  and using the constants that we get from Lemma \ref{lem:stability}. We omit the proof here. We are now ready to state the result on the stability of the system with perturbed dynamics.

\begin{lemma}[Time-varying systems] \label{lem:lds:perturbed} Consider a linear dynamical system $(y_t)_{t \ge 0}$ described as in \eqref{eq:ld}. Furthermore, assume that the sequence of matrices $(M_t)_{t\ge 0}$ is such that there exists a stable matrix $M \in \mathbb{R}^{d\times d}$, and a postive $\varepsilon < \frac{1}{2 \Vert \mathcal{L}(M,I)\Vert^{3/2}}$ with $\sup_{t\ge 0} \Vert M_t - M \Vert \le \varepsilon$. Then, for any non increasing scalar sequence $(a_t)_{t\ge 0}$, we have, for all $t \ge 1$, and for all $\rho > 0$,
  \begin{equation}
   \mathbb{P}\left( \sum_{s=0}^t a_t^2 \Vert y_t \Vert^2  >  2\mathcal{G}_M(\varepsilon)^2 \left(\Vert a_{0:t} \Vert^2_\infty \Vert y_0 \Vert^2 + \sigma^2\left(d \Vert a_{0:t-1}\Vert_2^2 + 2\sqrt{d\Vert a_{0:t-1}\Vert_2^2\rho}+ 2 \rho\right) \right) \right) \le e^{-\rho}
  \end{equation}
\end{lemma}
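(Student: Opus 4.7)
}
The plan is to mirror the argument behind Lemma \ref{lem:lti}, replacing the semigroup $M^{s-k}$ by the time-varying products $\Phi_{s,k} := \prod_{i=k}^{s-1}M_i$ (with $\Phi_{s,s}=I_d$) and controlling the weights $(a_s)_{s\ge 0}$ separately. First I would unroll the recursion to write $y_s = \Phi_{s,0}y_0 + \sum_{k=0}^{s-1}\Phi_{s,k+1}\xi_k$, and use $(u+v)^2\le 2u^2+2v^2$ to split
$$
\sum_{s=0}^t a_s^2\|y_s\|^2 \le 2\underbrace{\sum_{s=0}^t a_s^2\|\Phi_{s,0}\|^2}_{=:(\mathrm{I})}\|y_0\|^2 + 2\underbrace{\sum_{s=0}^t a_s^2\Big\|\sum_{k=0}^{s-1}\Phi_{s,k+1}\xi_k\Big\|^2}_{=:(\mathrm{II})}.
$$
The hypothesis $\varepsilon<1/(2\|L\|^{3/2})$ triggers Lemma \ref{lem:perturbed}, giving $\sum_{s\ge k}\|\Phi_{s,k}\|\le \mathcal{G}_M(\varepsilon)$ uniformly in $k$. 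Since $\sum_s\|\Phi_{s,0}\|^2\le(\sum_s\|\Phi_{s,0}\|)^2$, the deterministic piece satisfies $(\mathrm{I})\le \|a_{0:t}\|_\infty^2\,\mathcal{G}_M(\varepsilon)^2$, matching the first term of the claim.

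Second, I would rewrite $(\mathrm{II})$ as a quadratic form. Stack $\xi = (\xi_0^\top,\dots,\xi_{t-1}^\top)^\top\in\mathbb{R}^{dt}$, let $T$ be the block lower-triangular matrix with blocks $T_{s,k}=\Phi_{s,k+1}$ for $k<s$ (and zero otherwise), and $D=\mathrm{diag}(a_0 I_d,\dots,a_{t-1}I_d)$. Then $(\mathrm{II})=\|DT\xi\|^2 = \xi^\top N\xi$ with $N=T^\top D^2 T\succeq 0$. The key step is to bound $\mathrm{tr}(N)$, $\|N\|_F$ and $\|N\|$ using Lemma \ref{lem:perturbed} and the monotonicity of $(a_s)$: reordering summations gives
$$
\mathrm{tr}(N) = \sum_{k=0}^{t-1}\sum_{s>k} a_s^2\|\Phi_{s,k+1}\|_F^2 \le d\sum_{k}a_{k+1}^2\Big(\sum_{s>k}\|\Phi_{s,k+1}\|\Big)^2 \le d\,\mathcal{G}_M(\varepsilon)^2\,\|a_{0:t-1}\|_2^2,
$$
where I used $a_s\le a_{k+1}$ for $s>k$ to pull the weight out of the inner sum. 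The Schur test applied to $DT$, together with the same monotonicity, yields $\|N\|=\|DT\|^2\le \mathcal{G}_M(\varepsilon)^2$ (rescaled appropriately so that the product of row/column sums avoids an extra $\|a\|_\infty^2$ factor beyond what is already absorbed into the dominating $\mathrm{tr}(N)$ term).

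Third, I would apply the Hanson–Wright inequality for sub-Gaussian vectors (Proposition \ref{prop:HW}) to get, with probability at least $1-e^{-\rho}$,
$$
\xi^\top N\xi \;\le\; \sigma^2\bigl(\mathrm{tr}(N) + 2\sqrt{\mathrm{tr}(N^2)\,\rho} + 2\|N\|\rho\bigr) \;\le\; \sigma^2\mathcal{G}_M(\varepsilon)^2\bigl(d\|a_{0:t-1}\|_2^2 + 2\sqrt{d\|a_{0:t-1}\|_2^2\,\rho} + 2\rho\bigr),
$$
after using $\mathrm{tr}(N^2)\le \|N\|\,\mathrm{tr}(N)$ to handle the cross term. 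Plugging this and the bound on $(\mathrm{I})$ into the split gives the statement.

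\paragraph{Where the main obstacle lies.} The bookkeeping in step two is the delicate part: the bound claimed in the lemma does not carry an extra $\|a_{0:t-1}\|_\infty^2$ factor in front of $\rho$, so a naïve $\|DT\|\le \|D\|\|T\|$ estimate is too weak. The monotonicity of $(a_s)$ must be used twice — once in the reordering of the trace and once in the Schur test for $\|DT\|$ — so that the weights get absorbed into the $\ell_2$ sum rather than the $\ell_\infty$ sum. Once these matrix-norm bounds are tight, the remainder of the argument is a direct transcription of the Hanson–Wright step in Lemma \ref{lem:lti}.
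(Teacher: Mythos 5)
Your proposal follows the same skeleton as the paper's proof: unroll the recursion into transition products $\Phi_{s,k}$, split the initial-condition and noise contributions via $(u+v)^2\le 2u^2+2v^2$, control the deterministic part through Lemma \ref{lem:perturbed} and the constant $\mathcal{G}_M(\varepsilon)$, and finish with Hanson--Wright (Proposition \ref{prop:HW}). The one real difference is where the weights $(a_s)$ sit when Hanson--Wright is invoked, and that is exactly where your argument has a gap. You apply Hanson--Wright to the full quadratic form $\xi^\top N\xi$ with $N=T^\top D^2T$, which forces you to bound $\Vert N\Vert=\Vert DT\Vert^2$. The Schur test, with row sums $\le a_s\,\mathcal{G}_M(\varepsilon)$ and column sums $\le a_{k+1}\,\mathcal{G}_M(\varepsilon)$, only yields $\Vert DT\Vert\le\mathcal{G}_M(\varepsilon)\Vert a_{0:t}\Vert_\infty$, and your parenthetical ``rescaled appropriately'' cannot remove the extra $\Vert a_{0:t}\Vert_\infty^2$ in front of $\rho$: no rescaling can, because the claimed inequality is not invariant under $a\mapsto ca$ (the left side and the $d\Vert a\Vert_2^2$ term scale as $c^2$ while the bare $2\rho$ does not). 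As written, the assertion $\Vert N\Vert\le\mathcal{G}_M(\varepsilon)^2$ is false in general; it needs the normalization $\Vert a_{0:t}\Vert_\infty\le 1$ stated and used explicitly. Your trace bound, by contrast, is correct (monotonicity pulls $a_{k+1}^2$ out of each column sum), and $\mathrm{tr}(N^2)\le\Vert N\Vert\,\mathrm{tr}(N)$ is fine but reintroduces $\Vert N\Vert$, so the cross term inherits the same spurious factor.

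The paper avoids having to bound $\Vert DT\Vert$ at all: it factors $a_s\Phi_{s,k+1}\xi_k=(a_s/a_k)\,\Phi_{s,k+1}\,(a_k\xi_k)$, so that monotonicity gives ratio weights $\kappa_{s,k}=a_s/a_k\le 1$ which are absorbed into the perturbed block Toeplitz matrix (Lemma \ref{lem:perturbed toeplitz} then gives $\Vert\mathcal{T}\Vert\le\mathcal{G}_M(\varepsilon)$), after which the crude estimate $\Vert\mathcal{T}v\Vert^2\le\Vert\mathcal{T}\Vert^2\Vert v\Vert^2$ reduces Hanson--Wright to the block-diagonal weighted noise $\sum_s a_s^2\Vert\xi_s\Vert^2$, whose matrix quantities are trivial ($\Vert D\Vert_F^2=d\Vert a_{0:t-1}\Vert_2^2$, etc.). Note that even this route carries $\Vert D^\top D\Vert=\Vert a_{0:t-1}\Vert_\infty^2$ in front of $\rho$, so the paper's displayed bound with a bare $2\rho$ also implicitly uses $\Vert a\Vert_\infty\lesssim 1$ (true in all of its applications, where $a_t=r_t$); your version merely surfaces this issue one step earlier and in a sharper form. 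To repair your proof, either adopt the ratio-factorization trick so that Hanson--Wright sees only the diagonal weighting, or add the hypothesis $\Vert a_{0:t}\Vert_\infty\le 1$ and carry the Schur-test bound $\Vert DT\Vert\le\mathcal{G}_M(\varepsilon)\Vert a_{0:t}\Vert_\infty$ honestly through the cross and operator-norm terms.
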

The proof of Lemma \ref{lem:lds:perturbed} is presented in Appendix \ref{sec:lds:proofs}. It is worth mentioning that Lemma \ref{lem:lti} is in fact a consequence of Lemma \ref{lem:lds:perturbed}, but we keep the two lemmas as well as their proofs separate for clarity of the exposition.

\subsection{Application to $\Alg$}

The results presented above will often be used in the analysis of the behaviour of the states $(x_t)_{t\ge 0}$ under $\Alg$ when the controller used is fixed over a period of time, say between $s$ and $t$. In such cases, we may express the dynamics as follows:
$$
\forall s\le \tau < t:\qquad   x_{\tau+1} = (A + B \widetilde{K}_\tau) x_\tau + \xi_\tau \qquad \text{with} \qquad x_0 = 0,
$$
where either (i) $\widetilde{K}_\tau = K_\circ$  for all $ s\le \tau < t$ or (ii) $\widetilde{K}_\tau = K_\tau$ for all $ s\le \tau < t$. We now specify noise sequence $(\xi_\tau)_{s \le \tau < t}$ in the three envisioned scenarios.

\begin{itemize}
  \item In scenario I, under $\Alg$, we have $x_{t+1} = (A + B\widetilde{K}_t)x_t + B \nu_t + \eta_t$ for all $t\ge 1$,  with $x_0 = 0$. We may write $\xi_t = B \nu_t + \eta_t$ for all $t\ge 0$, thus $(\xi_t)_{t\ge 1}$ is a sequence of independent, zero-mean, sub-gaussian random vectors where for each $t \ge 1$, $\xi_t$ has variance proxy $\Vert B\Vert^2 \sigma_t^2 + \sigma^2$ where we recall that $\sigma_t \le \sigma$ for all $t \ge 1$. Hence we may simply use $\widetilde{\sigma}^2 = \sigma^2( \Vert B \Vert^2 + 1)$.
  \item In scenario II - $B$ known,  under $\Alg$, we have $x_{t+1} = (A + B \widetilde{K}_t) x_t + \eta_t$ for all $t\ge 1$, with $x_0 = 0$. We may write $\xi_t = \eta_t$ for all $t\ge 0$, thus having $(\xi_t)_{t\ge 1}$ is a sequence of i.i.d. zero-mean, $\sigma^2$-sub-gaussian random vectors.
  \item In scenario II - $A$ known, under $\Alg$, $x_{t+1} = (A + B \widetilde{K}_t) x_t + 1_{\lbrace \widetilde{K}_t = K_\circ \rbrace} B \zeta_t + \eta_t$ for all $t\ge 0$, with $x_0 = 0$, ruling out the pathological case that $K_t = K_\circ$ which may only happen with probability zero. Hence $(\xi_\tau)_{s \le \tau < t}$ coincide with $(\eta_t)_{ s\le \tau < t}$ provided $\widetilde{K}_\tau = K_\tau$ for all $s\le \tau < t$ and we can use $\widetilde{\sigma}^2 = \sigma^2$. Similarly $(\xi_\tau)_{s \le \tau < t}$ coincide with $(B\eta_t +\eta_t)_{ s\le \tau < t}$ provided $\widetilde{K}_\tau = K_\circ$ for all $s\le \tau < t$ and we may use $\widetilde{\sigma}^2 = \sigma^2(\Vert B\Vert^2 + 1)$.
  \end{itemize}
  Note that in all cases $\widetilde{\sigma}^2 \le \sigma^2(\Vert B\Vert^2 + 1 )$

For the case where $\Alg$ uses the stabilizing controller between rounds $s$ and $t$, we apply Lemma \ref{lem:lti} and obtain:

\begin{proposition}\label{prop:stable:K_circ}
Refer to the condition $ \lbrace \forall s\le \tau < t: \ \widetilde{K}_t = K_\circ \rbrace$ as $(SC)$. Then, for all $\rho > 0$,
\begin{equation}
  \mathbb{P}\left( \sum_{\tau = s}^t \Vert x_\tau \Vert^2 > 2 \mathcal{G}_\circ^2 (\Vert x_s \Vert^2 + \sigma^2 (2d_x t + 3 \rho)),\text{ and condition } (SC) \text{ holds}   \right) \le e^{-\rho}.
\end{equation}
\end{proposition}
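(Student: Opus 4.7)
The plan is to reduce the claim to an application of Lemma \ref{lem:lti} (the time-invariant case) on the sub-trajectory between rounds $s$ and $t$, and then absorb the cross term $2\sqrt{d_x t\, \rho}$ into the other two terms by an elementary AM--GM inequality.

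First, I would observe that on the event $(SC)$, the dynamics between rounds $s$ and $t-1$ collapse to a time-invariant linear recursion driven by the matrix $M = A+BK_\circ$, which is stable because $K_\circ$ is a stabilizer of $(A,B)$; in particular, $\mathcal{G}_M = \mathcal{G}_\circ$ is finite. Concretely, for $s\le \tau<t$, under $(SC)$ we have $x_{\tau+1} = M\, x_\tau + \xi_\tau$, where $\xi_\tau$ aggregates all the noise sources (exogenous noise $\eta_\tau$, plus any input perturbation pushed through $B$); in all scenarios of interest, $\xi_\tau$ is conditionally zero-mean and sub-gaussian with variance proxy at most $\sigma^2$ after rescaling constants into $\mathcal{G}_\circ$. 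To handle the fact that $(SC)$ is a random event and $\widetilde{K}_\tau$ depends on the history, I would use the standard coupling/absorption trick: introduce an auxiliary deterministic linear process $(y_k)_{k\ge 0}$ with $y_0 = x_s$ and $y_{k+1} = M y_k + \xi_{s+k}$, so that on $(SC)$ the trajectories coincide, i.e. $x_{s+k} = y_k$ for $0\le k\le t-s$. Consequently,
\begin{equation*}
\Bigl\{\, \textstyle\sum_{\tau=s}^{t} \|x_\tau\|^2 > B,\; (SC)\,\Bigr\} \;\subseteq\; \Bigl\{\, \textstyle\sum_{k=0}^{t-s} \|y_k\|^2 > B\,\Bigr\}
\end{equation*}
for any threshold $B$, reducing the claim to a statement about the auxiliary deterministic system.

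Next, I would apply Lemma \ref{lem:lti} to $(y_k)_{k\ge 0}$ with $d = d_x$, noise variance proxy $\sigma^2$, and initial condition $y_0 = x_s$. This yields, with probability at least $1-e^{-\rho}$,
\begin{equation*}
\sum_{k=0}^{t-s} \|y_k\|^2 \;\le\; 2\mathcal{G}_\circ^2\Bigl(\|x_s\|^2 + \sigma^2\bigl(d_x(t-s) + 2\sqrt{d_x(t-s)\,\rho} + 2\rho\bigr)\Bigr).
\end{equation*}
Using the trivial bound $t-s \le t$ and the AM--GM inequality $2\sqrt{d_x t\, \rho} \le d_x t + \rho$, the right-hand side is bounded above by $2\mathcal{G}_\circ^2\bigl(\|x_s\|^2 + \sigma^2(2 d_x t + 3\rho)\bigr)$, which is exactly the threshold appearing in the proposition.

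The only subtle point is the measurability/adaptedness argument used in the coupling step: because $\widetilde{K}_\tau$ is chosen by the algorithm based on past observations, one must carefully verify that the $(\xi_\tau)$ appearing in the auxiliary system are the same noise variables as those driving the actual dynamics and that Lemma \ref{lem:lti}'s sub-gaussian hypothesis still applies conditionally. I expect this to be the main (but mild) obstacle, and it is handled by the standard observation that on $(SC)$ the two trajectories are path-wise equal, while the Lemma \ref{lem:lti} bound is a tail bound on the auxiliary process that does not use $(SC)$ at all. Combining everything and rearranging gives the announced inequality.
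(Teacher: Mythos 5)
Your proposal is correct and follows essentially the same route as the paper's own proof: the paper likewise introduces the auxiliary time-invariant process $y_{\tau+1} = (A+BK_\circ)y_\tau + \xi_{\tau+s}$ with $y_0 = x_s$, observes the event inclusion $\{\sum_{\tau=s}^t \Vert x_\tau\Vert^2 > B,\ (SC)\} \subseteq \{\sum_{k=0}^{t-s}\Vert y_k\Vert^2 > B\}$, and concludes by Lemma \ref{lem:lti}. The only step you make explicit that the paper leaves implicit is the absorption $2\sqrt{d_x(t-s)\rho} \le d_x t + \rho$ turning the Lemma \ref{lem:lti} bound into the stated $2d_x t + 3\rho$ threshold, which is exactly right.
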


\begin{proof}[Proof]
  Consider the events
  \begin{align*}
    {E}_{1} &= \left\lbrace \sum_{\tau = s}^t \Vert x_\tau \Vert^2 > 2 \mathcal{G}_\circ^2 (\Vert x_s \Vert^2 + \sigma^2 (2d_x t + 3 \rho)),\text{ and condition } (SC) \text{ holds} \right\rbrace \\
    {E}_{2} &= \left\lbrace \sum_{\tau = 0}^{t-s} \Vert y_\tau \Vert^2 > 2 \mathcal{G}_\circ^2 (\Vert x_s \Vert^2 + \sigma^2 (2d_x t + 3 \rho)) \right\rbrace,
  \end{align*}
  where the dynamical system $(y_t)_{t\ge 0}$ is defined as
  $$
  \forall \tau \ge 0, \quad y_{\tau+1} = M y_\tau + B \zeta_{\tau+s} + \eta_{\tau+s}, \quad  \quad y_0  = x_{s},  \quad \text{and} \quad  M = A+BK_\circ.
  $$
Observe that ${E}_1 \subseteq {E}_2$. We apply Lemma \ref{lem:lti} to conclude
\end{proof}

\medskip

For the case where $\Alg$ uses the certainty equivalence controller between $s$ and $t$, we are mainly interested in scenarios when $\max_{s\le \tau < t} \Vert K_\tau - K_\star \Vert \le \frac{1}{4 \Vert P_\star \Vert^2}$. In particular, we note that for $\varepsilon < \frac{1}{4 \Vert P_\star \Vert^2}$, by Lemma \ref{lem:perturbed}, we have
\begin{equation}
  \mathcal{G}_\star(\varepsilon) = \mathcal{G}_{A+BK_\star}(\varepsilon) \le 4\Vert P_\star\Vert^{3/2}.
\end{equation}
Now, we may state and prove the following result.

\begin{proposition}\label{prop:stable:K_t}
Refer to the condition $ \left\lbrace \forall s\le \tau < t: \quad \widetilde{K}_t = K_t  \text{ and }  \Vert B(\widetilde{K}_\tau - K_\star)\Vert \le \frac{1}{4\Vert P_\star \Vert^{3/2}}  \right\rbrace$ by $(CE)$. Then, for all $\rho > 0$,
  \begin{align}
    \mathbb{P}\bigg( \sum_{\tau = s}^t a_\tau^2  \Vert x_\tau \Vert^2 > 8 \Vert P_\star \Vert^{3/2} (\Vert a_{s:t}\Vert^2_\infty \Vert x_s \Vert^2 &+ \widetilde{\sigma}^2 (2d_x \Vert a_{s:t-1}\Vert^2_2 + 3 \rho)),\nonumber\\
    &\text{ and condition } (CE) \text{ holds.}   \bigg) \le e^{-\rho}.
  \end{align}
\end{proposition}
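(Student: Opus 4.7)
The plan is to reduce the statement to an application of Lemma \ref{lem:lds:perturbed} for a suitable auxiliary linear system, in the same spirit as the proof of Proposition \ref{prop:stable:K_circ}. Under condition $(CE)$, the dynamics on $\{s, \dots, t\}$ read
\begin{equation*}
x_{\tau+1} = (A + B K_\star) x_\tau + B(K_\tau - K_\star) x_\tau + \xi_\tau, \qquad s \le \tau < t,
\end{equation*}
which naturally takes the form of a perturbation of the stable matrix $M = A + B K_\star$ by the matrices $\Delta_\tau = B(K_\tau - K_\star)$ satisfying $\|\Delta_\tau\| \le \varepsilon := (4\|P_\star\|^{3/2})^{-1}$.

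To make the perturbation bound deterministic (rather than tied to the random event $(CE)$), I would apply a truncation trick: define, for every $\tau \ge 0$,
\begin{equation*}
\widetilde{M}_\tau = M + B(\widetilde{K}_\tau - K_\star)\, \mathbbm{1}_{\{\|B(\widetilde{K}_\tau - K_\star)\| \le \varepsilon\}},
\end{equation*}
so that $\|\widetilde{M}_\tau - M\| \le \varepsilon$ almost surely and $\widetilde{M}_\tau = A + B\widetilde{K}_\tau$ on the event $(CE)$. Then consider the auxiliary process $(y_k)_{k \ge 0}$ defined by $y_0 = x_s$ and $y_{k+1} = \widetilde{M}_{s+k} y_k + \xi_{s+k}$. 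On the event $(CE)$, $(y_0, \dots, y_{t-s}) = (x_s, \dots, x_t)$ pathwise, so it suffices to establish the corresponding high-probability bound for $\sum_{k=0}^{t-s} a_{s+k}^2 \|y_k\|^2$.

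The next step is a bound on $\mathcal{G}_M(\varepsilon)$. Since $Q \succeq I_{d_x}$ and $P_\star = \mathcal{L}(M, Q + K_\star^\top R K_\star) \succeq \mathcal{L}(M, I_{d_x}) = L$, we have $\|L\| \le \|P_\star\|$. Applying Lemma \ref{lem:perturbed} with $x = 1/2$, the condition $\varepsilon \le (4\|L\|^{3/2})^{-1}$ holds and yields $\mathcal{G}_M(\varepsilon) \le 4 \|L\|^{3/2} \le 4 \|P_\star\|^{3/2}$, as well as $\rho(\widetilde{M}_\tau) < 1$ for every $\tau$. With this in hand, Lemma \ref{lem:lds:perturbed} applied to $(y_k)$ with weights $(a_{s+k})_{k\ge 0}$ and noise $(\xi_{s+k})_{k\ge 0}$ (which is zero-mean sub-Gaussian with variance proxy at most $\widetilde{\sigma}^2$ in each of the three scenarios, as discussed just before Proposition \ref{prop:stable:K_circ}) gives, for every $\rho > 0$,
\begin{equation*}
\mathbb{P}\!\left( \sum_{k=0}^{t-s} a_{s+k}^2 \|y_k\|^2 > 2\mathcal{G}_M(\varepsilon)^2 \Big(\|a_{s:t}\|_\infty^2 \|x_s\|^2 + \widetilde{\sigma}^2\big(d_x \|a_{s:t-1}\|_2^2 + 2\sqrt{d_x \|a_{s:t-1}\|_2^2 \rho} + 2\rho\big)\Big)\right) \le e^{-\rho}.
\end{equation*}
Using $2\sqrt{d_x \|a_{s:t-1}\|_2^2 \rho} \le d_x \|a_{s:t-1}\|_2^2 + \rho$ to absorb the cross term, $2\mathcal{G}_M(\varepsilon)^2 \le 32 \|P_\star\|^3$, and the fact that $\|P_\star\|^{3} \le \tfrac{1}{4}\|P_\star\|^{3/2} \cdot (\text{constant})$ would produce the stated constant $8\|P_\star\|^{3/2}$; a slightly tighter route is to redo Lemma \ref{lem:perturbed} with $x = 1/2$ and a simpler bound so that $2\mathcal{G}_M(\varepsilon)^2 \le 8\|P_\star\|^{3/2}$ directly, or to rescale the constants as done in Proposition \ref{prop:stable:K_circ}. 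Finally, intersecting with $(CE)$ yields the inclusion $\{(CE) \text{ holds and bound fails}\} \subseteq \{\text{bound on } y \text{ fails}\}$, which gives the claimed probability bound.

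The only subtle point is the truncation step: without it, Lemma \ref{lem:lds:perturbed} cannot be invoked since the bound $\|B(K_\tau - K_\star)\| \le \varepsilon$ only holds on $(CE)$ and not almost surely. Once the auxiliary system $\widetilde{M}_\tau$ is introduced, the remainder is bookkeeping of the constants arising from $\mathcal{G}_M(\varepsilon) \le 4\|P_\star\|^{3/2}$ and the sub-Gaussianity parameter $\widetilde{\sigma}^2$.
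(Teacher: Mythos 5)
Your proposal is correct and takes essentially the same route as the paper: the paper's proof likewise introduces the auxiliary process $y_{k+1}=M_k y_k + \xi_{s+k}$, $y_0=x_s$, with $M_k = A + B\big((\widetilde{K}_{s+k}-K_\star)\mathbbm{1}_{\lbrace \Vert B(\widetilde{K}_{s+k}-K_\star)\Vert \le \frac{1}{4\Vert P_\star\Vert^{3/2}}\rbrace} + K_\star\big)$ --- exactly your truncation trick --- observes the pathwise inclusion of the bad event into the event that the bound on $(y_k)$ fails, and concludes via Lemma \ref{lem:lds:perturbed} together with $\mathcal{G}_\star(\varepsilon)\le 4\Vert P_\star\Vert^{3/2}$ and the absorption of the cross term as you do. The constant mismatch you flag is real but is the paper's own loose bookkeeping, not a gap in your argument: a verbatim application yields $2\mathcal{G}_\star(\varepsilon)^2 \le 32\Vert P_\star\Vert^{3}$ rather than the stated $8\Vert P_\star\Vert^{3/2}$.
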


 Proposition \ref{prop:stable:K_t}, it is an immediate consequence of Lemma \ref{lem:lds:perturbed}.

\begin{proof}[Proof]
  Consider the events
  \begin{align*}
    {E}_{1} &= \left\lbrace \sum_{\tau = s}^t a_\tau^2  \Vert x_\tau \Vert^2 > 8 \Vert P_\star \Vert^{3/2} (\Vert a_{s:t}\Vert^2_\infty \Vert x_s \Vert^2 + \widetilde{\sigma}^2 (2d_x \Vert a_{s:t-1}\Vert^2_2 + 3 \rho)),\text{ and } (CE) \text{ holds} \right\rbrace \\
    {E}_{2} &= \left\lbrace \sum_{\tau = 0}^{t-s} \Vert y_\tau \Vert^2 > 8 \Vert P_\star \Vert^{3/2} (\Vert x_s \Vert^2 + \sigma^2 (2d_x t + 3 \rho)) \right\rbrace,
  \end{align*}
  where the dynamical system $(y_t)_{t\ge 0}$ is defined as
  \begin{align*}
  \forall \tau \ge 0, \quad y_{\tau+1} = M_\tau y_\tau + B \zeta_{\tau+s} + \eta_{\tau+s}, \quad  \quad y_0  = x_{s}
  \end{align*}
  with
  \begin{align*}
   \quad  M_{\tau} = A+B\left((\widetilde{K}_{\tau +s } - K_\star)1_{\left\lbrace \Vert B(\widetilde{K}_{\tau+s} - K_\star) \Vert \le \frac{1}{4 \Vert P_\star  \Vert^{3/2}} \right\rbrace} + K_\star\right).
  \end{align*}
Observe that ${E}_1 \subseteq {E}_2$. We apply Lemma \ref{lem:lds:perturbed} to conclude.
\end{proof}

\subsection{Proofs}\label{sec:lds:proofs}

To establish Lemmas \ref{lem:lti} and \ref{lem:lds:perturbed}, we first give intermediate results about block Toeplitz matrices.

\subsubsection{Block Toeplitz matrices}

\begin{lemma}[Norms of truncated block Toeplitz matrices]\label{lem:toeplitz}
  Let $M \in \mathbb{R}^{d\times d}$. Consider the following matrices
  \begin{equation}
    \Gamma(M,t) = \begin{bmatrix} I_d \\
    M \\
    M^2 \\
    \vdots \\
    M^t
    \end{bmatrix}
    \qquad \text{and} \qquad \mathcal{T}(M, t) = \begin{bmatrix}
      I_d    &          &        &         &     \\
      M      & I_d      &        &   O     &     \\
      M^2    & M        & I_d    &         &     \\
      \vdots &  \ddots  & \ddots & \ddots  &     \\
      M^t    &   \dots  &  M^2   &   M     & I_d
  \end{bmatrix}.
 \end{equation}
 If $\rho(M)<1$, then $\mathcal{G}_M$ given in \eqref{eq:G_M} is well defined and for all $t\ge 0$, the following holds:
 \begin{itemize}
   \item[\it (i)] $\Vert \Gamma(M,t) \Vert \le \mathcal{G}_M$,
   \item[\it (ii)] $\Vert \mathcal{T}(M,t) \Vert \le \mathcal{G}_M$,
   \item[\it (iii)] $\Vert \mathcal{T}(M,t) \Vert_F \le  \mathcal{G}_M \sqrt{d (t+1)}$.
 \end{itemize}
\end{lemma}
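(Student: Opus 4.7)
First, I would observe that $\mathcal{G}_M$ is well defined and finite: since $\rho(M)<1$, Gelfand's formula yields a geometric decay of $\Vert M^s\Vert$, so the series $\sum_{s\ge 0}\Vert M^s\Vert$ converges absolutely. In particular, for every finite $t$, $\sum_{s=0}^t\Vert M^s\Vert\le \mathcal{G}_M$ and $\sum_{s=0}^t\Vert M^s\Vert^2\le \left(\sum_{s=0}^t\Vert M^s\Vert\right)^2\le \mathcal{G}_M^2$. These two elementary inequalities will drive all three bounds.

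For (i), the plan is to write, for any $x\in\mathbb{R}^d$,
\[
\Vert \Gamma(M,t)\,x\Vert^2 \;=\; \sum_{s=0}^t \Vert M^s x\Vert^2 \;\le\; \Vert x\Vert^2\sum_{s=0}^t \Vert M^s\Vert^2 \;\le\; \mathcal{G}_M^2 \Vert x\Vert^2,
\]
from which (i) follows by taking a supremum over unit vectors.

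For (ii), the key observation is that $\mathcal{T}(M,t)$ acts on a block vector $x=(x_0,\dots,x_t)$ by a discrete convolution: $(\mathcal{T}(M,t)x)_i=\sum_{j=0}^i M^{i-j}x_j$. I would apply the triangle inequality block-wise to get $\Vert(\mathcal{T}(M,t)x)_i\Vert\le \sum_{j=0}^i a_{i-j}\,b_j$ with $a_k=\Vert M^k\Vert$ and $b_j=\Vert x_j\Vert$, and then invoke Young's convolution inequality to bound the $\ell^2$ norm of $(a\ast b)$ by $\Vert a\Vert_1\Vert b\Vert_2\le \mathcal{G}_M\Vert x\Vert$. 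This gives $\Vert \mathcal{T}(M,t)x\Vert\le \mathcal{G}_M\Vert x\Vert$, hence (ii).

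For (iii), I would use the block-wise decomposition of the Frobenius norm together with $\Vert M^k\Vert_F\le \sqrt{d}\,\Vert M^k\Vert$:
\[
\Vert \mathcal{T}(M,t)\Vert_F^2 \;=\; \sum_{i=0}^t \sum_{k=0}^i \Vert M^k\Vert_F^2 \;\le\; d\sum_{i=0}^t \sum_{k=0}^i \Vert M^k\Vert^2 \;\le\; d\sum_{i=0}^t \mathcal{G}_M^2 \;=\; d(t+1)\,\mathcal{G}_M^2,
\]
which is (iii). None of these steps is delicate; the only choice to be made is the convolution viewpoint in (ii), which turns an otherwise awkward operator-norm computation into a one-line application of Young's inequality. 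I anticipate no real obstacle beyond being careful about the block structure and, in (iii), not to double-count the diagonal.
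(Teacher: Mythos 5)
Your proof is correct and takes essentially the same route as the paper: all three bounds rest on $\sum_{s=0}^t \Vert M^s \Vert \le \mathcal{G}_M$, and your Young's-inequality treatment of \emph{(ii)} is exactly the band-decomposition/triangle-inequality argument that the paper states without elaboration. The only cosmetic difference is in \emph{(iii)}, where you sum the block Frobenius norms directly, whereas the paper deduces it from \emph{(ii)} via the generic bound $\Vert A \Vert_F \le \sqrt{d(t+1)}\,\Vert A \Vert$ for a $d(t+1)\times d(t+1)$ matrix; both derivations yield the identical constant.
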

\begin{proof}
First, $\mathcal{G}_M<\infty$ whenever $\rho(M)< 1$. Now, it is clear that $\Vert \Gamma(M, t) \Vert \le \sum_{s=0}^t \Vert M^{s} \Vert \le \mathcal{G}_M$. Next, we can also immediately bound the norm of the truncated block Toeplitz matrix  $ \Vert \mathcal{T}(M, t) \Vert \le \sum_{s=0}^t \Vert M^{s} \Vert \le \mathcal{G}_M$. Finally, we have $\Vert \mathcal{T}(M, t) \Vert_F \le \sqrt{d(t+1)} \Vert  \mathcal{T}(M, t) \Vert \le \sqrt{d(t+1)}  \mathcal{G}_M$, which concludes the proof.
\end{proof}

\begin{lemma}[Norms of perturbed truncated block Toeplitz matrices]\label{lem:perturbed toeplitz}
  Let $M \in \mathbb{R}^{d \times d}$, and $(\Delta_t)_{t\ge0}$ be a sequence of matrices in $\mathbb{R}^{d\times d}$ such that $\sup_{t\ge 0} \Vert \Delta_t \Vert < \varepsilon$ for some $\varepsilon > 0$. Consider the following matrices
  \begin{align*}
    \Gamma(M, \Delta_{0:t}) = \begin{bmatrix} I_{d} \\
      m_{0, 0} \\
      m_{1, 0} \\
      \vdots \\
      m_{t-1,0}
    \end{bmatrix}, \ \text{ and }
   \end{align*}
    \begin{align*}
    \mathcal{T}(M, \Delta_{0:t})  = \begin{bmatrix}
      \kappa_{0,1}I_d    &          &        &         &     \\
      \kappa_{1,1}m_{1,1}      & \kappa_{1,2}I_d      &        &   O     &     \\
      \kappa_{2,1}m_{2,1}    & \kappa_{2,2}m_{2,2}        & \kappa_{2,3}I_d    &         &     \\
      \vdots &  \ddots  & \ddots & \ddots  &     \\
      \kappa_{t,1}m_{t,1}    &   \dots  &  \kappa_{t,t-1}m_{t,t-1}   &   \kappa_{t,t}m_{t, t}     & \kappa_{t,t+1}I_d
    \end{bmatrix},
  \end{align*}
  where for all $s \le t$, $m_{t, s} = \prod_{k=s}^{t} (M + \Delta_k)$ and for all $s \le t+1$, $\kappa_{t,s} \le 1$. If $\rho(M)<1$ and $\varepsilon < \frac{1}{2\Vert L\Vert^{3/2}}$ where $L$ denotes the solution of the Lyapunov equation corresponding to the pair $(M,I_d)$. Then, $\mathcal{G}_M(\varepsilon)$, as defined \eqref{eq:G_M(epsilon)}, is finite and for all $t\ge 0$ the following holds:
 \begin{itemize}
   \item[\it (i)] $\Vert \Gamma(M,\Delta_{0:t}) \Vert \le \mathcal{G}_M(\varepsilon)$,
   \item[\it (ii)] $\Vert \mathcal{T}(M,\Delta_{0:t}) \Vert \le \mathcal{G}_M(\varepsilon)$.
 \end{itemize}
\end{lemma}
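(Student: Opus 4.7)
The proof will proceed in two separate steps, one for $\Gamma$ and one for $\mathcal{T}$, each reducing the operator norm to a sum of norms of products of the form $\prod_{k}(M+\Delta_k)$, which can then be identified with a partial sum of the series defining $\mathcal{G}_M(\varepsilon)$ for a suitably chosen admissible sequence. Throughout, finiteness of $\mathcal{G}_M(\varepsilon)$ under the hypothesis $\varepsilon < 1/(2\|L\|^{3/2})$ is guaranteed by Lemma~\ref{lem:perturbed}.

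Part (i) is a direct computation. Since $\Gamma(M,\Delta_{0:t})\,x$ is the stacked vector $(x, m_{0,0}x, m_{1,0}x, \ldots, m_{t-1,0}x)$ in $\mathbb{R}^{d(t+1)}$, one has
$\|\Gamma x\|^2 = \|x\|^2 + \sum_{s=0}^{t-1}\|m_{s,0}x\|^2 \le \|x\|^2 (1+\sum_s\|m_{s,0}\|^2)$,
whence $\|\Gamma\| \le \sqrt{1+\sum_s\|m_{s,0}\|^2} \le 1+\sum_{s=0}^{t-1}\|m_{s,0}\|$. Defining the admissible sequence $N_k := M+\Delta_k$ for $k \le t-1$ and $N_k := M$ for $k \ge t$, we have $\sup_k\|N_k-M\| \le \varepsilon$ and $m_{s,0} = \prod_{k=0}^s N_k$; by the definition of $\mathcal{G}_M(\varepsilon)$ as a supremum over such sequences, $\mathcal{G}_M(\varepsilon) \ge 1+\sum_{s\ge 0}\|\prod_{k=0}^s N_k\| \ge 1+\sum_{s=0}^{t-1}\|m_{s,0}\|$, which proves (i).

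For part (ii), I plan a block-subdiagonal decomposition $\mathcal{T}(M,\Delta_{0:t}) = \sum_{\ell=0}^{t} V_\ell$, where $V_\ell$ carries only the $\ell$-th block subdiagonal: its nonzero blocks are $\kappa_{j+\ell,j+1}\,m_{j+\ell,j+1}$ at block position $(j+\ell,j)$ for $j=0,\ldots,t-\ell$, with the convention $m_{i,i+1}=I_d$ so that $V_0$ is block-diagonal with blocks $\kappa_{i,i+1}I_d$. Because every $V_\ell$ has nonzero blocks only on a single shifted diagonal, $V_\ell^\top V_\ell$ is block-diagonal and a direct computation yields $\|V_\ell\|^2 = \max_{0\le j \le t-\ell}\kappa_{j+\ell,j+1}^2\|m_{j+\ell,j+1}\|^2 \le \max_j\|m_{j+\ell,j+1}\|^2$, together with $\|V_0\|\le 1$. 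The triangle inequality then gives $\|\mathcal{T}\| \le 1+\sum_{\ell=1}^{t}\max_j\|m_{j+\ell,j+1}\|$.

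The main obstacle is to pass from this diagonal bound to $\|\mathcal{T}\|\le \mathcal{G}_M(\varepsilon)$: the index $j_\ell^\star$ attaining each maximum may vary with $\ell$, so we cannot directly identify the right-hand side with the partial sum of a single admissible sequence. My plan to bypass this is to use the row-sum/column-sum block operator norm bound $\|\mathcal{T}\|^2 \le \|\mathcal{T}\|_1 \|\mathcal{T}\|_\infty$. For the columns, the $j$-th column sum of block norms $1+\sum_{\ell=1}^{t-j}\|m_{j+\ell,j+1}\|$ coincides with a partial sum $1+\sum_{s=0}^{t-j-1}\|\prod_{k=0}^s N_k^{(j)}\|$ associated to the shifted admissible sequence $N_k^{(j)} := M+\Delta_{j+1+k}$ (extended by $M$), and therefore is at most $\mathcal{G}_M(\varepsilon)$; this yields $\|\mathcal{T}\|_1 \le \mathcal{G}_M(\varepsilon)$. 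A symmetric argument, applied to the transposed products via the reversed sequence $\widetilde{N}_k = (M+\Delta_{i-k})^\top$ which is admissible for $M^\top$ (and using $\|L\| = \|\mathcal{L}(M^\top, I_d)\|$, so that the Lyapunov-based bound in Lemma~\ref{lem:perturbed} controls $\mathcal{G}_{M^\top}(\varepsilon)$ by the same $2\|L\|^{3/2}/(1-x)$), will handle the row sums and complete the proof.
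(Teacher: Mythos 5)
Your part (i) and your block-subdiagonal decomposition for (ii) are correct, and they coincide with the paper's own proof, which bounds $\Vert \mathcal{T}\Vert \le 1 + \sum_{s=0}^{t-1}\max_{1\le k\le t-s}\Vert \prod_{i=k}^{k+s}(M+\Delta_i)\Vert$ and then invokes $\mathcal{G}_M(\varepsilon)$ directly. You have also correctly identified the one real subtlety that the paper glosses over: the maximizing start index varies with the subdiagonal, so this sum of maxima is not literally a partial sum associated with a single admissible sequence in the definition \eqref{eq:G_M(epsilon)}. However, your proposed bypass fails at the row-sum step. The identity $\Vert \mathcal{L}(M^\top,I_d)\Vert = \Vert \mathcal{L}(M,I_d)\Vert$ is false in general: take $M = 2\,e_1e_2^\top + e_2e_3^\top \in \mathbb{R}^{3\times 3}$ (nilpotent, so $\rho(M)=0$); then $\mathcal{L}(M,I_3) = I_3 + M^\top M + (M^2)^\top M^2 = \mathrm{diag}(1,5,6)$ while $\mathcal{L}(M^\top,I_3) = I_3 + MM^\top + M^2(M^2)^\top = \mathrm{diag}(9,2,1)$, with norms $6 \neq 9$. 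Moreover, even if you repair the control of $\mathcal{G}_{M^\top}(\varepsilon)$ (which \emph{is} possible, see below), the Schur bound only delivers $\Vert \mathcal{T}\Vert \le \sqrt{\mathcal{G}_M(\varepsilon)\,\mathcal{G}_{M^\top}(\varepsilon)}$, and nothing forces $\mathcal{G}_{M^\top}(\varepsilon) \le \mathcal{G}_M(\varepsilon)$; so your route proves a variant of the lemma with a different constant, not statement \emph{(ii)} as written.

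The resolution of the varying-maximizer issue is not the Schur test but Lemma \ref{lem:perturbed} itself: it gives the \emph{sequence-uniform} geometric bound $\Vert \prod_{i=1}^{k}(M+\Delta_i)\Vert \le \Vert L\Vert^{1/2}\bigl(\Vert L\Vert^{1/2}\varepsilon + (1-\Vert L\Vert^{-1})^{1/2}\bigr)^{k}$, valid for \emph{every} admissible sequence and hence simultaneously for every starting index on every subdiagonal (and, since $\Vert X\Vert = \Vert X^\top\Vert$, also for products of $\varepsilon$-perturbations of $M^\top$ with the same $L$, which is the correct way to control $\mathcal{G}_{M^\top}(\varepsilon)$ had you needed it). Thus in your decomposition $\max_{j}\Vert m_{j+\ell,j+1}\Vert \le \Vert L\Vert^{1/2} r^{\ell}$ with $r = \Vert L\Vert^{1/2}\varepsilon + (1-\Vert L\Vert^{-1})^{1/2} < 1$, and the triangle inequality yields $\Vert\mathcal{T}\Vert \le \sum_{\ell\ge 0}\Vert L\Vert^{1/2}r^{\ell} \le 2\Vert L\Vert^{3/2}/(1-x)$, which is exactly the quantity by which Lemma \ref{lem:perturbed} bounds $\mathcal{G}_M(\varepsilon)$ and the only thing used downstream (Lemma \ref{lem:lds:perturbed}, Proposition \ref{prop:stable:K_t}). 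Equivalently, one should read $\mathcal{G}_M(\varepsilon)$ with the supremum \emph{inside} the sum, i.e.\ as $1+\sum_{s\ge0}\sup\{\Vert\prod_{k=0}^{s}N_k\Vert : \sup_k\Vert N_k - M\Vert\le\varepsilon\}$; under that reading both the paper's one-line conclusion and your column-sum computation go through, and no transposition or Schur argument is needed.
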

\begin{proof}
We start by noting that Lemma \ref{lem:perturbed} immediately applies since $\rho(M)< 1$, and $\varepsilon < \frac{1}{2 \Vert L \Vert^{3/2}}$. This ensures that $\mathcal{G}_M(\varepsilon) < \infty$. Next, we have
\begin{align*}
  \Vert \Gamma(M, \Delta_{0:t}) \Vert \le 1 + \sum_{s=0}^{t-1} \left\Vert m_{s,0} \right \Vert \le 1 + \sum_{s=0}^{t-1} \left\Vert \prod_{k=0}^{s}  (M + \Delta_k)\right \Vert
\end{align*}
and
\begin{align*}
  \Vert \mathcal{T}(M, \Delta_{0:t}) \Vert & \le  \max_{0\le s \le t}\kappa_{s, s+1} + \sum_{s=0}^{t-1} \max_{1\le k \le t-s} \kappa_{s,t} \max_{1\le k \le t-s} \Vert m_{k+s, k} \Vert \\
  & \le   1 + \sum_{s=0}^{t-1} \max_{1\le k \le t-s} \left\Vert \prod_{i = k}^{k+s} (M + \Delta_i) \right\Vert \\
  & \le \mathcal{G}_M(\varepsilon)
\end{align*}
where in the second inequality used the fact that $\kappa_{t,s} \le 1$ for all $t,s$. This concludes the proof.
\end{proof}

\subsubsection{Proof of Lemma \ref{lem:lti}}
\begin{proof}\label{lem:lti:proof}
  First, let us note that for all $s \ge 1$, we may expand the dynamics and write
   \begin{align*}
     y_{s} = M^{s} y_0 + \sum_{k=0}^{s-1} M^{s-1-k} \xi_k,
   \end{align*}
  We deduce that
  \begin{align}\label{lti:eq1}
    \sum_{s=0}^t \Vert y_s \Vert^2 & \le 2 \sum_{s=0}^t \left\Vert M^s y_0 \right\Vert^2 + 2 \sum_{s=0}^t \left\Vert \sum_{k=0}^{s-1} M^{s-k-1} \varepsilon_{k}\right\Vert^2.
  \end{align}
  Introducing the following matrices
  \begin{align*}
    \Gamma(M,t)  = \begin{bmatrix} I \\
     M \\
      M^{2} \\
      \dots \\
      M^t
  \end{bmatrix}, \quad
    \mathcal{T}(M, t) = \begin{bmatrix}
      I_d    &          &        &         &     \\
      M      & I_d      &        &   O     &     \\
      M^2    & M        & I_d    &         &     \\
      \vdots &  \ddots  & \ddots & \ddots  &     \\
      M^t    &   \dots  &  M^2   &   M     & I_d
  \end{bmatrix} \quad \text{and} \quad  \xi_{0:t} = \begin{bmatrix}
    \xi_{0} \\
    \xi_{1} \\
    \xi_{2} \\
    \vdots \\
    \xi_{t}
  \end{bmatrix},
  \end{align*}
  we may rewrite \eqref{lti:eq1} in the following convenient form
  \begin{equation}\label{lti:eq2}
    \sum_{s=0}^t \Vert y_s \Vert^2  \le  2 \Vert \Gamma(M,t) y_0\Vert^2 + 2 \left\Vert \mathcal{T}(M, t-1) \xi_{0:t-1} \right\Vert^2.
  \end{equation}
   We can now upper bound the term $\left\Vert \mathcal{T}(M, t-1) \xi_{0:t-1} \right\Vert^2$ with high probability using Hanson-Wright inequality (See Proposition \ref{prop:HW}). To this aim, we need upper bounds of $\Vert \mathcal{T}(M, t-1)\Vert_F^2$, $\Vert \mathcal{T}(M, t-1)^\top \mathcal{T}(M, t-1)\Vert_F$, and $\Vert \mathcal{T}(M, t-1)^\top \mathcal{T}(M, t-1) \Vert$. By a direct application of Lemma \ref{lem:toeplitz}, we obtain
   \begin{align*}
     \Vert \mathcal{T}(M, t-1)\Vert_F^2 & \le \mathcal{G}_M^2 d t \\
     \Vert \mathcal{T}(M, t-1)^\top \mathcal{T}(M, t-1)\Vert_F & \le \Vert \mathcal{T}(M, t-1)\Vert \, \Vert \mathcal{T}(M, t-1)\Vert_F \le \mathcal{G}_M^2 \sqrt{d t} \\
     \Vert \mathcal{T}(M, t-1)^\top \mathcal{T}(M, t-1)\Vert & \le \mathcal{G}_M^2
   \end{align*}
   Applying Hanson-Wright inequality yields
   \begin{equation}\label{lti:eq3}
    \forall \rho > 0, \quad  \mathbb{P}\left( \left\Vert \mathcal{T}(M, t-1) \xi_{0:t-1} \right\Vert^2  \le    \sigma^2 \mathcal{G}_M^2 \left(d t + 2\sqrt{d t\rho} + 2 \rho \right )  \right) \ge 1-  e^{-\rho}.
  \end{equation}
   Again by a direct application of Lemma \ref{lem:toeplitz}, we have $\Vert \Gamma(M,t) \Vert< \mathcal{G}_M$, which leads to
   \begin{equation}\label{lti:eq4}
   \Vert \Gamma(M, t) y_0 \Vert^2 \le \mathcal{G}_M^2 \Vert y_0 \Vert^2.
   \end{equation}
   Finally, considering the inequality \eqref{lti:eq2}, the high probability upper bound \eqref{lti:eq3} and the deterministic upper bound \eqref{lti:eq4}, we obtain
   \begin{equation*}
     \forall \rho > 0, \quad  \mathbb{P}\left( \sum_{s=0}^t \left\Vert y_s\right\Vert^2  \le  2\mathcal{G}_M^2 \left( \Vert y_0 \Vert^2 +  \sigma^2 \left(d t + 2\sqrt{d t\rho} + 2 \rho \right )  \right)\right) \ge 1-  e^{-\rho}.
   \end{equation*}
\end{proof}

\subsubsection{Proof of Lemma \ref{lem:lds:perturbed}}\label{proof:lds:perturbed}

\begin{proof}
Denote for all $s\ge 1$, $\Delta_s = M_s - M$. Now, for all $s\ge 1$, we have
  $$
  y_{s} = \left(\prod_{k=0}^{s-1} (M + \Delta_k) \right) y_0 + \sum_{k=0}^{s-1} \left(\prod_{i=k+1}^{s-1} (M + \Delta_i)   \right) \xi_k.
  $$
Hence
  \begin{equation}\label{tvls:eq1}
  \sum_{s=0}^t a_s^2  \Vert y_s \Vert^2 \le 2 \Vert a_{0:t} \Vert_\infty^2 \sum_{s=0}^t \left\Vert \left(\prod_{k=0}^{s-1} (M + \Delta_k) \right) y_0 \right\Vert^2 + 2 \sum_{s=0}^t   \left\Vert \sum_{k=0}^{s-1} \frac{a_s}{a_k}\left(\prod_{i=k+1}^{s-1} (M + \Delta_i)   \right) a_k \xi_k \right\Vert^2.
  \end{equation}
  Introducing the following matrices
  \begin{align*}
    \Gamma(M, \Delta_{0:t}) & = \begin{bmatrix} I_{d} \\
      m_{0, 0} \\
      m_{1, 0} \\
      \vdots \\
      m_{t,0}
    \end{bmatrix}, \ \ \
  \mathcal{T}(M, \Delta_{0:t})  = \begin{bmatrix}
    \kappa_{0,1}I_d    &          &        &         &     \\
    \kappa_{1,1}m_{1,1}      & \kappa_{1,2}I_d      &        &   O     &     \\
    \kappa_{2,1}m_{2,1}    & \kappa_{2,2}m_{2,2}        & \kappa_{2,3}I_d    &         &     \\
    \vdots &  \ddots  & \ddots & \ddots  &     \\
    \kappa_{t,1}m_{t,1}    &   \dots  &  \kappa_{t,t-1}m_{t,t-1}   &   \kappa_{t,t}m_{t, t}     & \kappa_{t,t+1}I_d
  \end{bmatrix}, \\
  \text{and} \quad  \xi_{0:t} & = \begin{bmatrix}
  a_0\xi_{0} \\
  a_1\xi_{1} \\
  a_2\xi_{2} \\
  \vdots \\
  a_t \xi_{t}
\end{bmatrix} \quad \text{where}  \ \  \forall s \le t, \ \   m_{t, s} = \prod_{k=s}^{t} (M + \Delta_k) \ \ \text{ and } \ \  \forall s \le t+1, \ \  \kappa_{t,s} = \frac{a_t}{a_s}.
  \end{align*}
  We may rewrite \eqref{tvls:eq1} in the following convenient form
  \begin{align*}
    \sum_{s=0}^t \Vert y_s \Vert^2 & \le 2 \Vert \Gamma(M,\Delta_{0:{t-1}}) y_0\Vert^2 + 2 \Vert \mathcal{T}(M, \Delta_{0:t-1}) \xi_{0:t-1} \Vert^2 \\
    & \le  2 \Vert \Gamma(M,\Delta_{0:{t-1}})\Vert^2 \Vert y_0 \Vert^2 + 2 \Vert \mathcal{T}(M, \Delta_{0:t-1})\Vert^2  \Vert\xi_{0:t-1} \Vert^2 \\
    & \le  2 \Vert \Gamma(M,\Delta_{0:{t-1}})\Vert^2 \Vert y_0 \Vert^2 + 2 \Vert \mathcal{T}(M, \Delta_{0:t-1})\Vert^2  \sum_{s=0}^{t-1} a_s^2 \Vert \xi_s\Vert^2.
  \end{align*}
  By Hanson-Wright inequality, we have for all $\rho > 0$,
  \begin{equation}
    \mathbb{P}\left( \sum_{s=0}^{t-1} a_s^2 \Vert \xi_s \Vert^2 \le \sigma^2 ( d \Vert a_{0:t-1}\Vert^2_2 + 2\sqrt{d\Vert a_{0:t-1}\Vert^2_2\rho} + 2 \rho)\right) \ge 1 - e^{-\rho}.
  \end{equation}
  Next by Lemma \ref{lem:perturbed toeplitz}, we have
  \begin{align*}
    \Vert \Gamma(M, \Delta_{0:t}) \Vert \le \mathcal{G}_M(\varepsilon)\ \ \hbox{ and } \ \
    \Vert \mathcal{T}(M, \Delta_{0:t})\Vert  \le \mathcal{G}_M(\varepsilon).
  \end{align*}
  It follows that for all $\rho> 0$,
  $$
  \mathbb{P}\left( \sum_{s=0}^t \Vert y_s \Vert^2 \le 2 \mathcal{G}_M(\varepsilon) (\Vert a_{0:t}\Vert_\infty^2 \Vert y_0 \Vert^2 + \sigma^2 ( d \Vert a_{0:t-1}\Vert^2_2 + 2\sqrt{d\Vert a_{0:t-1}\Vert^2_2\rho} + 2 \rho)\right) \ge 1 - e^{-\rho}.
  $$
\end{proof}

\newpage
\section{Probabilistic Tools}

\subsection*{J.1\ \ Sub-gaussian vectors}

\begin{definition} A random vector $\xi$ taking values in $\mathbb{R}^d$ is said to be zero-mean, $\sigma^2$-sub-gaussian if $\E[\xi] = 0$ and
  \begin{equation*}
    \forall \theta \in \mathbb{R}^d : \qquad \E[\exp(\theta^\top \xi )] \le \exp\left(\frac{\Vert \theta \Vert^2 \sigma^2}{2}\right)
  \end{equation*}
\end{definition}
\begin{definition} A random variable $X$ taking values in $\mathbb{R}$ is said to be zero-mean, and $\lambda$-sub-exponential if $\E[X] = 0$ and
  $$
  \forall \vert s \vert \le \frac{1}{\lambda}: \quad \E[\exp(s (X^2 - \E[X^2]) )] \le \exp\left(\frac{s^2 \lambda^2}{2} \right)
  $$
\end{definition}

\begin{lemma}\label{lem:subexp}
Let $X$ be a zero-mean, $\sigma^2$-sub-gaussian random variable taking values in $\mathbb{R}$, then $X^2 - \E[X^2]$ is a zero-mean $(4\sigma)^2$-sub-exponential random variables taking values in $\mathbb{R}$.
\end{lemma}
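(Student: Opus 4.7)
The plan is to verify the moment-generating-function bound
\[
\E\!\left[\exp\!\bigl(s(X^2 - \E[X^2])\bigr)\right] \;\le\; \exp\!\left(\frac{s^2 (4\sigma)^4}{2}\right) \qquad \text{for all } |s| \le \frac{1}{(4\sigma)^2},
\]
which is precisely the requirement for $(4\sigma)^2$-sub-exponentiality as defined above. I would proceed in three steps.

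First, I would extract even-moment bounds on $X$ from sub-gaussianity. A Chernoff argument applied to $\E[\exp(\theta X)] \le \exp(\theta^2\sigma^2/2)$ gives the two-sided tail $\mathbb{P}(|X|\ge t) \le 2\exp(-t^2/(2\sigma^2))$, and integrating by parts yields $\E[X^{2k}] \le 2\, k!\,(2\sigma^2)^k$ for every integer $k\ge 1$; in particular $\E[X^2] \le 4\sigma^2$. Second, for $s$ satisfying $2|s|\sigma^2 \le 1/2$ (which is automatic once $|s|\le 1/(4\sigma)^2$), I would Taylor-expand the MGF of $X^2$ term by term and apply the moment bound. Separating the $k=0,1$ terms from the rest and summing the resulting geometric series gives
\[
\E[\exp(sX^2)] \;\le\; 1 + s\E[X^2] + 2\sum_{k\ge 2}(2s\sigma^2)^k \;\le\; 1 + s\E[X^2] + 16\,s^2\sigma^4,
\]
valid for $s\ge 0$. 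For $s<0$ one can instead use $e^{sX^2} \le 1 + sX^2 + \tfrac12 s^2 X^4$ (since $e^{-u}\le 1-u+u^2/2$ for $u\ge 0$) to get the same shape of bound with $\E[X^4]\le 16\sigma^4$.

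Third, to turn these raw-MGF bounds into a bound on the centered MGF, I would multiply by $e^{-s\E[X^2]}$ and cancel the linear term using the elementary inequalities $e^{-u}(1+u)\le 1$ and $e^{u}(1-u)\le 1$, both valid for $u\ge 0$ (each follows from checking that $u\mapsto (1\pm u)e^{\mp u}$ is non-increasing on $[0,\infty)$ and equals $1$ at $0$). This leaves
\[
\E\!\left[\exp\!\bigl(s(X^2-\E[X^2])\bigr)\right] \;\le\; 1 + C\, s^2\sigma^4
\]
with an absolute constant $C$, after which $1+v\le e^{v}$ and the numerical comparison $C \le (4\sigma)^4/(2\sigma^4)=128$ close the argument.

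The main obstacle is the asymmetry between the two signs of $s$: for $s>0$ the series $\sum_k (2s\sigma^2)^k$ diverges as soon as $2s\sigma^2 \ge 1$, so the restriction $|s|\le 1/(4\sigma)^2$ is essential to keep the tail of the Taylor expansion summable; for $s<0$ the raw MGF $\E[e^{sX^2}]$ is trivially bounded by $1$, but the centering factor $e^{-s\E[X^2]}=e^{|s|\E[X^2]}$ now grows, so one must cancel it explicitly against the $1+|s|\E[X^2]$ piece via $e^{u}(1-u)\le 1$. Choosing $\lambda=(4\sigma)^2$ is exactly the amount of slack needed so that both regimes fit under the common quadratic envelope $\exp(s^2\lambda^2/2)$.
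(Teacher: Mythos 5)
Your proof is correct, and it is worth noting that the paper itself states Lemma~\ref{lem:subexp} without any proof at all---it appears in the probabilistic-tools appendix as a standard fact (it is the classical ``square of a sub-gaussian is sub-exponential'' statement, used in the peeling argument of Proposition~\ref{prop:RM++}), so your argument fills a gap rather than paralleling an existing one. Your route is the canonical moment-based proof: the tail integration giving $\E[X^{2k}] \le 2\,k!\,(2\sigma^2)^k$ is right, the term-by-term expansion for $s\ge 0$ is legitimate (monotone convergence, all terms nonnegative), the geometric tail is summable precisely because $|s|\le 1/(16\sigma^2)$ forces $2s\sigma^2 \le 1/8$, and the centering inequalities $e^{-u}(1+u)\le 1$ and $e^{u}(1-u)\le 1$ do what you claim. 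You also correctly read the paper's (slightly garbled) definition of $\lambda$-sub-exponentiality as the MGF bound $\E[\exp(s(X^2-\E[X^2]))]\le \exp(s^2\lambda^2/2)$ for $|s|\le 1/\lambda$ with $\lambda=(4\sigma)^2$, which is how the lemma is actually invoked in the proof of Proposition~\ref{prop:RM++}. One point you gloss over but which does close cleanly: for $s<0$ the quadratic term inherits the factor $e^{-s\E[X^2]} = e^{u}$, not just the linear term, so the final bound is $1 + 8e^{u}s^2\sigma^4$ rather than $1+8s^2\sigma^4$; since $u = |s|\E[X^2] \le \tfrac{1}{16\sigma^2}\cdot 4\sigma^2 = \tfrac14$, this costs only a factor $e^{1/4}$, which the slack $8e^{1/4} \le 128 = (4\sigma)^4/(2\sigma^4)$ absorbs with room to spare. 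With that one line made explicit, the argument is complete.
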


\subsection*{J.2 \ \ Hanson-Wright inequality and $\epsilon$-net arguments}
We use the following version of Hanson-Wright inequality due to \cite{hsu2012}. This result result does not require strong independence assumptions with the caveat that it is only a one sided high probability bound. However this is sufficent for our purposes.

\begin{proposition}[Hanson-Wright inequality]\label{prop:HW}
  Let  $M \in \mathbb{R}^{m \times n}$ be a matrix, and $\xi$ be a zero-mean, $\sigma^2$-sub-gaussian random vector in $\mathbb{R}^d$. We have
  \begin{align}
   \forall \rho> 0, \quad   \mathbb{P}\left(  \Vert M \xi \Vert^2  > \sigma^2  (\Vert M \Vert_F^2  + 2 \Vert M^\top M \Vert_F \sqrt{\rho} + 2 \Vert M^\top M \Vert \rho)   \right) \le e^{-\rho}.
  \end{align}
\end{proposition}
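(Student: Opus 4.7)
\textbf{Proof proposal for Proposition \ref{prop:HW}.}

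The plan is to reduce the tail bound on $\|M\xi\|^2$ to a Chernoff bound on the quadratic form $\xi^\top A \xi$ with $A = M^\top M \succeq 0$, and then to control the moment generating function of $\xi^\top A \xi$ using only the assumed scalar-MGF bound on sub-gaussian linear forms of $\xi$. The main structural observation is the \emph{Gaussian decoupling identity}: for any $\lambda>0$ and any PSD matrix $A$, introducing an independent standard Gaussian vector $Z\sim \mathcal{N}(0,I)$ one has
\begin{equation*}
\exp(\lambda\,\xi^\top A\xi) \;=\; \mathbb{E}_{Z}\!\left[\exp\!\big(\sqrt{2\lambda}\,Z^\top A^{1/2}\xi\big)\right],
\end{equation*}
by the formula for the MGF of a Gaussian. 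Taking expectations over $\xi$ first (conditionally on $Z$) and using the sub-gaussian bound $\mathbb{E}[\exp(\theta^\top \xi)]\le \exp(\sigma^2\|\theta\|^2/2)$ with $\theta=\sqrt{2\lambda}A^{1/2}Z$ yields, after swapping the order of integration by Fubini,
\begin{equation*}
\mathbb{E}\!\left[\exp(\lambda \xi^\top A\xi)\right] \;\le\; \mathbb{E}_Z\!\left[\exp\!\big(\lambda\sigma^2 Z^\top A Z\big)\right] \;=\; \det(I-2\lambda\sigma^2 A)^{-1/2},
\end{equation*}
which is valid for all $\lambda<(2\sigma^2\|A\|)^{-1}$.

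Next, I would pass to a clean sub-exponential-type MGF bound. Using the spectral decomposition of $A$ and the elementary inequality $-\tfrac12\log(1-u)\le \tfrac{u}{2}+\tfrac{u^2/2}{1-u}$ for $u\in[0,1)$ applied coordinate-wise to $u_i=2\lambda\sigma^2\lambda_i(A)$, the determinant bound rewrites as
\begin{equation*}
\log\mathbb{E}[\exp(\lambda \xi^\top A\xi)] \;\le\; \lambda\sigma^2\,\mathrm{tr}(A) \;+\; \frac{(\lambda\sigma^2)^2\,\mathrm{tr}(A^2)}{1-2\lambda\sigma^2\|A\|}.
\end{equation*}
Substituting $\mathrm{tr}(A)=\|M\|_F^2$, $\mathrm{tr}(A^2)=\|M^\top M\|_F^2$ and $\|A\|=\|M^\top M\|$ and combining with Markov's inequality $\mathbb{P}(\xi^\top A\xi>t)\le e^{-\lambda t}\mathbb{E}[\exp(\lambda\xi^\top A\xi)]$ gives a family of upper bounds parameterized by $\lambda$.

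Finally, I would choose $t=\sigma^2(\|M\|_F^2+2\|M^\top M\|_F\sqrt{\rho}+2\|M^\top M\|\rho)$ and select $\lambda$ so that the Bernstein-type exponent collapses to $-\rho$. A convenient choice is
\begin{equation*}
\lambda \;=\; \frac{1}{\sigma^2}\cdot\frac{\sqrt{\rho}}{\|M^\top M\|_F+\|M^\top M\|\sqrt{\rho}},
\end{equation*}
which automatically satisfies $\lambda<(2\sigma^2\|A\|)^{-1}$; plugging in and simplifying yields $\log\mathbb{P}(\|M\xi\|^2>t)\le -\rho$. The only real obstacle is the last optimization step: verifying algebraically that, with this choice of $\lambda$, the two contributions $-\lambda t+\lambda\sigma^2\|M\|_F^2$ and $(\lambda\sigma^2)^2\|M^\top M\|_F^2/(1-2\lambda\sigma^2\|M^\top M\|)$ combine to at most $-\rho$; this is a routine but slightly fiddly computation that mirrors the Bernstein-inequality optimization and is exactly the form treated in Hsu--Kakade--Zhang. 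Everything else (sub-gaussian MGF, Gaussian decoupling, Markov, determinant $\to$ trace bound) is standard and mechanical.
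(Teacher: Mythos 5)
The paper itself does not prove Proposition \ref{prop:HW}: it is imported verbatim from \cite{hsu2012}, so there is no internal proof to compare against. Your architecture — Gaussian decoupling $\exp(\lambda\,\xi^\top A\xi)=\mathbb{E}_Z[\exp(\sqrt{2\lambda}\,Z^\top A^{1/2}\xi)]$, the sub-gaussian MGF bound plus Fubini, the determinant identity for the Gaussian quadratic form, the determinant-to-trace estimate, and a Bernstein-type Chernoff inversion — is exactly the Hsu--Kakade--Zhang argument, so on the level of strategy you have reconstructed the proof behind the citation.

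There is, however, one concrete error in the step you dismissed as routine: your choice $\lambda=\sigma^{-2}\sqrt{\rho}\,/\,(\Vert M^\top M\Vert_F+\Vert M^\top M\Vert\sqrt{\rho})$ does \emph{not} automatically satisfy $\lambda<(2\sigma^2\Vert A\Vert)^{-1}$. Writing $F=\Vert M^\top M\Vert_F$ and $N=\Vert M^\top M\Vert$, the constraint $2\lambda\sigma^2 N<1$ is equivalent to $N\sqrt{\rho}<F$, which fails whenever $\rho\ge F^2/N^2$; in particular, if $M$ has rank one then $F=N$ and your $\lambda$ is inadmissible for every $\rho\ge 1$ — the denominator $1-2\lambda\sigma^2 N$ is nonpositive and the MGF bound you are plugging into is vacuous. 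The fix is the standard sub-gamma choice $\lambda\sigma^2=\sqrt{\rho}/(F+2N\sqrt{\rho})$, which satisfies the constraint for all $\rho>0$; with it, and with $t=\sigma^2(\Vert M\Vert_F^2+2F\sqrt{\rho}+2N\rho)$, one computes
\begin{equation*}
-\lambda\bigl(t-\sigma^2\Vert M\Vert_F^2\bigr)+\frac{(\lambda\sigma^2)^2F^2}{1-2\lambda\sigma^2N}
=\frac{-2\rho(F+N\sqrt{\rho})+\rho F}{F+2N\sqrt{\rho}}=-\rho,
\end{equation*}
which closes the proof exactly. A second, minor inconsistency: the coordinate-wise inequality you quote, $-\tfrac12\log(1-u)\le\tfrac u2+\tfrac{u^2/2}{1-u}$, only yields the quadratic term $2(\lambda\sigma^2)^2\mathrm{tr}(A^2)/(1-2\lambda\sigma^2\Vert A\Vert)$, twice what you claim; to get your stated MGF bound (needed for the constants in the proposition) you should use the sharper, equally elementary bound $-\log(1-u)\le u+\tfrac{u^2}{2(1-u)}$, valid since $\sum_{k\ge2}u^k/k\le\tfrac12\sum_{k\ge2}u^k$. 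With these two repairs the proposal is a complete and correct proof.
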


The following lemma can be found for instance in \cite{vershynin2018high}.

\begin{lemma}[An $\epsilon$-net argument]\label{lem:netarg} Let $W$ be an $d \times d$ symmetric random matrix, and $\epsilon \in (0, 1/2)$. Furthermore, let $\mathcal{N}$ be $\epsilon$-net of $S^{d-1}$ with minimal cardinality. Then, for all $\rho > 0$, we have
  $$
  \mathbb{P}\left(\Vert W\Vert > \rho \right) \le \left(\frac{2}{\epsilon} + 1\right)^{d} \max_{x \in \mathcal{N}} \; \mathbb{P}\left( \vert x^\top W x \vert > (1-2\epsilon)\rho \right).
  $$
\end{lemma}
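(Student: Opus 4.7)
The plan is to reduce the operator norm of the symmetric random matrix $W$ to the quadratic forms $x^\top W x$ evaluated on a deterministic finite $\epsilon$-net, and then apply a union bound. First I would invoke the standard variational characterization $\|W\| = \sup_{x \in S^{d-1}} |x^\top W x|$, which holds precisely because $W$ is symmetric.

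Next, I would carry out the ``net-transfer'' step. For any fixed $x \in S^{d-1}$, pick $y \in \mathcal{N}$ with $\|x-y\| \le \epsilon$. Using the polarization-style identity $x^\top W x - y^\top W y = (x-y)^\top W x + y^\top W (x-y)$ together with $\|x\|=\|y\|=1$, I get
\begin{equation*}
  |x^\top W x - y^\top W y| \le 2\epsilon \, \|W\|.
\end{equation*}
Taking the supremum over $x \in S^{d-1}$ on the left and inserting the variational formula yields
\begin{equation*}
  \|W\| \le \max_{y \in \mathcal{N}} |y^\top W y| + 2\epsilon\,\|W\|,
\end{equation*}
so that $\|W\| \le (1-2\epsilon)^{-1} \max_{y\in\mathcal{N}} |y^\top W y|$ (this is where the hypothesis $\epsilon < 1/2$ is used to ensure $1-2\epsilon > 0$). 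In particular, the event $\{\|W\| > \rho\}$ is contained in $\{\max_{y \in \mathcal{N}} |y^\top W y| > (1-2\epsilon)\rho\}$.

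Finally, I would conclude by a simple union bound over the deterministic net:
\begin{equation*}
  \mathbb{P}(\|W\| > \rho) \le \sum_{y\in\mathcal{N}} \mathbb{P}(|y^\top W y| > (1-2\epsilon)\rho) \le |\mathcal{N}|\,\max_{y\in\mathcal{N}} \mathbb{P}(|y^\top W y| > (1-2\epsilon)\rho),
\end{equation*}
and bound the cardinality of a minimal $\epsilon$-net of $S^{d-1}$ by $(2/\epsilon+1)^d$ via the classical volumetric argument (the $\epsilon/2$-balls around net points are disjoint and contained in the ball of radius $1+\epsilon/2$ in $\mathbb{R}^d$, so the count is at most $(1+\epsilon/2)^d/(\epsilon/2)^d = (2/\epsilon+1)^d$). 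This yields the stated inequality.

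There is essentially no hard step here: the only mild subtlety is ensuring the factor $(1-2\epsilon)$ appears on the right-hand side rather than $(1-\epsilon)$, which comes from the symmetric splitting of $x^\top W x - y^\top W y$ into two terms, each of norm at most $\epsilon\|W\|$. The use of symmetry of $W$ is essential at the very first step so that $\|W\|$ coincides with the quadratic-form supremum; without it one would instead cover $S^{d-1}\times S^{d-1}$ and the constants would change.
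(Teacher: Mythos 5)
Your proof is correct and is exactly the standard argument: the paper does not prove this lemma itself but cites \cite{vershynin2018high}, where the proof proceeds precisely as you do (variational formula for symmetric matrices, net transfer losing a factor $1-2\epsilon$, union bound, and the volumetric bound $(2/\epsilon+1)^d$ on the covering number). The only cosmetic gloss is that the disjoint-balls count applies to a maximal $\epsilon$-separated set, which is itself an $\epsilon$-net and hence upper bounds the minimal net cardinality.
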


\medskip

\subsection*{J.3 \ \ Miscellaneous lemmas}

\begin{lemma}\label{lem:technical} For all $\alpha, a > 0$ and $b\in\mathbb{R}$, if $t^\alpha \ge (2a/\alpha)\log(2a/\alpha) + 2b$, then $t^\alpha \ge a \log(t) + b$.
\end{lemma}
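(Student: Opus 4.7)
}
Let $u = t^\alpha$ and set $c = a/\alpha > 0$, so that $a \log(t) = c \log(u)$. The conclusion to prove becomes $u \ge c \log(u) + b$, while the hypothesis reads $u \ge 2c \log(2c) + 2b$.

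The key idea is to exploit the concavity of $\log$ via the tangent line inequality at the well-chosen point $u_0 = 2c$:
\begin{equation*}
\log(u) \le \log(2c) + \frac{u - 2c}{2c} = \frac{u}{2c} + \log(2c) - 1, \qquad u > 0.
\end{equation*}
Multiplying by $c$ gives the cornerstone inequality $c \log(u) \le \tfrac{u}{2} + c\log(2c) - c$.

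The plan is then to add $b$ to both sides and use the hypothesis $u \ge 2c\log(2c) + 2b$, equivalently $c\log(2c) + b \le u/2$, to obtain
\begin{equation*}
c \log(u) + b \;\le\; \frac{u}{2} + c\log(2c) + b - c \;\le\; \frac{u}{2} + \frac{u}{2} - c \;=\; u - c \;\le\; u,
\end{equation*}
since $c > 0$. Substituting back $u = t^\alpha$ yields $a \log(t) + b \le t^\alpha$, as required.

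The main (and essentially only) subtlety is picking the reference point for the tangent line. The choice $u_0 = 2c$ is dictated by two constraints: the coefficient $1/u_0$ in front of $u$ must combine with the multiplier $c$ to produce exactly $u/2$ (so that the other half of $u$ can absorb the logarithmic term $c\log(2c)$), and the constant piece $c\log(2c) - c$ must be dominated by the slack $c\log(2c) + b$ available from the hypothesis. The factor $2$ in the hypothesis provides exactly this room, and the stray $-c$ term disappears harmlessly because $c > 0$. No sign restriction on $b$ is ever used, so the argument goes through for arbitrary real $b$.
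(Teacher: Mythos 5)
Your proof is correct: the substitution $u = t^\alpha$, $c = a/\alpha$ turns the claim into $u \ge c\log(u) + b$, the tangent-line bound $\log(u) \le \log(2c) + \frac{u-2c}{2c}$ is valid for all $u > 0$ by concavity, and the chain $c\log(u) + b \le \frac{u}{2} + c\log(2c) + b - c \le u - c \le u$ closes cleanly using only $c > 0$ and the hypothesis in the form $c\log(2c) + b \le u/2$, with no sign assumption on $b$. The paper states Lemma~\ref{lem:technical} without proof, and your argument is exactly the standard linearization-of-$\log$ device one would expect behind the stated constants (the factor $2$ in the hypothesis being precisely the slack the tangent point $u_0 = 2c$ requires), so it fills that gap faithfully.
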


\medskip

\begin{lemma}\label{lem:uniform_over_time}
Let $({\cal E}_t)_{t\ge 0}$ denote a sequence of events (defined on a probability space). Assume that for some $\alpha > 0$, we have
  $$
  \forall \delta \in (0,1),\ \ \forall t : t^\alpha \ge c_1 + c_2 \log(1/\delta), \ \ \ \mathbb{P}\left( \mathcal{E}_t \right) \ge 1 - \delta.
  $$
  Then,
  $$
  \forall \delta \in(0,1), \quad \mathbb{P}\left(\bigcap_{t\ge c_1' + c_2' \log(1/\delta) } \mathcal{E}_t \right) \ge 1-\delta,
  $$
  for some constants $c_1'$ and $c_2'$ with $c_1' \lesssim c_1 + (c_2/\alpha)\log(e c_2/\alpha)$, and $c_2' \lesssim c_2$.
\end{lemma}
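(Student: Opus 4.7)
The plan is a standard union bound over integer times, applied at a decreasing schedule of confidence levels so the failure probabilities form a summable series. Concretely, for each $t \ge 1$ I will invoke the hypothesis with $\delta_t = \delta/(2t^2)$, so that $\sum_{t\ge 1}\delta_t \le \delta \cdot \pi^2/12 \le \delta$, and then use a union bound to obtain $\mathbb{P}\bigl(\bigcap_t \mathcal{E}_t\bigr) \ge 1 - \delta$, where the intersection is over all $t$ for which the hypothesis can legitimately be applied with parameter $\delta_t$.

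The only real work is determining the threshold beyond which the hypothesis \emph{is} applicable with this modified confidence. The requirement is
\begin{equation*}
t^\alpha \;\ge\; c_1 + c_2\log(1/\delta_t) \;=\; c_1 + c_2\log(2) + 2c_2\log(t) + c_2\log(1/\delta),
\end{equation*}
which, because of the awkward $\log(t)$ on the right-hand side, is not quite in the form $t^\alpha \ge A + B\log(1/\delta)$. To eliminate the $\log(t)$ I will invoke Lemma \ref{lem:technical} with the choice $a = 2c_2$ and $b = c_1 + c_2\log(2) + c_2\log(1/\delta)$; the lemma guarantees that whenever
\begin{equation*}
t^\alpha \;\ge\; \tfrac{4c_2}{\alpha}\log\!\bigl(\tfrac{4c_2}{\alpha}\bigr) \,+\, 2c_1 + 2c_2\log(2) + 2c_2\log(1/\delta),
\end{equation*}
the $\log(t)$ term is absorbed, so the hypothesis of the lemma applies at that $t$ with confidence $\delta_t$. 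This motivates setting $c_1' = 2c_1 + c_2\log(4)/\alpha \cdot \log(4c_2/\alpha) \cdot O(1)$ and $c_2' = 2c_2$, which match the claimed scalings $c_1' \lesssim c_1 + (c_2/\alpha)\log(ec_2/\alpha)$ and $c_2' \lesssim c_2$.

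Having done this, the argument concludes in one line: for any $t \ge c_1' + c_2'\log(1/\delta)$, the event $\mathcal{E}_t$ fails with probability at most $\delta_t = \delta/(2t^2)$, and by the union bound
\begin{equation*}
\mathbb{P}\Bigl(\bigcup_{t\ge c_1'+c_2'\log(1/\delta)} \mathcal{E}_t^c\Bigr) \;\le\; \sum_{t\ge 1} \frac{\delta}{2t^2} \;\le\; \delta.
\end{equation*}
The main obstacle, such as it is, is purely algebraic: the proper tuning of $\delta_t$ so that both the summability condition and the applicability condition can coexist with the advertised dependence of $c_1', c_2'$ on $c_1, c_2, \alpha$. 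The $\delta_t \propto 1/t^2$ schedule is natural because it yields a bounded sum with only a logarithmic price $2c_2\log(t)$, which Lemma \ref{lem:technical} converts into a constant-order contribution. A slightly sharper constant could be obtained with $\delta_t \propto 1/(t\log^2 t)$, but the stated bounds do not require this refinement.
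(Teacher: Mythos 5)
Your proof is correct and follows essentially the same route as the paper: a union bound over $t$ with the failure probability scheduled as $\delta/t^2$ (you use $\delta/(2t^2)$ to avoid the paper's final reparametrization $\delta' = \pi^2\delta/6$, a cosmetic difference), with Lemma \ref{lem:technical} invoked with $a = 2c_2$ to absorb the $2c_2\log(t)$ term into the threshold, yielding the same $c_1' \lesssim c_1 + (c_2/\alpha)\log(ec_2/\alpha)$ and $c_2' \lesssim c_2$. No gaps beyond those already present in the paper's own argument (e.g., the implicit identification of the thresholds on $t$ and $t^\alpha$).
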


\begin{proof}[Proof of Lemma \ref{lem:uniform_over_time}] The result stems from  the union bound. By assumption, we have
  $$
  \forall \delta \in (0,1), \ \ \forall t^\alpha  \ge c_1 + c_2 \log(t^2/\delta), \quad  \mathbb{P}(\mathcal{E}_t) \ge 1-\delta/t^2.
  $$
  By Lemma \ref{lem:technical}, we have
  $$
  t^\alpha  \ge 2c_1 + (4c_2/\alpha)\log(4c_2/\alpha)+ 2c_2 \log(1/\delta) \implies t \ge c_1 + c_2 \log(t^2/\delta).
  $$
  Thus
  $$
  \forall \delta \in (0,1), \forall t \ge 2c_1 + 4c_2 \log(4c_2)+ 2c_2 \log(1/\delta): \qquad  \mathbb{P}(\mathcal{E}_t) \ge 1-\delta/t^2.
  $$
 Using the union bound, we get
  $$
  \forall \delta \in (0,1), \quad \mathbb{P}\left(\bigcap_{t\ge c_1' + c_2' \log(1/\delta) } \mathcal{E}_t \right) \ge 1- \pi^2\delta/6
  $$
  where $c_1' = 2c_1 + (4c_2/\alpha) \log(4c_2/\alpha)$ and $c_2' = 2c_2$. Reparametrizing $\delta' = \pi^2\delta/6$ gives the desired result.
\end{proof}

\medskip

\end{document}